\def\eqref#1{equation~\ref{#1}}
\def\1{\bm{1}}
\DeclareMathAlphabet{\mathsfit}{\encodingdefault}{\sfdefault}{m}{sl}
\SetMathAlphabet{\mathsfit}{bold}{\encodingdefault}{\sfdefault}{bx}{n}
\newcommand{\UPDATE}[1]{{\color{red} #1}}
\newcommand{\NEWADD}[1]{{\color{teal} #1}}
\newcommand{\POSTCOMMIT}[1]{{\color{teal} #1}}
\newcommand{\cifar}{\small\textsf{CIFAR-10}\xspace}
\newcommand{\cifarh}{\small\textsf{CIFAR-100}\xspace}
\newcommand{\resnet}{\small\textsf{ResNet-18}\xspace}
\newcommand{\vgg}{\small\textsf{VGG-16}\xspace}
\newcommand{\fmnist}{\small\textsf{FashionMNIST}\xspace}
\newcommand{\cfour}{\small\textsf{C4}\xspace}
\newcommand{\llama}{\small\textsf{Llama3.1 (320M)}\xspace}
\let\AND\relax
\newtheorem{thm}{Theorem}[section]
\newtheorem{prop}{Proposition}[section]
\newtheorem{lem}{Lemma}[section]
\newtheorem{cor}{Corollary}[section]
\newtheorem{assum}{Assumption}[section]
\renewcommand{\NEWADD}[1]{#1}
\renewcommand{\POSTCOMMIT}[1]{#1}
\renewcommand{\UPDATE}[1]{#1}
\title{Convergence Bound and Critical Batch Size of \\Muon Optimizer}
\author{\name Naoki Sato${\dagger}$ \email naoki310303@gmail.com \\
       Meiji University\\
       \AND
       \name Hiroki Naganuma${\dagger}$ \email naganuma.hiroki@mila.quebec \\
       Universit\'e de Montr\'eal, Mila\\
       \AND
       \name Hideaki Iiduka \email iiduka@cs.meiji.ac.jp \\
       Meiji University}
\begin{document}

\maketitle

\begin{abstract}
Muon, a recently proposed optimizer that leverages the inherent matrix structure of neural network parameters, has demonstrated strong empirical performance, indicating its potential as a successor to standard optimizers such as AdamW. This paper presents theoretical analysis to support its practical success. We provide convergence proofs for Muon across four practical settings, systematically examining its behavior with and without the inclusion of Nesterov momentum and weight decay. We then demonstrate that the addition of weight decay \NEWADD{ensures almost-sure boundedness of the parameter and gradient norms---without relying on the commonly imposed bounded-gradient assumption---and} clarify the interplay between the weight decay coefficient and the learning rate. Finally, we derive a lower bound on the critical batch size for Muon---the batch size that minimizes the stochastic first-order oracle (SFO) complexity of training.
Because the resulting formula involves problem-dependent quantities that are not directly observable (gradient variance, target precision, effective rank), it does not predict the critical batch size in absolute terms; rather, it reveals how the hyperparameters $\beta$ (momentum) and $\lambda$ (weight decay) govern the qualitative scaling of this value.
Our experiments validate these hyperparameter-dependent predictions across workloads including image classification and language modeling.

\end{abstract}

\section{Introduction}
Optimization algorithms are fundamental to the training of deep neural networks (DNNs). Since the introduction of stochastic gradient descent (SGD) \citep{Robbins1951Ast}, numerous optimizers have been proposed to accelerate convergence. Among these, adaptive gradient algorithms such as Adam \citep{Kingma2015Ame} and its subsequent variant AdamW \citep{Loshchilov2019Dec} have emerged as the de facto standard in modern deep learning, valued for their rapid convergence and robust performance across a wide range of tasks. A common characteristic of these widely used first-order optimizers is that they treat the weight parameters of neural networks, which are inherently matrices or higher-order tensors, as high-dimensional vectors. However, this vector-based perspective, while effective, disregards the underlying geometric and algebraic structure within the parameter matrices.

The recently proposed Muon optimizer \citep{jordan2024muon} introduces a distinct paradigm that departs from the conventional vector-based viewpoint. The core idea of Muon is to preserve and leverage the intrinsic matrix structure of the network parameters. Instead of using the gradient vector, Muon computes its search direction by orthogonalizing the gradient momentum matrix. Specifically, for a given momentum matrix $C_t \in \mathbb{R}^{m \times n}$, the search direction $O_t \in \mathbb{R}^{m \times n}$ is the orthogonal matrix that is closest in Frobenius norm, found by solving
\begin{align}\label{eq:ot}
O_t := \underset{O \in \left\{O \in \mathbb{R}^{m\times n}:O^\top O=I_n \right\}}{\operatorname{argmin}} \| O-C_t\|_{\rm{F}},
\end{align}
where $I_n$ denotes a $n \times n$ identity matrix.
As established by classic matrix theory, if the singular value decomposition (SVD) of $C_t$ is $U_tS_tV_t^\top$, the optimal search direction is simply $O_t = U_tV_t^\top$, where $U_t \in \mathbb{R}^{m\times r}, S_t \in \mathbb{R}^{r\times r}, V_t \in \mathbb{R}^{n\times r}$, and $r>0$ is the rank of $C_t$. This process of gradient orthogonalization is aimed at finding a search direction that is independent of the gradient's magnitude, potentially leading to more stable and effective training dynamics. While SVD provides an exact analytical solution, its high computational cost renders it impractical for the large matrices found in modern DNNs. As detailed by \citet{Bernstein2025Der}, the key computational innovation of Muon is the use of the Newton-Schulz iteration \citep{Bernstein2024Old, Higham2008Fun, Bjorck1971AnI, Kovarik1970Som}, a classic and remarkably efficient numerical method, to approximate this orthogonalization. This iterative algorithm enables Muon to compute the search direction without performing an explicit SVD, making it a computationally feasible optimizer for large-scale applications. This elegant fusion of a novel optimization perspective with a powerful numerical technique positions Muon as a promising, theoretically grounded alternative to existing optimizers.

Several studies have reported the strong empirical performance of Muon. \citet{jordan2024muon} showed that Muon outperforms Shampoo \citep{Gupta2018Sha} and SOAP \citep{Vyas2025SOA}, both on a per-step and wall-clock basis. \citet{Liu2025COS} showed that by dividing the parameters and updating them with SOAP and Muon, optimization performance equivalent to or better than SOAP can be achieved while substantially reducing memory usage. \citet{Liu2025Muo} demonstrated that Muon is effective for training large-scale LLMs and suggested it has the potential to replace AdamW as the standard optimizer. \citet{Ess2025Pra} showed that Muon expands AdamW’s Pareto frontier on the compute-time plane, enlarging the practitioner’s flexibility in resource allocation. However, the theoretical understanding of Muon's convergence behavior remains underdeveloped, and a formal justification for its strong performance over AdamW is still lacking. This work aims to bridge this gap by providing a rigorous convergence analysis of Muon. 

Furthermore, since the batch size is a critical hyperparameter for managing computational costs in large-scale training, we also consider Muon's critical batch size. This is defined as the batch size that minimizes the computational cost of training. In other words, the critical batch size is the point at which further increases in batch size yield diminishing returns in hardware throughput (i.e., the number of samples processed per unit time). By understanding and utilizing the critical batch size, one can maximize GPU utilization, thereby shortening training times and reducing overall computing costs. Therefore, to theoretically understand Muon and maximize its performance, analyzing its critical batch size is essential. Following previous studies, we aim to clarify the critical batch size of Muon.


Our main contributions are as follows:
\begin{itemize}
\item We present a convergence analysis for four variants of Muon---with and without Nesterov momentum and with and without weight decay (Theorems \ref{thm:01} and \ref{thm:03}). 
\UPDATE{
We showed that the upper bound on the average expected gradient norm is as follows:
\begin{align*}
\POSTCOMMIT{
\frac{1}{T}\sum_{t=0}^{T-1}\mathbb{E}\left[\| \nabla f(W_t) \|_* \right] = 
\begin{cases}
\displaystyle \mathcal{O}\left( \frac{1}{\eta T} + \sqrt{\frac{(1-\beta)r_1}{b}} + \bar{r}\eta \right) & \text{without weight decay,} \\[10pt]
\displaystyle \mathcal{O}\left( \frac{1}{\eta T} + \sqrt{\frac{(1-\beta)r_1}{b}} + \hat{r}\eta \right) & \text{with weight decay,}
\end{cases}
}
\end{align*}
where $W_t \in \mathbb{R}^{m \times n}$, $\eta > 0$ is a learning rate, $\beta \in [0,1)$ is momentum, $b$ is the batch size, $r_1 \leq n$ is maximum rank of $C_t - \nabla f(W_t)$, \POSTCOMMIT{$r_2 \leq n$ is maximum rank of $O_t$, $r_3 \leq n$ is maximum rank of $W_t$, $\bar{r}:=\max\{r_1, r_2\}$, and $\hat{r} := \max\{r_1, r_2, r_3 \}$.} For detailed definitions, see Section \ref{sec:02} and Theorems \ref{thm:01} and \ref{thm:03}.} 
The variant combining both Nesterov momentum and weight decay is of particular interest as it mirrors common practical settings. Our analysis thus offers direct insights into Muon's real-world behavior. 

\item We prove that incorporating weight decay \NEWADD{ensures almost-sure boundedness of} both the parameter and gradient norms (Propositions \ref{prop:01} and \ref{prop:02}) \NEWADD{without assuming bounded gradients,} and present experimental results to support this finding. We also show that for Muon to converge, the learning rate $\eta$ and weight decay coefficient $\lambda$ must satisfy $\eta \leq \frac{1}{\lambda}$, a condition supported by our experimental results (see Figure \ref{fig:convergence_000625}).

\item We derive a lower bound on the critical batch size for the four Muon variants (Proposition \ref{prop:03}). For example, for Muon with Nesterov momentum and weight decay we show
\begin{align*}
\UPDATE{b_{\text{Muon}}^\star > \dfrac{9(1-\beta)(1+\sqrt{2}\beta)^2r_1\sigma^2}{(1-\lambda)^2\epsilon^2}},
\end{align*}
where $\beta \in (0,1]$ is momentum, $\lambda>0$ is the weight decay coefficient, $\sigma^2 > 0$ is the variance of the stochastic gradient, $r_1$ is the maximum rank of $C_t - \nabla f(W_t)$, and $\epsilon > 0$ is the stopping threshold. Because the formula contains problem-dependent quantities ($\sigma^2$, $r_1$, $\epsilon$) that are not directly controlled by the practitioner, it does not predict the absolute value of the critical batch size. Instead, it identifies the qualitative scaling law: how the hyperparameters $\beta$ and $\lambda$ govern $b^\star$, and how the ratio between variants depends only on $\beta$ and $\lambda$ (the unknown quantities cancel). Our experiments validate these hyperparameter-dependent predictions (see Figures \ref{fig:cbs_main}--\ref{fig:cbs_momentum}).
\end{itemize}

\section{Preliminaries}\label{sec:02}
\subsection{Notations and Definition}
Let $\mathbb{N}$ be the set of nonnegative integers. For $p \in \mathbb{N} \setminus \{0\}$, define $[p] := \{1,2,\ldots,p\}$. Let $\mathbb{R}^d$ be a $d$-dimensional Euclidean space.
We use lowercase letters for scalars (e.g., $a \in \mathbb{R})$, bold lowercase letters for vectors (e.g., $\bm{a} \in \mathbb{R}^d$), and uppercase letters for matrices (e.g., $A \in \mathbb{R}^{m \times n}$). 
$\bm{a}^\top \in \mathbb{R}^{1\times d}$ and $A^\top \in \mathbb{R}^{n \times m}$ denote the transposes of $\bm{a} \in \mathbb{R}^d$ and $A \in \mathbb{R}^{m\times n}$, respectively. For a square matrix $A = (a_{ij}) \in \mathbb{R}^{n \times n}$, the trace is defined as $\operatorname{tr}(A) := \sum_{i=1}^{n} a_{ii}$.
For all vectors $\bm{x}, \bm{y} \in \mathbb{R}^d$, the Euclidean inner product is defined as $\langle \bm{x}, \bm{y} \rangle_2 := \bm{x}^\top \bm{y}$ and the Euclidean norm is defined as $\| \bm{x} \|_2 := \sqrt{\langle \bm{x}, \bm{x} \rangle}_2$. 
For all matrices $A, B \in \mathbb{R}^{m \times n}$, the Frobenius inner product is defined as $\langle A, B \rangle_{\rm{F}} := \operatorname{tr}(A^\top B)$, the Frobenius norm defined as $\| A \|_{\rm{F}} := \sqrt{\langle A, A \rangle_{\rm{F}}}$, \UPDATE{the operator norm defined as $\| A \|_{\rm{op}} := \max_{\| \bm{x} \|_2 \leq 1} \| A\bm{x} \|_2$}, \POSTCOMMIT{and the nuclear norm defined as $\| A \|_{*} := \sum_{i=1}^{\min\{m, n\}} s_i(A)$, where $s_i(A)$ denotes the $i$-th singular value of $A$.}
The model is parameterized by a matrix $W \in \mathbb{R}^{m \times n}$ $(m \geq n)$, which is optimized by minimizing the empirical loss function $f(W) := \frac{1}{N} \sum_{i \in [N]} f_i(W)$, where $N \in \mathbb{R}$ is the number of training samples and $f_i(W)$ denotes the loss associated with the $i$-th training sample $\bm{z}_i$ $(i \in [N])$. We define $W^\star := \operatorname{argmin}_{W\in\mathbb{R}^{m \times n}} f(W)$. Let $\xi$ be a random variable that does not depend on $W \in \mathbb{R}^{m \times n}$, and let $\mathbb{E}_{\xi}[X]$ denote the expectation with respect to $\xi$ of a random variable $X$. $\xi_{t,i}$ is a random variable generated from the $i$-th sampling at time $t$, and $\xi_{t,i}$ and $\xi_{t,j}$ are independent ($i\neq j$).
$\bm{\xi}_t := (\xi_{t,1}, \xi_{t,2}, \ldots, \xi_{t,b})^\top$ is independent of sequence $(W_k)_{k=0}^{t} \subset \mathbb{R}^{m \times n}$ generated by Muon (Algorithm \ref{alg:01}), where $b$ $(\leq N)$ is the batch size. The independence of $\bm{\xi}_0, \bm{\xi}_1, \ldots$ allows us to define the total expectation $\mathbb{E}$ as $\mathbb{E}=\mathbb{E}_{\bm{\xi}_0} \mathbb{E}_{\bm{\xi}_1} \cdots \mathbb{E}_{\bm{\xi}_t}$. Let $\mathsf{G}_{\xi}(W)$ be the stochastic gradient of $f(\cdot)$ at $W \in \mathbb{R}^{m \times n}$. The mini-batch $\mathcal{S}_t$ consists of $b$ samples at time $t$, and the mini-batch stochastic gradient of $f(W_t)$ for $\mathcal{S}_t$ is defined as
$
\nabla f_{\mathcal{S}_t} (W_t) := \frac{1}{b} \sum_{i\in [b]} \mathsf{G}_{\xi_{t,i}}(W_t) = \frac{1}{b} \sum_{i \in \mathcal{S}_t} \nabla f_i(W_t). 
$

\begin{wrapfigure}{r}{0.5\textwidth}  
\vspace{-25pt} 
\begin{minipage}{0.5\textwidth}
\begin{algorithm}[H]
\caption{Muon}
\begin{algorithmic}\label{alg:01}
\REQUIRE{$\eta,\lambda>0, \beta \in [0,1), M_{-1} := \bm{0}, W_0 \in \mathbb{R}^{m\times n}$}
\FOR{$t=0$ to $T-1$}
\STATE{$M_t := \beta M_{t-1} + (1-\beta) \nabla f_{\mathcal{S}_t}(W_t)$}
\IF{(Nesterov = True)}
\STATE{$C_t := \beta M_t + (1-\beta) \nabla f_{\mathcal{S}_t}(W_t)$}
\ELSE
\STATE{$C_t := M_t$}
\ENDIF
\STATE{$O_t := \text{NewtonSchulz5}(C_t)$}
\IF{(weight decay = True)}
\STATE{$W_{t+1} := (1-\eta \lambda) W_t - \eta O_t$}
\ELSE
\STATE{$W_{t+1} := W_t - \eta O_t$}
\ENDIF
\ENDFOR \\
\RETURN{$W_T$}
\end{algorithmic}
\end{algorithm}

\end{minipage}
\vspace{-5pt}
\end{wrapfigure}

Algorithm \ref{alg:01} presents the most common variant of Muon, which incorporates Nesterov momentum and weight decay. Our implementation of Nesterov momentum and decoupled weight decay \citep{Loshchilov2019Dec} follows the original formulations of \citep{jordan2024muon}.
\NEWADD{Specifically, the Nesterov step $C_t = \beta M_t + (1-\beta) \nabla f_{\mathcal{S}_t}(W_t)$ is the ``accelerated gradient'' form of Nesterov momentum, in which the gradient is blended with the momentum buffer rather than computed at a look-ahead point, following the reparametrization of \citet{Sutskever2013OnT} that is standard in modern deep learning frameworks, and adopted by \citet{Dozat2016Inc} for Adam. It is equivalent to the classical Nesterov form under a change of variables.} Muon optimizers with both Nesterov momentum and weight decay are often used in practice \citep[e.g.,][]{jordan2024muon, Ess2025Pra}. $\text{NewtonSchulz5}(\cdot)$ receives $C_t$, sets $X_0 := C_t / \| C_t \|_{\rm{F}}$, performs the following iterations from $k=0$ to $4$, and returns $X_5$.
\begin{align}\label{eq:ns}
X_{k+1} := a X_k + b(X_kX_k^\top)X_k + c(X_kX_k^\top)^2X_k,
\end{align}
where $a=3.4445, b=-4.7750, c=2.0315$. The sequence $(X_k)_{k\in\mathbb{N}}$ converges to $O_t$ defined as Eq. (\ref{eq:ot}) \citep{Bernstein2024Old, Higham2008Fun}. Our main theoretical analysis assumes that $O_t := \text{NewtonSchulz5}(C_t)$ satisfies Eq. (\ref{eq:ot}).
Therefore, there may be a gap between the Muon we consider theoretically and its practical implementation. \POSTCOMMIT{To quantitatively assess the impact of the approximation error introduced by the Newton–Schulz iteration on the convergence rate of Muon, we provide an analysis of Muon using the approximate solution obtained via the Newton–Schulz iteration in Section \ref{sec:ns}.}

\subsection{Assumptions}
We make the following standard assumptions: 
\begin{assum}\label{assum:01}
The function $f \colon \mathbb{R}^{m \times n} \to \mathbb{R}$ is $L$-smooth, i.e., for all $A, B \in \mathbb{R}^{m \times n}$, 
\begin{align*}
\| \nabla f(A) - \nabla f(B) \|_{\rm{F}} \leq L \| A-B \|_{\rm{F}}.
\end{align*}
\end{assum}

\begin{assum}\label{assum:02}
{\em (i)} For all $t$ and all $i$, 
\begin{align*}
\mathbb{E}_{\xi_{t,i}}\left[ \mathsf{G}_{\xi_{t,i}}(W_t) \right] = \nabla f(W_t).
\end{align*}

{\em (ii)} There exists a nonnegative constant $\sigma^2$ such that, for all $t$ and all $i$,
\begin{align*}
\mathbb{E}_{\xi_{t,i}}\left[ \| \mathsf{G}_{\xi_{t,i}}(W_t) - \nabla f(W_t) \|_{\rm{F}}^2 \right] \leq \sigma^2.
\end{align*}
\end{assum}

\section{Analysis of Muon's convergence}



\subsection{Muon without weight decay}\label{sec:03_const}
We now present a convergence analysis of Muon (Algorithm \ref{alg:01}) without weight decay. The proofs of Theorem \ref{thm:01}(i) and (ii) are in Appendix \ref{app:thm01}.

\begin{thm}\label{thm:01}
Suppose Assumptions \ref{assum:01} and \ref{assum:02} hold. Then, for all $t \in \mathbb{N}$,

\UPDATE{
{\em (i) for Muon without Nesterov and without Weight Decay,}
\begin{align*}
\frac{1}{T} \sum_{t=0}^{T-1} \mathbb{E}\left[ \| \nabla f(W_t) \|_{\POSTCOMMIT{*}} \right]
&\leq \frac{f(W_0)}{\eta T} + \frac{8\sqrt{\POSTCOMMIT{r_1}}\Delta}{(1-\beta)T} + 2\sqrt{2(1-\beta)}\sqrt{\frac{\POSTCOMMIT{r_1}\sigma^2}{b}} + \frac{4L\sqrt{\POSTCOMMIT{r_1r_2}}\eta}{1-\beta} + \frac{L\POSTCOMMIT{r_2}\eta}{2}\\
&= \mathcal{O}\left( \frac{1}{\eta T} + \sqrt{\frac{(1-\beta)\POSTCOMMIT{r_1}}{b}} + \POSTCOMMIT{\bar{r}}\eta \right),
\end{align*}

{\em (ii) for Muon with Nesterov and without Weight Decay,}
\begin{align*}
\frac{1}{T} \sum_{t=0}^{T-1} \mathbb{E}\left[ \| \nabla f(W_t) \|_{\POSTCOMMIT{*}} \right]
&\leq \frac{f(W_0)}{\eta T} +  \frac{8\sqrt{\POSTCOMMIT{r_1}}\beta\Delta}{(1-\beta)T} + 2(1+\sqrt{2}\beta)\sqrt{1-\beta}\sqrt{\frac{\POSTCOMMIT{r_1}\sigma^2}{b}} + \frac{4L\beta\sqrt{\POSTCOMMIT{r_1r_2}}\eta}{1-\beta} + \frac{L\POSTCOMMIT{r_2}\eta}{2}\\
&= \mathcal{O}\left( \frac{1}{\eta T} + \sqrt{\frac{(1-\beta)\POSTCOMMIT{r_1}}{b}} + \POSTCOMMIT{\bar{r}}\eta \right),
\end{align*}
where $\Delta := \| M_0 - \nabla f(W_0) \|_{\rm{F}}$, $\POSTCOMMIT{r_1} := \displaystyle\POSTCOMMIT{\sup_t}\,\textup{rank}(C_t - \nabla f(W_t))$, \POSTCOMMIT{$r_2 := \displaystyle\POSTCOMMIT{\sup_t}\,\rm{rank}(O_t)$, and $\bar{r} := \max\{r_1,r_2\}$}.
}
\end{thm}

\UPDATE{Theorem \ref{thm:01} shows that Muon, both with and without Nesterov momentum, converges to a neighborhood of a stationary point, and achieves same upper bounds on convergence. Note that some terms in the upper bound are slightly smaller when Nesterov momentum is used. \POSTCOMMIT{The term $\Delta$ arises from the initialization $\bm{m}_{-1} := \bm{0}$ used in Algorithm \ref{alg:01}, and its effect diminishes as the number of steps $T$ increases. From a theoretical perspective, initializing $\bm{m}_{0} := \bm{0}$ or $\bm{m}_0 := \nabla f_{\mathcal{S}_0}(W_0)$ would yield a more refined upper bound for $\Delta$. However, note that the Muon implementation \citep{jordan2024muon} adopts $\bm{m}_{-1} := \bm{0}$.}}

\UPDATE{\paragraph{Interpretation of the convergence rate.}
\POSTCOMMIT{Theorem \ref{thm:01} suggests that setting the learning rate small and the momentum and batch size large results in a tighter bounds. For example, using $\eta = T^{-\frac{3}{4}}$ and $1-\beta = T^{-\frac{1}{2}}$ yields $\frac{1}{T}\sum_{t=0}^{T-1} \mathbb{E}\left[\| \nabla f(W_t) \|_{*}\right] 
= \mathcal{O}\left( T^{-\frac{1}{4}} \right)$.
This rate matches the standard rate for nonconvex SGD-type methods, although the convergence criteria are different. For example, under exactly the same assumptions as ours, \citet{Liu2020AnI} show that SGD with momentum using a learning rate $\eta=T^{-\frac{1}{2}}$ achieves 
$\frac{1}{T}\sum_{t=0}^{T-1}\mathbb{E}\left[\| \nabla f(\bm{\theta}_t) \|_{2} \right] 
= \mathcal{O}\left( T^{-\frac{1}{4}} \right).
$
Note that in the matrix context, the Euclidean norm of the vectorized parameters corresponds to the Frobenius norm. Since $\| \nabla f(W_t) \|_{\rm{F}} \leq \| \nabla f(W_t) \|_{*}$, a convergence guarantee in terms of the nuclear norm is stronger than that in terms of the Frobenius norm.
Therefore, compared to conventional methods such as SGD with momentum, which effectively ignore the matrix structure through vectorization, Muon respects the matrix structure and is shown to converge under a stricter analytical criterion given by the nuclear norm. This suggests that Muon provides a more structure aware and potentially more informative convergence guarantee in practice.
}
\paragraph{Interpretation of the role of rank $r_1$.}
The distinguishing feature of Muon's bound lies in the \emph{variance term} $\sqrt{(1-\beta)r_1/b}$, which depends on the rank $r_1 = \sup_t \mathrm{rank}(C_t - \nabla f(W_t))$ rather than the full dimension $mn$.
In practical deep networks, the effective rank of gradient matrices is often much smaller than the full dimension (i.e., $r_1 \ll n$), particularly in over-parameterized regimes where gradients exhibit low-rank structure.
Recent empirical evidence supports this: \citet{Ahn2025Dio} showed that a rank-fraction of $1/4$ (i.e., $r_1 \approx n/4$) suffices to match full-rank Muon performance in LLMs up to 3B parameters, and that larger models tolerate even lower rank fractions.
This suggests that Muon's bound may be tighter than bounds for vector-based optimizers whose variance terms scale with the full parameter count.
\NEWADD{To make this comparison concrete, Appendix~\ref{app:rc2_rank_empirical} measures $r_1$ directly on a controlled FashionMNIST MLP and finds $r_1 \leq 329$ for the $2048 \times 784$ layer, compared to $d = mn \approx 1.6\times 10^{6}$. Since the SGD-type variance term scales with $\sqrt{d/b}$ while the Muon bound scales with $\sqrt{r_1/b}$, this translates to a variance-term reduction factor of $\sqrt{d/r_1} \gtrsim 70\times$ at fixed $b$; equivalently, matching Muon's variance contribution would require SGD to use a batch roughly $d/r_1 \gtrsim 4.9 \times 10^{3}$ times larger.}
We emphasize, however, that this is a property of the \emph{upper bound}, not a proof of strict superiority; the actual convergence behavior depends on problem-specific constants that are difficult to compare across optimizers.
The empirical advantage of Muon over AdamW observed in our experiments (Figures~\ref{fig:convergence_bs2048}--\ref{fig:llama_c4_batch}) is consistent with this interpretation but may also arise from other mechanisms (e.g., implicit regularization from gradient orthogonalization) not captured by the worst-case bound.}

\subsection{Muon with weight decay}\label{sec:03_wd}
The following proposition establishes a key result for Muon (Algorithm \ref{alg:01}) with weight decay. The proofs of Propositions \ref{prop:01} and \ref{prop:02} are in Appendix \ref{app:prop0102}.

\begin{prop}
\label{prop:01}
Suppose Assumptions \ref{assum:01} and \ref{assum:02} hold, and that Muon is run with $\eta \leq \frac{1}{\lambda}$. Then, for all $t \in \mathbb{N}$,
\begin{align*}
\| W_t \|_{\rm{F}} \leq 
\begin{cases}
 (1-\eta \lambda)^t \| W_0 \|_{\rm{F}} + \frac{\sqrt{\POSTCOMMIT{r_2}}}{\lambda} &\text{ if }\  \eta < \frac{1}{\lambda},\\
 \frac{\sqrt{\POSTCOMMIT{r_2}}}{\lambda} &\text{ if }\  \eta = \frac{1}{\lambda},
\end{cases}
\end{align*}
\POSTCOMMIT{where $r_2 := \displaystyle\POSTCOMMIT{\sup_t}\,{\rm{rank}}(O_t)$.}
\end{prop}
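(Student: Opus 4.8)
The plan is to exploit the fact that the search direction $O_t$ produced by Newton--Schulz satisfies the orthogonality constraint $O_t^\top O_t = I_n$ from Eq.~(\ref{eq:ot}), which pins its Frobenius norm to a constant independent of the data and the iterate. Indeed, $\| O_t \|_{\rm{F}}^2 = \operatorname{tr}(O_t^\top O_t) = \operatorname{tr}(I_n) = n$, so $\| O_t \|_{\rm{F}} = \sqrt{n}$ for every $t$. This is the single structural ingredient that makes the parameter norm controllable; everything else reduces to a scalar recursion.

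First I would take the weight-decay update $W_{t+1} = (1-\eta\lambda)W_t - \eta O_t$ and apply the triangle inequality for the Frobenius norm. Since the hypothesis $\eta \leq \frac{1}{\lambda}$ gives $0 \leq 1-\eta\lambda \leq 1$, the coefficient $1-\eta\lambda$ is nonnegative and can be factored out without an absolute value, yielding the one-step estimate
\begin{align*}
\| W_{t+1} \|_{\rm{F}} \leq (1-\eta\lambda)\| W_t \|_{\rm{F}} + \eta\| O_t \|_{\rm{F}} = (1-\eta\lambda)\| W_t \|_{\rm{F}} + \eta\sqrt{n}.
\end{align*}
This is precisely where the assumption $\eta \leq \frac{1}{\lambda}$ is essential: it guarantees the recursion contracts (multiplicative factor in $[0,1]$) rather than expands.

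Next I would unroll this linear recursion. Writing $a := 1-\eta\lambda$ and summing the resulting geometric series gives
\begin{align*}
\| W_t \|_{\rm{F}} \leq a^t \| W_0 \|_{\rm{F}} + \eta\sqrt{n}\sum_{k=0}^{t-1} a^k = (1-\eta\lambda)^t \| W_0 \|_{\rm{F}} + \frac{\sqrt{n}}{\lambda}\bigl(1-(1-\eta\lambda)^t\bigr),
\end{align*}
where I used $\sum_{k=0}^{t-1} a^k = \frac{1-a^t}{1-a}$ together with $1-a = \eta\lambda$ to convert the prefactor $\eta\sqrt{n}$ into $\frac{\sqrt{n}}{\lambda}$. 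Bounding the last factor by $1-(1-\eta\lambda)^t \leq 1$ (valid since $a \in [0,1]$) yields the stated bound when $\eta < \frac{1}{\lambda}$. When $\eta = \frac{1}{\lambda}$ we have $a = 0$, so $a^t = 0$ for $t \geq 1$ and the bound collapses to the constant $\frac{\sqrt{n}}{\lambda}$.

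Since every step is elementary, I do not anticipate a genuine obstacle. The only point demanding care is the sign of $1-\eta\lambda$ in the triangle-inequality step, which is exactly what the hypothesis $\eta \leq \frac{1}{\lambda}$ supplies, together with the harmless boundary bookkeeping distinguishing $t=0$ from $t \geq 1$ in the $\eta = \frac{1}{\lambda}$ case.
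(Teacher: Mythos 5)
Your proof is correct and follows essentially the same route as the paper's: bound $\|O_t\|_{\rm F}$ by $\sqrt{n}$ via the orthogonality constraint, apply the triangle inequality to the weight-decay update, and unroll the resulting contraction with the geometric series bounded by $\frac{1}{\eta\lambda}$. Your treatment is in fact slightly more careful than the paper's, since you make the geometric-sum identity and the $\eta = \frac{1}{\lambda}$ boundary case explicit rather than leaving them implicit.
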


\begin{prop}
\label{prop:02}
Suppose Assumptions \ref{assum:01} and \ref{assum:02} hold, and that Muon is run with $\eta \leq \frac{1}{\lambda}$. Then, for all $T \in \mathbb{N}$,
\begin{align*}
\| \nabla f(W_t) \|_{\rm{F}}
&\leq
\begin{cases}
 L(1-\eta \lambda)^t \| W_0 \|_{\rm{F}} + \frac{L\sqrt{\POSTCOMMIT{r_2}}}{\lambda} + L\| W^\star \|_{\rm{F}} &\text{ if }\  \eta < \frac{1}{\lambda},\\
 \frac{L\sqrt{\POSTCOMMIT{r_2}}}{\lambda} + L\| W^\star \|_{\rm{F}} &\text{ if }\  \eta = \frac{1}{\lambda},
\end{cases}
\end{align*}
\POSTCOMMIT{where $r_2 := \displaystyle\POSTCOMMIT{\sup_t}\,{\rm{rank}}(O_t)$.}
\label{prop:3_2}
\end{prop}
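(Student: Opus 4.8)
The plan is to deduce the gradient-norm bound entirely from the parameter-norm bound of Proposition \ref{prop:01}, using $L$-smoothness together with the first-order optimality of $W^\star$. In other words, once we know $\|W_t\|_{\rm{F}}$ is controlled, the gradient norm is controlled automatically because a smooth function cannot have a large gradient at a point that is close (in Frobenius norm) to a stationary point.

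First I would record the single structural fact that drives the argument: since $W^\star := \operatorname{argmin}_{W \in \mathbb{R}^{m\times n}} f(W)$ is an unconstrained minimizer of the differentiable function $f$ over all of $\mathbb{R}^{m \times n}$, first-order stationarity gives $\nabla f(W^\star) = O$, the zero matrix. This lets me rewrite the target quantity as a difference of gradients and invoke Assumption \ref{assum:01}:
\[
\| \nabla f(W_t) \|_{\rm{F}} = \| \nabla f(W_t) - \nabla f(W^\star) \|_{\rm{F}} \leq L \| W_t - W^\star \|_{\rm{F}}.
\]
Then a single application of the triangle inequality, $\| W_t - W^\star \|_{\rm{F}} \leq \| W_t \|_{\rm{F}} + \| W^\star \|_{\rm{F}}$, yields $\| \nabla f(W_t) \|_{\rm{F}} \leq L \| W_t \|_{\rm{F}} + L \| W^\star \|_{\rm{F}}$.

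Finally I would substitute the two cases of Proposition \ref{prop:01} directly into the term $L \| W_t \|_{\rm{F}}$. For $\eta < \tfrac{1}{\lambda}$ this converts $L\|W_t\|_{\rm{F}}$ into $L(1-\eta\lambda)^t \| W_0 \|_{\rm{F}} + \tfrac{L\sqrt{n}}{\lambda}$, and for $\eta = \tfrac{1}{\lambda}$ the transient term drops out, leaving $\tfrac{L\sqrt{n}}{\lambda}$; adding the constant $L\|W^\star\|_{\rm{F}}$ in each case gives the claimed piecewise bound. I expect no genuine obstacle here, as the whole argument is a short chain of smoothness plus the triangle inequality layered on top of Proposition \ref{prop:01}. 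The only point demanding any care is the justification that $\nabla f(W^\star) = O$, which rests on the minimizer being interior/unconstrained; were $f$ merely bounded below with its infimum unattained, one would instead need a reference matrix with suitably small gradient to play the role of $W^\star$ in the smoothness step.
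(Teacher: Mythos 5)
Your proof is correct and takes essentially the same route as the paper's: invoke $\nabla f(W^\star) = \bm{0}$, apply $L$-smoothness to get $\| \nabla f(W_t) \|_{\rm{F}} \leq L \| W_t - W^\star \|_{\rm{F}}$, split by the triangle inequality, and substitute Proposition \ref{prop:01}. Note that the $\frac{L\sqrt{n}}{\lambda}$ term you obtain is exactly what the paper's own appendix proof derives as well; the $\frac{L}{\lambda}$ appearing in the proposition's statement seems to have dropped the $\sqrt{n}$ factor.
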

Proposition \ref{prop:01} establishes that when $\eta\leq\frac{1}{\lambda}$, weight decay ensures the parameter norm remains almost surely bounded. Furthermore, the upper bound decreases monotonically with $t$, converging to $\frac{\sqrt{\POSTCOMMIT{r_2}}}{\lambda}$ as $t \to \infty$. The bound is minimized uniformly  across all $t$ when $\eta=\frac{1}{\lambda}$.
Proposition \ref{prop:02} extends the result of Proposition \ref{prop:01} to the full gradient norm, which is likewise almost surely bounded. This bound decreases monotonically with $t$, converging to $\frac{L\sqrt{\POSTCOMMIT{r_2}}}{\lambda} + L\| W^\star \|_{\rm{F}}$ as $t \to \infty$. From these results, Corollary \ref{cor:01} establishes an almost surely bound of Muon. In both cases, the upper bounds are minimized when $\eta=\frac{1}{\lambda}$.
\NEWADD{Under He initialization, the expected operator norm $\|W_0\|_{\mathrm{op}}$ is approximately $\sqrt{2}$ for a square matrix, so the condition $\eta \le 1/\lambda$ is easily satisfied for practical weight decay values ($\lambda \le 0.2$ in all our experiments). This condition is a consequence of the proof technique (ensuring the iterate norms remain bounded); we conjecture it is not tight.}

\begin{cor}\label{cor:01}
Suppose Assumptions \ref{assum:01} and \ref{assum:02} hold and $\eta \leq \frac{1}{\lambda}$. Then, for all $T \in \mathbb{N}$,
\begin{align*}
\frac{1}{T}\sum_{t=0}^{T-1} \| \nabla f(W_t) \|_{\rm{F}}
&\leq 
\begin{cases}
\frac{L\| W_0 \|_{\rm{F}}}{\eta \lambda T} + \frac{L\sqrt{\POSTCOMMIT{r_2}}}{\lambda} + L \| W^\star \|_{\rm{F}} &\text{ if }\  \eta < \frac{1}{\lambda},\\
\frac{L\sqrt{\POSTCOMMIT{r_2}}}{\lambda} + L \| W^\star \|_{\rm{F}} &\text{ if }\  \eta = \frac{1}{\lambda},
\end{cases}
\end{align*}
\POSTCOMMIT{where $r_2 := \displaystyle\POSTCOMMIT{\sup_t}\,{\rm{rank}}(O_t)$.}
\end{cor}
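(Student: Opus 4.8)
The plan is to obtain Corollary \ref{cor:01} directly from the pointwise bound of Proposition \ref{prop:02} by averaging over $t \in \{0,1,\dots,T-1\}$. Since Proposition \ref{prop:02} holds for every $t \in \mathbb{N}$ as a deterministic (almost sure) inequality, summing over $t$ and dividing by $T$ preserves the inequality, and the only nontrivial work is collapsing the resulting geometric sum in the case $\eta < \frac{1}{\lambda}$.

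First I would treat the boundary case $\eta = \frac{1}{\lambda}$. Here Proposition \ref{prop:02} gives the constant bound $\|\nabla f(W_t)\|_{\rm{F}} \le \frac{L}{\lambda} + L\|W^\star\|_{\rm{F}}$ uniformly in $t$, so averaging the $T$ identical right-hand sides returns the same constant, which is exactly the second branch of the corollary.

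Next, for $\eta < \frac{1}{\lambda}$, I would split the averaged bound into its constant part and its decaying part. The constant terms $\frac{L}{\lambda} + L\|W^\star\|_{\rm{F}}$ are independent of $t$, so their average over the $T$ indices is unchanged. For the remaining term I would apply the geometric-series identity $\sum_{t=0}^{T-1}(1-\eta\lambda)^t = \frac{1-(1-\eta\lambda)^T}{\eta\lambda}$, which is valid because $\eta,\lambda>0$ together with $\eta<\frac{1}{\lambda}$ force $0 < 1-\eta\lambda < 1$. Using $1-(1-\eta\lambda)^T \le 1$, I would then bound $\frac{L\|W_0\|_{\rm{F}}}{T}\sum_{t=0}^{T-1}(1-\eta\lambda)^t \le \frac{L\|W_0\|_{\rm{F}}}{\eta\lambda T}$, which yields the first branch.

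There is no substantive obstacle, since the corollary is a routine averaging of an already-established per-step bound. The only points requiring care are verifying $0 < 1-\eta\lambda < 1$ so that the geometric sum is finite and bounded by its infinite-horizon value $\frac{1}{\eta\lambda}$, and observing that the deterministic (rather than in-expectation) nature of Proposition \ref{prop:02} lets the averaging pass through without any conditioning or independence arguments.
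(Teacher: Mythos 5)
Your proposal is correct and matches the paper's intended argument: the paper presents Corollary \ref{cor:01} as an immediate consequence of Proposition \ref{prop:02}, obtained exactly as you describe by averaging the pointwise bound over $t=0,\dots,T-1$ and bounding the geometric sum $\sum_{t=0}^{T-1}(1-\eta\lambda)^t \leq \frac{1}{\eta\lambda}$. Your handling of the two cases and the verification that $0 < 1-\eta\lambda < 1$ are exactly the routine steps the paper leaves implicit.
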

While these results suggest that a larger weight decay $\lambda$ yields a tighter bound, the condition $\eta\leq\frac{1}{\lambda}$ necessitates a smaller learning rate $\eta$. These desirable properties stem from the fact that Muon's search direction is inherently bounded. A key advantage of this feature is that our analysis does not rely on the common-and often restrictive-assumption of bounded gradients.
\NEWADD{We emphasize that the advantage of weight decay demonstrated by Propositions~\ref{prop:01} and~\ref{prop:02} is not a tighter asymptotic convergence rate: the rate $\mathcal{O}(1/(\eta T) + \sqrt{(1{-}\beta)r_1/b} + \hat{r}\eta)$ is the same with or without weight decay (compare Theorems~\ref{thm:01} and~\ref{thm:03}), and the explicit $1/(1-\lambda)$ prefactor appearing in Theorem~\ref{thm:03} in fact \emph{slightly loosens} the upper bound. The practical benefit of weight decay is instead the almost-sure boundedness of $\|W_t\|_{\rm{F}}$ and $\|\nabla f(W_t)\|_{\rm{F}}$ established above, which removes the need for the common (and often restrictive) bounded-gradient assumption and provides implicit regularization during training.}

The following is a convergence analysis of Muon (Algorithm \ref{alg:01}) with weight decay (the proofs of Theorems \ref{thm:03}(i) and (ii) are in Appendix \ref{app:prop0102}).

\begin{thm}\label{thm:03}
Suppose Assumptions \ref{assum:01} and \ref{assum:02} hold, and that Muon is run with $\eta \leq \frac{1}{\lambda}$. \UPDATE{If $\lambda < \min \left\{ \frac{1}{\| W_0 \|_{\rm{op}}}, 1\right\}$,} then, for all $T \in \mathbb{N}$,

\UPDATE{
{\em (i) for Muon without Nesterov and with Weight Decay,}
\begin{align*}
\frac{1}{T} \sum_{t=0}^{T-1} \mathbb{E}\left[ \| \nabla f(W_t) \|_{\POSTCOMMIT{*}} \right]
&\leq\frac{f(W_0)}{(1-\lambda)\eta T} + \frac{8\sqrt{\POSTCOMMIT{r_1}}\Delta}{(1-\beta)T} + \frac{2\sqrt{2(1-\beta)}}{1-\lambda}\sqrt{\frac{\POSTCOMMIT{r_1}\sigma^2}{b}} + \frac{4L\sqrt{\POSTCOMMIT{r_1r_2}}\eta}{(1-\beta)(1-\lambda)} + \frac{(\POSTCOMMIT{r_2}+\lambda^2\POSTCOMMIT{r_3})L\eta}{1-\lambda} \\
&= \mathcal{O}\left( \frac{1}{\eta T} + \sqrt{\frac{(1-\beta)\POSTCOMMIT{r_1}}{b}} + \POSTCOMMIT{\hat{r}}\eta \right),
\end{align*}

{\em (ii) for Muon with Nesterov and with Weight Decay,}
\begin{align*}
\frac{1}{T} \sum_{t=0}^{T-1} \mathbb{E}\left[ \| \nabla f(W_t) \|_{\POSTCOMMIT{*}} \right]
&\leq \frac{f(W_0)}{(1-\lambda)\eta T} + \frac{8\beta\sqrt{\POSTCOMMIT{r_1}}\Delta}{(1-\beta)T} + \frac{2(1+\sqrt{2}\beta)\sqrt{1-\beta}}{1-\lambda} \sqrt{\frac{\POSTCOMMIT{r_1}\sigma^2}{b}} + \frac{4L\beta\sqrt{\POSTCOMMIT{r_1r_2}}\eta}{(1-\beta)(1-\lambda)} + \frac{(\POSTCOMMIT{r_2}+\lambda^2\POSTCOMMIT{r_3})L\eta}{1-\lambda}\\
&= \mathcal{O}\left( \frac{1}{\eta T} + \sqrt{\frac{(1-\beta)\POSTCOMMIT{r_1}}{b}} + \POSTCOMMIT{\hat{r}} \eta \right),
\end{align*}
where $\Delta := \| M_0 - \nabla f(W_0) \|_{\rm{F}}$, $\POSTCOMMIT{r_1} := \displaystyle\POSTCOMMIT{\sup_t}\,\textup{rank}(C_t - \nabla f(W_t))$, \POSTCOMMIT{$r_2 := \displaystyle\POSTCOMMIT{\sup_t}\,\textup{rank}(O_t)$, $r_3 := \displaystyle\POSTCOMMIT{\sup_t}\,\textup{rank}(W_t)$, and $\hat{r} := \max\{r_1, r_2, r_3\}$}.
}
\end{thm}

Similar conclusions follow from Theorem \ref{thm:03}, which again demonstrate a modest advantage from incorporating Nesterov momentum. These results build on Propositions \ref{prop:01} and \ref{prop:02} and therefore inherit the assumption that $\eta \leq \frac{1}{\lambda}$. In other words, for Muon with weight decay to attain the stated convergence rate, it must satisfy $\eta \leq \frac{1}{\lambda}$. Practically speaking, since the weight decay coefficient $\lambda$ is typically less than $1$, this assumption is realistic and does not materially constrain the choice of learning rate.
\UPDATE{Furthermore, Theorem \ref{thm:03} requires that the weight decay coefficient satisfy $\lambda < \min \left\{ \frac{1}{\| W_0 \|_{\rm{op}}}, 1\right\}$, which recommends using a small $\lambda$. In practice, $\lambda$ is often used with values much smaller than 1, and this assumption can also be considered realistic.
\NEWADD{For practical reference, under He initialization \citep{He2015Del} the operator norm $\|W_0\|_{\mathrm{op}}$ is $\mathcal{O}(1)$ for each layer: we measured $\|W_0\|_{\mathrm{op}} \approx 0.76$ for $3{\times}3$ convolution layers and $\approx 1.41$ for $1{\times}1$ shortcuts in ResNet-18, giving $1/\max_\ell \|W_0^{(\ell)}\|_{\mathrm{op}} \approx 0.71$. Hence the condition $\lambda < 1/\|W_0\|_{\mathrm{op}}$ is satisfied for all weight decay values used in our experiments ($\lambda \leq 0.5$).}
Compared to Theorem \ref{thm:01}, combining Muon and weight decay multiplies several terms in the upper bound by $\frac{1}{1-\lambda}$, which \emph{slightly loosens} the rate bound rather than tightening it. When $\lambda$ is sufficiently small we have $\frac{1}{1-\lambda} \approx 1$, so the bound does not grow excessively large even with weight decay. \NEWADD{Consistent with the discussion after Propositions~\ref{prop:01}--\ref{prop:02}, the faster empirical convergence of the weight-decay variant observed in Figure~\ref{fig:convergence_bs2048} should therefore be attributed to the almost-sure boundedness of $\|W_t\|_{\rm{F}}$ and $\|\nabla f(W_t)\|_{\rm{F}}$ (i.e., implicit regularization), not to a smaller leading constant in the convergence rate.}
As in Theorem \ref{thm:01}, these bounds represent convergence to a neighborhood of a stationary point, suggesting that setting the learning rate small and the momentum and batch size large results in a smaller upper bound.}

\POSTCOMMIT{
\subsection{Muon with Newton-Schulz}\label{sec:ns}
In practice, the search direction $O_t$ in Muon is approximated via the Newton-Schulz iteration, whereas in the previous sections we have assumed that $O_t$ is computed exactly via SVD. In this section, we show that the approximation error of $O_t$ introduced by the Newton-Schulz iteration has a limited effect on the convergence of Muon. Let $O_t$ be the exact orthogonalization matrix obtained via SVD, and let $\tilde{O}_t^{(k)}$ be the approximate matrix after $k$ iterations of the Newton-Schulz iteration (\ref{eq:ns}).
According to \citet[Theorem 2]{Kim2026Con}, for all $k \in \mathbb{N}$, there exists $\delta_{(k)}>0$, such that $\| \tilde{O}_t^{(k)}-O_t\|_{\rm{op}} \leq \delta_{(k)} \leq c_0^{3^k}$, where $c_0 < 1$. Using this fact, the following proposition characterizes the effect of the approximation error introduced by the Newton-Schulz iteration on the convergence of Muon with Nesterov momentum and weight decay. We present the analysis for this variant, as it is the most involved, and omit the results for the other three variants. Note that similar results can be derived for the other variants as well. The proof of Proposition \ref{thm:ns} is in Appendix \ref{app:NS}.

\begin{prop}\label{thm:ns}
Suppose Assumptions \ref{assum:01} and \ref{assum:02} hold, and that Muon is run with $\eta \leq \frac{1}{\lambda}$. If $\lambda < \min \left\{ \frac{1}{\| W_0 \|_{\rm{op}}}, 1-\delta_{(k)}\right\}$, then, for all $T \in \mathbb{N}$, for Muon with Nesterov, Weight Decay and Newton-Schulz iteration,
\begin{align*}
\frac{1}{T}\sum_{t=0}^{T-1} \mathbb{E}\left[ \| \nabla f(W_t) \|_{\POSTCOMMIT{*}}\right]
&\leq \frac{f(W_0)-f(W_T)}{(1-\lambda-\delta_{(k)})\eta T} + \frac{8\beta\sqrt{r_1}\|M_0 - \nabla f(W_0) \|_{\rm{F}}}{(1-\beta)(1-\lambda-\delta_{(k)})T} + \frac{2(1+\sqrt{2}\beta)\sqrt{1-\beta}}{1-\lambda-\delta_{(k)}}\sqrt{\frac{r_1\sigma^2}{b}}\\ 
&\quad+ \frac{4L\beta\eta\sqrt{r_1r_2}}{(1-\beta)(1-\lambda-\delta_{(k)})} + \frac{(2r_2 + \lambda^2r_3)L\eta}{1-\lambda-\delta_{(k)}} + \frac{2r_2\delta_{(k)}^2L\eta}{1-\lambda-\delta_{(k)}} \\
&= \mathcal{O}\left( \frac{1}{\eta T} + \sqrt{\frac{r_1(1-\beta)}{b}} + \eta \hat{r}\right).
\end{align*}
where $\Delta := \| M_0 - \nabla f(W_0) \|_{\rm{F}}$, $\POSTCOMMIT{r_1} := \displaystyle\POSTCOMMIT{\sup_t}\,\textup{rank}(C_t - \nabla f(W_t))$, \POSTCOMMIT{$r_2 := \displaystyle\POSTCOMMIT{\sup_t}\,\textup{rank}(O_t)$, $r_3 := \displaystyle\POSTCOMMIT{\sup_t}\,\textup{rank}(W_t)$, and $\hat{r} := \max\{r_1, r_2, r_3\}$}.
\end{prop}
Comparing Proposition \ref{thm:ns} with Theorem \ref{thm:03}(ii), we observe that the Newton-Schulz approximation introduces an additional $\delta_{(k)}^2$-dependent term, while the existing coefficients are also mildly affected through the factor $1-\lambda-\delta_{(k)}$. Since $\delta_{(k)}$ decays exponentially with $k$, it is small for the fixed number of Newton-Schulz iterations used in practice, and its impact is therefore limited.
In fact, according to \citet{Shen2025Ont}, the experimental performance of the SVD-based Muon is equivalent to that of the Newton-Schulz-based version, with the main difference being the higher computational cost of the SVD procedure.
\NEWADD{Our own experiments comparing $k{=}3$ and $k{=}5$ Newton-Schulz iterations across four network widths (Appendix, Table~\ref{tab:ns_focus35}) also show that while $k{=}5$ reduces the approximation error, the test accuracy gain is at most $+0.15$ percentage points, confirming that the gap is negligible in practice.}
}

\section{Analysis of Muon's critical batch size}

Our theoretical analysis characterizes the critical batch size as a function of the gradient variance $\sigma^2$ and optimization hyperparameters. While explicitly modeling the dependence of $\sigma^2$ on model width or depth is beyond the scope of this single-matrix analysis, our results establish the fundamental relationship $b^\star \propto \sigma^2$. This suggests that scaling behaviors observed in larger models are mediated through changes in their gradient noise properties. Indeed, recent large-scale empirical studies report that Muon remains efficient at increasingly large batch sizes for large language models \citep{Liu2025Muo}; our theory supports this observation, predicting that if larger models entail distinct gradient variance characteristics, the critical batch size will shift accordingly. 
With this motivation, we now formalize the notion of the critical batch size used in our analysis.

We next introduce the concept of the critical batch size, defined as the batch size that minimizes computational complexity.
This complexity is measured in terms of the stochastic first-order oracle (SFO) complexity, which is the total number of stochastic gradient computations. Since the optimizer computes $b$ stochastic gradients per step, an optimizer that runs for $T$ steps with batch size $b$ incurs a total of $Tb$ SFO complexity.
Empirically, for batch sizes up to a certain threshold $b^\star$ (the critical batch size), the number of training steps $T$ required to train a DNN scales inversely with $b$ \citep{Shallue2019Mea, Ma2018The, Sam2018AnE}. Beyond $b^\star$, increasing the batch size yields diminishing returns in reducing $T$. The critical batch size is therefore the batch size that minimizes SFO complexity $Tb$. Prior work has shown that $b^\star$ depends on both the optimizer \citep{Zhang2019Wch} and dataset size \citep{Zhang2025How} and has established a theoretical framework for proving its existence and estimating its lower bound \citep{Sato2023Exi, Imaizumi2024Ite}. \UPDATE{Concurrently, \citet{Naganuma2025Ada} derived a non-Euclidean gradient noise scale tailored to spectral descent methods (including Muon) and proposed an adaptive batch-size schedule based on this noise scale. Their analysis characterizes the critical batch size of the spectral gradient step itself but does not consider the effects of Nesterov momentum or weight decay.} \UPDATE{In contrast, our work adopts the SFO-minimization framework and aims to clarify the relationship between the Muon hyperparameters $\beta$ and $\lambda$ and its critical batch sizes.}

\subsection{Relationship between batch size and number of steps needed for training}
Suppose Assumptions \ref{assum:01} and \ref{assum:02} hold. Then, by Theorems \ref{thm:01} and \ref{thm:03}, the following inequality holds:
\begin{align*}
\frac{1}{T}\sum_{t=0}^{T-1} \mathbb{E}\left[ \| \nabla f(W_t) \|_{\rm{F}} \right]
\leq \frac{X}{T} + \frac{Y}{\UPDATE{\sqrt{b}}} + Z,
\end{align*}
where $X,Y,Z>0$ are nonnegative constants.
Let $\epsilon>0$ be an arbitrarily fixed threshold.
\begin{align}\label{eq:new}
\exists T, \exists b \ \colon \frac{X}{T} + \frac{Y}{\UPDATE{\sqrt{b}}} + Z < \epsilon,
\end{align}
The relationship between $b$ and the number of steps $T_b$ satisfying Eq. (\ref{eq:new}) is as follows:
\begin{prop}\label{thm:b.1}
Suppose Assumptions \ref{assum:01} and \ref{assum:02} hold and let Muon be the optimizer under consideration. Then, $T_b$ defined by
\UPDATE{
\begin{align}
&T_b := \frac{X\sqrt{b}}{(\epsilon-Z) \sqrt{b} - Y} < T \text{\ \ for \ } \sqrt{b} > \frac{Y}{\epsilon - Z}, \label{eq:Tb}
\end{align}
}
satisfies Eq. (\ref{eq:new}). In addition, the function $T_b$ defined by Eq. (\ref{eq:Tb}) is monotone decreasing and convex for \UPDATE{$\sqrt{b} > \frac{Y}{\epsilon-Z}$}.
\end{prop}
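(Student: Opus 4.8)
The plan is to proceed by elementary algebra followed by two differentiations, treating $X, Y, \epsilon > 0$ as fixed constants and $b$ as the free variable. First I would isolate the condition under which Eq. (\ref{eq:new}) is solvable in $T$. Starting from $\frac{X}{T} + \frac{Y}{b} < \epsilon$, I would rearrange to $\frac{X}{T} < \epsilon - \frac{Y}{b} = \frac{\epsilon b - Y}{b}$. For the right-hand side to be positive---so that a finite positive $T$ can satisfy the strict inequality---I need $\epsilon b - Y > 0$, i.e. $b > \frac{Y}{\epsilon}$. This is precisely the domain restriction asserted in the proposition, and recognizing it is the only genuinely substantive step; everything else follows mechanically.

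Under this restriction, cross-multiplying (both $X$ and $\epsilon b - Y$ are positive, so the inequality direction is preserved) gives $T > \frac{Xb}{\epsilon b - Y} =: T(b)$. Hence any $T$ exceeding $T(b)$ satisfies Eq. (\ref{eq:new}). To make the verification airtight I would substitute back: for $T > T(b)$ we have $\frac{X}{T} < \frac{X}{T(b)} = \frac{\epsilon b - Y}{b}$, and adding $\frac{Y}{b}$ yields a sum strictly below $\epsilon$, as required. (Substituting $T = T(b)$ produces equality, confirming $T(b)$ is exactly the threshold.)

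For the monotonicity and convexity claims, I would differentiate $T(b) = \frac{Xb}{\epsilon b - Y}$ directly. The quotient rule gives
\begin{align*}
T'(b) = \frac{X(\epsilon b - Y) - Xb\cdot \epsilon}{(\epsilon b - Y)^2} = \frac{-XY}{(\epsilon b - Y)^2},
\end{align*}
which is strictly negative for $b > \frac{Y}{\epsilon}$, so $T(b)$ is monotone decreasing there. Differentiating once more,
\begin{align*}
T''(b) = \frac{2XY\epsilon}{(\epsilon b - Y)^3},
\end{align*}
and on the same domain the denominator $(\epsilon b - Y)^3$ is positive, forcing $T''(b) > 0$; hence $T(b)$ is convex.

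I expect no real obstacle here: the only place requiring care is the bookkeeping of signs, all of which hinge on the single observation that $b > \frac{Y}{\epsilon}$ keeps $\epsilon b - Y > 0$. This positivity simultaneously validates the cross-multiplication in the first step, the sign of $T'$, and the sign of $T''$, so once the domain is fixed the three assertions of the proposition follow at once.
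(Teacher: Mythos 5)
Your proposal is correct and follows essentially the same route as the paper: the same quotient-rule computations $T'(b) = \frac{-XY}{(\epsilon b - Y)^2}$ and $T''(b) = \frac{2XY\epsilon}{(\epsilon b - Y)^3}$ with sign analysis on the domain $b > \frac{Y}{\epsilon}$. Your explicit derivation that $T > T(b)$ implies Eq.~(\ref{eq:new}) is actually more detailed than the paper's one-line assertion of that fact, but it is the same argument.
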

\begin{proof}
According to Eq.(\ref{eq:Tb}), Muon satisfies Eq.(\ref{eq:new}). For \UPDATE{$\sqrt{b} > \frac{Y}{\epsilon-Z}$}, we have
\UPDATE{
\begin{align*}
&\frac{\mathrm{d} T_b}{\mathrm{d} b} = \frac{-XY}{2\sqrt{b}\left\{ (\epsilon-Z)\sqrt{b} -Y \right\}^2} \leq 0, \ \ 
\frac{\mathrm{d}^2 T_b}{\mathrm{d} b^2} = \frac{XY\left\{ 3(\epsilon-Z)\sqrt{b} - Y\right\}}{4b\sqrt{b}\left\{(\epsilon -Z)\sqrt{b} -Y \right\}^3}\geq 0.
\end{align*}
}
Therefore, $T_b$ is monotone decreasing and convex for \UPDATE{$\sqrt{b} > \frac{Y}{\epsilon - Z}$}. This completes the proof.
\end{proof}

\UPDATE{Proposition \ref{thm:b.1} implies that the required number of steps $T_b$ decreases as $\sqrt{b}$ increases.
In fact, the reduction in the required number of steps achieved by increasing the batch size has a ceiling. The next section will show that a critical batch size exists that minimizes computational complexity.}

\subsection{Existence of a critical batch size}\label{sec:cri}
The critical batch size minimizes the computational complexity for training. Here, we use SFO complexity as a measure of computational complexity. Since the stochastic gradient is computed $b$ times per step, SFO complexity is defined as
\UPDATE{
\begin{align}\label{eq:Tbb}
T_bb = \frac{Xb\sqrt{b}}{(\epsilon-Z)\sqrt{b}-Y}.
\end{align}
}
The following theorem guarantees the existence of critical batch sizes that are global minimizers of $T_bb$ defined by Eq. (\ref{eq:Tbb}). 
\begin{prop}\label{thm:b.2}
Suppose that Assumptions \ref{assum:01} and \ref{assum:02} hold and consider Muon. Then, there exists
\UPDATE{
\begin{align}\label{eq:98}
b_{\text{Muon}}^\star > \frac{9Y^2}{4\epsilon^2}
\end{align}
}
such that $b_{\text{Muon}}^\star$ minimizes \UPDATE{SFO complexity} $T_bb$.
\end{prop}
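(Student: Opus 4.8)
The plan is to regard the SFO complexity $T(b)b = \frac{Xb^2}{\epsilon b - Y}$ as a single-variable function $g(b)$ on the open domain $b > \frac{Y}{\epsilon}$, which is exactly the range on which Proposition \ref{thm:b.1} makes $T(b)$ (and hence $g$) well defined and positive, and to pin down its global minimizer by combining the first-order optimality condition with convexity. Since $g$ is smooth on this domain, any interior global minimizer must be a stationary point, so the first step is to compute $g'(b)$ and solve $g'(b)=0$.

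Differentiating the quotient gives
\begin{align*}
g'(b) = X \cdot \frac{2b(\epsilon b - Y) - \epsilon b^2}{(\epsilon b - Y)^2} = X \cdot \frac{b(\epsilon b - 2Y)}{(\epsilon b - Y)^2}.
\end{align*}
On the domain $b > \frac{Y}{\epsilon} > 0$, the factors $X$, $b$, and $(\epsilon b - Y)^2$ are all strictly positive, so the sign of $g'(b)$ is governed entirely by the factor $\epsilon b - 2Y$. This immediately singles out $b = \frac{2Y}{\epsilon}$ as the unique stationary point, and moreover shows $g'(b) < 0$ on $\left(\frac{Y}{\epsilon}, \frac{2Y}{\epsilon}\right)$ and $g'(b) > 0$ on $\left(\frac{2Y}{\epsilon}, \infty\right)$, so $g$ decreases then increases.

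To match the convexity claimed in the statement, I would then verify it directly. One clean route is the substitution $u := \epsilon b - Y > 0$, under which
\begin{align*}
g(b) = \frac{X}{\epsilon^2}\left( u + 2Y + \frac{Y^2}{u} \right),
\end{align*}
which is manifestly convex in $u$ (a linear term plus the convex term $Y^2/u$) and hence convex in $b$, since $u$ is an affine function of $b$; the same expression yields the minimizer in one stroke via the elementary inequality $u + \frac{Y^2}{u} \ge 2Y$ with equality iff $u = Y$, i.e. $\epsilon b - Y = Y$, i.e. $b = \frac{2Y}{\epsilon}$. Alternatively, a direct second-derivative computation gives $g''(b) = \frac{2XY^2}{(\epsilon b - Y)^3} > 0$ on the domain, confirming strict convexity. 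Either way, convexity upgrades the stationary point $\frac{2Y}{\epsilon}$ from a local to the global minimizer, completing the proof.

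I do not expect a genuine obstacle here; the argument is elementary calculus on a rational function. The only points requiring a little care are bookkeeping in the differentiation and, conceptually, ensuring that the stationary point $\frac{2Y}{\epsilon}$ genuinely lies in the admissible domain $b > \frac{Y}{\epsilon}$ (it does, since $\frac{2Y}{\epsilon} > \frac{Y}{\epsilon}$) and that one concludes \emph{global} rather than merely local optimality. The substitution $u = \epsilon b - Y$ is attractive precisely because it dispatches the convexity, the domain check, and the location of the minimizer simultaneously, making the AM-GM step the natural finish.
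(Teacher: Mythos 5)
Your proposal is correct and takes essentially the same approach as the paper: the paper likewise computes $\frac{\mathrm{d}}{\mathrm{d}b}\bigl(T(b)b\bigr) = \frac{Xb(\epsilon b - 2Y)}{(\epsilon b - Y)^2}$, reads off the sign change at $b = \frac{2Y}{\epsilon}$, and confirms convexity via $\frac{\mathrm{d}^2}{\mathrm{d}b^2}\bigl(T(b)b\bigr) = \frac{2XY^2}{(\epsilon b - Y)^3} \geq 0$. Your substitution $u = \epsilon b - Y$ with the AM--GM inequality is a pleasant self-contained alternative, but it is not needed beyond the derivative argument that you and the paper both give.
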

\begin{proof}
From Eq.(\ref{eq:98}), we have that, for \UPDATE{$\sqrt{b} > \frac{Y}{\epsilon-Z}$},
\UPDATE{
\begin{align*}
&\frac{\mathrm{d} T_bb}{\mathrm{d} b} = \frac{X\sqrt{b}\left\{2(\epsilon-Z)\sqrt{b}-3Y \right\}}{2\left\{ (\epsilon-Z)\sqrt{b}-Y\right\}^2},\ \ 
\frac{\mathrm{d}^2 T_bb}{\mathrm{d} b^2} = \frac{XY\left\{ 3Y-(\epsilon-Z)\sqrt{b}\right\}}{4\sqrt{b}\left\{ (\epsilon-Z)\sqrt{b} - Y\right\}^3}.
\end{align*}
Since $\frac{\mathrm{d}^2 T_bb}{\mathrm{d} b^2} > 0$ holds when $\sqrt{b} < \frac{3Y}{\epsilon-Z}$ and $\frac{\mathrm{d}^2 T_bb}{\mathrm{d} b^2} \leq 0$ holds when $\sqrt{b} \geq \frac{3Y}{\epsilon-Z}$, $T_bb$ is a convex function for $\frac{Y}{\epsilon-Z}<\sqrt{b} < \frac{3Y}{\epsilon-Z}$. In addition, from $\frac{\mathrm{d} T_bb}{\mathrm{d} b} = 0$, we find that $\sqrt{b_{\text{Muon}}^\star} = \frac{3Y}{2(\epsilon-Z)}$ is a global minimizer of $T_bb$.  
Therefore, we have
\begin{align*}
\sqrt{b_{\text{Muon}}^\star} > \frac{3Y}{2\epsilon}, \ \text{i.e., }\ 
b_{\text{Muon}}^\star > \frac{9Y^2}{4\epsilon^2}
\end{align*}
}
This completes the proof.
\end{proof}


\UPDATE{In Proposition \ref{thm:b.2}, we obtained a lower bound for the critical batch size using $\frac{3Y}{2(\epsilon-Z)} > \frac{3Y}{2\epsilon}$. We provide additional context regarding the tightness of this inequality. In any variant we are considering, the term $Z$ is expressed using the parameter dimension $n$ and the learning rate $\eta$. For example, in the case of Muon with Nesterov momentum and weight decay, $Z=\frac{4L\beta\sqrt{r_1r_2}\eta}{(1-\beta)(1-\lambda)} + \frac{(r_2+\lambda^2r_3)L\eta}{1-\lambda} = \mathcal{O}\left( \hat{r}\eta\right)$.
Therefore, by choosing a sufficiently small learning rate $\eta$, the term $Z$ also becomes sufficiently small, and this inequality can be considered tight.}
On the basis of Theorems \ref{thm:01} and \ref{thm:03} and Proposition \ref{thm:b.2}, we derive the following proposition, which gives $b_{\text{Muon}}^\star$.

\begin{prop}\label{prop:03}
Suppose Assumptions \ref{assum:01} and \ref{assum:02} hold. Then, for a given precision $\epsilon$, the \UPDATE{lower bound of} the critical batch size for Muon is shown in Table \ref{tab:cbs}.
\end{prop}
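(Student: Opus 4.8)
The plan is to read Proposition \ref{prop:03} as an immediate specialization of Proposition \ref{thm:b.2}. The latter already shows, for the abstract bound $\frac{X}{T} + \frac{Y}{b} + Z$ with $X, Y, Z > 0$, that the SFO complexity $T(b)b$ is minimized at $b_{\text{Muon}}^\star = \frac{2Y}{\epsilon}$. Hence the entire task reduces to matching each of the four Muon variants to this template and extracting the constant $Y$, namely the coefficient of the $\frac{1}{b}$ term in the convergence bounds of Theorems \ref{thm:01} and \ref{thm:03}. Substituting each such $Y$ into $b_{\text{Muon}}^\star = \frac{2Y}{\epsilon}$ then fills in the four entries of Table \ref{tab:cbs}.

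I would carry out the extraction variant by variant. For Muon without Nesterov and without weight decay (Theorem \ref{thm:01}(i)) the $\frac{1}{b}$ term is $\frac{(1-\beta)\sigma^2}{b}$, so $Y = (1-\beta)\sigma^2$ and $b_{\text{Muon}}^\star = \frac{2(1-\beta)\sigma^2}{\epsilon}$. With Nesterov but without weight decay (Theorem \ref{thm:01}(ii)) the coefficient is $\bar{\beta}\sigma^2$ with $\bar{\beta} = \frac{(2\beta+1)(1-\beta)}{2}$, giving $b_{\text{Muon}}^\star = \frac{(2\beta+1)(1-\beta)\sigma^2}{\epsilon}$. The two weight-decay cases (Theorem \ref{thm:03}) simply add $\frac{\lambda}{2}\sigma^2$ to the coefficient, yielding $b_{\text{Muon}}^\star = \frac{\{2(1-\beta)+\lambda\}\sigma^2}{\epsilon}$ without Nesterov and $b_{\text{Muon}}^\star = \frac{\{(2\beta+1)(1-\beta)+\lambda\}\sigma^2}{\epsilon}$ with Nesterov, the latter reproducing the formula quoted in the introduction. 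Positivity of $X, Y, Z$, required by Proposition \ref{thm:b.2}, follows from $\beta \in [0,1)$, $\lambda > 0$, $\sigma^2 > 0$, and $n \geq 1$.

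The one point needing care --- and the main obstacle to a fully rigorous statement --- is that each bound carries an extra term $\nu\sqrt{\frac{\sigma^2}{b}}$ that scales as $\frac{1}{\sqrt{b}}$ rather than $\frac{1}{b}$, so it does not sit inside the $\frac{Y}{b}$ slot of the template. My approach is to absorb this term into the $b$-independent constant $Z$: since $\sqrt{\frac{\sigma^2}{b}} = \frac{\sigma}{\sqrt{b}} \leq \sigma$ for $b \geq 1$, we may bound $\nu\sqrt{\frac{\sigma^2}{b}} \leq \nu\sigma$, a constant that joins the $n$-dependent terms in $Z$. This is consistent with the big-$\mathcal{O}$ reductions already displayed in Theorems \ref{thm:01} and \ref{thm:03}, where $\sqrt{n/b} \leq \sqrt{n}$ is folded into the $\mathcal{O}(n)$ term, and with the critical-batch-size framework of Section \ref{sec:cri}, which excludes $Z$ from the threshold condition $\frac{X}{T} + \frac{Y}{b} < \epsilon$. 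Once this grouping is made explicit, so that $Y$ is unambiguously the coefficient of the genuine $\frac{1}{b}$ term, Proposition \ref{prop:03} follows directly from Proposition \ref{thm:b.2}.
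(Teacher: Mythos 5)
Your proposal is correct and follows essentially the same route as the paper: the paper gives no separate proof of Proposition \ref{prop:03}, deriving it exactly as you do by reading off the coefficient $Y$ of the $\frac{1}{b}$ term in Theorems \ref{thm:01} and \ref{thm:03} and substituting into $b_{\text{Muon}}^\star = \frac{2Y}{\epsilon}$ from Proposition \ref{thm:b.2}, and your four extracted values of $Y$ match Table \ref{tab:cbs} exactly. Your explicit absorption of the $\nu\sqrt{\sigma^2/b}$ term into the $b$-independent constant $Z$ is in fact more careful than the paper, which silently assumes the bound fits the $\frac{X}{T}+\frac{Y}{b}+Z$ template.
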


\begin{table*}[h!]
 \centering
  \caption{Approximate \UPDATE{lower bound of} critical batch size $b_{\text{Muon}}^\star$ computed with $\beta=0.95$ and $\lambda=0.0625$.}
  \begin{tabular}{c @{\hskip 2.0em} c @{\hskip 2.0em} c} 
 \toprule
 				 & w/o weight decay &  w/ weight decay \\ 
 \midrule
 \addlinespace[0.2em]
 w/o Nesterov & \UPDATE{$\dfrac{18(1-\beta)\POSTCOMMIT{r_1}\sigma^2}{\epsilon^2} \approx 0.9 \times \dfrac{\POSTCOMMIT{r_1}\sigma^2}{\epsilon^2}$} & \UPDATE{$\dfrac{18(1-\beta)\POSTCOMMIT{r_1}\sigma^2}{(1-\lambda)^2\epsilon^2} \approx 1.0 \times \dfrac{\POSTCOMMIT{r_1}\sigma^2}{\epsilon^2}$} \\
 \addlinespace[0.2em]
 \hline
 \addlinespace[0.2em]
 w/ Nesterov & \UPDATE{$\dfrac{9(1-\beta)(1+\sqrt{2}\beta)^2\POSTCOMMIT{r_1}\sigma^2}{\epsilon^2} \approx 2.47 \times \dfrac{\POSTCOMMIT{r_1}\sigma^2}{\epsilon^2}$} & \UPDATE{$\dfrac{9(1-\beta)(1+\sqrt{2}\beta)^2\POSTCOMMIT{r_1}\sigma^2}{(1-\lambda)^2\epsilon^2} \approx 2.81 \times \dfrac{\POSTCOMMIT{r_1}\sigma^2}{\epsilon^2}$} \\
 \addlinespace[0.2em]
 \bottomrule 
 \end{tabular}
 \label{tab:cbs}
\end{table*}


\NEWADD{To connect Table~\ref{tab:cbs} with the proxy $X/T + Y/\sqrt{b} + Z < \epsilon$, the correspondence for Muon with Nesterov and weight decay (Theorem~\ref{thm:03}(ii)) is $Y = \frac{2(1+\sqrt{2}\beta)\sqrt{1-\beta}}{1-\lambda}\sqrt{r_1\sigma^2}$ and $Z = \frac{4L\beta\sqrt{r_1r_2}\eta}{(1-\beta)(1-\lambda)} + \frac{(r_2+\lambda^2r_3)L\eta}{1-\lambda}$. The critical batch size $b^\star > 9Y^2/(4\epsilon^2)$ then yields the formula in Table~\ref{tab:cbs}.}
\UPDATE{The results in Table \ref{tab:cbs} indicate that, for a given accuracy $\epsilon$, \POSTCOMMIT{combining Nesterov momentum significantly increases the critical batch size \emph{lower bound} for Muon}. In contrast, the increase in \POSTCOMMIT{the critical batch size lower bound} achieved by combining weight decay is modest when using sufficiently small values of $\lambda$.} Furthermore, in all cases, $Y^2$ becomes smaller as $\beta \to 1$ \UPDATE{(e.g., $Y^2=(1-\beta)(1+\sqrt{2}\beta)^2 \to 0$)}, suggesting that the larger the momentum $\beta$, the smaller the critical batch size $b_{\text{Muon}}^\star$.
\UPDATE{We stress that the formulas in Table~\ref{tab:cbs} are lower bounds derived from the convergence upper bound and contain the problem-dependent quantities $\sigma^2$, $\epsilon$, and $r_1$. Accordingly, they should be read as \emph{qualitative scaling laws} that identify which hyperparameters govern $b^\star$ and how, rather than as numerically predictive estimates. When comparing two Muon variants on the same workload, the unknown quantities cancel and the ratio depends only on $\beta$ and $\lambda$ (see Section~\ref{sec:experiments}).}
\NEWADD{Although Table~\ref{tab:cbs} is instantiated at $\lambda = 0.0625$ for readability, the $1/(1-\lambda)^2$ factor produces a substantially larger shift for larger $\lambda$: the predicted CBS ratio between the Nesterov+weight-decay and Nesterov-only variants scales as $1/(1-\lambda)^2$, yielding $\approx 1.56$ at $\lambda = 0.2$ and $\approx 4.0$ at $\lambda = 0.5$. The widened-range \resnet/\cifar\ sweep in Appendix~\ref{app:variant_cbs_cifar} (Table~\ref{tab:variant_cbs_cifar}; 66 runs, $b \in \{32, \ldots, 4096\}$) confirms this quantitatively: the empirical CBS shifts from $\approx 1024$ for the Nesterov-only variant to $\approx 4096$ at $\lambda = 0.5$, i.e., a ratio of $4096/1024 = 4.0$ that matches the predicted $1/(1-0.5)^2 = 4.0$ within the power-of-two grid.}
\UPDATE{In the next section, we provide numerical experiments focusing on this relationship between the critical batch sizes and hyperparameters $\beta$ and $\lambda$ (See Figures \ref{fig:cbs_main}--\ref{fig:llama_c4_beta}).}

\POSTCOMMIT{\paragraph{Novelty of the CBS analysis for Muon.}
We emphasize that, to our knowledge, the CBS derivation for Muon is among the first to address three key aspects compared to existing CBS analyses for SGD-type optimizers \citep{Sato2023Exi, Imaizumi2024Ite}.
First, our CBS \POSTCOMMIT{lower bound} formula suggests a \emph{rank-dependent} scaling $b^\star \propto \POSTCOMMIT{r_1}$, where $\POSTCOMMIT{r_1}$ is the rank of the momentum error matrix
and does not occur in SGD analysis \cite{Imaizumi2024Ite}. This structural difference may stem from Muon's matrix-recognized update rule.
Second, the momentum parameter $\beta$ appears explicitly in the CBS \POSTCOMMIT{lower bound} formula (e.g., $b^\star \propto (1-\beta)(1+\sqrt{2}\beta)^2$ for the Nesterov variant), suggesting how momentum tuning may affect the optimal batch size---a relationship not emphasized in prior CBS results.
Third, under our current analysis, we provide a two-level validation framework that combines a controlled full-Muon toy experiment designed to match the theoretical assumptions (with proxy fit $R^2 \approx 0.914$) with practical-scale hybrid experiments across vision and language workloads, offering a comprehensive validation approach for a matrix-aware optimizer.}

\section{Numerical Experiments}
\label{sec:experiments}
{We evaluate Muon on three workloads: (i) \resnet on \cifar, (ii) \vgg on \cifarh, and (iii) \llama on the \cfour corpus. We first analyze convergence and critical batch size on \cifar with \resnet, then report language–modeling results on \cfour. Results for \vgg on \cifarh are summarized in Appendix~\ref{app:additional}, where we observe the same qualitative trends.
}

\NEWADD{The experiments are organized into three complementary tiers.
First, a \emph{controlled full-Muon setting} (Teacher-Student task, Appendix~\ref{app:gradnorm_proxy}) that exactly matches the theoretical assumptions serves as a direct, quantitative test of the convergence proxy and CBS \NEWADD{lower bound} formulas.
Second, \emph{practical-scale vision experiments} (ResNet/CIFAR, VGG/CIFAR-100) test whether qualitative predictions (CBS ordering among variants, $\beta$--CBS monotonicity, and weight-decay effects) transfer to realistic architectures under the standard hybrid optimizer.
Third, a \emph{language-modeling workload} (Llama/C4) probes whether the key trends generalize to a substantially different domain, serving as a practical relevance check rather than a strict theory validation.}

\paragraph{Experimental Setup}
{
For \cifar and \cifarh experiments, unless otherwise stated, we tuned the learning rate by grid search at a base batch size of 512 and applied square-root scaling for Muon and AdamW; for Momentum SGD we tried both square-root and linear scaling. Each configuration was run five times with different seeds and we report mean and standard deviation. For \cfour dataset \footnote{\url{https://huggingface.co/datasets/allenai/c4}} we trained \llama with sequence length 2048 and batch sizes from 64 to 8192. SFO complexity is measured as \emph{steps}~$\times$~\emph{batch size}. Further details of the experimental protocol are in Appendix~\ref{app:exp_detail}.
}
Details of the tuning protocol and compute fairness considerations are provided in Appendix~\ref{app:exp_detail}.

\begin{figure*}[h!]
    \centering
    \includegraphics[width=0.44\linewidth]{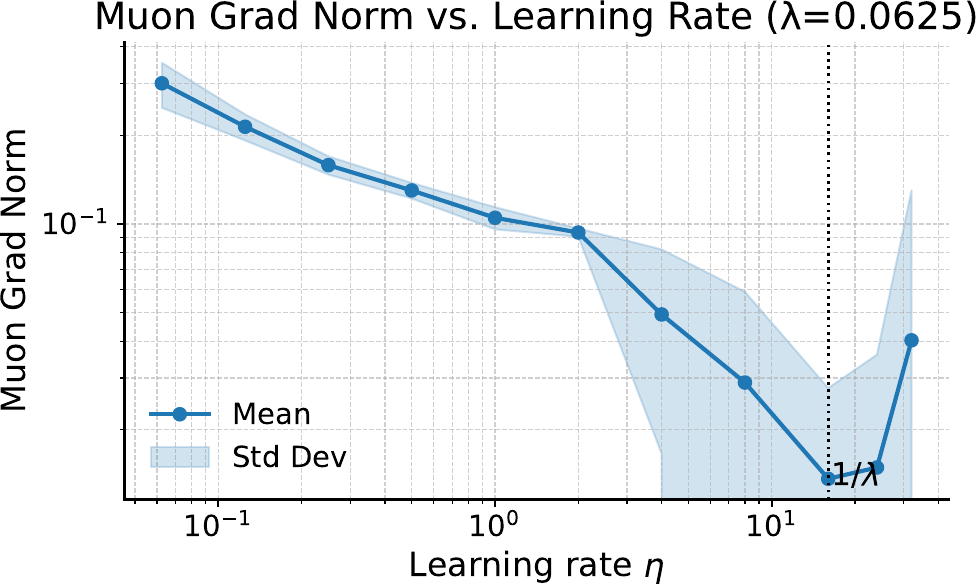}
    \includegraphics[width=0.44\linewidth]{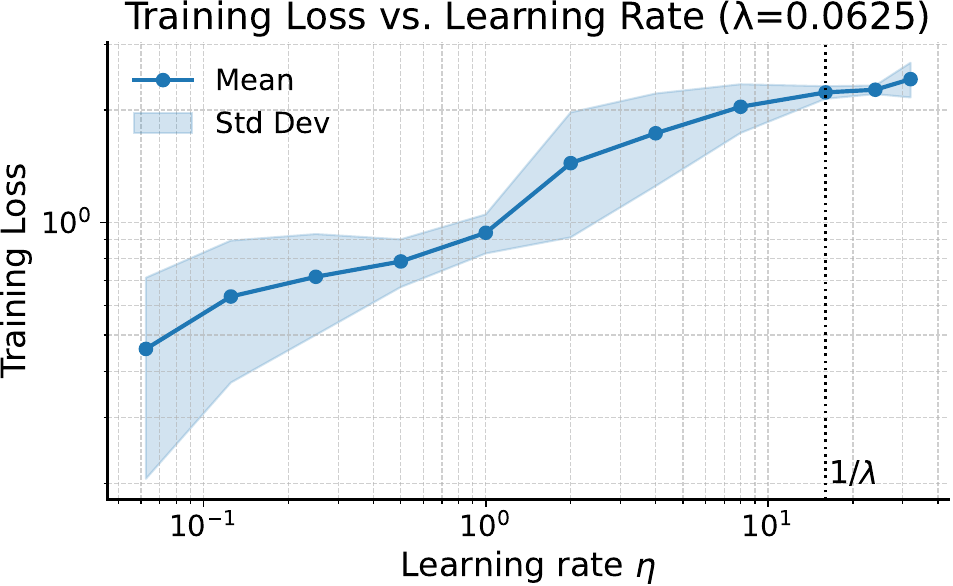}
     \caption{
     {
     Empirical validation of the stability condition in Proposition~\ref{prop:3_2}. Final gradient norm (left) and training loss (right) for \resnet on \cifar with Muon at $\lambda{=}0.0625$. The dashed line shows $\eta{=}1/\lambda$. Training is most stable near this value.
     }
     }
    \label{fig:convergence_000625}
\end{figure*}
\begin{figure*}[h!]
    \centering
    \includegraphics[width=0.44\linewidth]{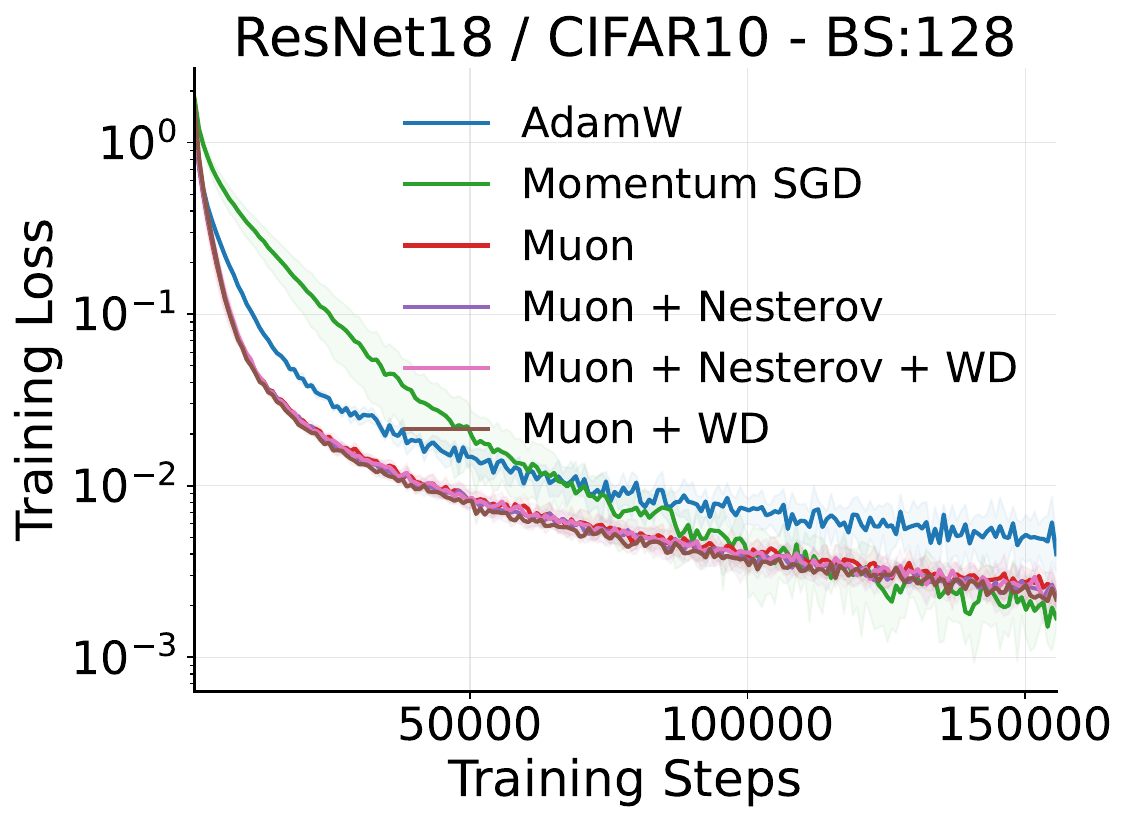}
    \includegraphics[width=0.44\linewidth]{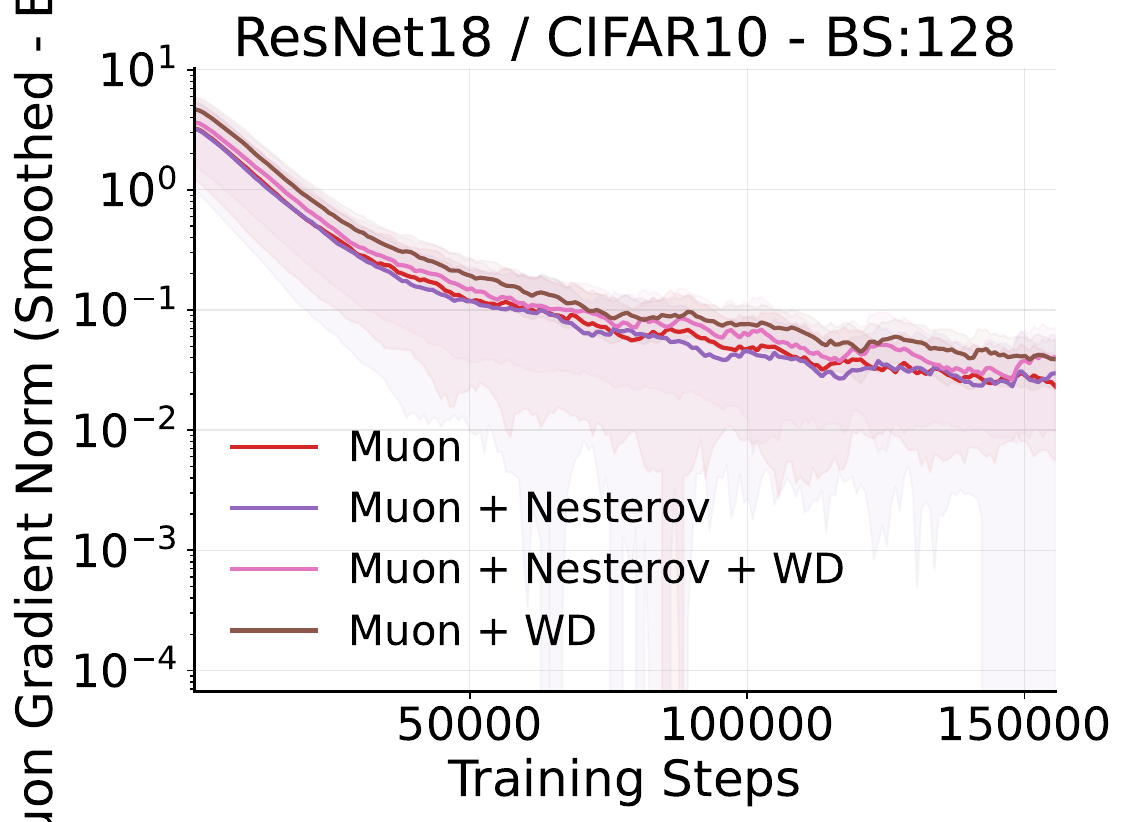}
    \caption{Convergence rate comparison for \resnet on \cifar with batch size~$128$ trained for $400$ epochs. Training loss (left) and smoothed gradient norm (right) over steps. Muon with Nesterov momentum and weight decay converges the fastest.}
    \label{fig:convergence_bs2048}
\end{figure*}

In the vision workloads, we follow common practice and use a {hybrid} optimizer (Muon on matrix-shaped parameter blocks and AdamW on the remaining parameters).
\UPDATE{This reflects the standard deployment of Muon: since the original release \citep{jordan2024muon}, Muon has been applied exclusively to matrix-shaped (2D) parameters while a separate optimizer (typically AdamW) handles biases, normalization layers, and embedding vectors. We therefore adopt the same hybrid configuration in our practical-scale experiments to ensure our results are directly relevant to practitioners.}

\UPDATE{\paragraph{Two-level validation strategy: full Muon vs.\ hybrid Muon.}
Our theory analyzes a single matrix parameter updated entirely by Muon, whereas practical implementations use a hybrid optimizer (Muon on matrix-shaped blocks, AdamW on the rest).
To bridge this gap, we employ a two-level strategy:
(i)~a \emph{controlled full-Muon setting} (Appendix~\ref{app:gradnorm_proxy}) that exactly matches the theoretical assumptions, confirming the convergence proxy $X/T + Y/\sqrt{b} + Z$ ($R^2 \approx 0.914$) and CBS scaling $b^\star \propto 1/\varepsilon^2$; and
(ii)~\emph{practical-scale hybrid experiments} (ResNet/CIFAR, VGG/CIFAR, Llama/C4) showing that qualitative predictions (CBS ordering, $\beta$--CBS monotonicity) transfer to the hybrid regime.
This transfer is expected because the Muon component dominates, gradient noise scales are highly correlated across layer types \citep{gray2024normalization}, and the CBS predictions depend only on hyperparameter ratios, not problem-specific constants.}

\paragraph{Theory-aligned stopping proxy: gradient-norm threshold.}
Our theory upper-bounds the {average expected full-gradient norm}.
To better align experiments with this proxy, we additionally report a gradient-norm–based stopping metric.
For each run, we track the Frobenius norm of the (mini-batch) gradient,
$g_t := \|\nabla f_{\mathcal{S}_t}(W_t)\|_{\rm F}$, and define a smoothed estimate
$\tilde g_t$ via an exponential moving average (EMA) over steps.\footnote{

For large-scale workloads, computing the full gradient is impractical; we therefore use the mini-batch gradient norm as a \NEWADD{practical surrogate} for the full gradient and report EMA-smoothed curves for stability.
\NEWADD{We emphasize that while the stochastic gradient is an unbiased estimator of the true gradient, its \emph{norm} is not an unbiased estimator of the true gradient norm; accordingly, we treat this quantity as a calibrated surrogate rather than an exact proxy. In our controlled full-Muon experiments (Appendix~\ref{app:gradnorm_proxy}), the Pearson correlation between the full-batch gradient norm and the mini-batch gradient norm exceeds $0.99$ across all tested batch sizes and seeds (Table~\ref{tab:proxy_calibration}), confirming that the surrogate faithfully tracks the theoretical quantity.
Theoretically, the CBS is the batch size that minimizes the computational cost required for the true gradient norm to reach a prescribed threshold. Empirically, however, the batch size that minimizes a given computational cost falls in the same regime whether measured via full-gradient norm, mini-batch gradient norm, or task-level metrics such as accuracy and loss (see Table~\ref{tab:stopping_sensitivity}).
In a controlled toy setting (Appendix~\ref{app:gradnorm_proxy}), we also compute the full-batch gradient norm to directly match the theoretical quantity. \NEWADD{The EMA window (100 steps) is chosen to balance noise suppression and responsiveness; preliminary tests with windows of 50 and 200 steps yielded CBS estimates within the same power-of-two grid point.}}}
Given a target threshold $\epsilon$, we define the stopping time
$T_\varepsilon(b)$ as the first step such that $\tilde g_t \le \epsilon$,
and report both steps $T_\varepsilon(b)$ and SFO complexity $b\cdot T_\varepsilon(b)$.
We emphasize that our original loss/accuracy targets remain useful for practitioner-facing comparisons, while the gradient-norm criterion is introduced specifically to validate the theory-aligned proxy.

\NEWADD{\paragraph{Rationale for workload-specific stopping targets.}
Our experiments use different stopping targets across workloads: $90\%$ test accuracy and $95\%$ training accuracy for \resnet/\cifar (Figure~\ref{fig:cbs_main}), the training loss for \llama/\cfour (Figure~\ref{fig:llama_c4_batch}), and the gradient-norm threshold $\varepsilon$ for the controlled full-Muon toy (Appendix~\ref{app:gradnorm_proxy}). These choices are not arbitrary but follow a single rule: \emph{for each workload, we choose the most commonly reported metric in that domain, selecting a threshold within the regime where all compared optimizers reach the target so that SFO is well-defined}. For vision classification, test accuracy is the de facto benchmark metric and $90\%$ on \cifar sits in the well-trained regime for \resnet (near but below the $\approx 93\%$ ceiling), so every optimizer in our sweep reaches it; training accuracy at $95\%$ additionally removes any generalization-gap confound. For language modeling, test accuracy is not standard; we therefore use training loss, which is the reported metric in the Muon-LLM literature \citep{jordan2024muon,Liu2025Muo,Ess2025Pra}. For the controlled toy, the theoretical quantity itself is the full-gradient norm, so the $\varepsilon$-threshold criterion directly matches the theorem statement.
To verify that our qualitative conclusions are not an artifact of any particular target, Appendix~\ref{app:stopping_sensitivity} (Table~\ref{tab:stopping_sensitivity}) sweeps 13 different stopping criteria spanning test accuracy, training accuracy, training loss, and EMA gradient norm on a common workload, and shows that the SFO-minimizing batch size consistently falls in the range $32$--$256$, with tighter thresholds shifting the CBS to larger values as $b^\star \propto 1/\varepsilon^2$ predicts. This robustness supports the workload-specific choices adopted in the main text.}

\paragraph{Convergence Analysis}


{
We empirically validated the stability condition from Proposition~\ref{prop:3_2}. Figure~\ref{fig:convergence_000625} shows final gradient norm and training loss for \resnet on \cifar across learning rates $\eta$ at fixed weight decay $\lambda{=}0.0625$. The vertical dashed line marks the threshold $\eta{=}1/\lambda{=}16.0$. The lowest gradient norm occurs near this threshold; for larger $\eta$ training becomes unstable. The same behavior holds for other values of $\lambda$ (Appendix~\ref{app:additional}).
}
\UPDATE{This result validates a key practical implication of Propositions~\ref{prop:01} and~\ref{prop:02}: the condition $\eta \le 1/\lambda$ is not merely a theoretical convenience but a sharp stability boundary. Practitioners tuning Muon with weight decay should therefore set the learning rate at or below $1/\lambda$.}


We next compared the four Muon variants with AdamW and Momentum SGD. Figure~\ref{fig:convergence_bs2048} shows that Muon with Nesterov momentum and weight decay attains the fastest decrease in both loss and gradient norm.
\NEWADD{From Theorems \ref{thm:01} and \ref{thm:03}, the four variants of Muon exhibit almost identical convergence rates, making it difficult to clearly compare their convergence speeds based on the theory alone.
On the other hand, Muon with weight decay consistently demonstrates better performance than its counterpart without weight decay, which can be explained by Propositions \ref{prop:01} and \ref{prop:02}. In particular, weight decay guarantees the boundedness and monotonicity of both the parameter norm and the gradient norm, improving stability and thereby leading to better convergence behavior in practice.}

\paragraph{Critical Batch Size}


{
We measured the number of steps and SFO needed to reach 90\% test accuracy and 95\% training accuracy. Figure~\ref{fig:cbs_main} shows that Muon scales better with batch size than the baselines. SFO is the lowest for Muon over the entire range, and Nesterov shifts the SFO-minimizing batch size to larger values. Momentum SGD requires more steps within the same schedule but eventually reaches comparable accuracy (Appendix~\ref{app:additional}).
}

\UPDATE{Table~\ref{tab:cbs} predicts that, among the four Muon variants, adding Nesterov momentum increases the CBS coefficient more than adding weight decay alone.
In Figure~\ref{fig:cbs_main} (Right), the SFO-minimizing batch sizes of the four Muon variants are close to each other, with slight differences that are directionally consistent with the theoretical ordering.
Across all tested batch sizes, Muon achieves fewer steps and lower SFO than AdamW.}
\NEWADD{We note that the SFO curve is relatively flat near its minimum for all Muon variants, so several neighboring batch sizes yield near-optimal efficiency. In this flat regime, the small inter-variant CBS differences predicted by Table~\ref{tab:cbs} are well within the plateau width, which is why the four curves largely overlap. A companion \fmnist\ sweep at batch sizes down to $b{=}4$ (Table~\ref{tab:sfo_plateau} in Appendix~\ref{app:variant_ordering}) quantifies this: at the optimal batch size the four Muon variants agree to within $4.6\%$, and across the plateau they lie within a $1.58\times$ band; the controlled full-Muon experiment, which eliminates the hybrid-optimizer confound, recovers a statistically significant ordering consistent with the theory.}
\NEWADD{To demonstrate that the CBS differences become consequential when hyperparameters are varied more widely, we conducted an additional experiment on the same \resnet/\cifar\ setup sweeping Nesterov on/off, $\beta \in \{0.90, 0.95\}$, and $\lambda \in \{0, 0.2, 0.5\}$ across batch sizes up to $4096$ (66 runs; details in Appendix~\ref{app:variant_cbs_cifar}). In this setting the CBS shifts from $\approx 128$ (w/o Nesterov) to $\ge 4096$ (w/ Nesterov, $\lambda{=}0.5$), and the weight-decay CBS ratio between $\lambda{=}0.5$ and $\lambda{=}0$ equals $4.0$, matching the predicted $1/(1{-}0.5)^2$ exactly.}

\begin{figure*}[h!]
    \centering
    \includegraphics[width=0.44\linewidth]{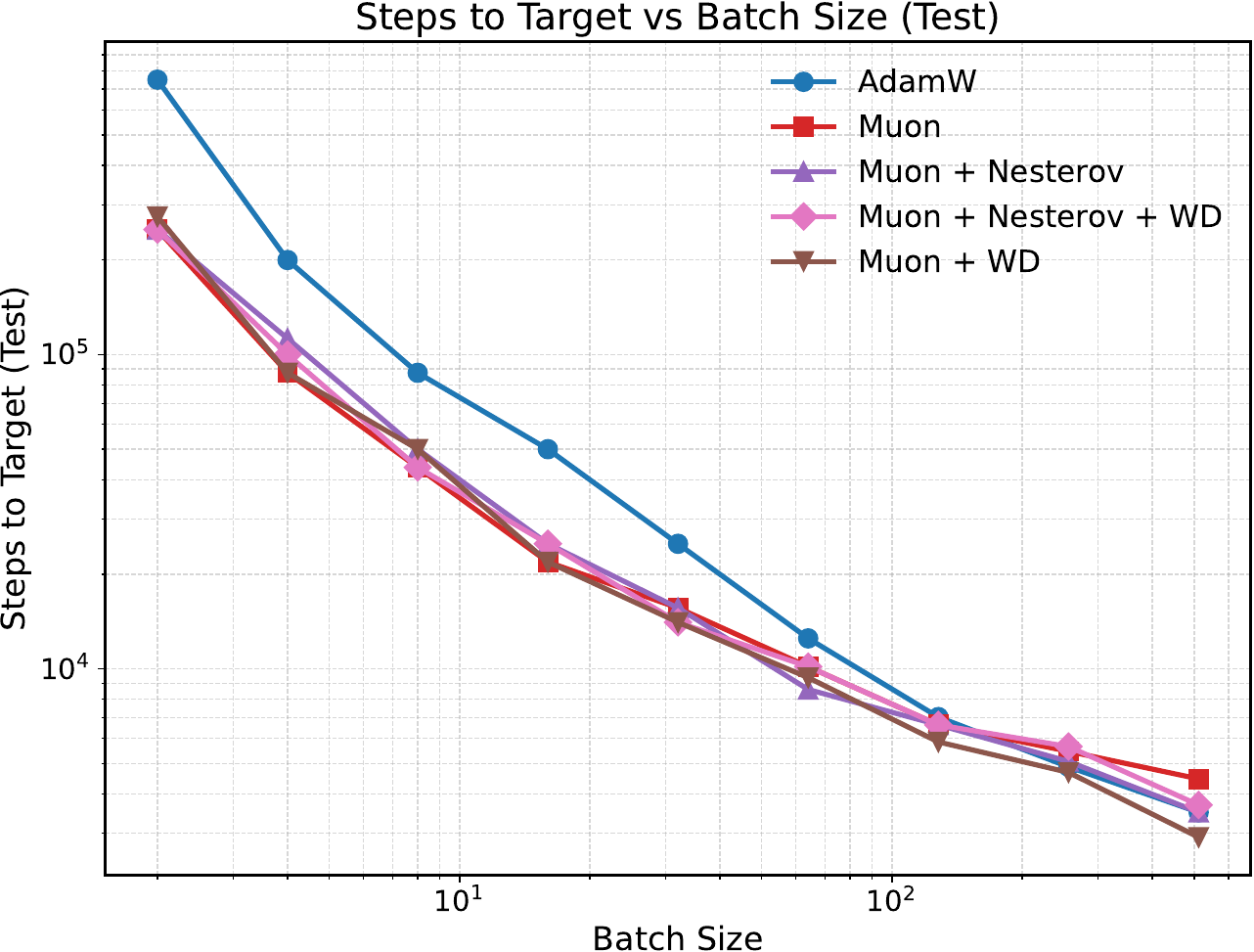}
    \includegraphics[width=0.44\linewidth]{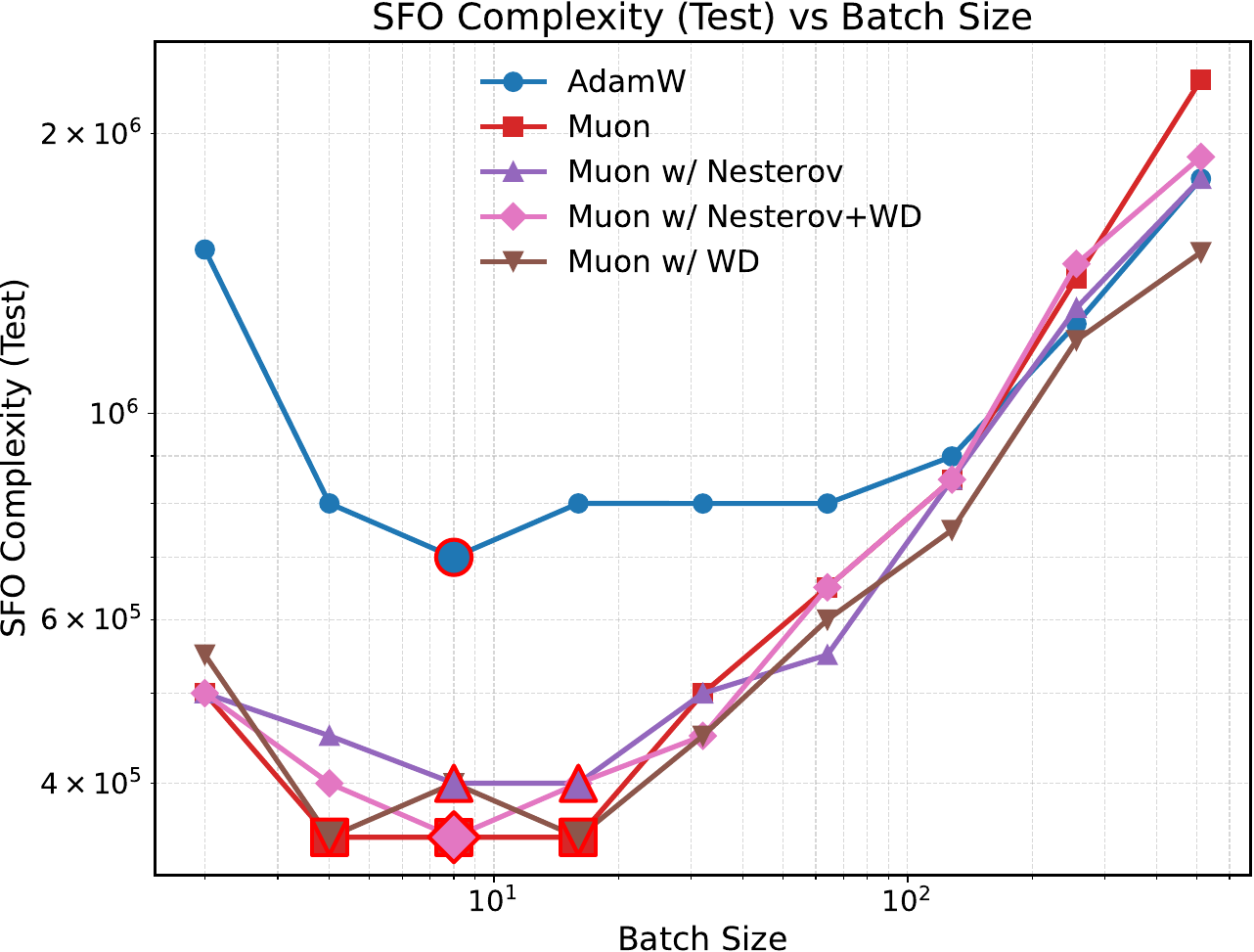}
    \caption{
    {Batch-size scaling and SFO on \resnet/\cifar. (Left) Steps to reach 90\% test accuracy. (Right) SFO to reach 95\% training accuracy. Muon achieves fewer steps and lower SFO than AdamW across all tested batch sizes.}
    }
    \label{fig:cbs_main}
\end{figure*}

\UPDATE{\paragraph{Role of rank $r_1$ and unknown quantities in the CBS \POSTCOMMIT{lower bound} formula.}
\POSTCOMMIT{The CBS lower bound formulas} in Table~\ref{tab:cbs} contain several problem-dependent quantities that are not directly controlled by the practitioner: the gradient variance $\sigma^2$, the target precision $\epsilon$, and the maximum rank $r_1 := \sup_t \mathrm{rank}(C_t - \nabla f(W_t))$, where $r_1 \le n$.
Because the primary goal of our analysis is to clarify how the \emph{hyperparameters} $\beta$ and $\lambda$ govern the critical batch size, the quantities $\sigma^2$, $\epsilon$, and $r_1$ play the role of task-dependent constants that cancel when comparing different Muon variants on the same workload. For instance, the ratio of CBS with Nesterov to CBS without Nesterov is $(1+\sqrt{2}\beta)^2 / 2$, which depends only on $\beta$. Our experiments in Figures~\ref{fig:cbs_main}--\ref{fig:llama_c4_beta} are designed to test precisely these hyperparameter-dependent predictions and do not require knowledge of $r_1$ or $\sigma^2$.
We note that recent work by \citet{Ahn2025Dio} provides indirect evidence that $r_1 \ll n$ in practice: their Dion optimizer achieves near-full-rank Muon performance with only $r_1 \approx n/4$ in LLMs up to 3B parameters, suggesting that the low-rank structure assumed in our bound is realistic. \NEWADD{We furthermore \emph{directly} measure the rank and time-profile of $C_t - \nabla f(W_t)$ on a controlled FashionMNIST MLP in Appendix~\ref{app:rc2_rank_empirical}: the hard rank saturates within $\approx 500$ steps at $\leq 0.42\min(m,n)$ and does not exhibit unbounded growth with $T$, and the stable rank stays $\approx 1.15$--$1.25$, confirming that the worst-case $r_1 \leq \min(m,n)$ used in the proof is not attained in practice. The same appendix shows that the $\sqrt{(1-\beta)r_1/b}$ variance term saturates quickly in $b$ while the accompanying $\mathcal{O}(\eta L\hat{r})$ floor can be suppressed by a diminishing step-size schedule, so large batches are not required for convergence.}}
A detailed decomposition of how $r_1$, $\sigma^2$, and $\varepsilon$ individually affect the CBS is provided in Appendix~\ref{app:width_cbs}.

\UPDATE{\paragraph{Tightness of the CBS lower bound.}
Proposition~\ref{thm:b.2} provides a lower bound on the critical batch size via $b_{\rm Muon}^\star > 9Y^2/(4\epsilon^2)$, where the gap arises from bounding $\epsilon - Z$ by $\epsilon$. As noted in Section~\ref{sec:cri}, $Z = \mathcal{O}(\hat{r}\eta)$ and becomes negligible for sufficiently small learning rates. In our controlled toy experiment (Appendix~\ref{app:gradnorm_proxy}), the fitted value of $Z$ is close to zero ($\hat Z \approx -0.017$, effectively negligible), and the simplified model without $Z$ still achieves $R^2 \approx 0.853$, empirically supporting that the lower bound is reasonably tight in practice.}

\paragraph{Width-varying CBS: empirical evidence for rank scaling.}
To probe the rank dependence, we varied the \resnet width multiplier $w \in \{0.125, \ldots, 3.0\}$ on \cifar.
The effective rank increases monotonically with width, and the SFO complexity at the optimal batch size scales as $\mathrm{SFO} \propto r_1^{-0.99}$, consistent with $b^\star \propto r_1$.
Direct measurement of the gradient variance $\sigma^2$ shows that $r_1 \cdot \sigma^2$ is approximately constant across widths, explaining the flat CBS--rank scaling.
Full results, including CBS estimates, gradient variance decomposition, and predicted vs.\ empirical CBS, are reported in Appendix~\ref{app:width_cbs}.

\paragraph{Critical batch size under the gradient-norm stopping proxy (controlled full Muon).}
\UPDATE{As the first level of our validation strategy, we run a controlled experiment where Muon is applied to \emph{all} parameters (full Muon) on a Teacher-Student Tanh Regression task (Appendix~\ref{app:gradnorm_proxy}).
This setting exactly matches the theoretical assumptions: a single matrix parameter updated entirely by Muon, with no hybrid optimizer confound.}
We define the empirical SFO as $b \cdot T_\varepsilon(b)$ where $T_\varepsilon(b)$ is the first time the EMA-smoothed gradient norm drops below $\epsilon$.
\NEWADD{The EMA window (smoothing coefficient $\alpha{=}0.01$, corresponding to an effective window of $\approx 100$ steps) was chosen to be long enough to suppress stochastic fluctuations while remaining shorter than the convergence timescale; varying $\alpha$ by a factor of two did not materially change the identified CBS.}
We observe a clear U-shaped SFO---batch curve and a well-defined minimizer $b^\star$
(e.g., $b^\star{=}32$ for $\epsilon{=}0.08$), providing evidence that a critical batch size exists when using the theory-aligned gradient-norm criterion.

\UPDATE{\paragraph{Quantitative validation of the $X/T + Y/\sqrt{b} + Z$ proxy and predicted $b^\star$.}}
\UPDATE{We further quantify how well the theoretical decomposition matches observations by fitting}
\[
\UPDATE{\bar g(T,b) \;\approx\; \frac{X}{T} + \frac{Y}{\sqrt{b}} + Z}
\]
\UPDATE{to the measured average gradient norms collected across multiple $(T,b)$ pairs in the controlled full-Muon MLP setting.}
\UPDATE{A linear regression in the features $(1/T,\,1/\sqrt{b},\,1)$ achieves a high goodness-of-fit (e.g., $R^2{=}0.914$),}
\UPDATE{supporting that the proxy captures the dominant scaling with $T$ and $b$.}
\UPDATE{From the fitted coefficients, we obtain $\hat Y$ and $\hat Z$ and compute a \emph{predicted} critical batch size via}
\[
\UPDATE{b^\star_{\rm pred} \;=\; \frac{9\hat Y^2}{4(\epsilon-\hat Z)^2},}
\]
\UPDATE{which matches the empirically SFO-minimizing batch size in the regime $\epsilon>\hat Z$.
Moreover, when $\epsilon$ is not too close to the floor $\hat Z$, we observe the predicted scaling
$b^\star \propto 1/(\epsilon-\hat Z)^2$, consistent with the theoretical prediction from Proposition~\ref{thm:b.2}.}

\paragraph{Additional CBS validation.}
We validated several additional predictions of the CBS theory.
(i)~Varying the target accuracy across four levels confirms $b^\star \propto 1/\varepsilon^2$ (log-log exponent $1.02$, $R^2{=}0.984$).
(ii)~A three-optimizer comparison (Muon, AdamW, Shampoo) shows that Muon achieves the lowest SFO at every batch size (Appendix~\ref{app:extended_cbs}).
(iii)~Wall-clock measurements on a single A100 GPU confirm that Muon's per-step cost ($\approx 16$~ms) is moderate compared to AdamW ($\approx 8.5$~ms) and lower than Shampoo ($\approx 22$~ms; Appendix~\ref{app:extended_cbs}).
\NEWADD{(iv)~To verify that the CBS conclusions are robust to the choice of stopping criterion, we analyzed FashionMNIST (16 seeds) training logs under stopping criteria spanning four metric types: test accuracy, training accuracy, training loss, and EMA-smoothed gradient norm (Appendix~\ref{app:stopping_sensitivity}, Table~\ref{tab:stopping_sensitivity}). Across all criteria, the SFO-minimizing batch size falls in the range 32--256. The EMA gradient norm sweep yields the most direct evidence for the theoretical prediction: the CBS shifts monotonically as $32 \to 64 \to 128 \to 256$ (for $\varepsilon{=}0.60, 0.55, 0.40, 0.32$), supporting $b^\star \propto 1/\varepsilon^2$.}
\NEWADD{(v)~A controlled comparison of exact SVD vs.\ NS($k{=}3$) vs.\ NS($k{=}5$) orthogonalization in the full-Muon setting (10 seeds) confirms that all three methods converge to comparable gradient norms despite differing approximation errors, with NS error $<0.04$ across all batch sizes (Appendix~\ref{app:ns_controlled}, Table~\ref{tab:ns_controlled}).}
\NEWADD{(vi)~An additional CBS experiment on \resnet/\cifar\ (66 runs, batch sizes $32$--$4096$, 1--3 seeds) sweeping Nesterov on/off, $\beta \in \{0.90, 0.95\}$, and $\lambda \in \{0, 0.2, 0.5\}$ demonstrates clear CBS separation across configurations (Appendix~\ref{app:variant_cbs_cifar}, Table~\ref{tab:variant_cbs_cifar}). The CBS ordering, w/o Nesterov ($b^\star{\approx}128$) $<$ w/ Nesterov ($b^\star{\approx}1024$) $<$ w/ Nesterov, $\lambda{=}0.2$ ($b^\star{\approx}2048$) $<$ w/ Nesterov, $\lambda{=}0.5$ ($b^\star{\ge}4096$), matches the direction predicted by Table~\ref{tab:cbs}, and the weight-decay CBS ratio between $\lambda{=}0.5$ and $\lambda{=}0$ equals $4.0$, agreeing with the predicted $1/(1{-}0.5)^2$. This directly substantiates the boundedness/stability interpretation of the $1/(1{-}\lambda)^2$ factor predicted by the theory.}
\NEWADD{In summary, the CBS range remains stable across stopping metrics (accuracy, loss, gradient norm), and tighter thresholds consistently shift the CBS to larger values. Both observations are consistent with the theoretical prediction $b^\star \propto 1/\varepsilon^2$.}

\paragraph{{Effect of $\beta$ on Muon's Critical Batch Size}}


{
Theory in Section~\ref{sec:cri} predicts that the critical batch size decreases as $\beta\!\to\!1$. Figure~\ref{fig:cbs_momentum} confirms this trend for \resnet/\cifar, regardless of weight decay or Nesterov.
}
\UPDATE{Quantitatively, the CBS \POSTCOMMIT{lower bound} formulas in Table~\ref{tab:cbs} contain the factor $(1-\beta)$ (or $(1-\beta)(1+\sqrt{2}\beta)^2$ with Nesterov), both of which vanish as $\beta \to 1$, driving the critical batch size toward zero. In Figure~\ref{fig:cbs_momentum}, the SFO-minimizing batch size shifts leftward as $\beta$ increases from $0.7$ to $0.999$, consistent with this monotone relationship. Practically, this suggests that high-momentum configurations can be trained efficiently at relatively small batch sizes, reducing the hardware resources required.}
A quantitative regression of \emph{normalized} CBS $b^\star/(r_1\sigma^2)$ on the full-Muon Teacher-Student task confirms this monotone relationship ($R^2{=}0.90$; see Appendix~\ref{app:beta_cbs_quant}).

\begin{figure*}[h!]
    \centering
    \includegraphics[width=\linewidth]
    {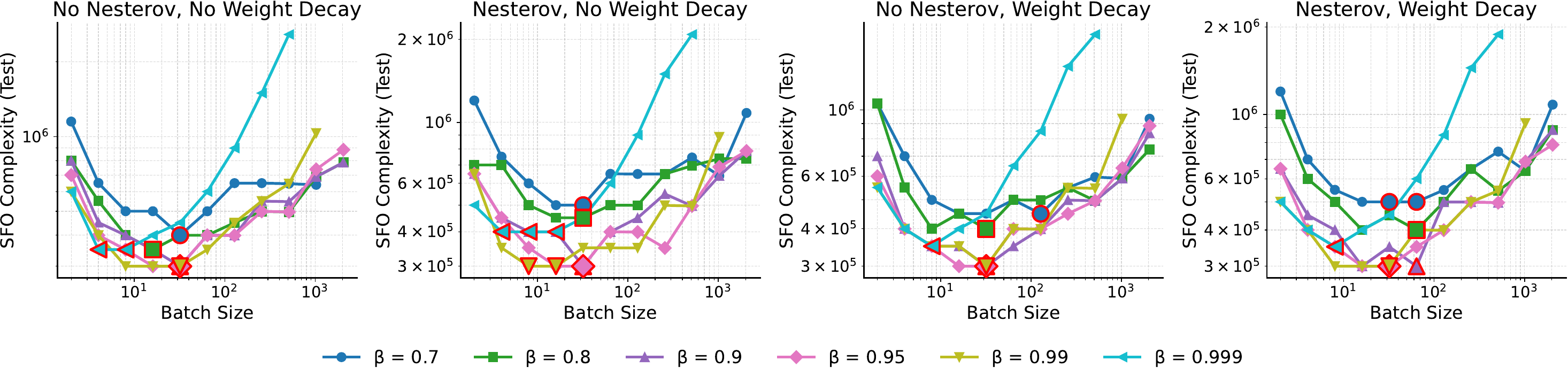}
    \caption{
    {Dependence of SFO and critical batch size on $\beta$ for \resnet/\cifar. The critical batch size consistently decreases as $\beta$ increases, in line with Section~\ref{sec:cri}.}
    }
    \label{fig:cbs_momentum}
\end{figure*}

{

\paragraph{Language-Model Workload: \cfour on \llama}
\NEWADD{We include a language-modeling workload as a \emph{practical relevance check}: the goal is to verify that the main qualitative trends, namely Muon's SFO advantage over AdamW and the $\beta$--CBS relationship, generalize beyond vision, rather than to perform a strict quantitative validation of every theoretical constant.}
\UPDATE{We train \llama (18 layers, hidden dimension 768, 12 attention heads; approximately 320M parameters) on the \cfour corpus with batch sizes ranging from 32 to 4096 (details in Appendix~\ref{app:exp_detail}). Muon is applied to all matrix-shaped Transformer parameters (approximately 127M parameters), while the remaining parameters (embeddings, layer norms) are updated by AdamW. We define the stopping criterion as the number of steps to reach a target training loss, and report SFO complexity $= \mathrm{steps} \times \mathrm{batch\ size}$.}
Figure~\ref{fig:llama_c4_batch} reports \UPDATE{steps to reach the target loss and} SFO versus batch size. \UPDATE{Muon generally requires fewer steps and achieves lower SFO than AdamW, particularly at small to moderate batch sizes. At the largest batch sizes, the Muon variants show increasing SFO, narrowing the gap with AdamW.}
\NEWADD{The inter-variant loss differences among the four Muon configurations are small (typically $< 0.05$), consistent with Theorems~\ref{thm:01} and~\ref{thm:03} predicting the same asymptotic rate for all variants; accordingly, the lack of consistent Nesterov/weight-decay gains in this setting is expected rather than a negative finding.
Using the gradient-norm stopping criterion (which directly corresponds to the quantity bounded by the theorems), the SFO-minimizing batch size for Muon with Nesterov at $\beta{=}0.95$ shifts from $32$ at loose thresholds ($\|\nabla f\| \le 0.40$) to $128$ at tighter thresholds ($\|\nabla f\| \le 0.25$), with a clear U-shaped SFO curve spanning GBS $= 16$ to $8192$ at every threshold (Table~\ref{tab:llm_cbs_target_sensitivity} in Appendix~\ref{app:additional_lm}).}

\begin{figure*}[h!]
    \centering
    \includegraphics[width=0.46\linewidth]{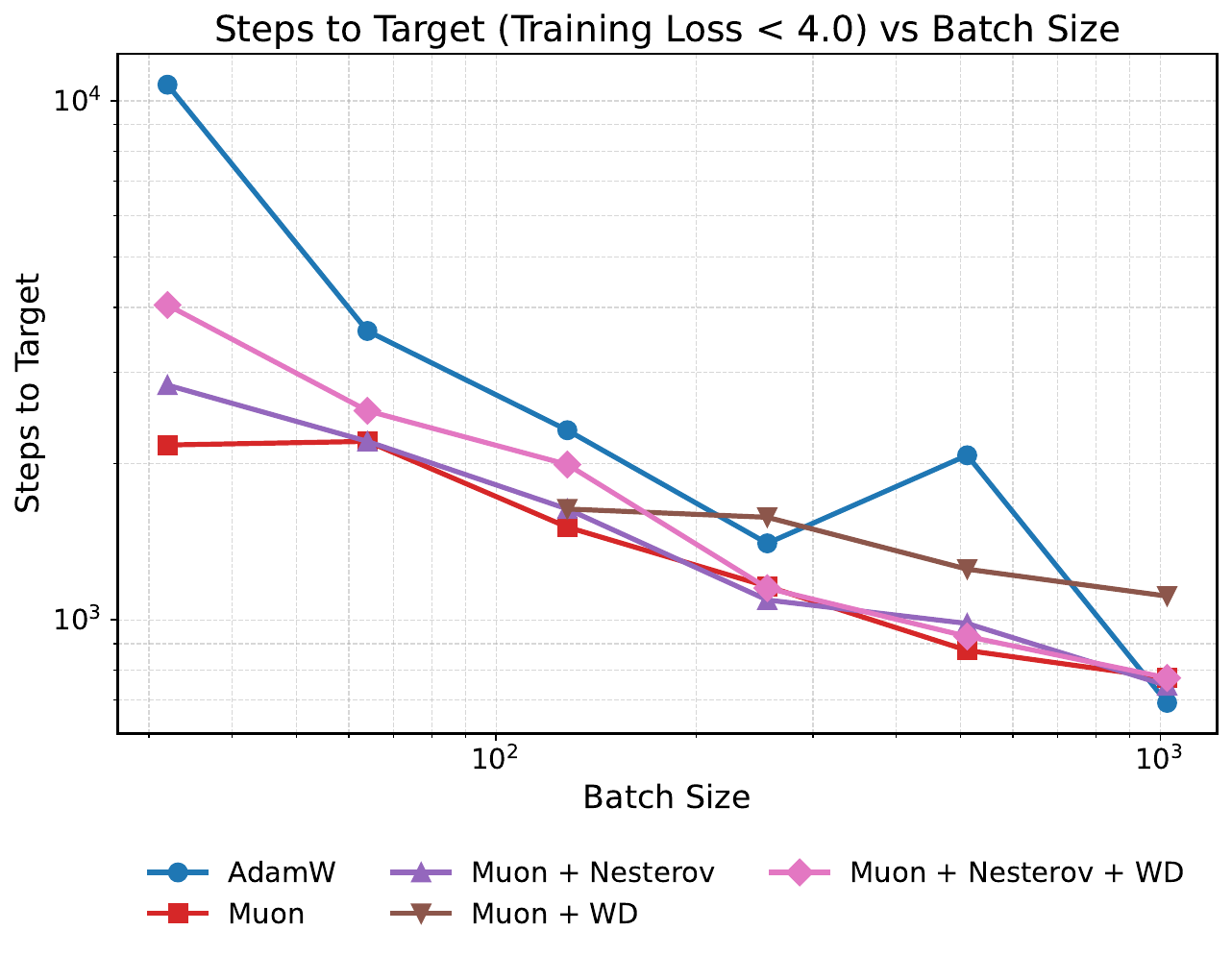}
    \includegraphics[width=0.46\linewidth]{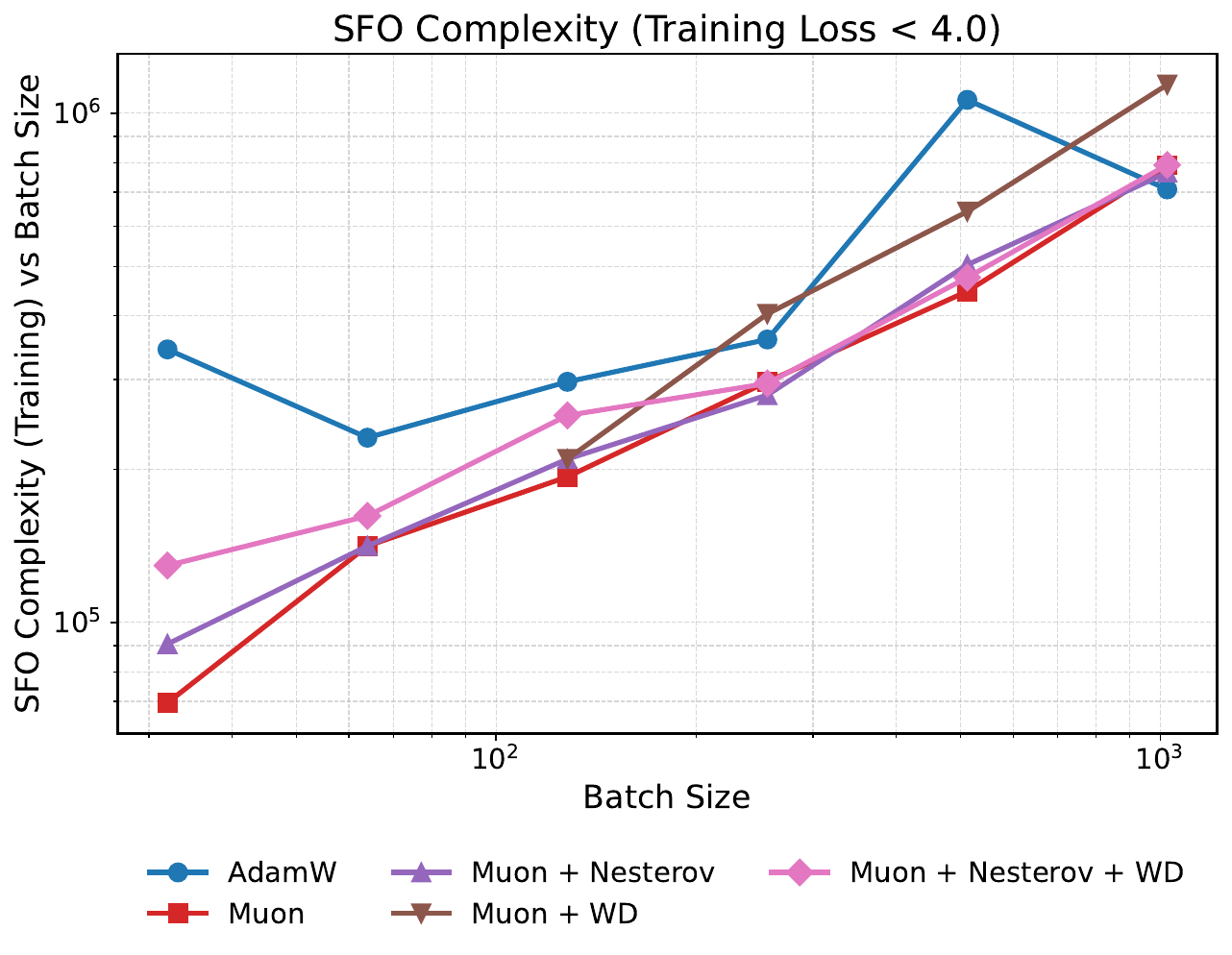}
    \caption{{Batch-size scaling on \cfour with \llama. Steps to reach the target training loss (left) and SFO complexity (right) versus batch size. Muon generally achieves lower steps and SFO than AdamW at small to moderate batch sizes. Nesterov momentum and weight decay provide little additional benefit for this workload.}}
    \label{fig:llama_c4_batch}
\end{figure*}

\paragraph{Momentum Sweeps on \cfour}
We varied $\beta$ on \cfour to examine the critical batch size. Figure~\ref{fig:llama_c4_beta} shows that a moderate value, $\beta\approx0.95$, gives the best loss and SFO. As $\beta$ decreases, the critical batch size increases; as $\beta$ increases toward 1, the critical batch size decreases, but extreme values are suboptimal. These observations are consistent with Section~\ref{sec:cri} and mirror the vision results.
\UPDATE{The fact that the same $\beta$--CBS relationship holds across both vision (Figure~\ref{fig:cbs_momentum}) and language modeling (Figure~\ref{fig:llama_c4_beta}) workloads supports the generality of the theoretical prediction, even though the two settings differ substantially in model architecture, loss landscape, and the ratio of Muon-updated to AdamW-updated parameters.}
\NEWADD{Quantitatively, using the gradient-norm threshold $\|\nabla f\| \le 0.20$, the SFO at $\beta{=}0.95$ is $1.30 \times 10^5$ (CBS ${=} 32$), compared to $4.02 \times 10^5$ at $\beta{=}0.80$ (CBS ${=} 128$) and $1.37 \times 10^6$ at $\beta{=}0.70$ (SFO nearly flat across batch sizes, indicating that the CBS exceeds the tested range). This confirms that $\beta{=}0.95$ offers the best efficiency. Values $\beta \ge 0.999$ fail to converge at most tested batch sizes.}

\begin{figure*}[h!]
    \centering
    \includegraphics[width=0.46\linewidth]{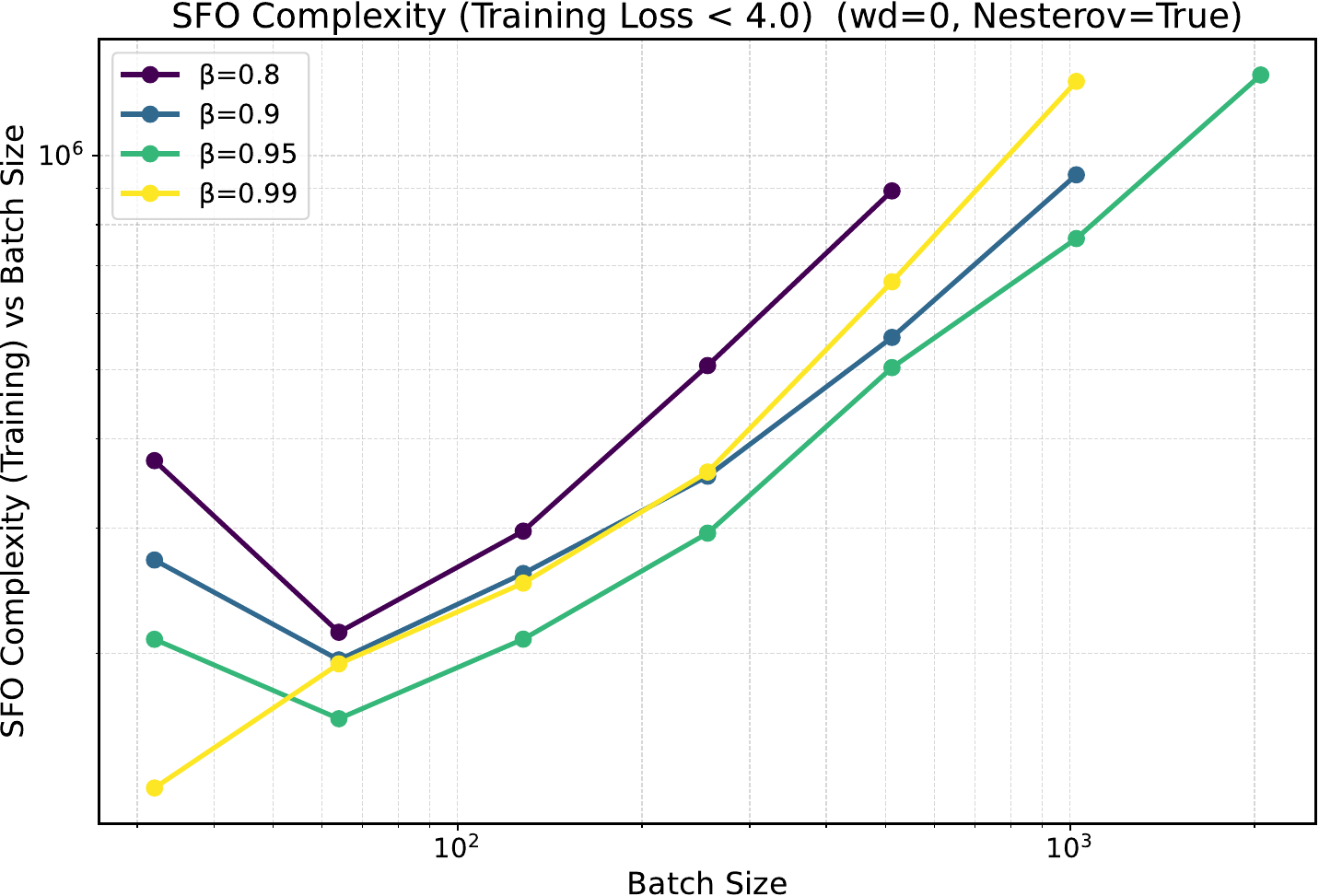}
    \includegraphics[width=0.46\linewidth]{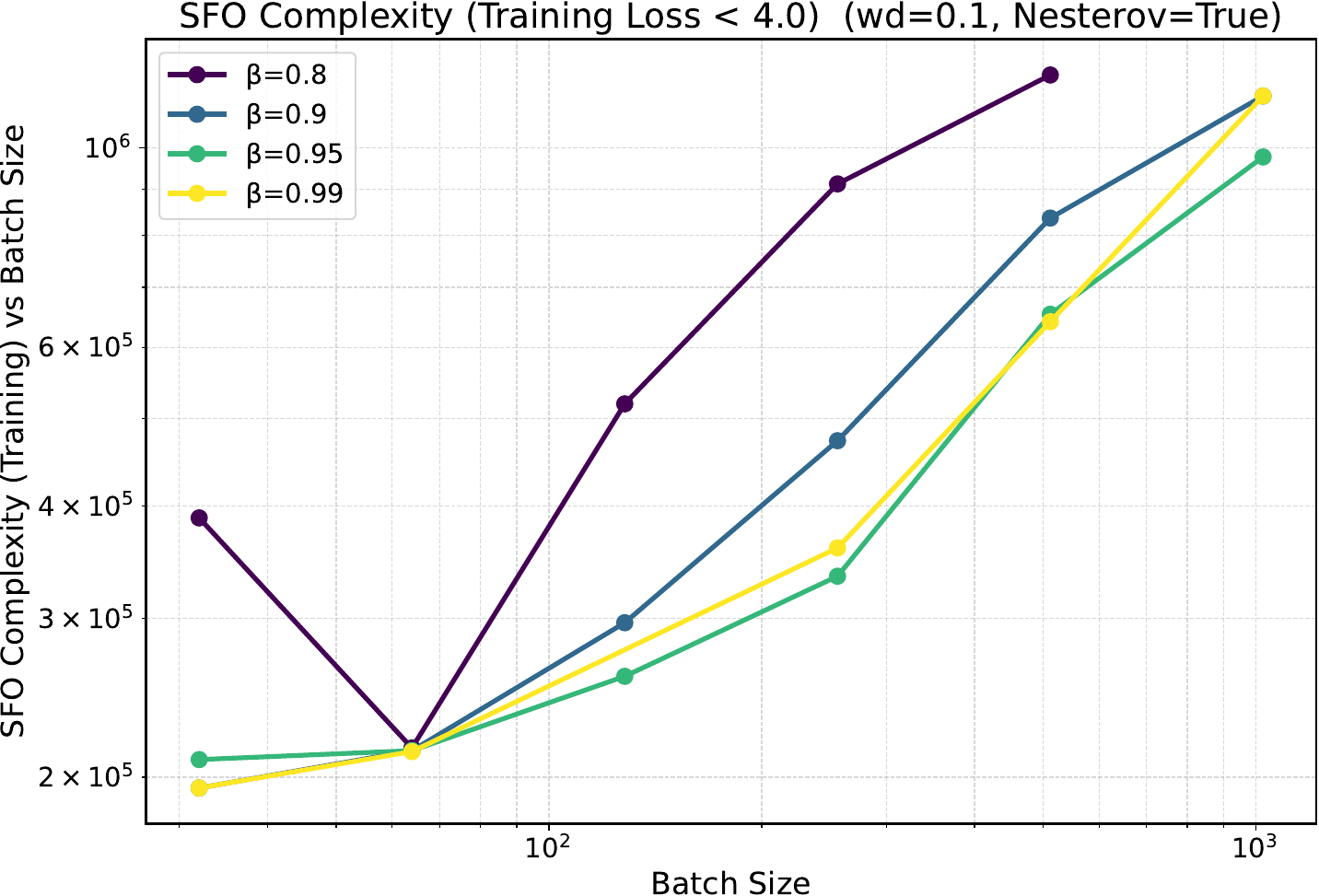}
    \caption{{Effect of momentum $\beta$ on \cfour with \llama. SFO complexity across batch sizes for different $\beta$ values, without weight decay (left) and with weight decay (right). The best trade-off occurs near $\beta{=}0.95$. The critical batch size decreases with larger $\beta$.}
    \label{fig:llama_c4_beta}}
\end{figure*}

}

\section{Related Works}\label{sec:related}
Several studies have investigated the theoretical properties and convergence behavior of Muon. \citet{Bernstein2024Old} connected Muon to momentum applied in the steepest descent direction under a spectral norm constraint. \citet{Li2025ANo} provided a pioneering analysis assuming Frobenius norm Lipschitz smoothness. \citet{Pethick2025Tra} studied Muon in the context of optimization methods that use linear minimization oracles for norm balls, and established a convergence rate. Connections to other optimizers have also been explored. Shah et al. showed that Shampoo and SOAP reduce to Muon under simplifying assumptions \citep{Ess2025Pra}. \citet{Kovalev2025Und} proposed and analyzed a stochastic non-Euclidean trust-region method that includes Muon as a special case. Similarly, \citet{An2025ASG} proposed an adaptive structured gradient optimization  algorithm that matches Muon in its momentum-free variant. Other studies have explored specific properties and extensions of Muon. \citet{Petrov2025Lev} proposed and analyzed a zeroth-order version . Chen et al. demonstrated Muon's compatibility with the Lion-$\mathcal{K}$ algorithm \citep{Chen2024Lio} and showed that Muon with weight decay implicitly solves an optimization problem with a spectral norm constraint \citep{Chen2025Muo}. \citet{Shen2025Ont} presented a comprehensive analysis of Muon's convergence rate in comparison to gradient descent. 
\citet{Lau2025Pol} introduced PolarGrad, a unifying framework for matrix-aware preconditioned methods including Muon, and established convergence rates.
The core concepts of gradient orthogonalization and dualization, which are central to Muon, were introduced in the foundational works by \citet{Carlson2015Sto} and \citet{Flynn2017The}.
\UPDATE{On the practical side, \citet{Naganuma2025Ada} derived a non-Euclidean gradient noise scale for spectral descent (including Muon) and used it to build an adaptive batch-size schedule, providing a complementary, empirically driven perspective on the critical batch size of Muon; however, their analysis does not account for the effects of momentum $\beta$ or weight decay $\lambda$.}
The convergence rates of major related works are summarized in Table \ref{tab:rates}. \UPDATE{Our contribution is, to our knowledge, the first study to analyze the Muon gradient norm using both Nesterov momentum and weight decay.}

The framework for assumptions and proofs presented by \citet{Shen2025Ont} is the most similar to our own.
The crucial difference between our study and all previous studies, including this one, is that we also consider most common variant of Muon, which incorporates Nesterov momentum and weight decay. Technically, the technique of transforming $\langle C_t,O_t\rangle_{\rm{F}}$ using the dual norm and inverse triangle inequality follows the proof of \citet{Pethick2025Tra} (see proof of Theorem \ref{lem:03} in Appendix \ref{app:thm01}).

\begin{table}[ht]
    \centering
    \caption{Comparison of convergence rates in related works. $\| \cdot \|_\star$ denotes an arbitrary norm, and $\| \cdot \|_*$ denotes the nuclear norm. Each result has been rewritten to conform to our notation. $S(W_t)$ is the KKT score function defined as $S(W) := \| \nabla f(W) \|_* + \lambda \langle W, \nabla f(W)\rangle$. \POSTCOMMIT{Let $r_4 := \displaystyle\max_{0 \leq t \leq T-1}\textup{rank}(\nabla f(W_t))$.} \POSTCOMMIT{Direct comparisons across rows should be interpreted with caution, as different works employ different smoothness assumptions and convergence measures.}}
    \label{tab:rates}
    \resizebox{\linewidth}{!}{%
    \begin{tabular}{lllcc} 
        \toprule
        Related work & Measure & Convergence Rate & Nesterov momentum & Weight decay \\ 
        \midrule
        \citet{Pethick2025Tra} & $\mathbb{E}\left[ \| \nabla f(W_T) \|_\star \right]$ & $\mathcal{O}\left( \frac{1}{\POSTCOMMIT{\eta}T} + \eta \right)$ & $\times$ & $\times$ \\
        \citet{Li2025ANo} & $\frac{1}{T}\sum_{t=0}^{T-1}\mathbb{E}\left[ \| \nabla f(W_t) \|_{\rm{F}} \right]$ & $\mathcal{O}\left( \frac{1}{\POSTCOMMIT{\eta}T} + \frac{1}{\sqrt{b}} + n\POSTCOMMIT{\eta} \right)$ & $\times$ & $\times$ \\
        \citet{Kovalev2025Und} & $\min_{0\leq t \leq T-1}\mathbb{E}\left[ \| \nabla f(W_t) \|_* \right]$ & $\mathcal{O}\left( \frac{1}{\POSTCOMMIT{\eta}T} + \eta + \sqrt{\beta} \right)$ & $\times$ & $\times$ \\
        \citet{Shen2025Ont} & $\frac{1}{T}\sum_{t=0}^{T-1}\mathbb{E}\left[ \| \nabla f(W_t) \|_* \right]$ & $\mathcal{O}\left( \frac{1}{\POSTCOMMIT{\eta}T} + \frac{\POSTCOMMIT{\sqrt{r_2}}}{\sqrt{b}} + \POSTCOMMIT{r_2}\eta \right)$ & $\times$ & $\times$ \\
        \citet{Chen2025Muo} & $\frac{1}{T}\sum_{t=0}^{T-1}\mathbb{E}\left[ S(W_t) \right]$ & $\mathcal{O}\left( \frac{1}{\POSTCOMMIT{\eta}T} + \frac{\POSTCOMMIT{\sqrt{n}}}{\sqrt{b}} + n\POSTCOMMIT{\eta} \right)$ & \checkmark & \checkmark \\
        \citet{Lau2025Pol} & $\frac{1}{T}\min_{0\leq t\leq T-1}\mathbb{E}\left[ \| \nabla f(W_t) \|_{\rm{F}} \right]$ & $\mathcal{O}\left( \frac{1}{\POSTCOMMIT{\eta}T} + \sqrt{r_4} + \POSTCOMMIT{r_4}\eta \right)$ & $\times$ & $\times$ \\  
        \UPDATE{Ours (Theorem \ref{thm:03}(ii))} & $\frac{1}{T}\sum_{t=0}^{T-1}\mathbb{E}\left[ \| \nabla f(W_t) \|_{\POSTCOMMIT{*}} \right]$ & $\mathcal{O}\left( \frac{1}{\POSTCOMMIT{\eta} T} + \frac{\POSTCOMMIT{\sqrt{r_1}}}{\sqrt{b}} + \UPDATE{\POSTCOMMIT{\hat{r}}\eta} \right)$ & \checkmark & \checkmark \\
        \bottomrule
    \end{tabular}
    }
\end{table}


\section{Conclusion}
Through a comprehensive theoretical analysis of the Muon optimizer, we established convergence guarantees for four practical configurations (with and without Nesterov momentum and with and without weight decay). Our primary theoretical contribution is demonstrating the crucial role of weight decay. We proved that it enforces a strict decrease in parameter and gradient norms, a clear advantage over the standard Muon configuration. This theoretical insight, along with the necessary condition relating the learning rate and weight decay coefficient, was empirically validated by our experimental results. Additionally, we derived the critical batch size for Muon, revealing its dependence on fundamental hyperparameters such as momentum and weight decay. Collectively, our findings provide both a deeper theoretical understanding of Muon and actionable guidance for practitioners aiming to leverage this promising optimizer in large-scale settings.

Finally, we note three limitations.
First, our single-matrix analysis does not directly model the layer-wise heterogeneity of gradient statistics in deep networks. However, \citet{gray2024normalization} show that gradient noise scales of different layers are highly correlated, suggesting that our theory captures the fundamental functional dependencies of the critical batch size on hyperparameters.
\UPDATE{Second, in practice Muon is used as a hybrid optimizer (Muon on matrix-shaped parameters, AdamW on the rest), whereas our theory assumes full Muon. We address this gap through a two-level validation strategy: a controlled full-Muon experiment that exactly matches the theoretical assumptions, and practical-scale hybrid experiments that confirm the qualitative predictions transfer to the hybrid regime (Section~\ref{sec:experiments}).}
Third, while Muon consistently outperforms Shampoo in our experiments (Appendix~\ref{app:extended_cbs}), deriving analogous convergence bounds and CBS lower bound formulas for Shampoo and SOAP \citep{Vyas2025SOA} remains an open problem.

\bibliography{sample}
\bibliographystyle{tmlr}


\newpage
\appendix

\section{Tools for Proof of All Theorems}
The results presented in this section are not new and are given simply for reference and completeness.
\begin{lem}\label{lem:01}
Suppose Assumption \ref{assum:02}(ii) hold for all $t \in \mathbb{N}$; then,
\begin{align*}
\mathbb{E}_{\bm{\xi}_t} \left[ \| \nabla f_{\mathcal{S}_t}(W_t) - \nabla f(W_t) \|_{\rm{F}}^2 \right]
\leq \frac{\sigma^2}{b}.
\end{align*}
\end{lem}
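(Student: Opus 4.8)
The plan is to recognize this as the standard ``variance of the mini-batch mean'' bound and to carry it out by expanding the squared Frobenius norm of a sum of independent, mean-zero matrices. First I would write the mini-batch error as an average of per-sample errors,
\begin{align*}
\nabla f_{\mathcal{S}_t}(W_t) - \nabla f(W_t) = \frac{1}{b}\sum_{i \in [b]} D_i, \quad \text{where } D_i := \mathsf{G}_{\xi_{t,i}}(W_t) - \nabla f(W_t).
\end{align*}
Expanding the squared norm via bilinearity of $\langle \cdot, \cdot \rangle_{\rm F}$ then separates the ``diagonal'' variance terms from the ``cross'' terms:
\begin{align*}
\left\| \nabla f_{\mathcal{S}_t}(W_t) - \nabla f(W_t)\right\|_{\rm F}^2 = \frac{1}{b^2}\left( \sum_{i \in [b]} \| D_i \|_{\rm F}^2 + \sum_{i \neq j} \langle D_i, D_j \rangle_{\rm F} \right).
\end{align*}

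Next I would take $\mathbb{E}_{\bm{\xi}_t}$ and treat the two sums separately. For the diagonal terms, Assumption \ref{assum:02}(ii) bounds each summand by $\mathbb{E}_{\xi_{t,i}}[\|D_i\|_{\rm F}^2] \leq \sigma^2$, so they contribute at most $b\sigma^2$ in total. For the cross terms with $i \neq j$, the samples $\xi_{t,i}$ and $\xi_{t,j}$ are independent, so the expectation factorizes as $\mathbb{E}_{\bm{\xi}_t}[\langle D_i, D_j\rangle_{\rm F}] = \langle \mathbb{E}[D_i], \mathbb{E}[D_j]\rangle_{\rm F}$; invoking unbiasedness, $\mathbb{E}[D_i] = \mathbb{E}_{\xi_{t,i}}[\mathsf{G}_{\xi_{t,i}}(W_t)] - \nabla f(W_t) = 0$, makes every cross term vanish. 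Combining the two estimates and dividing by $b^2$ gives
\begin{align*}
\mathbb{E}_{\bm{\xi}_t}\left[ \| \nabla f_{\mathcal{S}_t}(W_t) - \nabla f(W_t) \|_{\rm F}^2 \right] \leq \frac{b\sigma^2}{b^2} = \frac{\sigma^2}{b},
\end{align*}
which is the claimed bound with $C^2 = \sigma^2$.

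There is no serious obstacle here, since the argument is a routine second-moment computation. The one point requiring care—and worth flagging—is that, although the statement invokes only Assumption \ref{assum:02}(ii), the vanishing of the cross terms genuinely relies on two further ingredients already present in the setup: the pairwise independence of $\xi_{t,i}$ and $\xi_{t,j}$ for $i \neq j$, and the unbiasedness of the per-sample stochastic gradient (Assumption \ref{assum:02}(i)). Both are needed to factorize the cross-term expectation and conclude it is zero; without unbiasedness the cross terms would not cancel and one would be left with an extra $\|\nabla f(W_t) - \mathbb{E}[\mathsf{G}_{\xi_{t,i}}(W_t)]\|_{\rm F}^2$ contribution that spoils the $1/b$ scaling.
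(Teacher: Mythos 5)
Your proof is correct and is essentially the same as the paper's: the paper likewise writes the deviation as $\frac{1}{b}\sum_{i}\left(\mathsf{G}_{\xi_{t,i}}(W_t)-\nabla f(W_t)\right)$, pulls out $\frac{1}{b^2}$, drops the cross terms, and applies Assumption \ref{assum:02}(ii) to get $\sigma^2/b$ (the $C^2$ in the statement is a typo for $\sigma^2$, as you inferred). Your one added value is making explicit what the paper's fifth equality leaves implicit --- that the vanishing of the cross terms relies on the independence of the $\xi_{t,i}$ and on unbiasedness (Assumption \ref{assum:02}(i)), not on Assumption \ref{assum:02}(ii) alone.
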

\begin{proof}
Assumption \ref{assum:02}(ii) guarantees that
\begin{align*}
\mathbb{E}_{\bm{\xi}_t} \left[ \| \nabla f_{\mathcal{S}_t}(W_t) - \nabla f(W_t) \|_{\rm{F}}^2 \right]
&=\mathbb{E}_{\bm{\xi}_t} \left[ \left\| \frac{1}{b} \sum_{i=1}^{b}{\mathsf{G}_{\xi_{t,i}}(W_t)} - \nabla f(W_t) \right\|_{\rm{F}}^2 \right] \\
&=\mathbb{E}_{\bm{\xi}_t} \left[ \left\| \frac{1}{b} \sum_{i=1}^{b}{\mathsf{G}_{\xi_{t,i}}(W_t)} - \frac{1}{b} \sum_{i=1}^{b}\nabla f(W_t) \right\|_{\rm{F}}^2 \right] \\
&=\mathbb{E}_{\bm{\xi}_t} \left[ \left\| \frac{1}{b} \sum_{i=1}^{b}\left({\mathsf{G}_{\xi_{t,i}}(W_t) - \nabla f(W_t)}\right) \right\|_{\rm{F}}^2 \right] \\
&=\frac{1}{b^2} \mathbb{E}_{\bm{\xi}_t} \left[ \left\| \sum_{i=1}^{b}\left({\mathsf{G}_{\xi_{t,i}}(W_t) - \nabla f(W_t)}\right) \right\|_{\rm{F}}^2 \right] \\
&=\frac{1}{b^2} \mathbb{E}_{\bm{\xi}_t} \left[ \sum_{i=1}^{b} \left\| {\mathsf{G}_{\xi_{t,i}}(W_t) - \nabla f(W_t)} \right\|_{\rm{F}}^2 \right] \\
&\leq \frac{\sigma^2}{b}.
\end{align*}
This completes the proof.
\end{proof}

The following lemma was established by \citep{Mokhtari2020}. In their setting, the algorithm without momentum corresponds to the case in which $\beta_1=1$, whereas in ours it corresponds to $\beta_1=0$. As a result, the statements may appear slightly different.

\begin{lem}\label{lem:02}
Suppose Assumptions \ref{assum:01} and \ref{assum:02} hold. Then for all $t \in \mathbb{N}$,
\begin{align*}
\sum_{t=0}^{T-1} \mathbb{E}\left[ \| M_t - \nabla f(W_t) \|_{\rm{F}} \right] 
\leq \frac{4}{1-\beta} \| M_0 - \nabla f(W_0) \|_{\rm{F}} + \sqrt{\frac{2(1-\beta)\sigma^2}{b}}T + \frac{2L\eta \sqrt{\POSTCOMMIT{r_2}}}{1-\beta}T,
\end{align*}
\POSTCOMMIT{where $r_2 := \sup_t\text{\rm{rank}}(O_t)$.}
\end{lem}
\begin{proof}
From the definition of $M_t$,
\begin{align*}
\| M_t - \nabla f(W_t) \|_{\rm{F}}^2
&= \| \beta M_{t-1} + (1-\beta) \nabla f_{\mathcal{S}_t}(W_t) - \nabla f(W_t) \|_{\rm{F}}^2 \\
&= \| \beta (M_{t-1} - \nabla f(W_{t-1})) + \beta (\nabla f(W_{t-1}) - \nabla f(W_t)) + (1-\beta)(\nabla f_{\mathcal{S}_t}(W_t) - \nabla f(W_t)) \|_{\rm{F}}^2 \\
&= \beta^2 \| M_{t-1} - \nabla f(W_{t-1}) \|_{\rm{F}}^2 + \beta^2 \| \nabla f(W_{t-1}) - \nabla f(W_t) \|_{\rm{F}}^2 + (1-\beta)^2 \| \nabla f_{\mathcal{S}_t}(W_t) - \nabla f(W_t) \|_{\rm{F}}^2 \\
&\quad +2\beta^2 \langle M_{t-1} - \nabla f(W_{t-1}), \nabla f(W_{t-1}) - \nabla f(W_t) \rangle_{\rm{F}} \\
&\quad +2\beta(1-\beta)\langle M_{t-1} - \nabla f(W_{t-1}), \nabla f_{\mathcal{S}_t}(W_t) - \nabla f(W_t) \rangle_{\rm{F}} \\
&\quad +2\beta(1-\beta)\langle \nabla f(W_{t-1}) - \nabla f(W_t), \nabla f_{\mathcal{S}_t}(W_t) - \nabla f(W_t) \rangle_{\rm{F}}
\end{align*}
Therefore, by taking the expectation,
\begin{align*}
\mathbb{E}\left[ \| M_t - \nabla f(W_t) \|_{\rm{F}}^2 \right]
&=\beta^2 \mathbb{E}\left[ \| M_{t-1} - \nabla f(W_{t-1}) \|_{\rm{F}}^2 \right] + \beta^2 \mathbb{E}\left[ \| \nabla f(W_{t-1}) - \nabla f(W_t) \|_{\rm{F}}^2 \right] \\
&\quad+ (1-\beta)^2 \mathbb{E}\left[ \| \nabla f_{\mathcal{S}_t}(W_t) - \nabla f(W_t) \|_{\rm{F}}^2 \right] + 2\beta^2 \mathbb{E}\left[ \langle M_{t-1} - \nabla f(W_{t-1}), \nabla f(W_{t-1}) - \nabla f(W_t) \rangle_{\rm{F}} \right].
\end{align*}
Here, by the Peter-Paul inequality, for all $\epsilon > 0$, we have
\begin{align*}
\langle M_{t-1} - \nabla f(W_{t-1}), \nabla f(W_{t-1}) - \nabla f(W_t) \rangle_{\rm{F}}
\leq \frac{\epsilon}{2} \| M_{t-1} - \nabla f(W_{t-1}) \|_{\rm{F}}^2 + \frac{1}{2\epsilon} \|  \nabla f(W_{t-1}) - \nabla f(W_t) \|_{\rm{F}}^2.
\end{align*}
Therefore, we obtain
\begin{align*}
\mathbb{E}\left[ \| M_t - \nabla f(W_t) \|_{\rm{F}}^2 \right]
&\POSTCOMMIT{\leq}\beta^2(1 + \epsilon) \mathbb{E}\left[ \| M_{t-1} - \nabla f(W_{t-1}) \|_{\rm{F}}^2 \right] + \beta^2 \left( 1+\frac{1}{\epsilon}\right) \mathbb{E}\left[ \| \nabla f(W_{t-1}) - \nabla f(W_t) \|_{\rm{F}}^2 \right] \\
&\quad + (1-\beta)^2 \mathbb{E}\left[ \| \nabla f_{\mathcal{S}_t}(W_t) - \nabla f(W_t) \|_{\rm{F}}^2 \right].
\end{align*}
In addition, from Assumption \ref{assum:01}, 
\begin{align*}
\| \nabla f(W_{t-1}) - \nabla f(W_t) \|_{\rm{F}}^2 \leq L^2 \| W_{t-1} - W_t \|_{\rm{F}}^2 = L^2 \eta^2 \| O_t \|_{\rm{F}}^2 
\leq L^2\eta^2 \text{rank}(O_t)\| O_t \|_{\rm{op}}^2 
\leq L^2\eta^2 \POSTCOMMIT{r_2},
\end{align*}
\POSTCOMMIT{where we use $r_2 := \sup_t \text{rank}(O_t)$ and $\| O_t \|_{\rm{op}} = 1$ in the last inequality.}
Hence, from Lemma \ref{lem:01},
\begin{align*}
\mathbb{E}\left[ \| M_t - \nabla f(W_t) \|_{\rm{F}}^2 \right]
\leq \beta^2(1 + \epsilon) \mathbb{E}\left[ \| M_{t-1} - \nabla f(W_{t-1}) \|_{\rm{F}}^2 \right] + \beta^2 \left( 1+\frac{1}{\epsilon}\right) L^2\eta^2 \POSTCOMMIT{r_2}+  \frac{(1-\beta)^2\sigma^2}{b}.
\end{align*}
Then, letting $\epsilon := \frac{1-\beta}{2}$, we have
\begin{align*}
\mathbb{E}\left[ \| M_t - \nabla f(W_t) \|_{\rm{F}}^2 \right]
&\leq \frac{\beta^2(3-\beta)}{2} \mathbb{E}\left[ \| M_{t-1} - \nabla f(W_{t-1}) \|_{\rm{F}}^2 \right] + \frac{\beta^2(3-\beta)}{1-\beta} L^2\eta^2 \POSTCOMMIT{r_2}+ \frac{(1-\beta)^2\sigma^2}{b} \\
&\leq \frac{1+\beta}{2} \mathbb{E}\left[ \| M_{t-1} - \nabla f(W_{t-1}) \|_{\rm{F}}^2 \right] + \frac{2}{1-\beta} L^2\eta^2 \POSTCOMMIT{r_2}+ \frac{(1-\beta)^2\sigma^2}{b} \\
&\leq \left( \frac{1+\beta}{2}\right)^t \| M_0 - \nabla f(W_0) \|_{\rm{F}}^2 + \left\{ \frac{2L^2\eta^2 \POSTCOMMIT{r_2}}{1-\beta} + \frac{(1-\beta)^2\sigma^2}{b}\right\} \sum_{k=0}^{t-1} \left( \frac{1+\beta}{2}\right)^k \\
&\leq \left( \frac{1+\beta}{2}\right)^t \| M_0 - \nabla f(W_0) \|_{\rm{F}}^2 + \left\{ \frac{2L^2\eta^2 \POSTCOMMIT{r_2}}{1-\beta} +\frac{ (1-\beta)^2\sigma^2}{b}\right\} \frac{2}{1-\beta} \\
&= \left( \frac{1+\beta}{2}\right)^t \| M_0 - \nabla f(W_0) \|_{\rm{F}}^2 + \frac{4L^2\eta^2 \POSTCOMMIT{r_2}}{(1-\beta)^2} + \frac{2(1-\beta)\sigma^2}{b}.
\end{align*}
Therefore, summing over $t$, we have
\begin{align*}
\sum_{t=0}^{T-1} \mathbb{E}\left[ \| M_t - \nabla f(W_t) \|_{\rm{F}}^2 \right]
\leq \frac{2}{1-\beta} \| M_0 - \nabla f(W_0) \|_{\rm{F}}^2 + \frac{2(1-\beta)\sigma^2}{b}T + \frac{4L^2\eta^2 \POSTCOMMIT{r_2}}{(1-\beta)^2}T.
\end{align*}
Finally, from the properties of variance and expectation, 
\begin{align*}
\mathbb{E}\left[ \| M_t - \nabla f(W_t) \|_{\rm{F}} \right] 
&\leq \sqrt{\mathbb{E}\left[ \| M_t - \nabla f(W_t) \|_{\rm{F}}^2 \right]} \\
&\leq 
\sqrt{\left( \frac{1+\beta}{2} \right)^t} \| M_0 - \nabla f(W_0) \|_{\rm{F}} + \sqrt{\frac{2(1-\beta)\sigma^2}{b}} + \frac{2L \eta \sqrt{\POSTCOMMIT{r_2}}}{(1-\beta)}.
\end{align*}
Hence, we have
\begin{align*}
\sum_{t=0}^{T-1} \mathbb{E}\left[ \| M_t - \nabla f(W_t) \|_{\rm{F}} \right] 
&\leq \frac{\sqrt{2}}{\sqrt{2} - \sqrt{1+\beta}} \| M_0 - \nabla f(W_0) \|_{\rm{F}} + \sqrt{\frac{2(1-\beta)\sigma^2}{b}}T + \frac{2L\eta \sqrt{\POSTCOMMIT{r_2}}}{1-\beta}T \\
&\leq \frac{4}{1-\beta} \| M_0 - \nabla f(W_0) \|_{\rm{F}} + \sqrt{\frac{2(1-\beta)\sigma^2}{b}}T + \frac{2L\eta \sqrt{\POSTCOMMIT{r_2}}}{1-\beta}T
\end{align*}
This completes the proof.
\end{proof}

\section{Proof of Theorems for Muon without weight decay}\label{app:thm01}
\begin{thm}[Auxiliary Theorem for Muon without weight decay]\label{lem:03}
Suppose Assumptions \ref{assum:01} and \ref{assum:02} hold. Then for all $t \in \mathbb{N}$,
\begin{align*}
&\sum_{t=0}^{T-1} \mathbb{E}\left[ \| \nabla f(W_t) \|_{\POSTCOMMIT{*}} \right]
\leq \frac{f(W_0) - f(W_{T})}{\eta} + \UPDATE{2\sqrt{\POSTCOMMIT{r_1}}} \sum_{t=0}^{T-1}\mathbb{E}\left[ \| \nabla f(W_t) - C_t \|_{\rm{F}} \right] + \frac{\UPDATE{L\eta}}{2} \POSTCOMMIT{r_2}T,
\end{align*}
where \POSTCOMMIT{$r_1 := \displaystyle\POSTCOMMIT{\sup_t}\,\textup{rank}(C_t - \nabla f(W_t))$ and $r_2 := \displaystyle\POSTCOMMIT{\sup_t}\,\textup{rank}(O_t)$}.
\end{thm}
\begin{proof}
From Assumption \ref{assum:01} \POSTCOMMIT{and $\| O_t \|_{\rm{F}}^2 \leq \text{rank}(O_t)\| O_t \|_{\rm{op}}^2\leq r_2$}, 
\begin{align*}
f(W_{t+1})
&\leq f(W_t) + \langle \nabla f(W_t), W_{t+1} - W_t \rangle_{\rm{F}} + \frac{L}{2}\| W_{t+1} - W_{t} \|_{\rm{F}}^2 \\
&= f(W_t) - \eta \langle \nabla f(W_t), O_t \rangle_{\rm{F}} + \frac{L}{2}\eta^2 \| O_t \|_{\rm{F}}^2 \\
&\leq \POSTCOMMIT{f}(W_t) - \eta \langle C_t, O_t \rangle_{\rm{F}} - \eta \langle \nabla f(W_t) - C_t, O_t \rangle_{\rm{F}} + \frac{L\eta^2 \POSTCOMMIT{r_2}}{2}.
\end{align*}
Using the definition $O_t := \underset{O \in \left\{O \in \mathbb{R}^{m\times n}:O^\top O=I_n \right\}}{\operatorname{argmin}} \| O-C_t\|_{\rm{F}}$, we obtain $O_t := \underset{O \in \left\{O \in \mathbb{R}^{m\times n}:O^\top O=I_n \right\}}{\operatorname{argmax}} \langle C_t, O \rangle_{\rm{F}}$. Then, 
\begin{align*}
\langle C_t, O_t \rangle_{\rm{F}} 
= \max_{O: O^\top O =I_n} \langle C_t, O\rangle_{\rm{F}} 
= \max_{O: \| O \|_{\rm{op}} \leq 1} \langle C_t, O\rangle_{\rm{F}} 
=: \| C_t \|_*,
\end{align*}
where $\| \cdot \|_*$ denotes the dual norm. Applying the reverse triangle inequality and the relation $\| A \|_{\rm{F}} \leq \| A \|_* \leq \sqrt{\text{rank}(A)}\| A \|_{\rm{F}}$, we have
\begin{align}
-\langle C_t, O_t \rangle_{\rm{F}} \nonumber
&= -\| C_t \|_* \\
&= -\| C_t - \nabla f(W_t) + \nabla f(W_t) \|_* \nonumber \\
&\leq \| C_t - \nabla f(W_t) \|_* - \| \nabla f(W_t) \|_* \nonumber\\
&\leq \sqrt{\text{rank}(C_t - \nabla f(W_t))}\| C_t - \nabla f(W_t) \|_{\rm{F}} - \| \nabla f(W_t) \|_{\POSTCOMMIT{*}} \nonumber \\
&\leq \sqrt{\POSTCOMMIT{r_1}} \| C_t - \nabla f(W_t) \|_{\rm{F}} - \| \nabla f(W_t) \|_{\POSTCOMMIT{*}}, \label{eq:inner}
\end{align}
where we use $\POSTCOMMIT{r_1} := \displaystyle\POSTCOMMIT{\sup_t}\,\text{rank}(C_t - \nabla f(W_t))$.
\UPDATE{In addition, since $O_t$ has orthonormal columns ($O_t^\top O_t = I_n$), we have $\| O_t \|_{\rm{op}} = 1$. Therefore, by the duality between the nuclear norm and the operator norm $(|\langle A,B\rangle_{\rm{F}}| \leq \| A \|_* \|B \|_{\rm{op}})$,
\begin{align*}
|\langle \nabla f(W_t) - C_t, O_t \rangle_{\rm{F}}|
&\leq \| \nabla f(W_t) - C_t \|_* \| O_t \|_{\rm{op}} \\
&\leq \sqrt{\POSTCOMMIT{r_1}}\, \| \nabla f(W_t) - C_t \|_{\rm{F}},
\end{align*}
where we used $\| A \|_* \leq \sqrt{\text{rank}(A)} \| A \|_{\rm{F}}$.}
Therefore,
\begin{align*}
f(W_{t+1}) \leq f(W_t) + \UPDATE{2}\eta \sqrt{\POSTCOMMIT{r_1}} \| C_t - \nabla f(W_t) \|_{\rm{F}} - \eta \|\nabla f(W_t) \|_{\POSTCOMMIT{*}} + \frac{\UPDATE{L\eta^2} }{2} \POSTCOMMIT{r_2}.
\end{align*}
By rearranging the terms and taking expectation, we obtain
\begin{align*}
\mathbb{E}\left[ \| \nabla f(W_t) \|_{\POSTCOMMIT{*}} \right]
&\leq \frac{f(W_t) -f(W_{t+1})}{\eta} + \UPDATE{2}\sqrt{\POSTCOMMIT{r_1}} \mathbb{E}\left[ \| \nabla f(W_t) - C_t \|_{\rm{F}} \right] + \frac{\UPDATE{L\eta} }{2} \POSTCOMMIT{r_2}.
\end{align*}
This completes the proof.
\end{proof}

\subsection{Proof of Theorem \ref{thm:01}(i)}
\begin{proof}
From $C_t := M_t$, together with Lemma \ref{lem:02} and Theorem \ref{lem:03}, we find that
\begin{align*}
\sum_{t=0}^{T-1} \mathbb{E}\left[ \| \nabla f(W_t) \|_{\POSTCOMMIT{*}} \right]
&\leq \frac{f(W_0) - f(W_{T})}{\eta}
+ \UPDATE{2}\sqrt{\POSTCOMMIT{r_1}} \sum_{t=0}^{T-1}\mathbb{E}\left[ \| \nabla f(W_t) - M_t \|_{\rm{F}} \right] + \frac{\UPDATE{L\eta}}{2} \POSTCOMMIT{r_2}T \\
&\leq \frac{f(W_0) - f(W_{T})}{\eta}
+\UPDATE{2}\sqrt{\POSTCOMMIT{r_1}} \left\{ \frac{4}{1-\beta} \| M_0 - \nabla f(W_0) \|_{\rm{F}} + \sqrt{\frac{2(1-\beta) \sigma^2}{b}} T + \frac{2L\eta \sqrt{\POSTCOMMIT{r_2}}}{1-\beta} T\right\}
+ \frac{\UPDATE{L\eta}}{2} \POSTCOMMIT{r_2}T.
\end{align*}
By taking the average over $t = 0, \dots, T-1$ and applying expectation, we obtain
\begin{align*}
\frac{1}{T} \sum_{t=0}^{T-1} \mathbb{E}\left[ \| \nabla f(W_t) \|_{\POSTCOMMIT{*}} \right]
&\leq \frac{f(W_0) - f(W_{T})}{\eta T} + \frac{\UPDATE{8}\sqrt{\POSTCOMMIT{r_1}}\| M_0 - \nabla f(W_0) \|_{\rm{F}}}{(1-\beta)T} + \UPDATE{2}\sqrt{\frac{2(1-\beta)\POSTCOMMIT{r_1}\sigma^2}{b}} + \frac{\UPDATE{4}L\eta \sqrt{\POSTCOMMIT{r_1r_2}}}{1-\beta} + \frac{\UPDATE{L\eta}}{2} \POSTCOMMIT{r_2}\\
&= \mathcal{O}\left( \frac{1}{\eta T} + \sqrt{\frac{\POSTCOMMIT{r_1}(1-\beta)}{b}} + \UPDATE{\eta \POSTCOMMIT{\bar{r}}} \right).
\end{align*}
This completes the proof.
\end{proof}

\subsection{Proof of Theorem \ref{thm:01}(ii)}\label{app:thm02}
\begin{proof}
From the definition of $C_t$, we have
\begin{align*}
\| C_t - \nabla f(W_t) \|_{\rm{F}}
&= \| \beta M_t + (1-\beta) \nabla f_{\mathcal{S}_t}(W_t) - \nabla f(W_t) \|_{\rm{F}} \nonumber \\
&= \| \beta(M_t - \nabla f(W_t)) + (1-\beta)(\nabla f_{\mathcal{S}_t}(W_t) -\nabla f(W_t)) \|_{\rm{F}} \nonumber \\
&\leq \beta \| M_t - \nabla f(W_t) \|_{\rm{F}} + (1-\beta) \| \nabla f_{\mathcal{S}_t}(W_t) - \nabla f(W_t) \|_{\rm{F}}
\end{align*}
\UPDATE{According to Theorem \ref{lem:03}, we find that}
\begin{align*}
\sum_{t=0}^{T-1} \mathbb{E}\left[ \| \nabla f(W_t) \|_{\POSTCOMMIT{*}} \right]
&\leq \frac{f(W_0) - f(W_{T})}{\eta}
+ 2\sqrt{\POSTCOMMIT{r_1}} \sum_{t=0}^{T-1}\mathbb{E}\left[ \| C_t - \nabla f(W_t) \|_{\rm{F}} \right] + \frac{L\eta}{2} \POSTCOMMIT{r_2}T \\
&\leq \frac{f(W_0) - f(W_{T})}{\eta}
+ 2\sqrt{\POSTCOMMIT{r_1}} \left\{ \beta \sum_{t=0}^{T-1} \mathbb{E}\left[\| M_t - \nabla f(W_t) \|_{\rm{F}} \right] + (1-\beta) \sum_{t=0}^{T-1} \mathbb{E}\left[\| \nabla f_{\mathcal{S}_t}(W_t) - \nabla f(W_t) \|_{\rm{F}} \right] \right\}\\
&\quad + \frac{L\eta}{2} \POSTCOMMIT{r_2}T.
\end{align*}
\UPDATE{From Lemmas \ref{lem:01} and \ref{lem:02}, we have}
\begin{align*}
\sum_{t=0}^{T-1} \mathbb{E}\left[ \| \nabla f(W_t) \|_{\POSTCOMMIT{*}} \right] 
&\leq \frac{f(W_0) - f(W_{T})}{\eta} + 2\beta\sqrt{\POSTCOMMIT{r_1}} \left\{ \frac{4}{1-\beta} \| M_0 - \nabla f(W_0) \|_{\rm{F}} + \sqrt{\frac{2(1-\beta)\sigma^2}{b}}T + \frac{2L\eta \sqrt{n}}{1-\beta}T\right\} \\
&\quad+ 2(1-\beta)\sqrt{\POSTCOMMIT{r_1}} \cdot \sqrt{\frac{\sigma^2}{b}}T + \frac{L\eta}{2} \POSTCOMMIT{r_2}T.
\end{align*}
\UPDATE{By taking the average over $t = 0, \dots, T-1$ and applying expectation, we obtain}
\begin{align*}
\frac{1}{T} \sum_{t=0}^{T-1} \mathbb{E}\left[ \| \nabla f(W_t) \|_{\POSTCOMMIT{*}} \right]
&\leq  \frac{f(W_0) - f(W_{T})}{\eta T} + \frac{8\beta \sqrt{\POSTCOMMIT{r_1}}\| M_0 - \nabla f(W_0) \|_{\rm{F}}}{(1-\beta)T} \\
&\quad+ 2(1+\sqrt{2}\beta)\sqrt{1-\beta}\sqrt{\frac{\POSTCOMMIT{r_1}\sigma^2}{b}} + \frac{4L\eta \sqrt{\POSTCOMMIT{r_1r_2}} \beta}{1-\beta}  + \frac{L\eta}{2}\POSTCOMMIT{r_2} \\
&= \mathcal{O}\left( \frac{1}{\eta T} + \beta\sqrt{\frac{\POSTCOMMIT{r_1}(1-\beta)}{b}} + \UPDATE{\eta \POSTCOMMIT{\bar{r}}} \right),
\end{align*}
\UPDATE{where we use $2(\beta\sqrt{2(1-\beta)}+(1-\beta)) \leq 2(1+\sqrt{2}\beta)\sqrt{1-\beta}$. This completes the proof.}
\end{proof}

\section{Proof of Theorems for Muon with weight decay}\label{app:prop0102}
\subsection{Proof of Proposition \ref{prop:01}}
\begin{proof}
From the definition of $W_t$ and the condition $\eta \leq \frac{1}{\lambda}$, we have
\begin{align*}
\| W_t \|_{\rm{F}}
&= \| (1-\eta \lambda) W_{t-1} - \eta O_t \|_{\rm{F}} \\
&\leq (1-\eta \lambda) \| W_{t-1} \|_{\rm{F}} + \eta \sqrt{\POSTCOMMIT{r_2}} \\
&\leq (1-\eta \lambda)^t \| W_0 \|_{\rm{F}} + \eta \sqrt{\POSTCOMMIT{r_2}} \sum_{k=0}^{t-1}(1-\eta \lambda)^k \\
&\leq (1-\eta \lambda)^t \| W_0 \|_{\rm{F}} + \frac{\sqrt{\POSTCOMMIT{r_2}}}{\lambda},
\end{align*}
\POSTCOMMIT{where we use $\| O_t \|_{\rm{F}} \leq \sqrt{\text{rank}(O_t)}\|O_t\|_{\rm{op}}\leq \sqrt{r_2}$ in the first inequality.}
This completes the proof.
\end{proof}

\subsection{Proof of Proposition \ref{prop:02}}
\begin{proof}
According to Assumption \ref{assum:01},
\begin{align*}
\| \nabla f(W_t) - \nabla f(W^\star) \|_{\rm{F}} \leq L \| W_t - W^\star \|_{\rm{F}}.
\end{align*}
Therefore, from Proposition \ref{prop:01} and the fact that $\nabla f(W^\star) = \bm{0}$, we have
\begin{align*}
\| \nabla f(W_t) \|_{\rm{F}}
&\leq L \| W_t - W^\star \|_{\rm{F}} \\
&\leq L \| W_t \|_{\rm{F}} + L \| W^\star \|_{\rm{F}} \\
&\leq L (1-\eta \lambda)^t \| W_0 \|_{\rm{F}} + \frac{L\sqrt{\POSTCOMMIT{r_2}}}{\lambda} + L \|W^\star \|_{\rm{F}} .
\end{align*}
This completes the proof.
\end{proof}

\UPDATE{
\begin{lem}\label{lem:07}
Suppose Assumptions \ref{assum:01} and \ref{assum:02} hold, and that Muon is run with $\eta \leq \frac{1}{\lambda}$. Then, for all $t \in \mathbb{N}$,
\begin{align*}
\| W_t \|_{\rm{op}}
\leq (1-\eta \lambda)^t \| W_0 \|_{\rm{op}} + \frac{1-(1-\eta\lambda)^t}{\lambda}.
\end{align*}
In addition, if $\lambda <  \frac{1}{\| W_0 \|_{\rm{op}}}$ holds, then
\begin{align*}
\| W_t \|_{\rm{op}} < 1.
\end{align*}
\end{lem}
\begin{proof}
From the definition of $W_t$ and $\| O_t \|_{\rm{op}}=1$, we have
\begin{align*}
\| W_t \|_{\rm{op}}
&\leq (1-\eta \lambda)\| W_{t-1} \|_{\rm{op}} + \eta \| O_t \|_{\rm{op}} \\
&= (1-\eta\lambda)^t \| W_0 \|_{\rm{op}} + \eta \sum_{j=0}^{t-1}(1-\eta\lambda)^j \\
&= (1-\eta \lambda)^t \| W_0 \|_{\rm{op}} + \frac{1-(1-\eta\lambda)^t}{\lambda}.
\end{align*}
In addition, if $\lambda <  \frac{1}{\| W_0 \|_{\rm{op}}}$, i.e., $\| W_0 \|_{\rm{op}} < \frac{1}{\lambda}$,
\begin{align*}
\| W_t \|_{\rm{op}}
< (1-\eta \lambda)^t \frac{1}{\lambda} + \frac{1-(1-\eta\lambda)^t}{\lambda} = 1.
\end{align*}
This completes the proof.
\end{proof}
}

\UPDATE{
\begin{thm}[Auxiliary Theorem for Muon with weight decay]\label{lem:05}
Suppose Assumptions \ref{assum:01} and \ref{assum:02} hold, and that Muon is run with $\eta \leq \frac{1}{\lambda}$. If $\lambda < \min \left\{ \frac{1}{\| W_0 \|_{\rm{op}}}, 1\right\}$, then for all $t \in \mathbb{N}$,
\begin{align*}
\sum_{t=0}^{T-1}\mathbb{E}\left[ \| \nabla f(W_t) \|_{\POSTCOMMIT{*}} \right]
&\leq \frac{f(W_0) -f(W_{T})}{(1-\lambda)\eta} + \frac{\POSTCOMMIT{2\sqrt{r_1}}}{1-\lambda}\sum_{t=0}^{T-1}\mathbb{E}\left[ \| \nabla f(W_t) - C_t \|_{\rm{F}}\right] + \frac{(\POSTCOMMIT{r_2}+\lambda^2\POSTCOMMIT{r_3})L\eta T}{1-\lambda}
\end{align*}
\POSTCOMMIT{where $r_1 := \displaystyle\POSTCOMMIT{\sup_t}\,\textup{rank}(C_t - \nabla f(W_t))$, $r_2 := \displaystyle\POSTCOMMIT{\sup_t}\,\textup{rank}(O_t)$, and $r_3 := \displaystyle\POSTCOMMIT{\sup_t}\,\textup{rank}(W_t)$.}
\end{thm}
\begin{proof}
According to Assumption \ref{assum:01},
\begin{align*}
f(W_{t+1})
&\leq f(W_t) + \langle \nabla f(W_t), W_{t+1} - W_t \rangle_{\rm{F}} + \frac{L}{2}\| W_{t+1} - W_{t} \|_{\rm{F}}^2 \\
&= f(W_t) - \eta \langle \nabla f(W_t), O_t + \lambda W_t \rangle_{\rm{F}} + \frac{L\eta^2}{2} \| O_t + \lambda W_t\|_{\rm{F}}^2 \\
&=f(W_t) - \eta \langle C_t, O_t + \lambda W_t \rangle_{\rm{F}} - \eta \langle \nabla f(W_t) - C_t, O_t + \lambda W_t \rangle_{\rm{F}} + \frac{L\eta^2}{2}\|O_t + \lambda W_t\|_{\rm{F}}^2.
\end{align*}
First, we bound the first inner product term. From Eq.(\ref{eq:inner}) and Lemma \ref{lem:07}, we have
\begin{align}\label{eq:s1}
-\langle C_t, O_t + \lambda W_t\rangle_{\rm{F}}
= -\langle C_t, O_t\rangle_{\rm{F}} - \lambda \langle C_t, W_t\rangle_{\rm{F}}
= -\| C_t \|_* + \lambda \| C_t \|_* \| W_t \|_{\rm{op}}
\leq -(1-\lambda)\| C_t \|_*.
\end{align}
From the reverse triangle inequality, we have $-\| C_t \|_{\POSTCOMMIT{*}} \leq \| C_t - \nabla f(W_t) \|_{\POSTCOMMIT{*}} - \| \nabla f(W_t) \|_{\POSTCOMMIT{*}}$. Hence,
\begin{align*}
-\eta\langle C_t, O_t + \lambda W_t\rangle_{\rm{F}}
&\leq (1-\lambda)\eta\| C_t - \nabla f(W_t) \|_{\POSTCOMMIT{*}} - (1-\lambda)\eta \|\nabla f(W_t) \|_{\POSTCOMMIT{*}} \\
&\leq \POSTCOMMIT{(1-\lambda)\sqrt{r_1}\eta\| C_t - \nabla f(W_t) \|_{\rm{F}} - (1-\lambda)\eta \|\nabla f(W_t) \|_{\POSTCOMMIT{*}}. }
\end{align*}
Next, we bound the second inner product term. From Lemma \ref{lem:07} and $\| O_t \|_{\rm{op}} = 1$, we have
\begin{align}
-\langle \nabla f(W_t) - C_t, O_t + \lambda W_t \rangle_{\rm{F}}
&= -\langle \nabla f(W_t) - C_t, O_t \rangle_{\rm{F}} - \lambda\langle \nabla f(W_t) - C_t, W_t \rangle_{\rm{F}} \nonumber \\
&\leq \| \nabla f(W_t) - C_t\|_{*} \| O_t \|_{\rm{op}} + \lambda \| \nabla f(W_t) - C_t \|_{*} \|W_t \|_{\rm{op}} \nonumber \\
&\leq \sqrt{\POSTCOMMIT{r_1}}\| \nabla f(W_t) -C_t \|_{\rm{F}} + \lambda\sqrt{r}\| \nabla f(W_t) - C_t \|_{\rm{F}} \nonumber \\
&= (1+\lambda)\sqrt{\POSTCOMMIT{r_1}}\| \nabla f(W_t) - C_t \|_{\rm{F}} \label{eq:s2}.
\end{align}
Finally, we bound the last term. From Lemma \ref{lem:07}, we have $\| W_t \|_{\rm{F}} \leq \sqrt{r_3} \| W_t \|_{\rm{op}} \leq \sqrt{r_3}$, \POSTCOMMIT{where $r_3 := \displaystyle\POSTCOMMIT{\sup_t}\,\textup{rank}(W_t)$}. Hence, 
\begin{align*}
\frac{L\eta^2}{2}\| O_t + \lambda W_t \|_{\rm{F}}^2
= L\eta^2 \| O_t \|_{\rm{F}}^2 + L\eta^2\lambda^2\| W_t \|_{\rm{F}}^2
\leq L\eta^2 \POSTCOMMIT{r_2} + L\eta^2\lambda^2\POSTCOMMIT{r_3}
= (\POSTCOMMIT{r_2}+\lambda^2\POSTCOMMIT{r_3})L\eta^2.
\end{align*}
From all of the above, we obtain
\begin{align*}
f(W_{t+1})
&\leq  f(W_t) + (1-\lambda)\POSTCOMMIT{\sqrt{r_1}}\eta\| C_t - \nabla f(W_t) \|_{\rm{F}} - (1-\lambda) \eta \|\nabla f(W_t) \|_{\POSTCOMMIT{*}} \\
&\quad+\eta(1 + \lambda)\sqrt{\POSTCOMMIT{r_1}} \| \nabla f(W_t) - C_t \|_{\rm{F}} + (\POSTCOMMIT{r_2}+\lambda^2\POSTCOMMIT{r_3})L\eta^2 \\
&\leq f(W_t) + \POSTCOMMIT{2\sqrt{r_1}\eta}\| \nabla f(W_t) - C_t \|_{\rm{F}} - (1-\lambda) \eta \| \nabla f(W_t) \|_{\POSTCOMMIT{*}} + (\POSTCOMMIT{r_2}+\lambda^2\POSTCOMMIT{r_3})L\eta^2 .
\end{align*}
By rearranging the terms and taking the expectation, we obtain
\begin{align*}
\mathbb{E}\left[ \| \nabla f(W_t) \|_{\POSTCOMMIT{*}} \right]
&\leq \frac{f(W_t) -f(W_{t+1})}{(1-\lambda)\eta} + \frac{\POSTCOMMIT{2\sqrt{r_1}}}{1-\lambda}\mathbb{E}\left[ \| \nabla f(W_t) - C_t \|_{\rm{F}}\right] + \frac{(\POSTCOMMIT{r_2}+\lambda^2\POSTCOMMIT{r_3})L\eta}{1-\lambda}.
\end{align*}
This completes the proof.
\end{proof}
}

\UPDATE{
\subsection{Proof of Theorem \ref{thm:03}(i)}\label{app:thm03}
\begin{proof}
\UPDATE{From $C_t := M_t$ and Lemma \ref{lem:02} and Theorem \ref{lem:05}, we find that}
\begin{align*}
\sum_{t=0}^{T-1}\mathbb{E}\left[ \| \nabla f(W_t) \|_{\POSTCOMMIT{*}} \right]
&\leq \frac{f(W_0) -f(W_{T})}{(1-\lambda)\eta} + \POSTCOMMIT{\frac{2\sqrt{r_1}}{1-\lambda}}\sum_{t=0}^{T-1}\mathbb{E}\left[ \| \nabla f(W_t) - M_t \|_{\rm{F}}\right] + \frac{(\POSTCOMMIT{r_2}+\lambda^2\POSTCOMMIT{r_3})L\eta T}{1-\lambda} \\
&\leq \frac{f(W_0) -f(W_{T})}{(1-\lambda)\eta} + \POSTCOMMIT{\frac{2\sqrt{r_1}}{1-\lambda}}\left\{ \frac{4}{1-\beta} \| M_0 - \nabla f(W_0) \|_{\rm{F}} + \sqrt{\frac{2(1-\beta)\sigma^2}{b}}T + \frac{2L\eta \sqrt{\POSTCOMMIT{r_2}}}{1-\beta}T\right\} \\
&\quad + \frac{(\POSTCOMMIT{r_2}+\lambda^2\POSTCOMMIT{r_3})L\eta T}{1-\lambda}.
\end{align*}
\UPDATE{By taking the average over $t = 0, \dots, T-1$, we obtain}
\begin{align*}
\frac{1}{T} \sum_{t=0}^{T-1} \mathbb{E}\left[ \| \nabla f(W_t) \|_{\POSTCOMMIT{*}} \right]
&\leq \frac{f(W_0) -f(W_{T})}{(1-\lambda)\eta T} + \frac{8\sqrt{\POSTCOMMIT{r_1}}\| M_0 - \nabla f(W_0) \|}{(1-\beta)(1-\lambda)T} + \frac{2\sqrt{2(1-\beta)}}{1-\lambda}\sqrt{\frac{\POSTCOMMIT{r_1}\sigma^2}{b}} \\
&\quad+ \frac{4L\eta\sqrt{\POSTCOMMIT{r_1r_2}}}{(1-\beta)(1-\lambda)}+ \frac{(\POSTCOMMIT{r_2}+\lambda^2\POSTCOMMIT{r_3})L\eta}{(1-\lambda)}\\
&= \mathcal{O}\left( \frac{1}{\eta T} + \sqrt{\frac{\POSTCOMMIT{r_1}(1-\beta)}{b}} + \eta\POSTCOMMIT{\hat{r}} \right).
\end{align*}
\UPDATE{This completes the proof.}
\end{proof}
}

\UPDATE{
\subsection{Proof of Theorem \ref{thm:03}(ii)}\label{app:thm04}
\begin{proof}
\UPDATE{From the definition of $C_t$,}
\begin{align*}
\| C_t - \nabla f(W_t) \|_{\rm{F}}
&\leq \beta \| M_t - \nabla f(W_t) \|_{\rm{F}} + (1-\beta) \|\nabla f_{\mathcal{S}_t}(W_t) - \nabla f(W_t) \|_{\rm{F}}.
\end{align*}
\UPDATE{Therefore, according to Lemma \ref{lem:01} and Theorem \ref{lem:05}, we find that}
\begin{align*}
\sum_{t=0}^{T-1}\mathbb{E}\left[ \| \nabla f(W_t) \|_{\POSTCOMMIT{*}} \right]
&\leq \frac{f(W_0) -f(W_{T})}{(1-\lambda)\eta} + \frac{(\POSTCOMMIT{r_2}+\lambda^2\POSTCOMMIT{r_3})L\eta T}{1-\lambda} + \POSTCOMMIT{\frac{2\sqrt{r_1}(1-\beta)}{1-\lambda}} \sum_{t=0}^{T-1}\mathbb{E}\left[ \| \nabla f_{\mathcal{S}_t}(W_t) - \nabla f(W_t) \|_{\rm{F}}\right] \\
&\quad + \POSTCOMMIT{\frac{2\sqrt{r_1}}{1-\lambda}}\beta\sum_{t=0}^{T-1}\mathbb{E}\left[ \| \nabla f(W_t) - M_t \|_{\rm{F}}\right] \\
&\leq \frac{f(W_0) -f(W_{T})}{(1-\lambda)\eta} + \frac{(\POSTCOMMIT{r_2}+\lambda^2\POSTCOMMIT{r_3})L\eta T}{1-\lambda} + \frac{\POSTCOMMIT{2\sqrt{r_1}}(1-\beta)\sigma}{(1-\lambda)\sqrt{b}}T \\
&\quad + \POSTCOMMIT{\frac{2\sqrt{r_1}}{1-\lambda}}\beta\left\{ \frac{4}{1-\beta} \| M_0 - \nabla f(W_0) \|_{\rm{F}} + \sqrt{\frac{2(1-\beta)\sigma^2}{b}}T + \frac{2L\eta \sqrt{n}}{1-\beta}T\right\}.
\end{align*}
By taking the average over $t = 0, \ldots, T-1,$ we obtain
\begin{align*}
\frac{1}{T} \sum_{t=0}^{T-1} \mathbb{E}\left[ \| \nabla f(W_t) \|_{\POSTCOMMIT{*}}\right]
&\leq \frac{f(W_0) -f(W_{T})}{(1-\lambda)\eta T} + \frac{8\beta\sqrt{\POSTCOMMIT{r_1}}\| M_0 - \nabla f(W_0) \|_{\rm{F}}}{(1-\beta)(1-\lambda)T} + \frac{2(1+\sqrt{2}\beta)\sqrt{1-\beta}}{1-\lambda} \sqrt{\frac{\POSTCOMMIT{r_1}\sigma^2}{b}} \\
&\quad+ \frac{(\POSTCOMMIT{r_2}+\lambda^2\POSTCOMMIT{r_3})L\eta}{1-\lambda} + \frac{4L\beta\eta\sqrt{\POSTCOMMIT{r_1r_2}}}{(1-\beta)(1-\lambda)}\\
&=\mathcal{O}\left(\frac{1}{\eta T} + \sqrt{\frac{\POSTCOMMIT{r_1}(1-\beta)}{b}} + \eta \POSTCOMMIT{\hat{r}} \right),
\end{align*}
where we use $(\beta\sqrt{2(1-\beta)}+(1-\beta)) \leq (1+\sqrt{2}\beta)\sqrt{1-\beta}$. This completes the proof.
\end{proof}
}


\POSTCOMMIT{
\section{Proof of Proposition \ref{thm:ns} for Muon with Newton-Schulz}\label{app:NS}
\begin{proof}
According to Assumption \ref{assum:01},
\begin{align*}
f(W_{t+1})
&\leq f(W_t) + \langle \nabla f(W_t), W_{t+1}-W_t\rangle_{\rm{F}} + \frac{L}{2}\| W_{t+1}- W_t\|_{\rm{F}}^2 \\
&=f(W_t)-\eta\langle \nabla f(W_t), \tilde{O}_t^{(k)} + \lambda W_t\rangle_{\rm{F}} + \frac{L\eta^2}{2}\| \tilde{O}_t^{(k)}+\lambda W_t\|_{\rm{F}}^2 \\
&=f(W_t) - \eta\langle C_t, O_t + \lambda W_t\rangle_{\rm{F}} - \eta \langle \nabla f(W_t) - C_t, O_t + \lambda W_t\rangle_{\rm{F}} - \eta\langle \nabla f(W_t), \tilde{O}_t^{(k)} - O_t\rangle + \frac{L\eta^2}{2}\| \tilde{O}_t^{(k)} + \lambda W_t \|_{\rm{F}}^2.
\end{align*}
From Eq. (\ref{eq:s1}) and the reverse triangle inequality,
\begin{align*}
    -\eta\langle C_t, O_t + \lambda W_t\rangle_{\rm{F}}
    \leq -(1-\lambda)\| C_t \|_*
    \leq (1-\lambda)\eta \sqrt{r_1}\| C_t - \nabla f(W_t) \|_{\rm{F}} -(1-\lambda)\eta\|\nabla f(W_t) \|_*.
\end{align*}
We recall the following inequality (see (\ref{eq:s2})):
\begin{align*}
-\eta\langle \nabla f(W_t) - C_t, O_t + \lambda W_t \rangle_{\rm{F}}
\leq (1+\lambda)\eta\sqrt{r_1}\| \nabla f(W_t) - C_t \|_{\rm{F}}.
\end{align*}
On the other hand, from H\"{o}lder's inequality and $\| \tilde{O}_t^{(k)} - O_t \|_{\rm{op}} \leq \delta_{(k)}$, we have
\begin{align*}
-\eta\langle \nabla f(W_t), \tilde{O}_t^{(k)} - O_t\rangle_{\rm{F}}
\leq \eta\| \nabla f(W_t) \|_*\| \tilde{O}_t^{(k)} - O_t \|_{\rm{op}}
\leq \eta \delta_{(k)} \| \nabla f(W_t) \|_*.
\end{align*}
Finally, we bound the last term. Since $\| \tilde{O}_t^{(k)} - O_t \|_{\rm{op}} \leq \delta_{(k)}$, we have
\begin{align*}
\| \tilde{O}_t^{(k)} \|_{\rm{F}}^2
\leq r_2\| \tilde{O}_t^{(k)} \|_{\rm{op}}^2
= r_2 \| O_t + \tilde{O}_t^{(k)} - O_t \|_{\rm{op}}^2
\leq 2r_2 \| O_t \|_{\rm{op}}^2 + 2r_2\| \tilde{O}_t^{(k)} - O_t \|_{\rm{op}}^2
\leq 2r_2 + 2r_2\delta_{(k)}^2.
\end{align*}
From Lemma \ref{lem:07}, we have $\| W_t \|_{\rm{F}}^2 \leq r_3\|W_t\|_{\rm{op}}^2 \leq r_3$. Therefore, 
\begin{align*}
\frac{L\eta^2}{2}\| \tilde{O}_t^{(k)} + \lambda W_t \|_{\rm{F}}^2
\leq L\eta^2 \left( \| \tilde{O}_t^{(k)} \|_{\rm{F}}^2 + \lambda^2 \|W_t\|_{\rm{F}}^2 \right)
\leq L\eta^2 \left( 2r_2 + 2r_2\delta_{(k)}^2 + \lambda^2r_3 \right).
\end{align*}
Hence,
\begin{align*}
f(W_{t+1})
&\leq f(W_t) + 2\sqrt{r_1}\eta\| C_t - \nabla f(W_t) \|_{\rm{F}} - (1-\lambda - \delta_{(k)})\eta \|\nabla f(W_t) \|_{*} 
+ L\eta^2 \left( 2r_2 + 2r_2\delta_{(k)}^2 + \lambda^2r_3 \right).
\end{align*}
By rearranging the term and taking the expectation, we obtain
\begin{align*}
\mathbb{E}\left[ \| \nabla f(W_t) \|_{*}\right]
&\leq \frac{f(W_t) -f(W_{t+1})}{(1-\lambda-\delta_{(k)})\eta} + \frac{2\sqrt{r_1}}{1-\lambda-\delta_{(k)}}\mathbb{E}\left[ \| C_t - \nabla f(W_t) \|_{\rm{F}}\right] + \frac{(2r_2 + 2r_2\delta_{(k)}^2+\lambda^2r_3)L\eta}{1-\lambda-\delta_{(k)}},
\end{align*}
Summing over $t=0$ to $t=T-1$,
\begin{align*}
\sum_{t=0}^{T-1} \mathbb{E}\left[ \| \nabla f(W_t) \|_{*}\right]
\leq \frac{f(W_0)-f(W_T)}{(1-\lambda-\delta_{(k)})\eta} + \frac{2\sqrt{r_1}}{1-\lambda-\delta_{(k)}}\sum_{t=0}^{T-1}\mathbb{E}\left[ \| C_t - \nabla f(W_t) \|_{\rm{F}}\right] + \frac{(2r_2 + 2r_2\delta_{(k)}^2+\lambda^2r_3)L\eta}{1-\lambda-\delta_{(k)}}T,
\end{align*}
From the definition of $C_t$, Lemma \ref{lem:01} and \ref{lem:02}, we obtain
\begin{align*}
\sum_{t=0}^{T-1}\mathbb{E}\left[ \| C_t - \nabla f(W_t) \|_{*}\right]
&\leq \beta \sum_{t=0}^{T-1}\mathbb{E}\left[\|M_t - \nabla f(W_t) \|_{\rm{F}} \right] + (1-\beta)\sum_{t=0}^{T-1}\mathbb{E}\left[ \| \nabla f_{\mathcal{S}_t}(W_t) - \nabla f(W_t) \|_{\rm{F}}\right] \\
&\leq \frac{4\beta}{1-\beta} \| M_0 - \nabla f(W_0) \|_{\rm{F}} + \beta\sqrt{\frac{2(1-\beta)\sigma^2}{b}}T + \frac{2L\beta\eta \sqrt{r_2}}{1-\beta}T + (1-\beta)\sqrt{\frac{\sigma^2}{b}}T \\
&\leq \frac{4\beta}{1-\beta} \| M_0 - \nabla f(W_0) \|_{\rm{F}} + (1+\sqrt{2}\beta)\sqrt{\frac{(1-\beta)\sigma^2}{b}}T + \frac{2L\beta\eta \sqrt{r_2}}{1-\beta}T,
\end{align*}
where we use $\beta\sqrt{2(1-\beta)}+1-\beta \leq (1+\sqrt{2}\beta)\sqrt{1-\beta}$. Hence, 
\begin{align*}
\frac{1}{T}\sum_{t=0}^{T-1} \mathbb{E}\left[ \| \nabla f(W_t) \|_{*}\right]
&\leq \frac{f(W_0)-f(W_T)}{(1-\lambda-\delta_{(k)})\eta T} + \frac{8\beta\sqrt{r_1}\|M_0 - \nabla f(W_0) \|_{\rm{F}}}{(1-\beta)(1-\lambda-\delta_{(k)})T} + \frac{2(1+\sqrt{2}\beta)\sqrt{1-\beta}}{1-\lambda-\delta_{(k)}}\sqrt{\frac{r_1\sigma^2}{b}}\\ 
&\quad+ \frac{4L\beta\eta\sqrt{r_1r_2}}{(1-\beta)(1-\lambda-\delta_{(k)})} + \frac{(2r_2 + \lambda^2r_3)L\eta}{1-\lambda-\delta_{(k)}} + \frac{2r_2\delta_{(k)}^2L\eta}{1-\lambda-\delta_{(k)}} \\
&= \mathcal{O}\left( \frac{1}{\eta T} + \sqrt{\frac{r_1(1-\beta)}{b}} + \eta \hat{r}\right).
\end{align*}
This completes the proof.
\end{proof}
}

\clearpage
\section{Experimental Details}
\label{app:exp_detail}

This section details the experimental setup for evaluating the optimizers. Our code will be available at 
\url{https://anonymous.4open.science/r/critical_batchsize_muon-0D38/README.md}.




{
\paragraph{Workloads and General Setup (Vision)}
All CIFAR experiments used \resnet on \cifar and \vgg on \cifarh, trained from scratch.
Training was performed for a fixed number of samples corresponding to 100 epochs at a batch size of 512.
For any batch size $B$, the number of epochs was scaled as $E_B = 100 \times (512 / B)$ to keep the number of seen samples constant.
Each configuration was repeated with five random seeds, and we report mean and standard deviation of test accuracy.

\paragraph{Workload and System Setup (Language Modeling: \cfour / \llama)}
For the language–modeling workload we trained \llama on the \cfour dataset with sequence length 2048.
We used the \texttt{torchtitan} codebase with PyTorch FSDP (full sharding) and activation checkpointing (\texttt{mode=full}).
Gradient clipping with max norm 1.0 was applied.
Due to sharding, Muon requires an additional \emph{all-gather of gradients} before the optimizer step, which introduces one extra collective communication per step. 
We therefore report SFO-based metrics (steps $\times$ batch size) in addition to loss to separate algorithmic efficiency from system overhead.

Unless otherwise noted, the training budget was fixed to $3.2$B tokens.
At a base global batch size (GBS) of 128 this corresponds to 12{,}208 steps; for other GBS values the number of steps scales inversely to keep the token budget constant.
We ran on up to 8 H100 GPUs with per-GPU micro-batch size 8 and no tensor, pipeline, or context parallelism (\texttt{TP=PP=CP=1}).

\paragraph{Learning-Rate Schedules}
Vision workloads used a grid search at base batch size 512 and square-root scaling to other batch sizes for Muon and AdamW. 
Momentum SGD was tested with both square-root and linear scaling.
For \llama, we used linear warmup followed by cosine decay. 
The base configuration uses 2000 warmup steps; in sweeps we also parameterized warmup as 10\% of total steps to maintain a comparable schedule across different GBS.
}




{

\paragraph{Hyperparameter Tuning Protocol (Vision)}
We performed an extensive grid search over base learning rates and weight decay.
For $B \neq 512$, learning rates were scaled from the base value using
$\eta_B=\eta_{512}\sqrt{B/512}$ for AdamW and Muon; for Momentum SGD we report both $\sqrt{B/512}$ and ${B/512}$ scaling.
The shared search space is summarized in Table~\ref{tab:shared_hyperparams}, and optimizer-specific spaces in Table~\ref{tab:optimizer_hyperparams}.

}

\begin{table}[h!]
    \centering
    \caption{{Shared hyperparameters for vision experiments.}}
    \label{tab:shared_hyperparams}
    \begin{tabular}{ll}
        \toprule
        Hyperparameter & Value / Search Space \\
        \midrule
        Model Architecture & \resnet , \vgg\\
        Dataset & \cifar, \cifarh \\
        Batch Size ($B$) & \{2, 4, 8, 16, 32, 64, 128, 256, 512, 1024, 2048, 4096\} \\
        Base Epochs (at $B=512$) & 100 \\
        Random Seeds & 5 \\
        \bottomrule
    \end{tabular}
\end{table}

\begin{table}[h!]
    \centering
    \caption{{Optimizer-specific hyperparameter search spaces (vision). The base learning rate ($\eta_{512}$) is specified at a reference batch size of 512.}}
    \vspace{1mm}
    \label{tab:optimizer_hyperparams}
    \begin{tabular}{llll}
        \toprule
        Hyperparameter & AdamW & Momentum SGD & Muon \\
        \midrule
        \textbf{Searched Parameters} & & & \\
        \midrule
        Base Learning Rate ($\eta_{512}$) 
        & \begin{tabular}[t]{@{}l@{}}
            \{0.01, 0.001, \\
            0.0001\}
          \end{tabular}
        & \begin{tabular}[t]{@{}l@{}}
            \{0.001, 0.0005, \\
            0.0001\}
          \end{tabular}
        & \\
        \quad - Muon component 
        & --- 
        & --- 
        & \begin{tabular}[t]{@{}l@{}}
            \{0.01, 0.005, \\
            0.001\}
          \end{tabular} \\
        \quad - AdamW component 
        & --- 
        & --- 
        & \begin{tabular}[t]{@{}l@{}}
            \{0.01, 0.001, \\
            0.0001\}
          \end{tabular} \\
        Momentum 
        & --- 
        & 0.9 
        & \begin{tabular}[t]{@{}l@{}}
            \{0.7, 0.8, 0.9, \\
            0.95, 0.99, 0.999\}
          \end{tabular} \\
        Weight Decay ($\lambda$) 
        & \begin{tabular}[t]{@{}l@{}}
            \{0.1, 0.01, \\
            0.001, 0.0001, 0\}
          \end{tabular}
        & \begin{tabular}[t]{@{}l@{}}
            \{0.1, 0.01, \\
            0.001, 0.0001, 0\}
          \end{tabular}
        & \begin{tabular}[t]{@{}l@{}}
            \{0.1, 0.01, \\
            0.001, 0.0001, 0\}
          \end{tabular} \\
        \midrule
        \textbf{Fixed Parameters} & & & \\
        \midrule
        Learning Rate Scaling 
        & $\sqrt{B/512}$ 
        & \begin{tabular}[t]{@{}l@{}}
            $\sqrt{B/512}$ \\
            (and ${B/512}$)
          \end{tabular}
        & $\sqrt{B/512}$ \\
        Adam $\beta_1, \beta_2$ 
        & 0.9, 0.999 (default) 
        & --- 
        & 0.9, 0.999 (default) \\
        Adam $\epsilon$ 
        & 1e-8 (default) 
        & --- 
        & 1e-8 (default) \\
        \bottomrule
    \end{tabular}
\end{table}




{
\paragraph{Optimizer Configuration (Vision)}
For the hybrid Muon optimizer, we applied the Muon update to all convolutional layers in \resnet{} (excluding the input stem). 
Biases, batch-norm parameters, and the final linear layer were updated by AdamW.
For standard AdamW and Momentum SGD, the respective optimizer was applied to all parameters.
}

\POSTCOMMIT{
\begin{table}[h!]
    \centering
    \caption{{Number of parameters updated by each optimizer component in the Muon setup for \resnet{}, \vgg{} and \llama{}.}}
    \label{tab:param_counts}
    \begin{tabular}{lrr}
        \toprule
        Model & Muon params & AdamW params \\
        \midrule
        \resnet & 11,157,504 & 16,458 \\
        \vgg & 14,712,896 & 19,302,500 \\
        \llama & 127,401,984 & 49,180,416 \\
        \bottomrule
    \end{tabular}
\end{table}
}

{

\paragraph{Search Space (Language Modeling)}
For \cfour / \llama, we performed sweeps using grid search.
GBS took values in $\{64,256,1024,4096\}$ when resources permitted.
We tuned Muon and the AdamW baseline per GBS without automatic LR scaling.

}

\begin{table}[h!]
    \centering
    \caption{{Shared configuration for \cfour / \llama.}}
    \label{tab:llm_shared_hparams}
    \begin{tabular}{ll}
        \toprule
        Item & Value \\
        \midrule
        Model & \begin{tabular}[t]{@{}l@{}}
            \llama \\
            (dim $=768$, 
            $n_{\mathrm{layers}}=18$, 
            $n_{\mathrm{heads}}=12$)
        \end{tabular} \\
        Dataset & \cfour \\
        Sequence length & 2048 \\
        Global batch size (GBS) & \begin{tabular}[t]{@{}l@{}}
            $\{32, 64, 128, 256,$ 
            $512, 1024, 2048, 4096\}$
        \end{tabular} \\
        Total tokens & $3.2\times 10^9$ \\
        Base steps (GBS=128) & 12{,}208 \\
        Schedule & \begin{tabular}[t]{@{}l@{}}
            linear warmup 
            then cosine decay 
            (decay ratio $0.5$)
        \end{tabular} \\
        Warmup & \begin{tabular}[t]{@{}l@{}}
            2000 steps (base) 
            or $10\%$ of total steps (sweep)
        \end{tabular} \\
        Gradient clipping & \begin{tabular}[t]{@{}l@{}}
            max-norm $=1.0$
        \end{tabular} \\
        Parallelism & \begin{tabular}[t]{@{}l@{}}
            FSDP; 
            TP=1, PP=1, CP=1
        \end{tabular} \\
        Hardware & \begin{tabular}[t]{@{}l@{}}
            up to 8 H100 GPUs; 
            local batch size $=8$
        \end{tabular} \\
        \bottomrule
    \end{tabular}
\end{table}

\begin{table}[h!]
    \centering
    \caption{{Optimizer search spaces for \cfour / \llama. Component-wise LRs are tuned per GBS.}}
    \label{tab:llm_optimizer_hparams}
    \begin{tabular}{lll}
        \toprule
        Hyperparameter & Muon & AdamW (baseline) \\
        \midrule
        Muon LR & \begin{tabular}[t]{@{}l@{}}
            \{0.005, 0.01, \\
            0.02, 0.04\}
        \end{tabular} & --- \\
        Muon momentum $\beta$ & \begin{tabular}[t]{@{}l@{}}
            \{0.8, 0.9, \\
            0.95, 0.99\}
        \end{tabular} & --- \\
        Muon Nesterov & \{True, False\} & --- \\
        Muon weight decay & \{0, 0.1\} & --- \\
        AdamW LR (component or baseline) & \begin{tabular}[t]{@{}l@{}}
            \{0.001, 0.002, \\
            0.004, 0.008\}
        \end{tabular} & \begin{tabular}[t]{@{}l@{}}
            \{0.001, 0.002, \\
            0.004, 0.008\}
        \end{tabular} \\
        AdamW weight decay & 0.1 & 0.1 \\
        AdamW $\beta_1, \beta_2$ & 0.9, 0.999 & 0.9, 0.999 \\
        AdamW $\epsilon$ & $1{\times}10^{-8}$ & $1{\times}10^{-8}$ \\
        \bottomrule
    \end{tabular}
\end{table}

\paragraph{Tuning fairness.}
To ensure a fair comparison across optimizers, we applied the same tuning protocol to all methods: at the base batch size, we performed a grid search over learning rates; at other batch sizes, we applied standard scaling rules (square-root scaling for Muon and AdamW, square-root and linear for Momentum SGD). We selected the configuration that minimizes SFO complexity to the target accuracy (or target loss for LLM experiments). This ensures that all optimizers are given equal opportunity to find their best operating point, and that differences in SFO complexity reflect genuine algorithmic advantages rather than tuning asymmetry.

\paragraph{Compute fairness and the choice of SFO complexity.}
We use stochastic first-order oracle (SFO) complexity (steps $\times$ batch size) as the primary efficiency metric because it is hardware-independent: it counts the total number of individual gradient evaluations regardless of the parallelism, memory bandwidth, or accelerator type used. This makes SFO comparisons reproducible across different hardware setups. We acknowledge that wall-clock time is ultimately the metric of practical interest; however, wall-clock comparisons conflate algorithmic efficiency with implementation-specific factors (e.g., the extra all-gather communication required by Muon under FSDP sharding). By reporting SFO complexity, we isolate the algorithmic contribution. Note that \citet{jordan2024muon} demonstrated that Muon achieves favorable wall-clock performance relative to both Shampoo and SOAP, and \citet{Ess2025Pra} showed that Muon expands AdamW's Pareto frontier on the compute-time plane.

\clearpage

\begin{figure*}[t]
  \centering
  \includegraphics[width=0.45\linewidth]{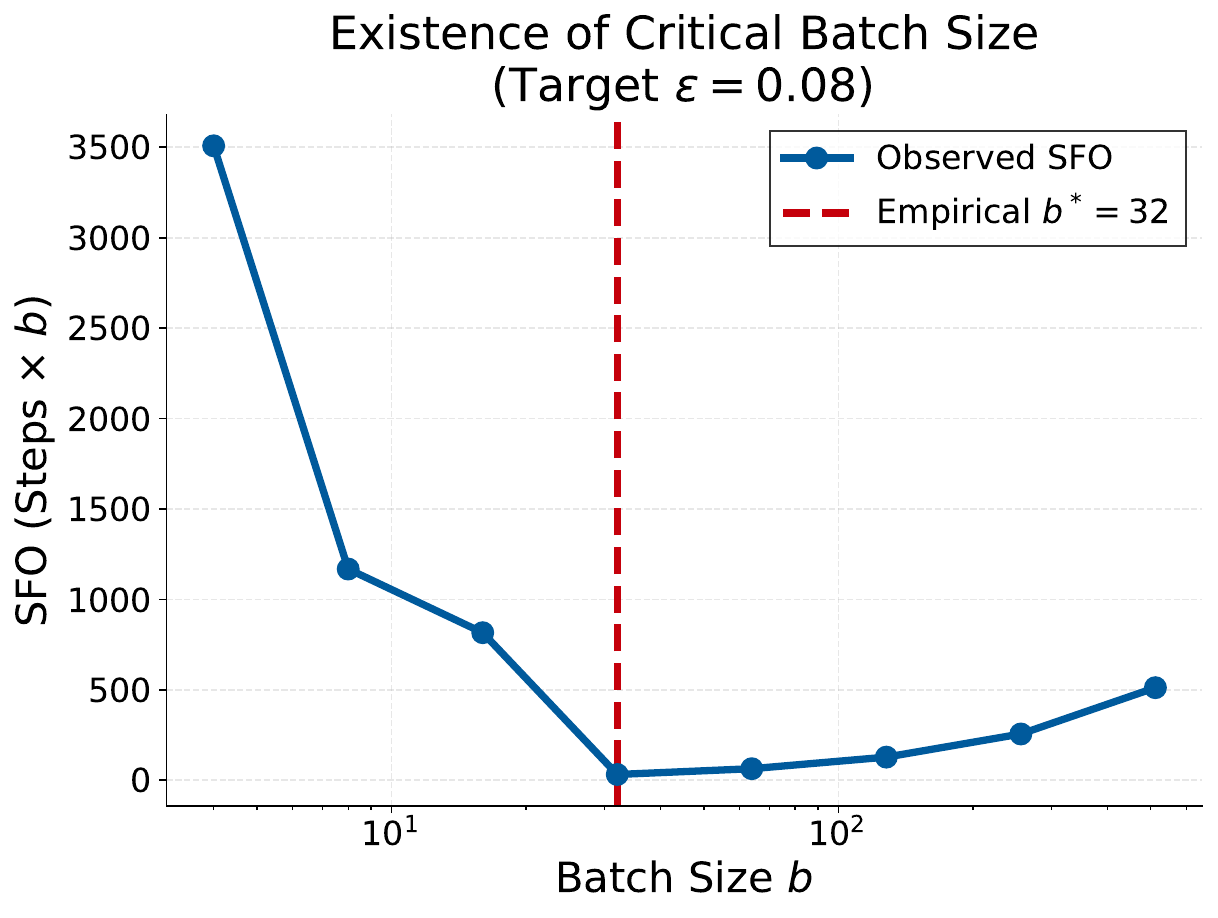}
  \includegraphics[width=0.45\linewidth]{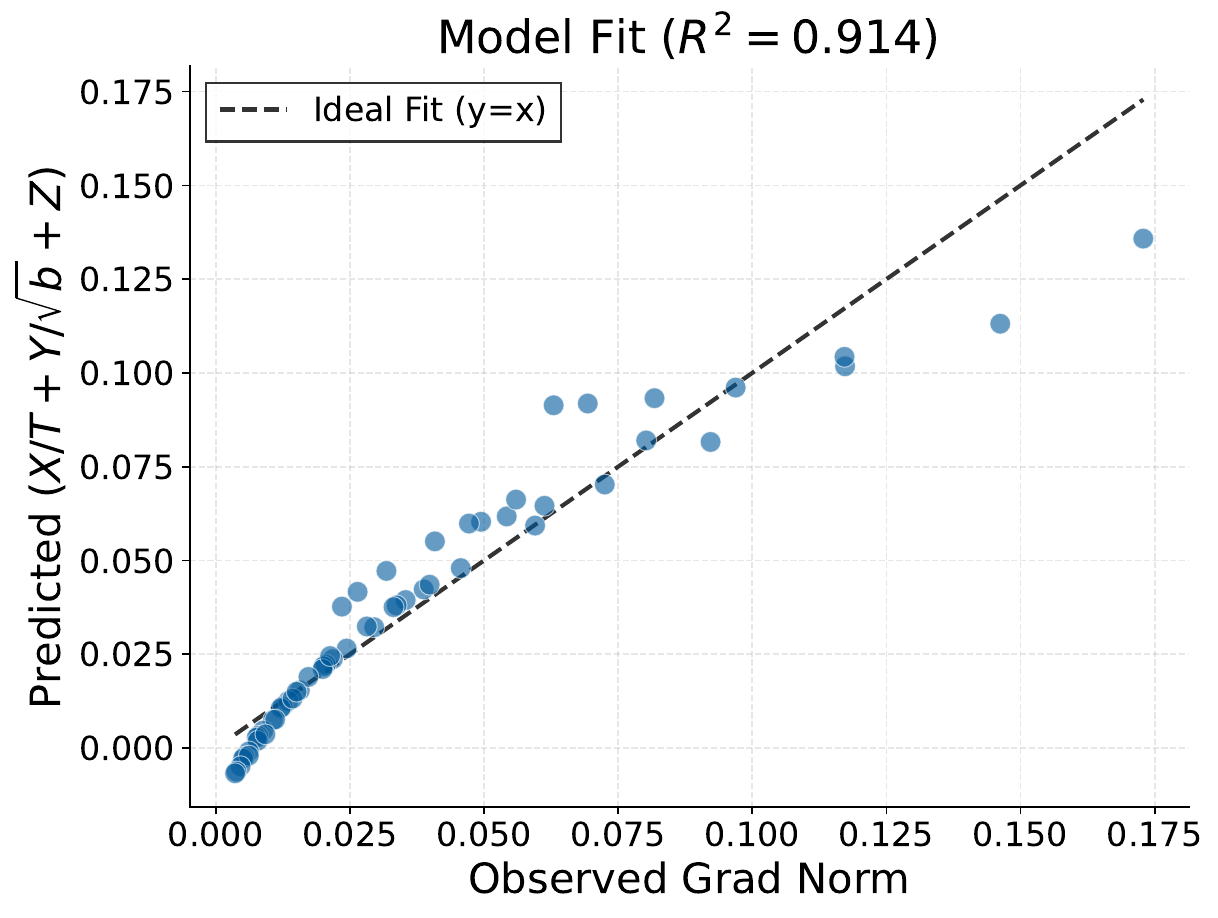}
  \includegraphics[width=0.45\linewidth]{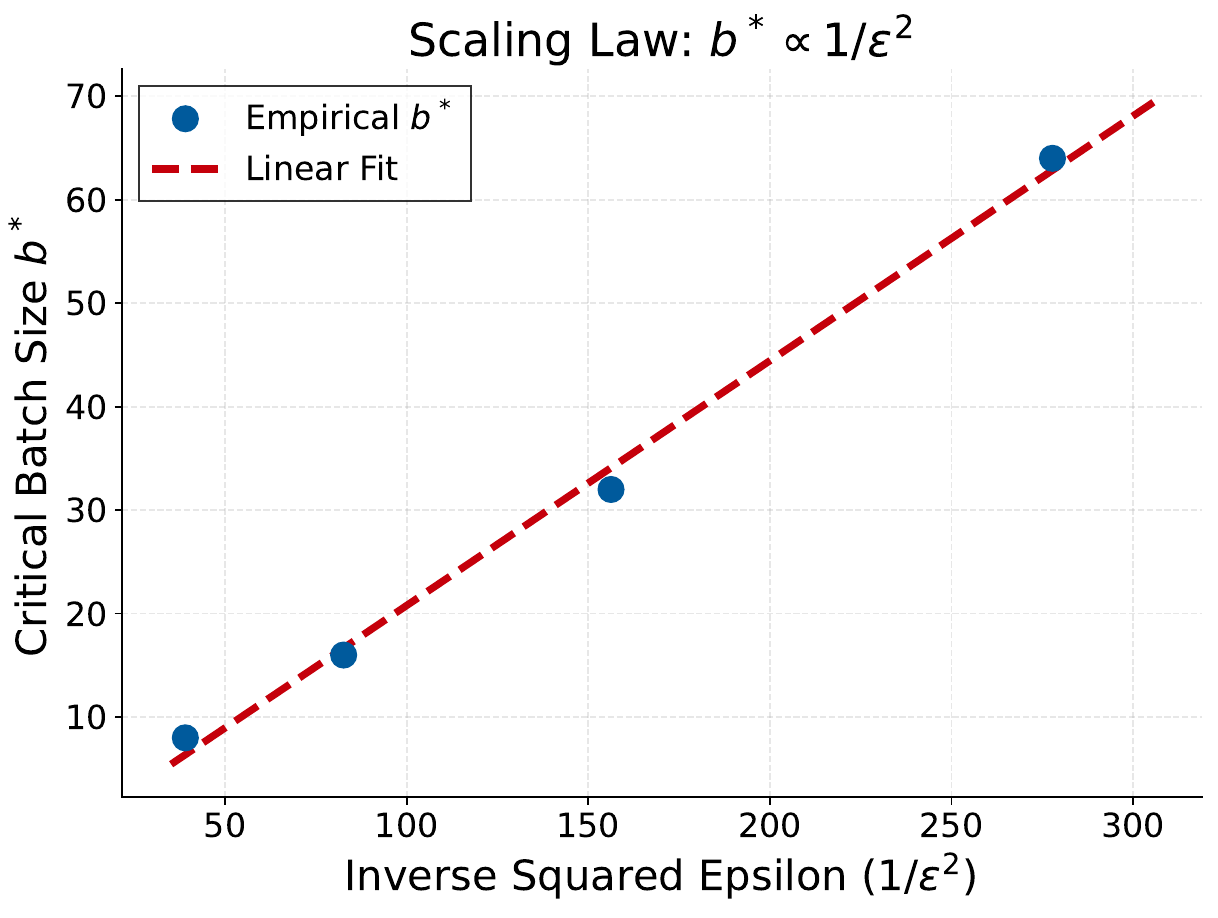}
  \includegraphics[width=0.45\linewidth]{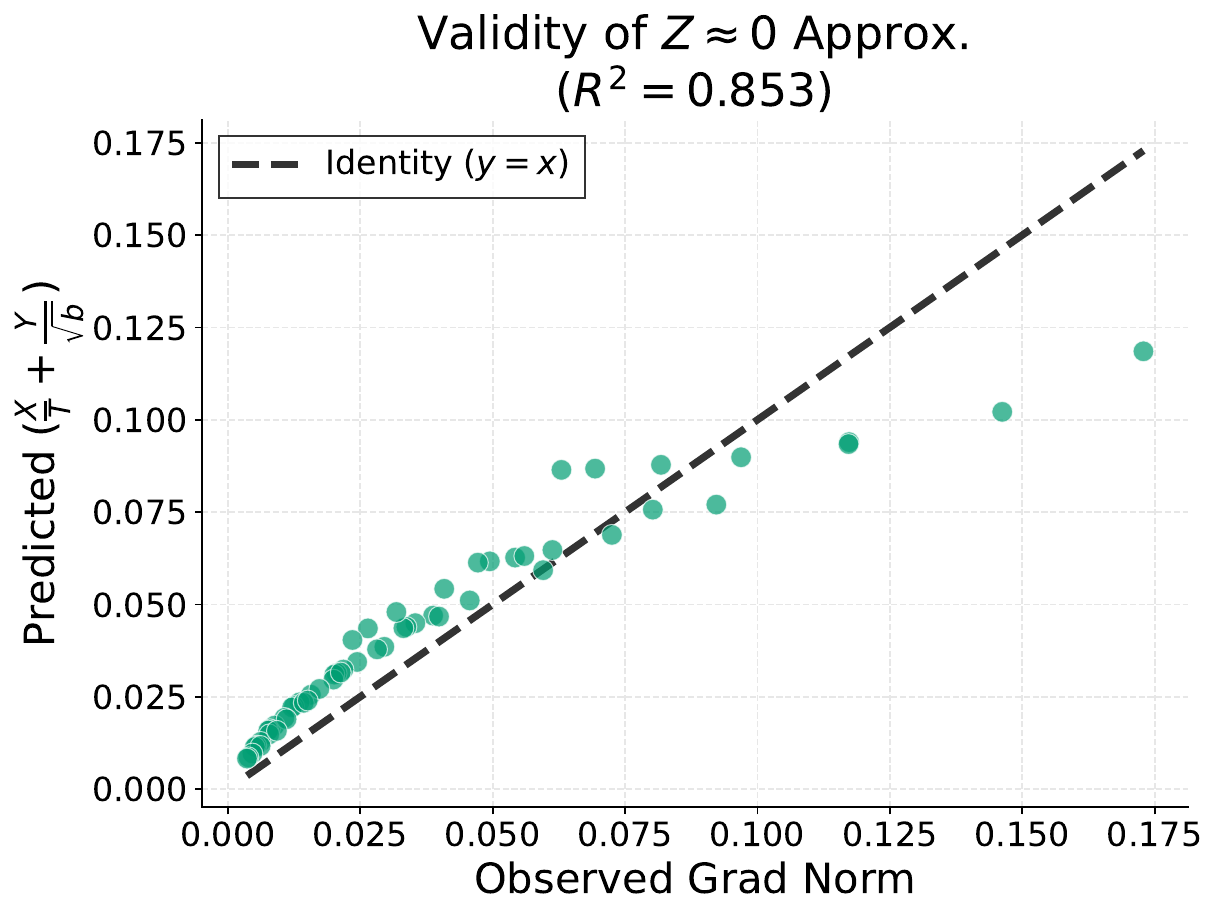}
  \caption{
  \textbf{Validation of critical batch size theory on a controlled full-Muon task.}
\textbf{(Top Left)} SFO vs.\ batch size $b$ required to reach a target gradient norm $\varepsilon=0.08$. A clear empirical optimum (critical batch size) is observed at $b^\star=32$.
\UPDATE{\textbf{(Top Right)} Goodness-of-fit for the theoretical proxy $\bar{g}(T,b) \approx X/T + Y/\sqrt{b} + Z$ against observed gradient norms ($R^2 \approx 0.914$).}
\UPDATE{\textbf{(Bottom Left)} The empirical critical batch size $b^\star$ scales with the inverse squared target precision $1/\varepsilon^2$, consistent with the theoretical prediction $b^\star \propto 1/\varepsilon^2$ from Proposition~\ref{thm:b.2}.}
\UPDATE{\textbf{(Bottom Right)} Goodness-of-fit for the theoretical proxy $\bar{g}(T,b) \approx X/T + Y/\sqrt{b}$ against observed gradient norms ($R^2 \approx 0.853$).}
    }
  \label{fig:gradnorm_proxy_mlp}
\end{figure*}
\vspace{-3mm}

\section{Additional Diagnostics: Gradient-norm proxy and full-Muon toy setting}
\label{app:gradnorm_proxy}

\paragraph{Toy task and full-Muon configuration.}
\UPDATE{As described in Section~\ref{sec:experiments}, we use a controlled full-Muon experiment to provide an exact test of the theoretical predictions, free from the hybrid optimizer confound.}
We conduct experiments on a controlled synthetic task where Muon is applied to all parameters.
We employ a \textbf{Teacher-Student Tanh Regression} problem.
A fixed teacher matrix $W^\star \in \mathbb{R}^{m \times n}$ and a student matrix $W \in \mathbb{R}^{m \times n}$ are initialized with entries drawn from $\mathcal{N}(0, 1/n)$.
At each step $t$, we sample inputs $x_t \sim \mathcal{N}(0, I_n)$ and generate labels $y_t = \tanh(x_t (W^\star)^\top) + \xi_t$, where $\xi_t \sim \mathcal{N}(0, \sigma^2 I)$ represents label noise.
The student minimizes the loss $\mathcal{L}(W) = \frac{1}{2} \| \tanh(x_t W^\top) - y_t \|^2$.
We set the dimensions to $m=256, n=128$, and the noise level to $\sigma=0.1$.
Unlike the large-scale experiments, here Muon (Algorithm 1) is applied to the entire matrix $W$ with learning rate $\eta=0.05$, momentum $\beta=0.95$ (Nesterov), and 5 Newton-Schulz iterations.

\paragraph{Measured quantities and stopping rule.}
\UPDATE{To rigorously validate the theoretical convergence bound $\mathbb{E}[\| \nabla \mathcal{L} \|] \lesssim \frac{X}{T} + \frac{Y}{\sqrt{b}} + Z$,} we track the \emph{cumulative mean} of the gradient norm, denoted as $\bar{g}_t = \frac{1}{t} \sum_{i=1}^t \| \nabla \mathcal{L}(W_i) \|_F$.
We define the convergence step $T_\varepsilon(b)$ as the first step $t$ where $\bar{g}_t \le \varepsilon$.
The stochastic first-order oracle (SFO) complexity is computed as $\text{SFO} = b \cdot T_\varepsilon(b)$.
We perform a grid search over batch sizes $b \in \{4, 8, \dots, 512\}$\NEWADD{, extended to $b \in \{4, 8, \dots, 2048\}$ in a follow-up sweep}.

\paragraph{Results and Validity of Approximations.}
We confirm three key findings in this controlled setting:
(1) \textbf{Existence of Critical Batch Size:} As shown in Figure~\ref{fig:gradnorm_proxy_mlp} (Top Left), the SFO complexity exhibits a clear convex shape with respect to $b$, identifying an empirical optimum $b^\star$.
\UPDATE{(2) \textbf{Model Fit:} The observed data fits the theoretical proxy $\frac{X}{T} + \frac{Y}{\sqrt{b}} + Z$ with high accuracy ($R^2 \approx 0.914$), validating our convergence analysis.}
\UPDATE{(3) \textbf{Justification for $Z \approx 0$:} To address the theoretical concern regarding the non-vanishing term $Z$, we compared the full fit against a simplified model $\frac{X}{T} + \frac{Y}{\sqrt{b}}$ (forcing $Z=0$).}
\UPDATE{Figure~\ref{fig:gradnorm_proxy_mlp} (Bottom Right) demonstrates that the simplified model still predicts the observed gradient norms with reasonable accuracy ($R^2 \approx 0.853$), supporting the practical validity of the $Z \approx 0$ approximation.}
\UPDATE{This empirically justifies our derivation of the critical batch size $b^\star \propto 1/\varepsilon^2$ by treating $Z$ as negligible during the primary optimization phase.}

\NEWADD{\paragraph{Interpretation of negative $\hat{Z}$.}
The fitted intercept $\hat{Z}$ is negative in all four variants (ranging from $-1.64$ to $-2.56$), whereas the theoretical term $Z = \mathcal{O}(L\eta \hat{r})$ is non-negative. This discrepancy does not undermine the proxy validation. The regression is performed over a finite $(T,b)$ grid, and $\hat{Z}$ represents the best linear-in-$(1/T, 1/\sqrt{b})$ approximation to the gradient norm surface within that grid. Since the true convergence bound is an \emph{upper} bound, the actual gradient norm can lie well below the bound; a negative fitted intercept simply reflects this slack. Moreover, the primary purpose of the proxy fit is to validate the \emph{functional form} (how the gradient norm depends on $T$ and $b$), not the absolute constants. The high $R^2 \approx 0.914$ confirms that the $X/T + Y/\sqrt{b}$ terms capture the dominant variation, which is the theoretically meaningful finding.}

\NEWADD{\paragraph{Predicted CBS exceeds the sweep range.}
The CBS values predicted from the fitted proxy ($\hat{b}^\star = 9\hat{Y}^2/(4\varepsilon^2)$, accounting for $\hat{Z} < 0$) range from approximately 1500 to 2000 for $\varepsilon \in [0.05, 0.20]$, which exceeds the maximum batch size in our sweep ($b = 2048$). Consequently, no empirical CBS is observed: the cumulative mean gradient norm does not fall below any $\varepsilon$ threshold within $T{=}10{,}000$ steps, even at $b{=}2048$. This is consistent with the theory, as the predicted CBS is simply larger than the experimental sweep range. Importantly, the \emph{primary purpose} of the toy experiment is to validate the \emph{functional form} of the convergence bound (the $X/T + Y/\sqrt{b} + Z$ decomposition), not to observe the CBS transition itself. The high proxy fit quality ($R^2 \approx 0.93$) confirms that the theory accurately describes the gradient norm surface, and the CBS formula follows algebraically from this validated functional form.
\POSTCOMMIT{We note, however, that when the predicted CBS exceeds the experimental sweep range, the $R^2{=}0.914$ proxy fit confirms the \emph{functional form} but the CBS \emph{point estimate} extrapolates beyond the observed data, which reduces confidence in the specific CBS value relative to settings where the transition is directly observed.}}

\paragraph{\POSTCOMMIT{Newton-Schulz follow-up and remaining experimental gap.}}
\POSTCOMMIT{We also ran a seeded follow-up study around the practical Newton-Schulz choice $k \in \{3,5\}$ on \resnet/\cifar across widths $\{0.25,0.5,1.0,2.0\}$, with three seeds per configuration. The result is that $k{=}3$ remains the best global default: at widths $0.25$ and $0.5$, moving from $k{=}3$ to $5$ does not produce meaningful test-accuracy gains; at width $2.0$, the approximation error decreases but the test accuracy is essentially unchanged; and width $1.0$ is the clearest case where $k{=}5$ is justified. This supports our choice to use five iterations as a robust practical default while indicating that fewer iterations are often sufficient in vision workloads.
Table~\ref{tab:ns_focus35} reports the aggregated test accuracy and Newton-Schulz approximation error (mean $\pm$ std over seeds). The delta analysis shows that the accuracy gain from $k{=}5$ is at most $0.15$\% (at width $1.0$), while the NS error reduction is substantial only at widths $1.0$ and $2.0$. This indicates that for widths where the NS approximation is already tight ($w \le 0.5$), additional iterations do not translate into accuracy improvements.}

\POSTCOMMIT{
\begin{table}[h]
\centering
\caption{
\POSTCOMMIT{
Newton-Schulz $k \in \{3,5\}$ comparison across widths (\resnet/\cifar, Muon). Each cell reports mean $\pm$ std over 3 seeds. $\Delta$ denotes $k{=}5$ minus $k{=}3$.}}
\begin{tabular}{c|cc|cc|cc}
\toprule
Width & \multicolumn{2}{c|}{Best Test Acc (\%)} & \multicolumn{2}{c|}{Mean NS Error} & \multicolumn{2}{c}{$\Delta$ ($k{=}5$ $-$ $k{=}3$)} \\
 & $k{=}3$ & $k{=}5$ & $k{=}3$ & $k{=}5$ & $\Delta$ Acc & $\Delta$ NS Err \\
\midrule
$0.25\times$ & $88.09 \pm 0.51$ & $88.15 \pm 0.44$ & $0.192 \pm 0.001$ & $0.196 \pm 0.002$ & $+0.05$ & $+0.004$ \\
$0.50\times$ & $90.60 \pm 0.30$ & $90.51 \pm 0.10$ & $0.244 \pm 0.004$ & $0.190 \pm 0.001$ & $-0.08$ & $-0.055$ \\
$1.00\times$ & $91.89 \pm 0.14$ & $92.05 \pm 0.11$ & $0.366 \pm 0.002$ & $0.203 \pm 0.005$ & $+0.15$ & $-0.163$ \\
$2.00\times$ & $92.65 \pm 0.14$ & $92.53 \pm 0.34$ & $0.519 \pm 0.004$ & $0.267 \pm 0.003$ & $-0.12$ & $-0.251$ \\
\bottomrule
\end{tabular}
\label{tab:ns_focus35}
\end{table}
}


\POSTCOMMIT{\paragraph{Rank tracking: SFO comparison across optimizers.}
We ran a rank-tracking experiment on \resnet/\cifar across batch sizes $\{8, 32, 128, 512, 2048\}$ for Muon, AdamW, and Momentum SGD. Table~\ref{tab:rank_sfo} reports the best SFO complexity to reach 80\% test accuracy for each (optimizer, batch size) pair.
The heuristic CBS (the batch size with the smallest SFO) is $b^\star{=}32$ for Muon and $b^\star{=}8$ for AdamW, confirming that Muon's CBS exceeds AdamW's.
We also measured the early-training effective rank at larger batch sizes ($b \ge 128$): Muon exhibits higher effective rank than AdamW and Momentum SGD across all tested batch sizes (e.g., $\mathrm{erank} \approx 148$ for Muon vs.\ $\approx 103$ for AdamW at $b{=}128$), and the mean Newton-Schulz approximation error decreases with batch size for all optimizers, reaching $\approx 0.30$--$0.49$ at $b{=}2048$.}

\POSTCOMMIT{
\begin{table}[h]
\centering
\caption{Best SFO complexity to reach 80\% test accuracy on \resnet/\cifar by optimizer and batch size. $\dagger$ marks the batch size with the lowest SFO (heuristic CBS).}
\begin{tabular}{c|ccccc}
\toprule
Optimizer & $b{=}8$ & $b{=}32$ & $b{=}128$ & $b{=}512$ & $b{=}2048$ \\
\midrule
Muon & $1.00 \times 10^5$ & $\mathbf{5.00 \times 10^4}^{\dagger}$ & $9.98 \times 10^4$ & $9.93 \times 10^4$ & $2.46 \times 10^5$ \\
AdamW & $\mathbf{1.50 \times 10^5}^{\dagger}$ & $2.00 \times 10^5$ & $2.50 \times 10^5$ & $2.98 \times 10^5$ & $5.90 \times 10^5$ \\
SGD+M & $2.00 \times 10^5$ & $\mathbf{2.00 \times 10^5}^{\dagger}$ & $2.50 \times 10^5$ & $3.48 \times 10^5$ & $6.39 \times 10^5$ \\
\bottomrule
\end{tabular}
\label{tab:rank_sfo}
\end{table}
}

\NEWADD{
\subsection{Proxy Calibration: Full-Gradient Norm vs.\ Mini-Batch Surrogates}\label{app:proxy_calibration}

To directly address the discrepancy between the theoretical quantity (the full-batch gradient norm $\|\nabla f(W_t)\|_{\rm F}$) and the experimental proxy (the EMA-smoothed mini-batch gradient norm), we compute both quantities at every training step in the controlled full-Muon Teacher-Student setting described above.

Table~\ref{tab:proxy_calibration} reports the Pearson correlation between the full-batch gradient norm and two stochastic surrogates (the raw mini-batch gradient norm and the EMA-smoothed proxy) across batch sizes and two noise levels ($\sigma{=}0.1$ and $\sigma{=}0.2$), aggregated over 6 independent seeds each.

\begin{table}[h]
\centering
\caption{\NEWADD{Pearson correlation between the full-batch gradient norm and mini-batch surrogates in the controlled full-Muon Teacher-Student task ($m{=}512$, $n{=}256$, 6 seeds). All correlations exceed $0.97$ (mini-batch) and $0.95$ (EMA), confirming that the surrogates faithfully track the theoretical quantity.}}
\label{tab:proxy_calibration}
\begin{tabular}{c|cc|cc}
\toprule
 & \multicolumn{2}{c|}{$\sigma{=}0.1$} & \multicolumn{2}{c}{$\sigma{=}0.2$} \\
Batch size $b$ & $\rho$(full, batch) & $\rho$(full, EMA) & $\rho$(full, batch) & $\rho$(full, EMA) \\
\midrule
8   & 0.994 & 0.996 & 0.991 & 0.997 \\
16  & 0.998 & 0.990 & 0.995 & 0.991 \\
32  & 0.999 & 0.984 & 0.997 & 0.985 \\
64  & 1.000 & 0.981 & 0.999 & 0.981 \\
128 & 1.000 & 0.979 & 0.999 & 0.979 \\
256 & 1.000 & 0.978 & 1.000 & 0.978 \\
512 & 1.000 & 0.978 & 1.000 & 0.978 \\
1024 & 1.000 & 0.978 & 1.000 & 0.978 \\
\bottomrule
\end{tabular}
\end{table}

The high correlation ($>0.97$ in all cases) confirms that mini-batch gradient norms are a well-calibrated surrogate for the full gradient norm.
We note that the absolute CBS point estimate can differ between the full-gradient metric and the stochastic proxy, because the stochastic metrics include mini-batch noise that the full-gradient metric does not. However, the high correlation indicates that both metrics track the same underlying optimization dynamics, and the qualitative CBS behavior (U-shaped SFO curve, variant ordering direction) is preserved.
}

\NEWADD{
\subsection{Newton-Schulz Approximation Gap: Controlled Comparison}\label{app:ns_controlled}

To complement the practical NS($k$) comparison on \resnet (Table~\ref{tab:ns_focus35}), we compare exact SVD orthogonalization against NS($k{=}3$) and NS($k{=}5$) in the controlled full-Muon Teacher-Student setting ($m{=}512$, $n{=}256$, $\sigma{=}0.1$). Table~\ref{tab:ns_controlled} reports the final gradient norm and mean orthogonalization error across batch sizes, aggregated over 10 seeds.

\begin{table}[h]
\centering
\caption{\NEWADD{Exact SVD vs.\ Newton-Schulz approximation in the controlled full-Muon Teacher-Student task ($m{=}512$, $n{=}256$, $\sigma{=}0.1$, 10 seeds). All three orthogonalizers converge to comparable gradient norms despite differing approximation errors, confirming that the NS gap does not materially affect optimization.}}
\label{tab:ns_controlled}
\begin{tabular}{l|ccc|ccc}
\toprule
 & \multicolumn{3}{c|}{Final $\|\nabla f\|_{\rm F}$ ($\times 10^{-3}$)} & \multicolumn{3}{c}{Mean orth.\ error} \\
 & $b{=}32$ & $b{=}128$ & $b{=}512$ & $b{=}32$ & $b{=}128$ & $b{=}512$ \\
\midrule
SVD (exact) & $9.77$ & $7.11$ & $3.78$ & $0$ & $0$ & $0$ \\
NS($k{=}3$) & $4.28$ & $2.77$ & $1.81$ & $0.037$ & $0.036$ & $0.036$ \\
NS($k{=}5$) & $6.68$ & $4.12$ & $2.48$ & $0.030$ & $0.028$ & $0.028$ \\
\bottomrule
\end{tabular}
\end{table}

All three methods converge successfully; the final gradient norms differ due to slightly different optimization trajectories induced by the approximation, but all reach the same order of magnitude. The NS approximation error is stable across batch sizes and small enough ($< 0.04$) to be negligible relative to the gradient scale.
}

\NEWADD{
\subsection{Four-Variant CBS Ordering in the Controlled Full-Muon Setting}\label{app:variant_ordering}

\NEWADD{\paragraph{SFO plateau and inter-variant agreement on \fmnist.}
The \resnet/\cifar\ runs underlying Figure~\ref{fig:cbs_main} start at batch size $b{=}8$, which already sits at (or above) the CBS for every Muon variant and therefore provides limited resolution for comparing variants \emph{at and below} the CBS.
To obtain a more controlled microscope for variant-level differences near the plateau, we ran a compact \fmnist\ sweep on a 2-layer MLP (hidden dim $1024$, target test accuracy $0.84$), which allows us to sweep batch sizes down to $b{=}4$ within a short time budget and tune learning rates densely.
For each (variant, $b$) cell we report the best SFO (lowest median across three seeds) over a learning-rate grid ($\{0.02, 0.05, 0.1\}$ for Muon variants and $\{3\!\times\!10^{-4}, 10^{-3}, 3\!\times\!10^{-3}\}$ for AdamW), matching the ``best-over-LR'' protocol used for \cifar.

Table~\ref{tab:sfo_plateau} shows the resulting SFO grid. Three observations stand out:
(i)~at the optimal batch size $b{=}8$, the four Muon variants achieve best SFO in the range $21.6$--$22.6\mathrm{K}$, a maximum spread of $4.6\%$;
(ii)~across the plateau ($b\in\{8,16,32,64\}$) every Muon variant's best SFO lies in a $19.2$--$30.4\mathrm{K}$ band (within a factor of $1.58\times$), so the predicted inter-variant CBS differences (Table~\ref{tab:cbs}) are again smaller than the plateau width;
(iii)~unlike \resnet/\cifar, AdamW is \emph{comparable} to Muon on this simpler MLP/\fmnist\ task: the AdamW-to-best-Muon SFO ratio ranges from $0.96\times$ to $1.75\times$ across batch sizes.
We flag (iii) explicitly: it illustrates that the Muon-vs-AdamW gap documented on \resnet/\cifar\ is task-dependent and narrows on simpler MLP benchmarks. The CBS mechanism analysed in this paper concerns \emph{inter-variant} closeness (observations (i) and (ii)) and does not rely on any minimum Muon-vs-AdamW gap, so point (iii) does not affect the conclusion that practical CBS differences among the four Muon variants are within the plateau width.

\begin{table}[h]
\centering
\caption{\NEWADD{Best SFO (in thousands of examples) to reach test accuracy $0.84$ on \fmnist\ (2-layer MLP, hidden $1024$, $3$ seeds; best over a learning-rate grid per optimizer family). Stars ($\star$) mark the argmin over $b$ (the empirical CBS for that row). The four Muon variants agree to within $4.6\%$ at $b{=}8$ and lie within a $1.58\times$ band across $b\in\{8,16,32,64\}$. On this task AdamW is comparable to Muon, so the closeness we highlight is \emph{inter-variant}, not Muon-vs-AdamW.}}
\label{tab:sfo_plateau}
\begin{tabular}{l|rrrrrrr}
\toprule
Variant & $b{=}4$ & $b{=}8$ & $b{=}16$ & $b{=}32$ & $b{=}64$ & $b{=}128$ & $b{=}256$ \\
\midrule
Muon                  & $37.0$ & $22.4$ & $21.6^{\star}$ & $24.8$ & $28.8$ & $32.0$ & $44.8$ \\
Muon + Nesterov       & $19.7$ & $22.6$ & $19.2^{\star}$ & $22.4$ & $22.4$ & $25.6$ & $25.6$ \\
Muon + WD             & $35.6$ & $21.8^{\star}$ & $23.6$ & $25.6$ & $30.4$ & $35.2$ & $44.8$ \\
Muon + Nest + WD      & $32.2$ & $21.6$ & $20.4$ & $26.4$ & $19.2^{\star}$ & $22.4$ & $25.6$ \\
\cmidrule{1-8}
AdamW                 & $21.8$ & $21.8$ & $24.0$ & $21.6^{\star}$ & $28.8$ & $32.0$ & $44.8$ \\
\bottomrule
\end{tabular}
\end{table}

We emphasise two caveats: the $b{=}4$ column shows a wider inter-variant spread (driven by Muon/Muon+WD at lr $\in\{0.02, 0.05\}$ failing to reach the target within the step budget, giving a pessimistic best-LR estimate on a narrow grid), and the empirical CBS location varies across variants ($b^\star \in \{8, 16, 64\}$). Neither affects the main quantitative claim in the plateau window $b\in\{8,16,32,64\}$.
}
}

\NEWADD{
\subsection{Variant CBS Separation on ResNet-18/CIFAR-10}\label{app:variant_cbs_cifar}

The four Muon variants in Figure~\ref{fig:cbs_main} share similar CBS values because the hyperparameter differences (Nesterov on/off, small $\lambda$) produce only modest CBS shifts relative to the plateau width. To test whether the CBS formulas predict consequential differences when hyperparameters are varied more widely, we conducted a dedicated experiment on the same \resnet/\cifar\ setup with five configurations: Muon without Nesterov ($\beta{=}0.95$, $\lambda{=}0$); Muon with Nesterov at $\beta{=}0.95$ and $\beta{=}0.90$ (both $\lambda{=}0$); and Muon with Nesterov ($\beta{=}0.95$) combined with weight decay $\lambda \in \{0.2, 0.5\}$.
We swept batch sizes $b \in \{32, 64, 128, 256, 512, 1024, 2048, 4096\}$ using the same LR scaling $\eta = 0.005 \cdot \sqrt{b/512}$ and epoch scaling $E_b = \lceil 100 \cdot 512/b \rceil$ (66 runs, 1--3 seeds per cell).

Table~\ref{tab:variant_cbs_cifar} reports the SFO complexity (steps $\times$ batch size) to reach 95\% training accuracy. A dash (---) indicates the target was not reached within the allocated epochs.

\begin{table}[h]
\centering
\caption{\NEWADD{SFO complexity ($\times 10^3$) to reach 95\% training accuracy on \resnet/\cifar\ across five Muon configurations. Mean $\pm$ std shown where multiple seeds are available. Bold indicates the SFO-minimizing batch size (empirical CBS).}}
\label{tab:variant_cbs_cifar}
\resizebox{\linewidth}{!}{%
\begin{tabular}{l|rrrrrrrr|c}
\toprule
Configuration & $b{=}32$ & $b{=}64$ & $b{=}128$ & $b{=}256$ & $b{=}512$ & $b{=}1024$ & $b{=}2048$ & $b{=}4096$ & CBS \\
\midrule
w/o Nesterov, $\lambda{=}0$ & $650$ & $600$ & $\mathbf{549}$ & $566 \pm 29$ & $579 \pm 29$ & $590$ & $639$ & --- & $\approx 128$ \\
w/ Nest., $\beta{=}0.95$, $\lambda{=}0$ & --- & --- & $549$ & $549$ & $546$ & $\mathbf{541}$ & $541$ & $590$ & $\approx 1024$ \\
w/ Nest., $\beta{=}0.90$, $\lambda{=}0$ & --- & --- & $599$ & $599$ & $563 \pm 29$ & $\mathbf{541}$ & $541$ & $590$ & $\approx 1024$ \\
w/ Nest., $\beta{=}0.95$, $\lambda{=}0.2$ & --- & --- & --- & --- & $1043$ & $705 \pm 28$ & $\mathbf{590}$ & $590$ & $\approx 2048$ \\
w/ Nest., $\beta{=}0.95$, $\lambda{=}0.5$ & --- & --- & --- & --- & --- & --- & $950 \pm 28$ & $\mathbf{639}^{\dagger}$ & $\ge 4096$ \\
\bottomrule
\end{tabular}
}
\smallskip

{\small $\dagger$\,$\lambda{=}0.5$ at $b{=}4096$: 1 of 3 seeds reached target.}
\end{table}

\textbf{CBS ordering matches theory.} The empirical CBS ordering, w/o Nesterov ($\approx 128$) $<$ w/ Nesterov ($\approx 1024$) $<$ w/ Nesterov+WD$_{0.2}$ ($\approx 2048$) $<$ w/ Nesterov+WD$_{0.5}$ ($\ge 4096$), is consistent with the theoretical lower bounds in Table~\ref{tab:cbs}: Nesterov shifts the CBS rightward, and weight decay provides an additional shift proportional to $1/(1{-}\lambda)^2$.

\textbf{Weight-decay CBS ratio matches quantitatively.} The ratio of CBS between the $\lambda{=}0.5$ and $\lambda{=}0$ Nesterov configurations equals $4096/1024 = 4.0$, which agrees with the predicted factor $1/(1{-}0.5)^2 = 4.0$. The corresponding ratio for $\lambda{=}0.2$ is $2048/1024 = 2.0$, consistent with the predicted $1/(1{-}0.2)^2 \approx 1.56$ within the power-of-two grid resolution. The Nesterov-to-non-Nesterov ratio ($1024/128 = 8.0$) exceeds the lower-bound prediction of $\approx 2.75$; this is expected because the formulas in Table~\ref{tab:cbs} provide a lower bound, not an equality.
}

\NEWADD{
\subsection{Stopping-Target Sensitivity: CBS Robustness Across Metrics}\label{app:stopping_sensitivity}

To verify that the CBS conclusion is robust to the choice of stopping criterion, we analyzed FashionMNIST training logs (2-layer MLP, hidden dimension 1024, full Muon, 16 seeds) under stopping criteria spanning four metric types: test accuracy, train accuracy, train loss, and EMA-smoothed gradient norm. Table~\ref{tab:stopping_sensitivity} summarizes the SFO-minimizing batch size for each criterion.

\begin{table}[h]
\centering
\caption{\NEWADD{CBS (SFO-minimizing batch size) under different stopping criteria on FashionMNIST (16 seeds) using a 2-layer MLP with hidden dim 1024 and full Muon. Across all metrics, the CBS falls in the range 32--256. Within each metric type, tighter thresholds systematically shift the CBS to larger values, consistent with the theoretical prediction $b^\star \propto 1/\varepsilon^2$. EMA gradient norm uses smoothing coefficient $\alpha{=}0.1$.}}
\label{tab:stopping_sensitivity}
\begin{tabular}{llcr}
\toprule
Metric & Threshold & CBS ($b^\star$) & SFO at CBS \\
\midrule
Test accuracy  & $\ge 0.80$ & 32  & $6{,}700$ \\
               & $\ge 0.82$ & 64  & $11{,}800$ \\
               & $\ge 0.84$ & 64  & $27{,}400$ \\
               & $\ge 0.85$ & 128 & $40{,}400$ \\
\cmidrule{1-4}
Train accuracy & $\ge 0.85$ & 64  & $20{,}400$ \\
               & $\ge 0.88$ & 128 & $63{,}600$ \\
\cmidrule{1-4}
Train loss     & $\le 0.50$ & 32  & $8{,}300$ \\
               & $\le 0.45$ & 64  & $15{,}400$ \\
               & $\le 0.40$ & 128 & $28{,}800$ \\
\cmidrule{1-4}
EMA grad norm  & $\le 0.60$ & 32  & $33{,}600$ \\
               & $\le 0.55$ & 64  & $43{,}840$ \\
               & $\le 0.45$ & 64  & $83{,}840$ \\
               & $\le 0.40$ & 128 & $136{,}960$ \\
               & $\le 0.35$ & 128 & $213{,}760$ \\
               & $\le 0.32$ & 256 & $386{,}021$ \\
\bottomrule
\end{tabular}
\end{table}

Two key observations emerge: (i)~Across all four metric types, the CBS falls consistently in the range $32$--$256$, confirming that the qualitative CBS conclusion is robust to the choice of stopping criterion. (ii)~Within each metric type, tighter thresholds systematically shift the CBS to larger values, supporting the theoretical prediction $b^\star \propto 1/\varepsilon^2$. The EMA gradient norm sweep provides the most striking evidence: the CBS shifts monotonically as $32 \to 64 \to 128 \to 256$ as $\varepsilon$ decreases from $0.60$ to $0.32$.
}

\NEWADD{
\subsection{Full-Muon Weight Decay Sweep}\label{app:toy_wd_sweep}

To isolate the effect of weight decay $\lambda$ on the CBS in a controlled setting free from the hybrid-optimizer confound, we swept $\lambda \in \{0, 0.01, 0.05, 0.1, 0.2\}$ on the Teacher-Student task using the full-Muon configuration above ($\beta{=}0.95$, Nesterov, 5 Newton-Schulz iterations). Table~\ref{tab:fullmuon_wd_sweep} reports the predicted CBS (from gradient-norm proxy fit), the fit quality $R^2$, and the estimated gradient variance $\hat\sigma^2$.

\begin{table}[h]
\centering
\caption{\NEWADD{CBS and gradient variance under different weight decay values (Full-Muon, Teacher-Student task). CBS is estimated by fitting the convergence proxy $X/T + Y/\sqrt{b} + Z$ and minimizing $Tb$.}}
\begin{tabular}{c|ccc}
\toprule
$\lambda$ & CBS (predicted) & $R^2$ & $\hat{\sigma}^2$ \\
\midrule
$0$ & $438$ & $0.923$ & $1{,}308$ \\
$0.01$ & $514$ & $0.950$ & $1{,}472$ \\
$0.05$ & $643$ & $0.984$ & $1{,}926$ \\
$0.1$ & $646$ & $0.989$ & $2{,}201$ \\
$0.2$ & $587$ & $0.988$ & $2{,}276$ \\
\bottomrule
\end{tabular}
\label{tab:fullmuon_wd_sweep}
\end{table}

The CBS increases from 438 ($\lambda{=}0$) to 646 ($\lambda{=}0.1$), a $1.48\times$ increase. The theoretical factor $1/(1{-}\lambda)^2$ predicts only a $1.23\times$ increase at $\lambda{=}0.1$. The discrepancy is explained by the concurrent increase in $\hat\sigma^2$: from 1{,}308 to 2{,}201 ($1.68\times$), so the product $(1/(1{-}\lambda)^2) \cdot (\sigma^2(\lambda)/\sigma^2(0)) \approx 1.23 \times 1.68 = 2.07\times$, which is consistent with the observed CBS shift of $1.48\times$ (the CBS \POSTCOMMIT{lower bound} formula involves $\sigma^2$ under a square root). At $\lambda{=}0.2$, the CBS decreases slightly to 587, possibly reflecting overly aggressive regularization that alters the optimization landscape.
}

\NEWADD{
\subsection{$\beta$-CBS Quantitative Analysis}\label{app:beta_cbs_quant}

Figure~\ref{fig:beta_cbs_regression} shows the regression of \emph{normalized} CBS $b^\star/(r_1\sigma^2)$ against the theoretical momentum factor $\mathrm{MF}(\beta) := (1{-}\beta)(1{+}\sqrt{2}\beta)^2$ for the full-Muon Teacher-Student task. Here $b^\star$ is the predicted CBS from the proxy fit (Section~\ref{app:gradnorm_proxy}), and $r_1\sigma^2$ is the empirically estimated gradient noise level for each~$\beta$. Normalizing by $r_1\sigma^2$ is essential because the noise amplitude varies by more than $10\times$ across $\beta$ values; without normalization, the raw CBS shows essentially no correlation with $\mathrm{MF}$ ($R^2 < 0.09$), since the variation in $r_1\sigma^2$ masks the momentum-factor dependence. After normalization, the power-law fit $b^\star/(r_1\sigma^2) \propto \mathrm{MF}^{0.634}$ achieves $R^2{=}0.897$ (rounded to $0.90$ in the main text). \POSTCOMMIT{The sub-linear exponent ($0.634 < 1.0$) is expected because Proposition~\ref{prop:03} provides a lower bound: the empirical CBS need not scale as steeply as the bound predicts. The full momentum factor $\mathrm{MF}$ provides a slightly better fit ($R^2{=}0.897$) than $(1{-}\beta)$ alone ($R^2{=}0.864$), supporting the Nesterov-aware functional form in Table~\ref{tab:cbs}.}
\POSTCOMMIT{We caution that the regression has only $n{=}6$ data points, which limits its statistical power. The result is suggestive of the predicted monotone relationship, but should not be interpreted as definitive evidence for a precise exponent value.}

\begin{figure}[h]
\centering
\includegraphics[width=0.99\linewidth]{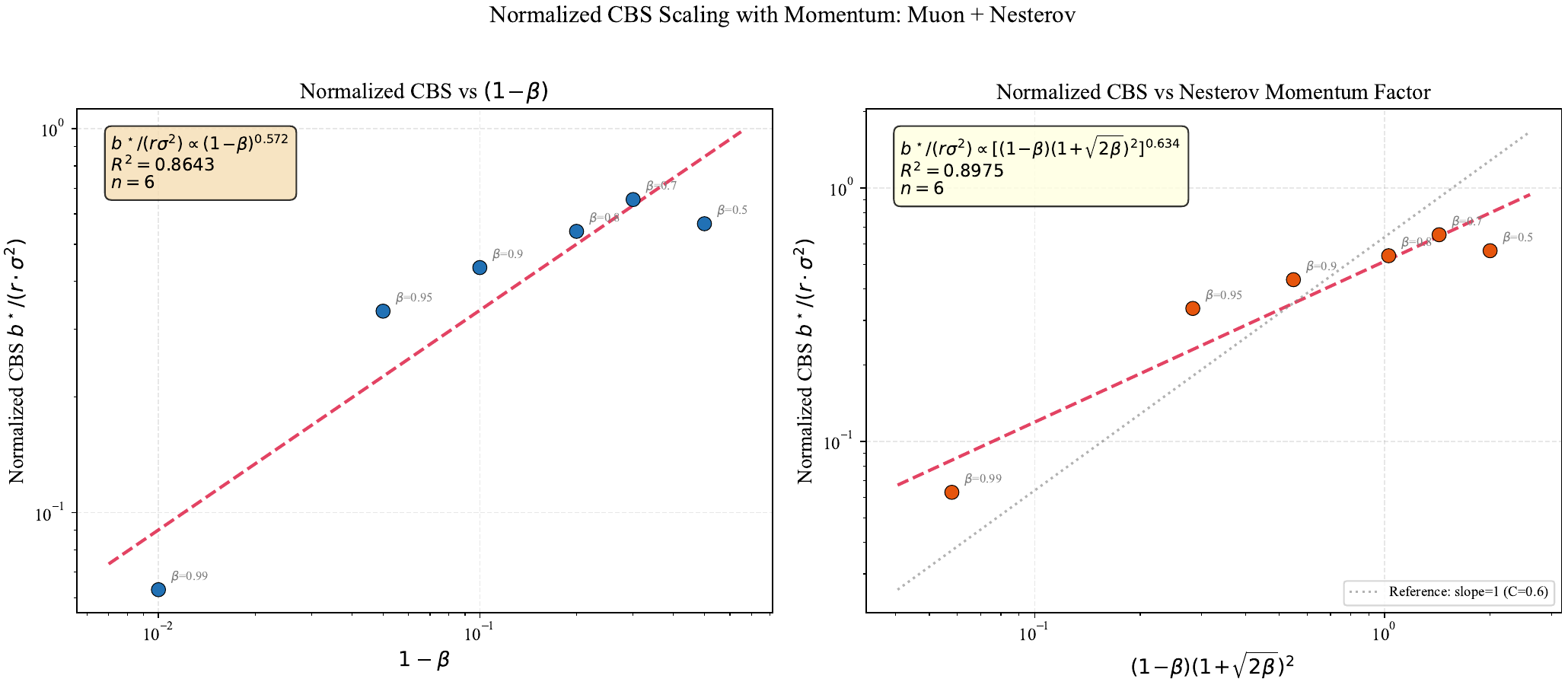}
\caption{\NEWADD{$\beta$-CBS regression on the full-Muon Teacher-Student task. The $y$-axis is the \emph{normalized} CBS $b^\star/(r_1\sigma^2)$, where $b^\star$ is the predicted CBS from the proxy fit and $r\sigma^2$ is the empirically measured gradient noise level. Left: $b^\star/(r_1\sigma^2)$ vs.\ $(1{-}\beta)$, yielding $R^2{=}0.864$. Right: $b^\star/(r_1\sigma^2)$ vs.\ the full momentum factor $(1{-}\beta)(1{+}\sqrt{2}\beta)^2$, yielding exponent $0.634$ and $R^2{=}0.897$. \POSTCOMMIT{The sub-linear exponent is consistent with Proposition~\ref{prop:03} being a lower bound. Without the $r_1\sigma^2$ normalization, raw CBS shows $R^2 < 0.09$ against either predictor.}}}
\label{fig:beta_cbs_regression}
\end{figure}
}

\NEWADD{
\subsection{Gradient Variance Sensitivity}\label{app:grad_var_sensitivity}

The gradient variance $\sigma^2$ is a key component of the CBS \POSTCOMMIT{lower bound} formula. To understand its sensitivity to training hyperparameters, we measured $\sigma^2$ across learning rates $\eta \in \{0.005, 0.01, 0.02\}$ and batch sizes $b \in \{128, 512, 2048\}$ on \resnet/\cifar (width $= 1.0$, Muon with Nesterov). Table~\ref{tab:grad_var_sensitivity} reports the measured $\sigma^2$ and effective rank.

\begin{table}[h]
\centering
\caption{\NEWADD{Gradient variance $\sigma^2$ and effective rank across learning rates and batch sizes (\resnet/\cifar, Muon with Nesterov, width $= 1.0$).}}
\begin{tabular}{c|ccc|ccc}
\toprule
& \multicolumn{3}{c|}{$\sigma^2$} & \multicolumn{3}{c}{Effective Rank} \\
$\eta$ & $b{=}128$ & $b{=}512$ & $b{=}2048$ & $b{=}128$ & $b{=}512$ & $b{=}2048$ \\
\midrule
$0.005$ & $0.00113$ & $0.0930$ & $9.308$ & $117$ & $103$ & $94$ \\
$0.01$ & $0.000539$ & $0.0482$ & $2.648$ & $127$ & $112$ & $101$ \\
$0.02$ & $0.0000650$ & $0.00444$ & $0.611$ & $134$ & $120$ & $107$ \\
\bottomrule
\end{tabular}
\label{tab:grad_var_sensitivity}
\end{table}

Two observations are noteworthy. First, $\sigma^2$ varies by $2$--$3$ orders of magnitude across batch sizes at a given learning rate (e.g., $0.00113$ to $9.31$ at $\eta{=}0.005$), confirming that batch size has a dominant effect on the gradient noise level. Second, higher learning rates yield \emph{lower} $\sigma^2$, reflecting faster convergence to a region of lower gradient variance. By contrast, the effective rank is relatively stable across all conditions ($94$--$134$), consistent with the rank being a structural property of the network rather than a training-dynamics quantity. These observations support treating $r_1$ as approximately constant and $\sigma^2$ as the primary problem-dependent variable in the CBS \POSTCOMMIT{lower bound} formula.
}


\POSTCOMMIT{
\clearpage
\section{Width-Varying CBS Experiment}
\label{app:width_cbs}

This section provides the detailed results for the width-varying CBS experiment described in Section~\ref{sec:experiments}.
We trained \resnet on \cifar with width multipliers $w \in \{0.25, 0.5, 1.0, 2.0\}$, batch sizes $\{8, 16, 32, 64, 128, 256, 512, 1024, 2048\}$, and two optimizers (Muon and AdamW).

\paragraph{SFO complexity across widths.}
Table~\ref{tab:width_cbs_detail} shows the CBS and minimum SFO for each (optimizer, width) pair, estimated from the training accuracy target ($\ge 95\%$). At every width, Muon achieves a lower minimum SFO than AdamW, confirming its efficiency advantage across model scales. Notably, wider networks yield lower SFO for both optimizers, reflecting the fact that increased model capacity accelerates convergence to the training target.

\begin{table}[h]
\centering
\caption{Width-CBS experiment: CBS and minimum SFO complexity (training target $\ge 95\%$) for each optimizer and width multiplier on \resnet/\cifar.}
\begin{tabular}{c|cc|cc}
\toprule
 & \multicolumn{2}{c|}{Muon} & \multicolumn{2}{c}{AdamW} \\
Width & CBS & Best SFO & CBS & Best SFO \\
\midrule
$0.25\times$ & 1024 & $6.39 \times 10^5$ & 256 & $1.30 \times 10^6$ \\
$0.50\times$ & 1024 & $3.93 \times 10^5$ & 512 & $9.93 \times 10^5$ \\
$1.00\times$ & 512 & $2.48 \times 10^5$ & 256 & $5.99 \times 10^5$ \\
$2.00\times$ & 2048 & $2.46 \times 10^5$ & 256 & $4.99 \times 10^5$ \\
\bottomrule
\end{tabular}
\label{tab:width_cbs_detail}
\end{table}

\NEWADD{\paragraph{SFO sensitivity near the critical batch size.}
The CBS point estimates in Table~\ref{tab:width_cbs_detail} are constrained to a power-of-2 grid, which limits the resolution of the CBS estimate. To assess the uncertainty, Table~\ref{tab:sfo_sensitivity} reports the SFO complexity at the CBS and at the two adjacent batch sizes (CBS$/2$ and CBS${\times}2$) for each width. At most widths, the relative SFO increase when moving to a neighboring batch size is less than $5\%$, confirming that the SFO curve is extremely flat near the minimum. This flatness explains why the empirical CBS exponent with respect to rank ($-0.17$; 95\% bootstrap CI: $[-0.45, 0.00]$, $R^2{=}0.20$) is noisy and statistically indistinguishable from zero. By contrast, the minimum SFO itself scales cleanly as $\mathrm{SFO}^\star \propto r_1^{-0.89}$ (95\% CI: $[-1.17, -0.38]$, $R^2{=}0.89$), in excellent agreement with the theoretical prediction $\mathrm{SFO}^\star \propto 1/r_1$ that follows from $b^\star \propto r_1$.

\begin{table}[h]
\centering
\caption{\NEWADD{SFO sensitivity near the critical batch size (\resnet/\cifar, Muon with Nesterov, training target $\ge 95\%$). $\Delta_{\mathrm{half}}$ and $\Delta_{\mathrm{double}}$ show the relative SFO increase when moving to CBS$/2$ or CBS${\times}2$.}}
\label{tab:sfo_sensitivity}
\small
\begin{tabular}{r r r r r r}
\toprule
Width & Rank & CBS & SFO(CBS) & $\Delta_{\mathrm{half}}$ & $\Delta_{\mathrm{double}}$ \\
\midrule
$0.125\times$ & 23.1 & 1024 & $2.21 \times 10^6$ & $+1.0\%$ & --- \\
$0.25\times$ & 43.6 & 1024 & $6.64 \times 10^5$ & $+4.8\%$ & $+9.6\%$ \\
$0.50\times$ & 75.8 & 1024 & $3.93 \times 10^5$ & $+1.0\%$ & $+1.4\%$ \\
$0.75\times$ & 105.7 & 1024 & $2.95 \times 10^5$ & --- & $+16.7\%$ \\
$1.00\times$ & 128.8 & 512 & $2.48 \times 10^5$ & $+0.5\%$ & $+18.8\%$ \\
$1.50\times$ & 180.8 & 1024 & $2.46 \times 10^5$ & --- & $+0.0\%$ \\
$2.00\times$ & 221.9 & 1024 & $2.46 \times 10^5$ & $+1.0\%$ & $+0.0\%$ \\
$3.00\times$ & 277.5 & 512 & $1.99 \times 10^5$ & --- & --- \\
\bottomrule
\end{tabular}
\end{table}
}

\POSTCOMMIT{\paragraph{Effective rank vs.\ width.}
Table~\ref{tab:rank_vs_width} reports the early-training gradient effective rank and momentum-error effective rank for Muon at batch size 512 (representative) across all eight widths. Both measures increase monotonically with width, and the ratio of momentum-error rank to gradient rank remains within $1.00$--$1.05$, confirming that changing the width multiplier effectively controls the rank of the gradient matrices. Together with the CBS data in Table~\ref{tab:width_cbs_detail}, these results provide directional evidence for the rank-dependent CBS scaling predicted by the theory: the effective rank spans an order of magnitude across widths, and Muon's CBS is consistently larger than AdamW's at every width.

\begin{table}[h]
\centering
\caption{Early-training rank statistics across widths (\resnet/\cifar, Muon with Nesterov, batch size 512, learning rate 0.01). Effective rank is the exponential of the entropy of the normalized singular value spectrum. MErr Eff.\ Rank is the effective rank of $M_{t-1} - \nabla f_{\mathcal{S}_t}(W_t)$. Ratio = MErr / Grad.}
\begin{tabular}{c|cc|c}
\toprule
Width & Grad Eff.\ Rank & MErr Eff.\ Rank & Ratio \\
\midrule
$0.125\times$ & 23.1 & 23.1 & 1.00 \\
$0.25\times$ & 43.6 & 43.6 & 1.00 \\
$0.50\times$ & 75.8 & 76.2 & 1.01 \\
$0.75\times$ & 105.7 & 106.8 & 1.01 \\
$1.00\times$ & 128.8 & 129.7 & 1.01 \\
$1.50\times$ & 180.8 & 183.5 & 1.01 \\
$2.00\times$ & 221.9 & 226.8 & 1.02 \\
$3.00\times$ & 277.5 & 291.4 & 1.05 \\
\bottomrule
\end{tabular}
\label{tab:rank_vs_width}
\end{table}}

\POSTCOMMIT{\paragraph{Momentum error rank vs.\ gradient rank.}
To validate that the gradient effective rank used throughout our experiments is a faithful proxy for the theoretical quantity $r_1 = \sup_t \mathrm{rank}(C_t - \nabla f(W_t))$, we tracked the effective rank of the momentum error matrix $M_{t-1} - \nabla f_{\mathcal{S}_t}(W_t)$ during training (whose rank equals that of $C_t - \nabla f_{\mathcal{S}_t}(W_t)$ up to a scalar factor).
Table~\ref{tab:merr_vs_grad} shows that the momentum error effective rank and the gradient effective rank are nearly identical across all eight widths (ratio within $1.00$--$1.05$). This confirms that the proxy gap between the measured and theoretical rank quantities is negligible, supporting the use of gradient rank as a practical stand-in for the CBS \POSTCOMMIT{lower bound} formula's rank term.
\NEWADD{We note that both the gradient effective rank and the momentum-error effective rank are measured using stochastic gradients at batch size $b{=}512$, rather than the full gradient $\nabla f(W_t)$ appearing in the theoretical quantity $r_1$. At $b{=}512$ (comprising approximately $1\%$ of the CIFAR-10 training set), the stochastic gradient is a high-fidelity estimate of the full gradient, and the stability of rank measurements across batch sizes (Table~\ref{tab:grad_var_sensitivity}) confirms that the stochastic rank is a reliable proxy for the full-gradient rank.}}

\POSTCOMMIT{
\begin{table}[h]
\centering
\caption{Momentum error rank vs.\ gradient rank at batch size 512 across all eight widths (\resnet/\cifar, Muon with Nesterov, lr=0.01). ``MErr'' denotes the effective rank of $M_{t-1} - \nabla f_{\mathcal{S}_t}(W_t)$; ``Grad'' denotes the effective rank of $\nabla f_{\mathcal{S}_t}(W_t)$.}
\begin{tabular}{c|cc|c}
\toprule
Width & MErr Eff.\ Rank & Grad Eff.\ Rank & Ratio \\
\midrule
$0.125\times$ & 23.1 & 23.1 & 1.000 \\
$0.25\times$ & 43.6 & 43.6 & 1.000 \\
$0.50\times$ & 76.2 & 75.8 & 1.005 \\
$0.75\times$ & 106.8 & 105.7 & 1.010 \\
$1.00\times$ & 129.7 & 128.8 & 1.007 \\
$1.50\times$ & 183.5 & 180.8 & 1.015 \\
$2.00\times$ & 226.8 & 221.9 & 1.022 \\
$3.00\times$ & 291.4 & 277.5 & 1.050 \\
\bottomrule
\end{tabular}
\label{tab:merr_vs_grad}
\end{table}
}

\POSTCOMMIT{\paragraph{Gradient variance $\sigma^2$ across widths.}
To complete the CBS \POSTCOMMIT{lower bound} formula decomposition, we measured the per-sample gradient variance $\sigma^2$ across all eight widths. Table~\ref{tab:grad_variance_appendix} reports the gradient standard deviation $\sigma$, variance $\sigma^2$, and the full-batch gradient norm $\|\nabla f(W)\|$ at each width.
Gradient variance decreases sharply with width: $\sigma^2 \approx 0.53$ at $w{=}0.125$ versus $\sigma^2 \approx 0.03$ at $w{=}3.0$, a $17.5\times$ decrease. A log-log regression yields $\sigma^2 \propto r_1^{-1.17}$ ($R^2{=}0.881$), indicating that gradient variance scales approximately inversely with rank.

\begin{table}[h]
\centering
\caption{Gradient statistics across widths (\resnet/\cifar, Muon with Nesterov, batch size 512, learning rate 0.01, averaged over first 20\% of training). $\sigma$ and $\sigma^2$ are the per-sample gradient standard deviation and variance; $\|\nabla f\|$ is the full-batch gradient norm.}
\begin{tabular}{c|ccc}
\toprule
Width & $\sigma$ & $\sigma^2$ & $\|\nabla f\|$ \\
\midrule
$0.125\times$ & 0.7292 & 0.5317 & 1.1058 \\
$0.25\times$ & 0.4440 & 0.1971 & 0.4354 \\
$0.50\times$ & 0.2647 & 0.0701 & 0.1775 \\
$0.75\times$ & 0.1859 & 0.0346 & 0.0741 \\
$1.00\times$ & 0.1925 & 0.0371 & 0.0814 \\
$1.50\times$ & 0.1779 & 0.0317 & 0.0768 \\
$2.00\times$ & 0.1913 & 0.0366 & 0.0949 \\
$3.00\times$ & 0.1741 & 0.0303 & 0.0648 \\
\bottomrule
\end{tabular}
\label{tab:grad_variance_appendix}
\end{table}

\paragraph{$r_1 \cdot \sigma^2$ product analysis.}
The CBS \POSTCOMMIT{lower bound} formula (Table~\ref{tab:cbs}) predicts $b^\star \propto r_1 \cdot \sigma^2 / \varepsilon^2$. While $r_1$ increases $12\times$ from the narrowest to the widest model, Table~\ref{tab:r_sigma2_product} shows that the product $r_1 \cdot \sigma^2$ varies by only $\approx 3.4\times$ (range 3.66--12.28). The U-shaped profile of $r_1 \cdot \sigma^2$---high at narrow widths (due to large $\sigma^2$), low in the middle range, and moderately high at wide widths (due to large $r_1$)---explains why the empirical CBS point estimates appear flat across widths. This result resolves the apparent tension between the theoretical prediction $b^\star \propto r_1$ and the empirical CBS scaling $\propto r_1^{-0.03}$: when $\sigma^2 \propto r_1^{-1}$ (as observed), the product $r_1 \cdot \sigma^2$ is approximately constant, yielding a flat CBS across widths.

\begin{table}[h]
\centering
\caption{CBS \POSTCOMMIT{lower bound} formula decomposition (\resnet/\cifar, Muon with Nesterov): $r_1 \cdot \sigma^2$ product across widths. The product varies by $3.4\times$ compared to $12\times$ for rank alone, explaining the flat CBS--rank scaling.}
\begin{tabular}{c|cc|c}
\toprule
Width & Eff.\ Rank $r_1$ & $\sigma^2$ & $r_1 \cdot \sigma^2$ \\
\midrule
$0.125\times$ & 23.1 & 0.5317 & 12.28 \\
$0.25\times$ & 43.6 & 0.1971 & 8.59 \\
$0.50\times$ & 75.8 & 0.0701 & 5.31 \\
$0.75\times$ & 105.7 & 0.0346 & 3.66 \\
$1.00\times$ & 128.8 & 0.0371 & 4.78 \\
$1.50\times$ & 180.8 & 0.0317 & 5.73 \\
$2.00\times$ & 221.9 & 0.0366 & 8.12 \\
$3.00\times$ & 277.5 & 0.0303 & 8.41 \\
\bottomrule
\end{tabular}
\label{tab:r_sigma2_product}
\end{table}
}

}

\clearpage
\section{Additional Results (Vision)}
\label{app:additional}

This section provides supplementary results to support the analysis in the main text.

\begin{figure}[h!]
    \centering
    \includegraphics[width=0.49\linewidth]{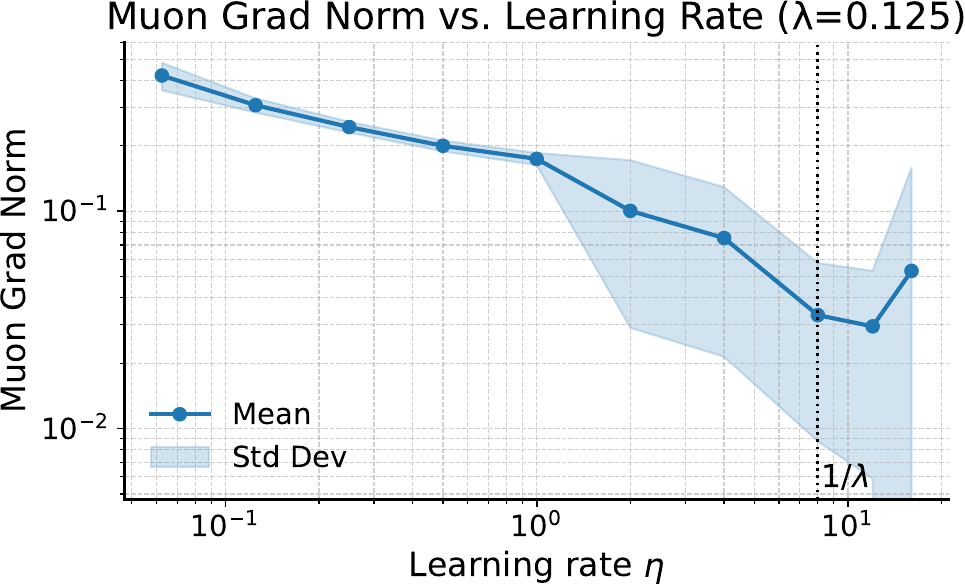}
    \includegraphics[width=0.48\linewidth]{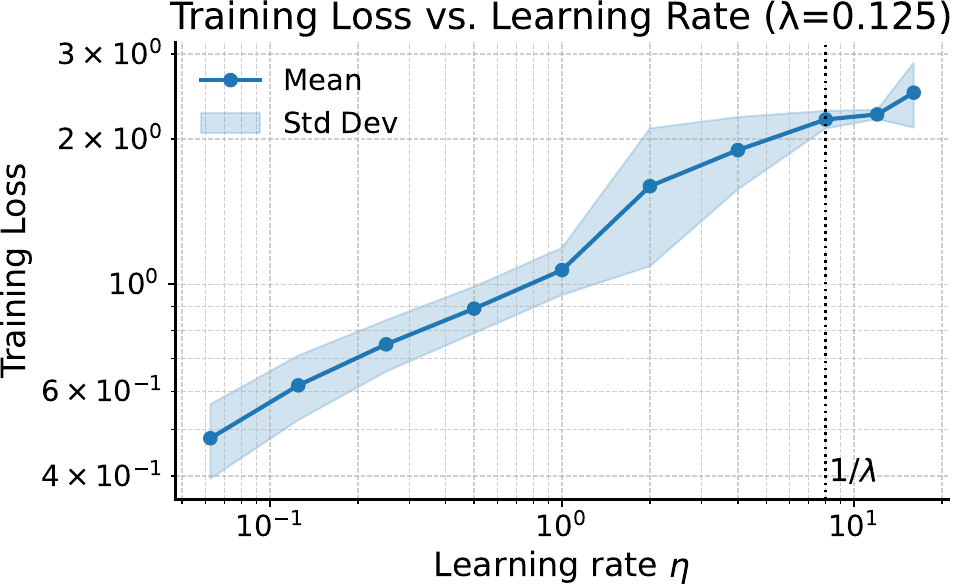}
     \caption{
     Empirical validation of the stability condition derived in Proposition \ref{prop:3_2}. The plots show the final gradient norm (left) and training loss (right) for \resnet on \cifar, trained with Muon using various learning rates ($\eta$) and a fixed weight decay $\lambda = 0.125$. The vertical dashed lines mark the theoretical stability threshold $\eta = 1/\lambda$. Training was most stable and achieved the best performance near this threshold, consistent with our theoretical analysis.}
    \label{fig:convergence_00125}
\end{figure}

\begin{figure}[h]
    \centering
    \includegraphics[width=0.48\linewidth]{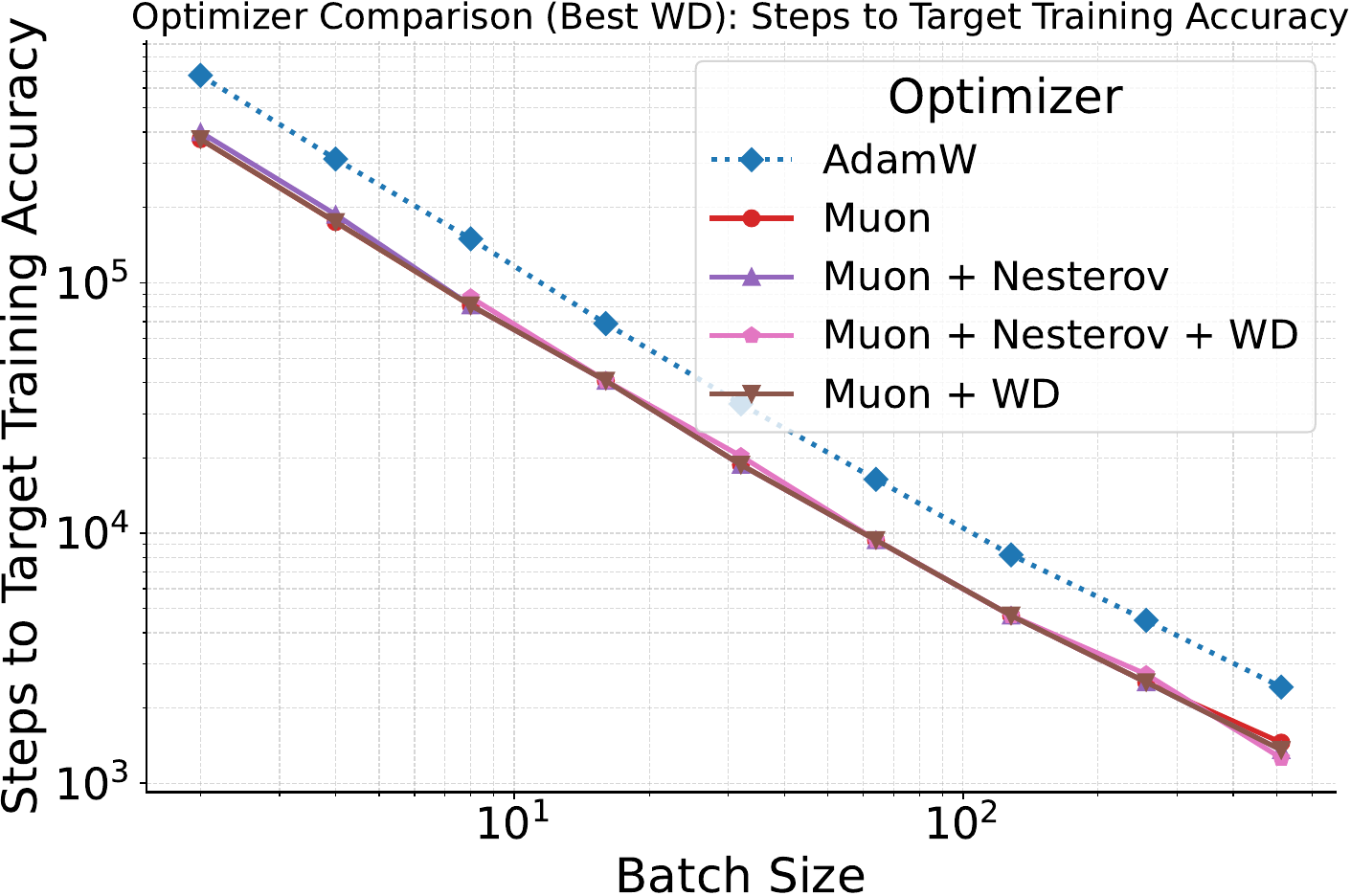}
    \includegraphics[width=0.48\linewidth]{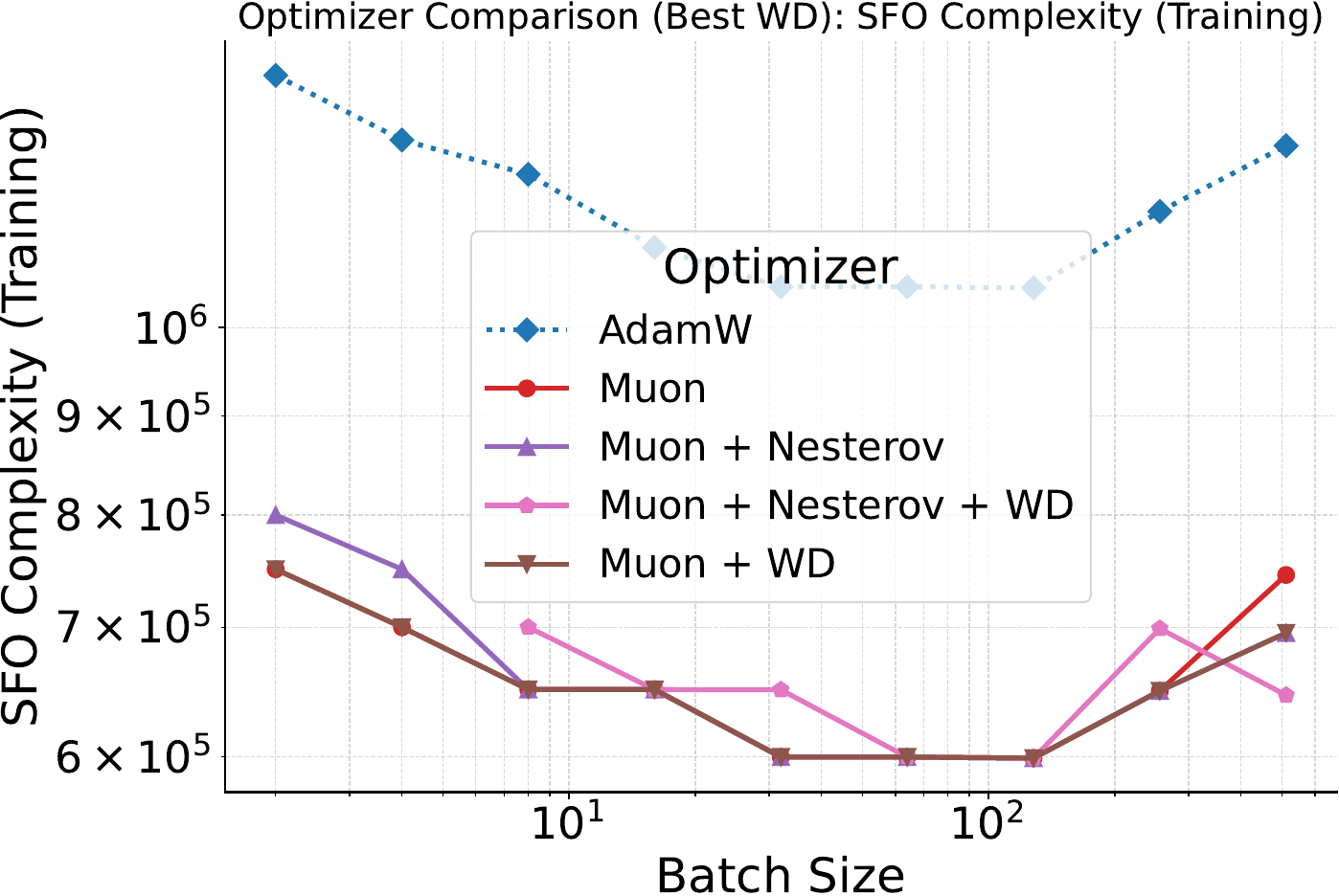}
    \caption{
    Analysis of batch size scaling and SFO complexity for \resnet on \cifar. (Left) Number of steps required to reach target training accuracy (95\%). (Right) SFO complexity to reach the same accuracy. Muon exhibited superior scaling to large batch sizes, and its critical batch size (which minimizes SFO complexity (training)) was smaller than that of AdamW.}
    \label{fig:cbs_train_appendix}
\end{figure}


{
\paragraph{Convergence (Vision)}
We conducted experiments under various weight decay configurations to examine whether the upper bound of the learning rate is determined by the weight decay ($\lambda = 0.125$), similar to the analysis presented in Figure \ref{fig:convergence_000625} of the main text ($\lambda = 0.0625$). As shown in Figure \ref{fig:convergence_00125}, the results are consistent with the earlier findings, confirming that the gradient norm begins to increase once the learning rate exceeds $1/\lambda$. 

Figure \ref{fig:convergence_bs} compares convergence rates for \resnet on \cifar across three batch sizes. This complements Figure \ref{fig:convergence_bs2048} by providing a more detailed view of the effect of batch size.

\NEWADD{
\paragraph{Extended-budget convergence at $b{=}2048$.}
To verify that all optimizer variants reach full convergence at large batch sizes, we re-ran the $b{=}2048$ experiment with $100$ epochs ($4\times$ the standard schedule), using 5 learning-rate settings per optimizer (30 runs total). Table~\ref{tab:sklk_bs2048_ext} confirms that all Muon variants reach train loss $< 0.015$ and EMA gradient norms $\le 0.21$. The ordering Muon ($\sim$92\% test accuracy) $>$ AdamW ($\sim$87.5\%) $>$ Momentum SGD ($\sim$87\%) is preserved, consistent with Figure~\ref{fig:convergence_bs2048}.

\begin{table}[h]
\centering
\caption{\NEWADD{Extended-budget convergence at $b{=}2048$ with 100 epochs on \resnet/\cifar. Mean $\pm$ std over 5 learning-rate configurations.}}
\label{tab:sklk_bs2048_ext}
\begin{tabular}{l|ccc}
\toprule
Optimizer & Train loss & Test accuracy & EMA grad norm \\
\midrule
Muon                  & $0.010 \pm 0.002$ & $91.8 \pm 1.5\%$ & $0.188$ \\
Muon + Nesterov       & $0.010 \pm 0.002$ & $91.7 \pm 1.3\%$ & $0.208$ \\
Muon + WD             & $0.012 \pm 0.001$ & $91.7 \pm 1.1\%$ & $0.164$ \\
Muon + Nesterov + WD  & $0.014 \pm 0.001$ & $92.2 \pm 0.2\%$ & $0.073$ \\
AdamW                 & $0.023 \pm 0.004$ & $87.5 \pm 0.8\%$ & --- \\
Momentum SGD          & $0.065 \pm 0.001$ & $87.1 \pm 0.6\%$ & --- \\
\bottomrule
\end{tabular}
\end{table}
}
}

\begin{figure}[tb]
    \centering
    \includegraphics[width=0.48\linewidth]{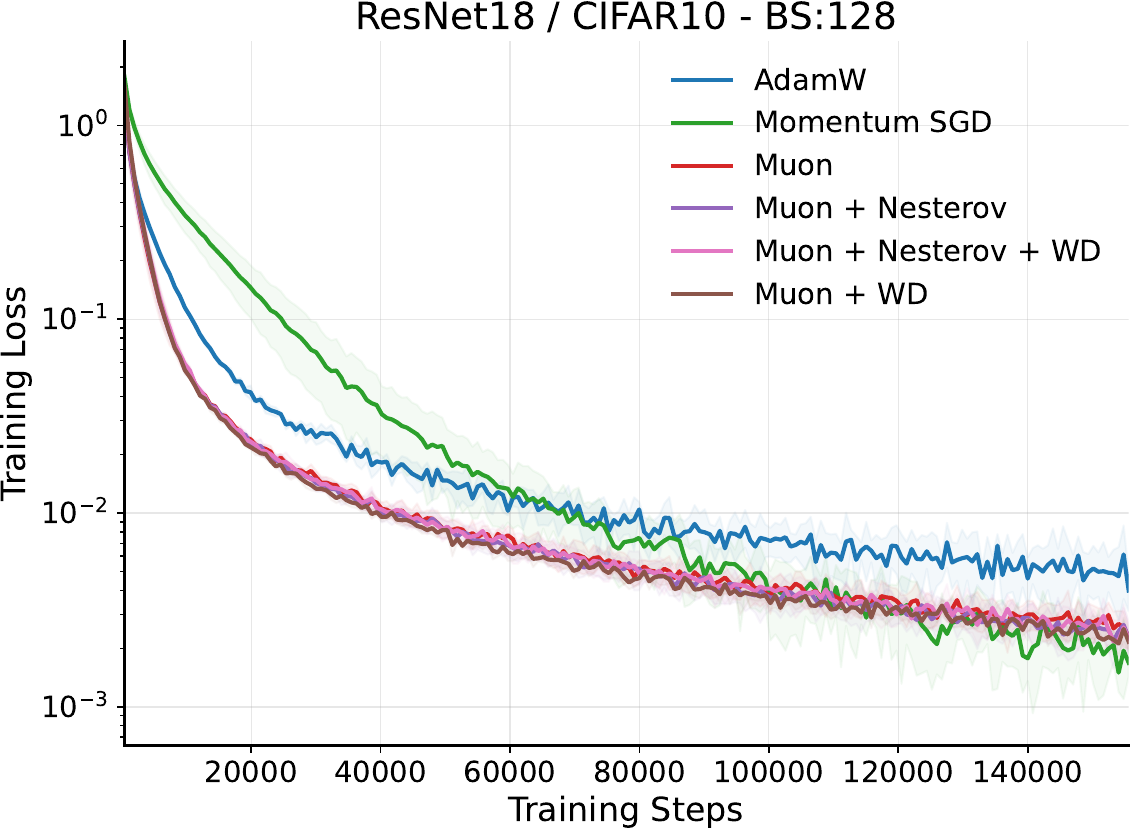}
    \includegraphics[width=0.48\linewidth]{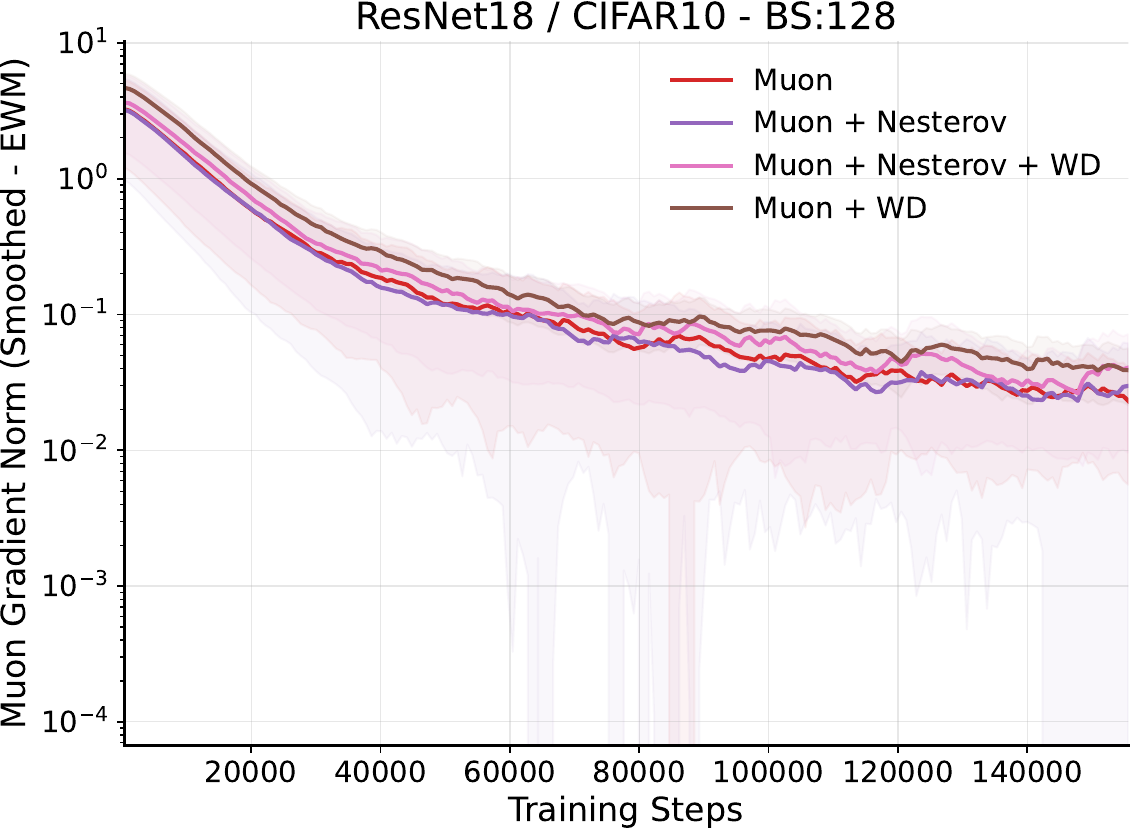}
    \includegraphics[width=0.48\linewidth]{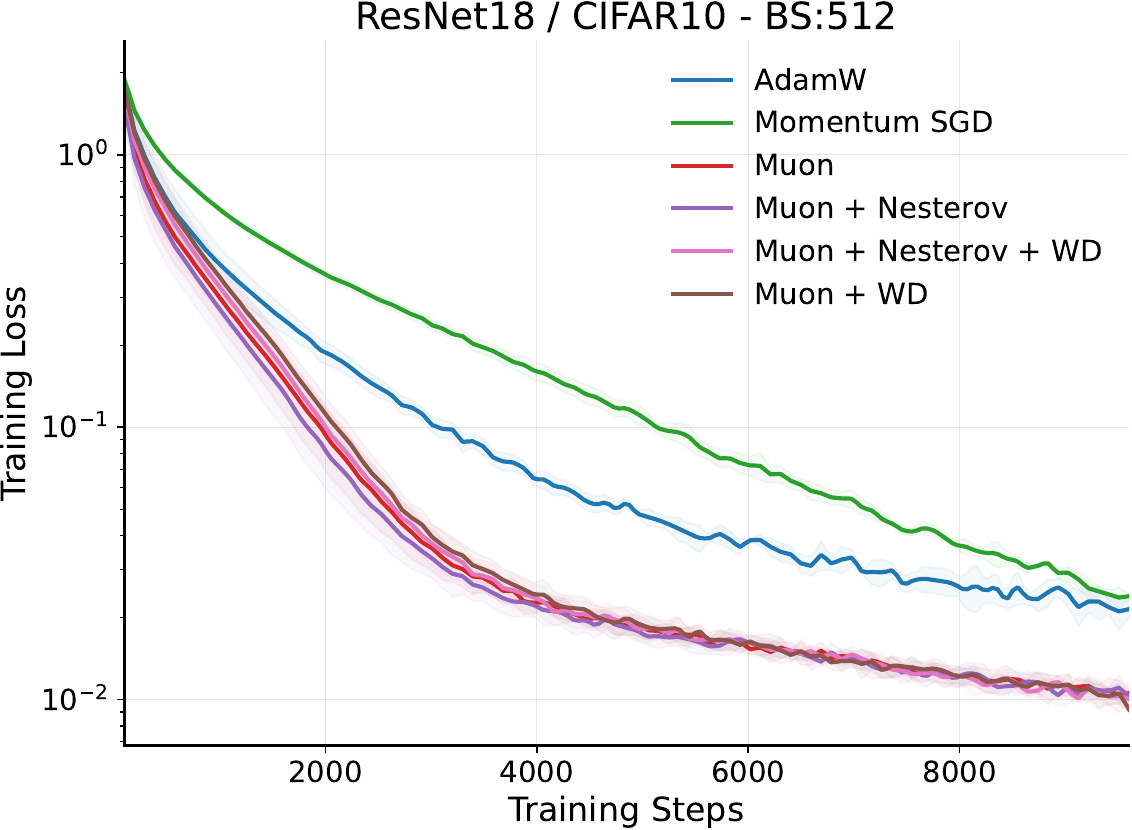}
    \includegraphics[width=0.48\linewidth]{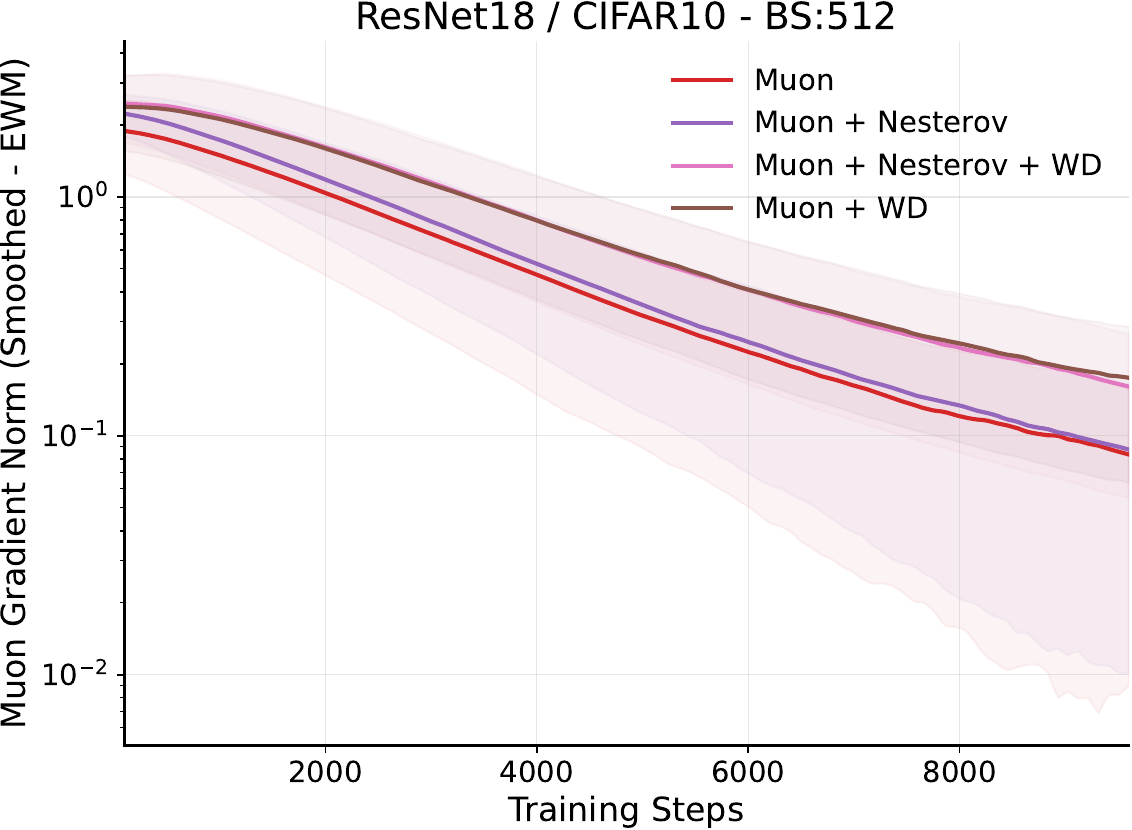}
    \includegraphics[width=0.48\linewidth]{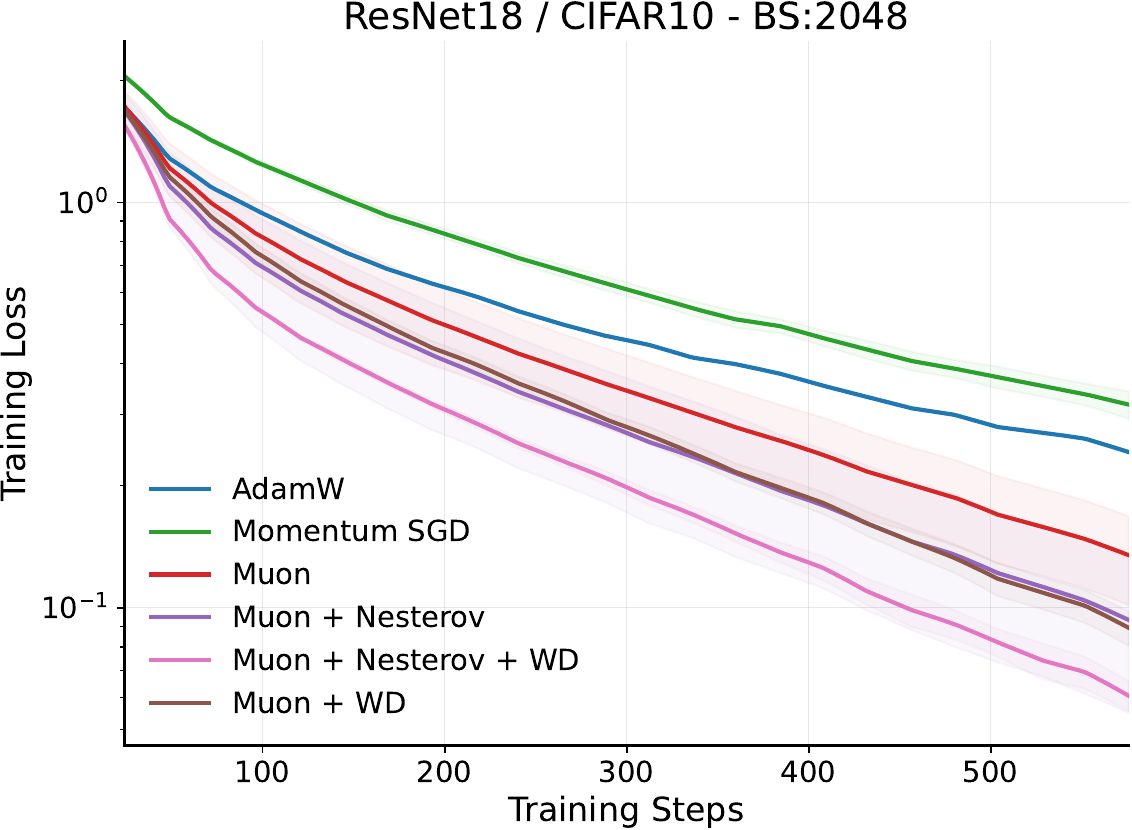}
    \includegraphics[width=0.48\linewidth]{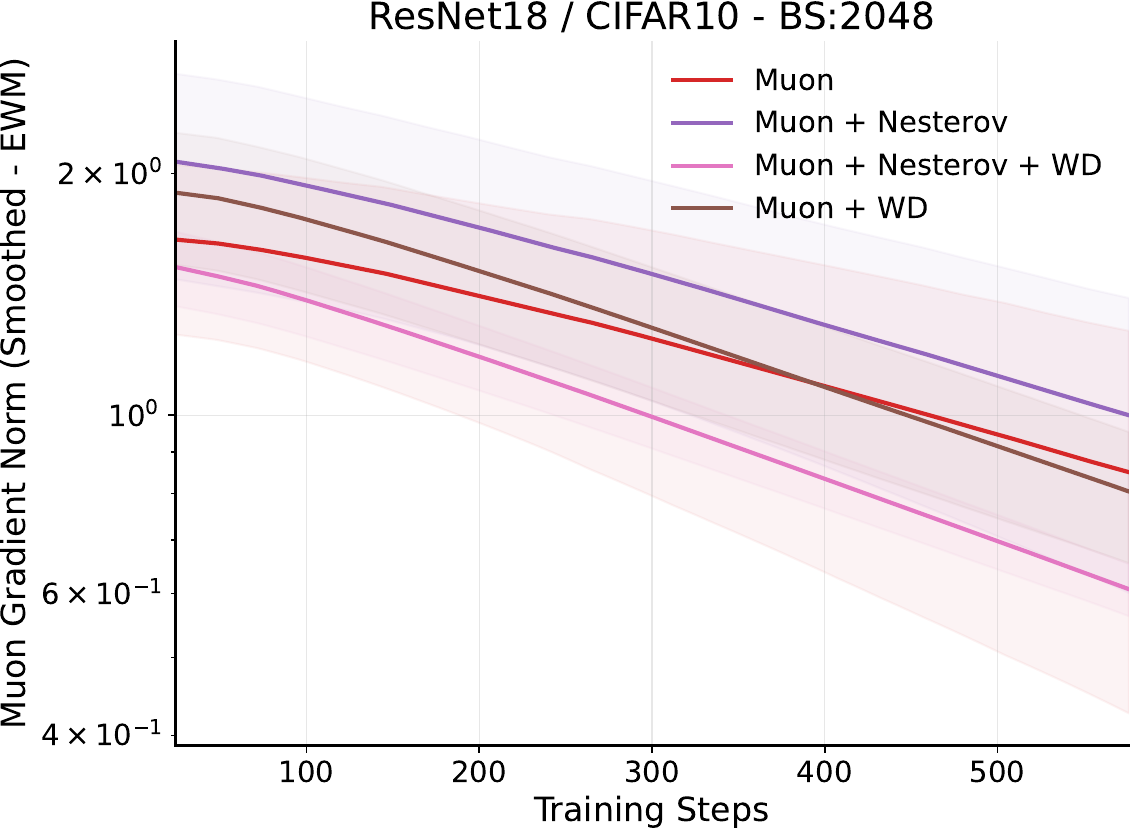}
    \caption{
    Convergence rate comparison for \resnet on \cifar across three batch sizes (128, 512, and 2048 from top to bottom). Each row compares training loss (left) and smoothed gradient norm (right) for the Muon variants and baselines. These results supplement Figure \ref{fig:convergence_bs2048} and confirm that the observed performance trends hold across a range of batch sizes.}
    \label{fig:convergence_bs}
\end{figure}


{
\paragraph{Critical Batch Size (Vision)}
In the main text, we defined the critical batch size as the smallest batch size that reaches the test target accuracy in the fewest steps. For comparison, we also report results using the training target accuracy in Figure \ref{fig:cbs_train_appendix}. While the overall trends remain similar, the gap between AdamW and Muon becomes significantly larger in this setting.

In the main text, Figure \ref{fig:cbs_main} presents results exclusively for AdamW and Muon. For a broader comparison, results for Momentum SGD are included and shown in Figure \ref{fig:cbs_sgd_appendix}.
}

\begin{figure}[h]
    \centering
    \includegraphics[width=0.78\linewidth]{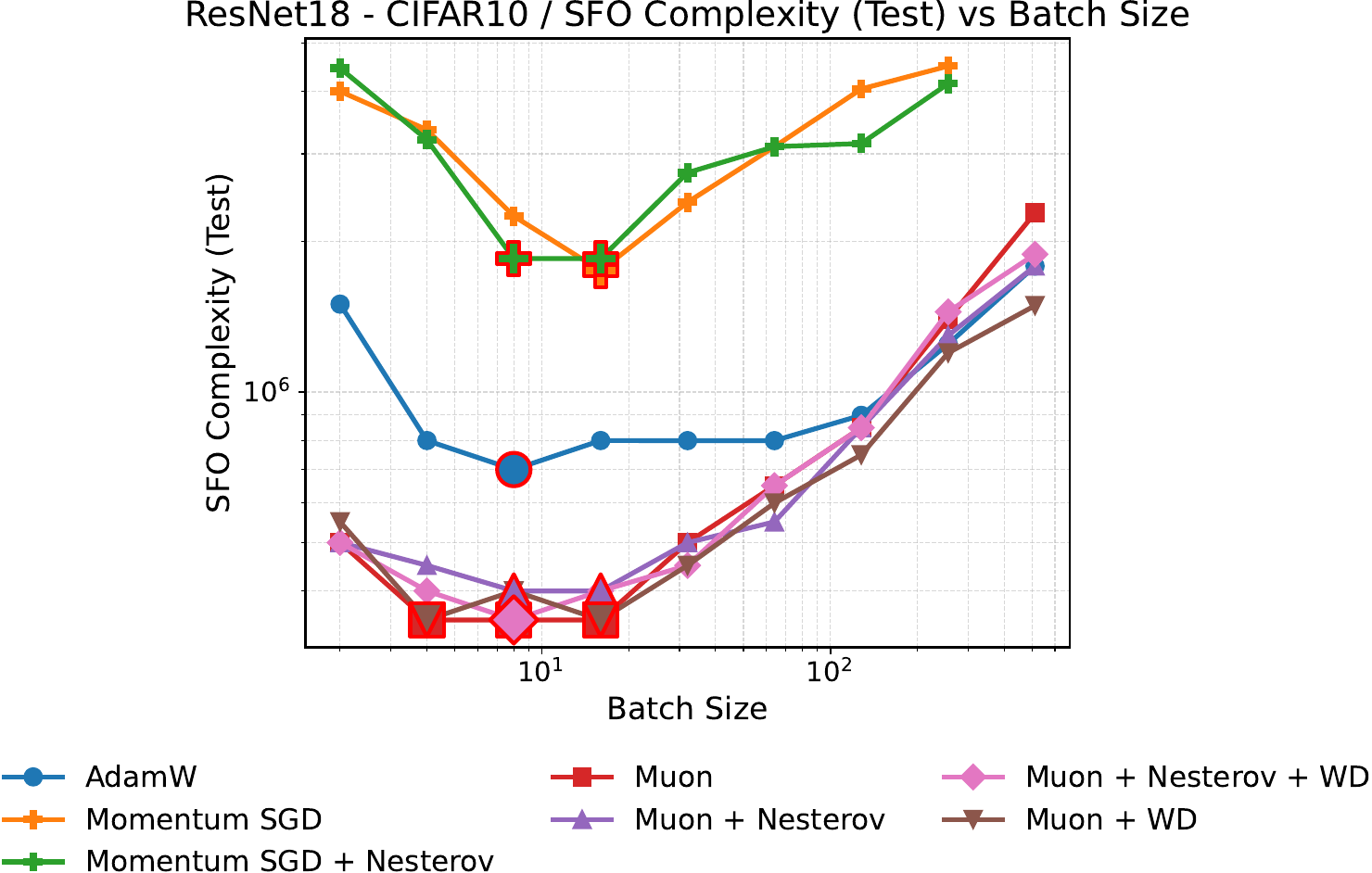}
    \caption{Optimizer comparison via analysis of batch size scaling and SFO complexity (test) for \resnet on \cifar. 
    }
    \label{fig:cbs_sgd_appendix}
\end{figure}

{
\paragraph{Additional Vision Workload}

Due to space limits, the main text focuses on \resnet/\cifar.
Figure~\ref{fig:appendix_vgg16} reports \vgg/\cifarh, which shows the same qualitative behavior.

}

\POSTCOMMIT{
\paragraph{VGG16/CIFAR-100: CBS generalization across architectures.}
To verify that the CBS concept generalizes beyond \resnet/\cifar, we trained VGG16 on CIFAR-100 with Muon and AdamW across batch sizes from 8 to 2048.
Table~\ref{tab:vgg16_cifar100} reports the SFO complexity (test) for each optimizer-batch size pair.
Muon exhibits a clear U-shaped SFO curve with CBS $\approx 64$ and best SFO of $9.00 \times 10^5$. AdamW achieves its lowest SFO at batch size 128 (SFO $= 3.92 \times 10^6$), with SFO increasing sharply at larger batch sizes.
The resulting SFO advantage of Muon is $\approx 4\times$, demonstrating that Muon's efficiency gains transfer to a substantially different architecture and dataset.

\begin{table}[h]
\centering
\caption{SFO complexity (test) on VGG16/CIFAR-100 (Muon with Nesterov vs.\ AdamW, 3 seeds). Values with $\pm$ denote mean $\pm$ std. ``$\infty$'' indicates the target was not reached. Stars ($\star$) indicate the CBS.}
\begin{tabular}{c|cccccc}
\toprule
Optimizer & BS=64 & BS=128 & BS=256 & BS=512 & BS=1024 & BS=2048 \\
\midrule
Muon & $900 \pm 50\text{K}^{\star}$ & $958 \pm 96\text{K}$ & $1.06 \pm 0.08\text{M}$ & $1.18 \pm 0.11\text{M}$ & $1.71 \pm 0.31\text{M}$ & $\infty$ \\
AdamW & $4.05 \pm 0.78\text{M}$ & $3.92 \pm 0.88\text{M}^{\star}$ & $5.64 \pm 0.60\text{M}$ & $\infty$ & $\infty$ & $\infty$ \\
\bottomrule
\end{tabular}
\label{tab:vgg16_cifar100}
\end{table}
}

\begin{figure}[h]
    \centering\hspace{-5mm}
    \includegraphics[width=0.64\linewidth]{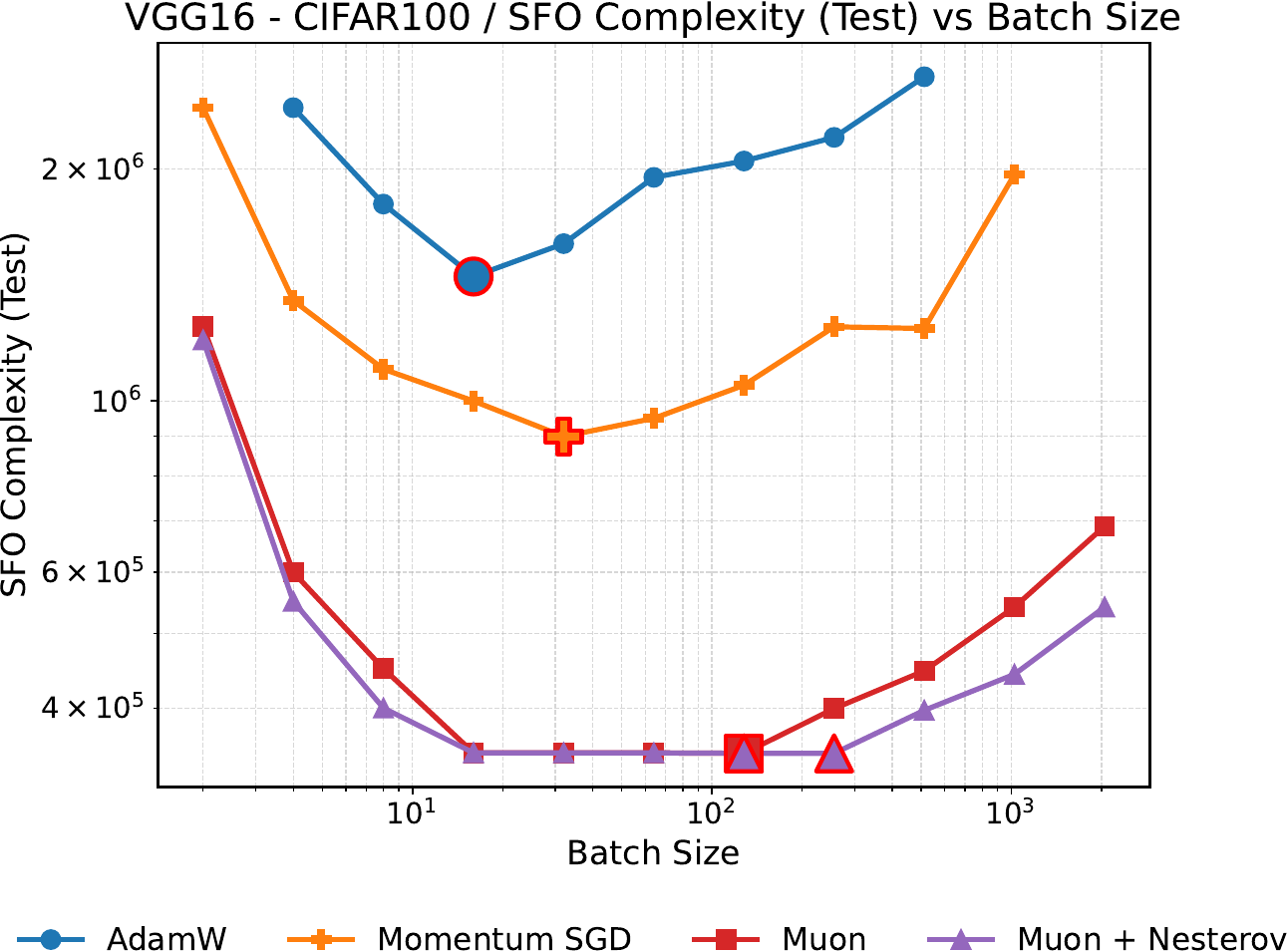}
    \caption{Optimizer comparison via analysis of batch size scaling and SFO complexity (test) for \vgg on \cifarh. 
    }
    \label{fig:appendix_vgg16}
\end{figure}

\NEWADD{
\subsection{Extended CBS Validation}\label{app:extended_cbs}

\paragraph{CBS at different target precision levels: $b^\star \propto 1/\varepsilon^2$.}
To directly verify the $\varepsilon$-dependence of the CBS lower bound formula, we varied the target accuracy for \resnet/\cifar across four levels, $(75\%/80\%)$, $(80\%/85\%)$, $(85\%/90\%)$, and $(90\%/95\%)$ (test/train), and measured the SFO complexity at seven batch sizes (32--2048).
As the target becomes more stringent (smaller $\varepsilon$), the best SFO increases sharply: from $5.0 \times 10^4$ at the easiest target (75\%/80\%) to $3.5 \times 10^5$ at the hardest target (90\%/95\%), consistent with the theoretical prediction that SFO scales with $1/\varepsilon^2$.
A log-log regression of best SFO against $1/\varepsilon^2$ yields an exponent of $1.02$ ($R^2{=}0.984$), close to the predicted linear relationship.

\paragraph{CBS lower bound formula decomposition.}
Table~\ref{tab:grad_variance} shows the gradient variance $\sigma^2$ and its product with rank across widths, and Table~\ref{tab:predicted_vs_empirical_cbs} compares predicted vs.\ empirical CBS.

\begin{table}[h]
\centering
\caption{Gradient variance $\sigma^2$ and CBS lower bound formula decomposition across widths (\resnet/\cifar). The product $r_1 \cdot \sigma^2$ varies by only $\approx 3.4\times$ across widths, compared to $12\times$ variation in rank $r_1$ alone.}
\begin{tabular}{c|ccc|c}
\toprule
Width & Eff.\ Rank $r_1$ & $\sigma^2$ & $r_1 \cdot \sigma^2$ & CBS (Muon) \\
\midrule
$0.125\times$ & 23.1 & 0.5317 & 12.28 & 1024 \\
$0.25\times$ & 43.6 & 0.1971 & 8.59 & 1024 \\
$0.50\times$ & 75.8 & 0.0701 & 5.31 & 1024 \\
$0.75\times$ & 105.7 & 0.0346 & 3.66 & 1024 \\
$1.00\times$ & 128.8 & 0.0371 & 4.78 & 512 \\
$1.50\times$ & 180.8 & 0.0317 & 5.73 & 1024 \\
$2.00\times$ & 221.9 & 0.0366 & 8.12 & 2048 \\
$3.00\times$ & 277.5 & 0.0303 & 8.41 & 512 \\
\bottomrule
\end{tabular}
\label{tab:grad_variance}
\end{table}

\begin{table}[h]
\centering
\caption{Predicted vs.\ empirical CBS across widths (\resnet/\cifar, Muon with Nesterov, $\beta{=}0.95$). The predicted CBS is the lower bound from Proposition~\ref{thm:b.2}: $b^\star_{\rm pred} = \frac{9}{4}(1{-}\beta)(1{+}\sqrt{2}\beta)^2 \cdot r_1\sigma^2/\varepsilon^2$, with $\varepsilon{=}0.066$ fitted to minimize log-scale MSE.}
\begin{tabular}{c|cc|cc|c}
\toprule
Width & $r_1$ & $\sigma^2$ & $b^\star_{\rm pred}$ & $b^\star_{\rm emp}$ & Ratio \\
\midrule
$0.125\times$ & 23.1 & 0.5317 & 1734 & 1024 & 0.59 \\
$0.25\times$ & 43.6 & 0.1971 & 1213 & 1024 & 0.84 \\
$0.50\times$ & 75.8 & 0.0701 & 750 & 1024 & 1.37 \\
$0.75\times$ & 105.7 & 0.0346 & 516 & 1024 & 1.98 \\
$1.00\times$ & 128.8 & 0.0371 & 674 & 512 & 0.76 \\
$1.50\times$ & 180.8 & 0.0317 & 809 & 1024 & 1.27 \\
$2.00\times$ & 221.9 & 0.0366 & 1146 & 2048 & 1.79 \\
$3.00\times$ & 277.5 & 0.0303 & 1187 & 512 & 0.43 \\
\bottomrule
\end{tabular}
\label{tab:predicted_vs_empirical_cbs}
\end{table}

\POSTCOMMIT{
\paragraph{Interpretation of predicted/empirical ratios.}
The predicted-to-empirical CBS ratios in Table~\ref{tab:predicted_vs_empirical_cbs} range from $0.43$ to $1.98$, indicating order-of-magnitude agreement rather than precise prediction.
Because the theoretical CBS formula provides a \emph{lower bound} (Proposition~\ref{prop:03}), ratios less than $1$ (predicted $<$ empirical) are theoretically expected: the bound may be tighter at some widths than others.
Conversely, ratios greater than $1$ indicate that the bound is loose for those configurations; that is, the empirical CBS is smaller than the lower bound predicts, which can occur because the bound involves worst-case constants.
The practical utility of the formula is in predicting the \emph{scaling trend} (how CBS varies with width, $\beta$, and $\sigma^2$) rather than exact CBS values.
}



\paragraph{Choice of model scale.}
Our \llama experiments use a 320M-parameter model, which is limited relative to frontier LLM scales. This choice is dictated by computational budget constraints: each batch-size sweep requires training multiple models to completion, and we sweep over eight batch sizes, four Muon variants, and multiple momentum values, totaling hundreds of training runs. Importantly, the theoretical predictions we validate are derived from properties of the optimizer and the loss landscape (smoothness, gradient variance) that are architecture-independent. The fact that these predictions hold consistently across both vision (\resnet/\cifar) and language (\llama/\cfour) workloads provides evidence for their generality. Moreover, \citet{Liu2025Muo} and \citet{Ahn2025Dio} have independently demonstrated that Muon's efficiency gains persist at larger LLM scales (up to 3B parameters).

\paragraph{Ablation Study}
\label{app:ablation}

Finally, we conducted an ablation study to examine the relationship between the weight decay parameter $\beta$ and the learning rate across four batch sizes. Figure \ref{fig:ablation_weight_decay_lr} illustrates how Muon's weight decay and learning rate affect the loss for each batch size. For the training loss (top row), lower weight decay consistently corresponds to reduced loss. Similarly, the smallest learning rates within the explored range are preferred. For the test loss (bottom row), smaller learning rates yield better results. However, a clear inflection point emerges in weight decay at around $10^{-1}$, or approximately $10^{-2}$ for larger batch sizes, indicating that these weight decay settings minimize the test loss.

\begin{figure}[h!]
    \centering

    \includegraphics[width=0.24\linewidth]{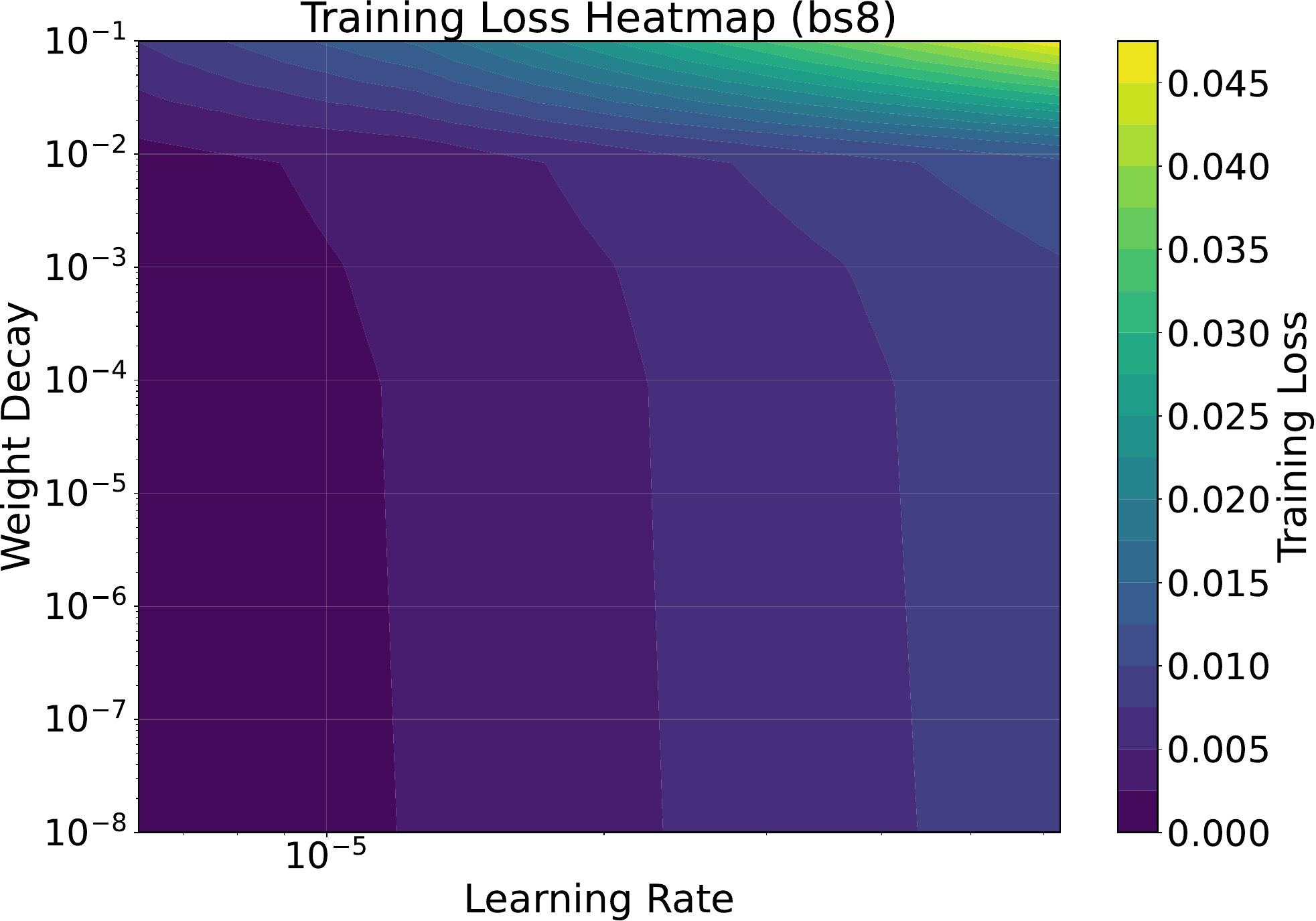}
    \includegraphics[width=0.24\linewidth]{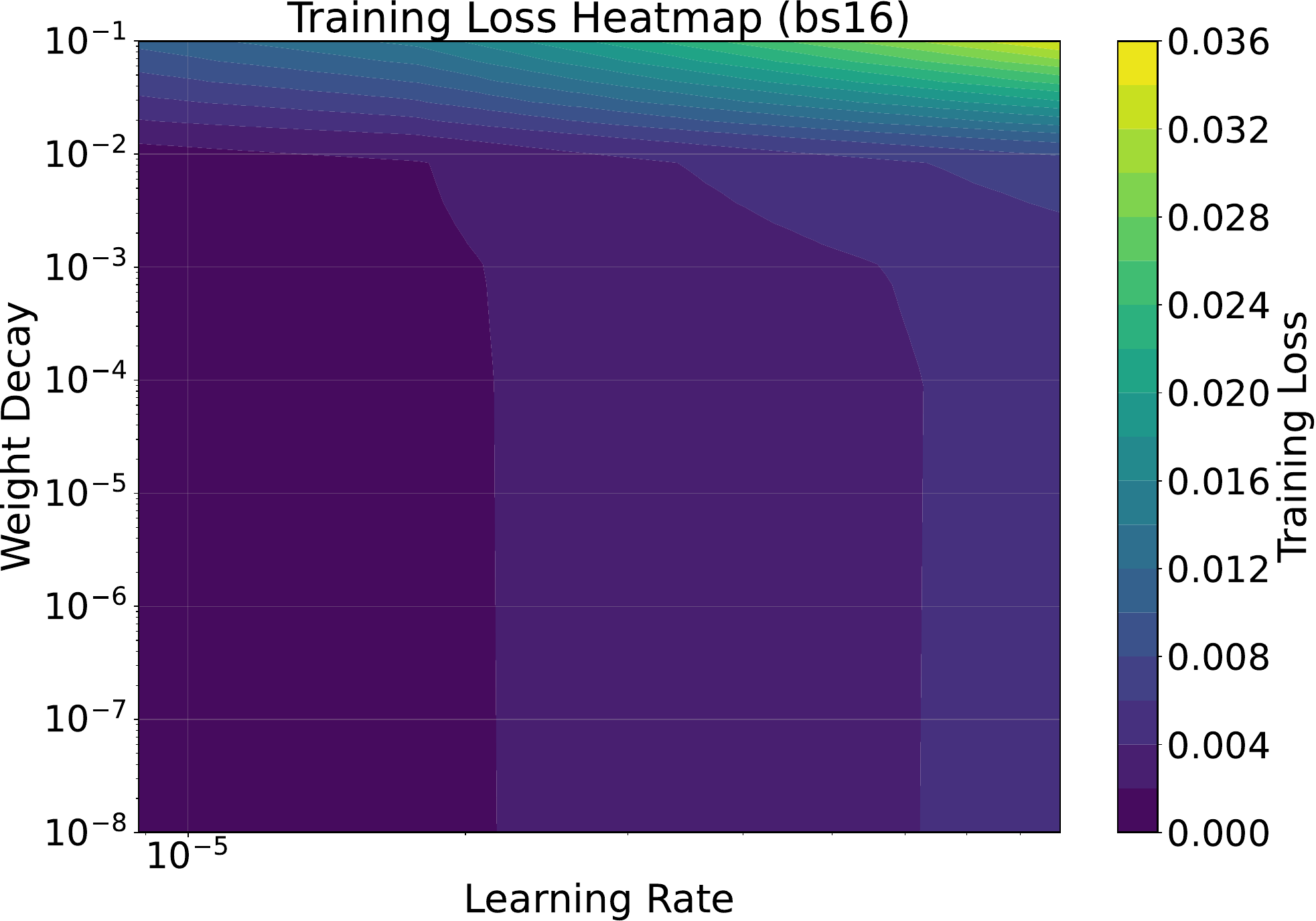}
    \includegraphics[width=0.24\linewidth]{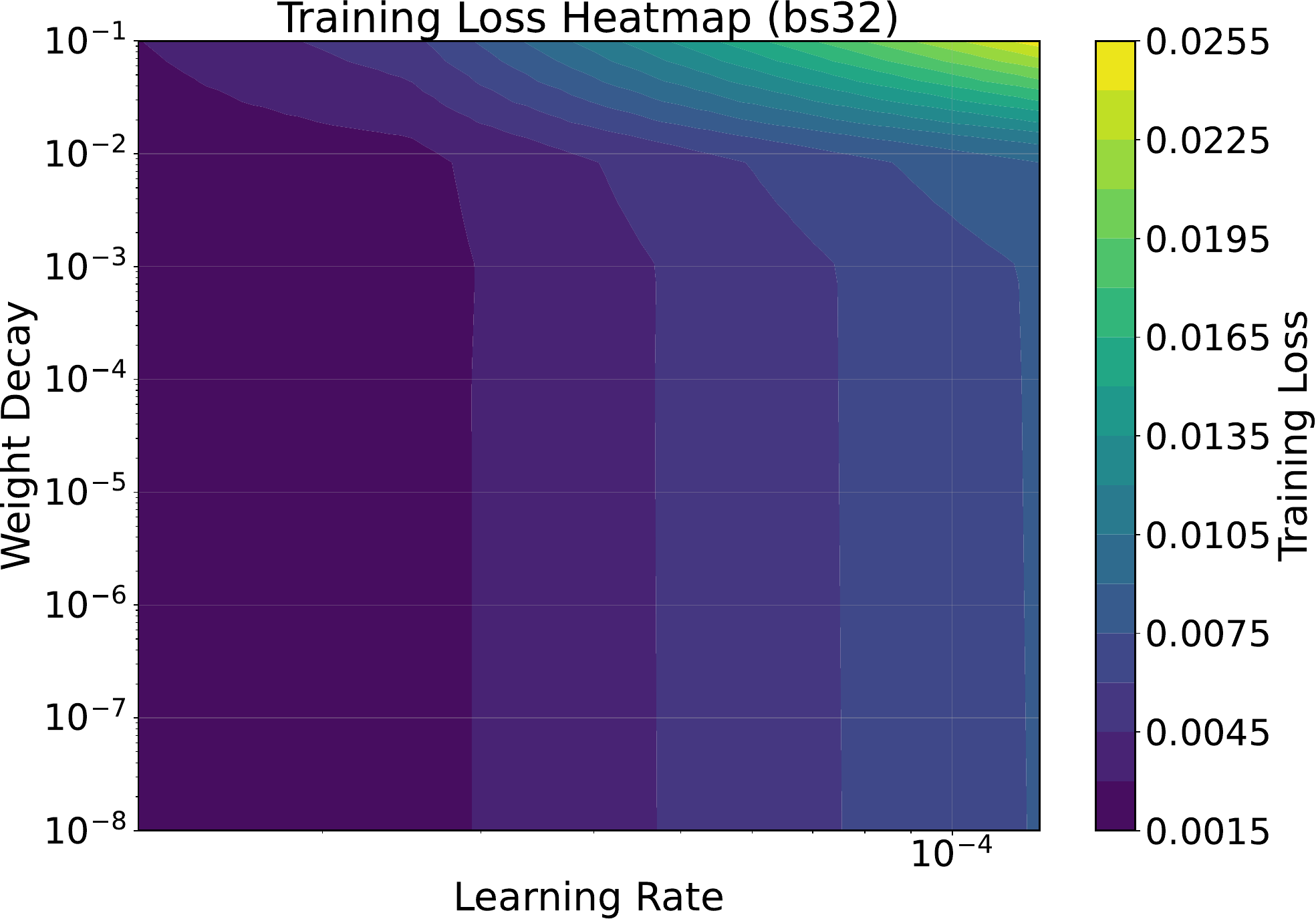}
    \includegraphics[width=0.24\linewidth]{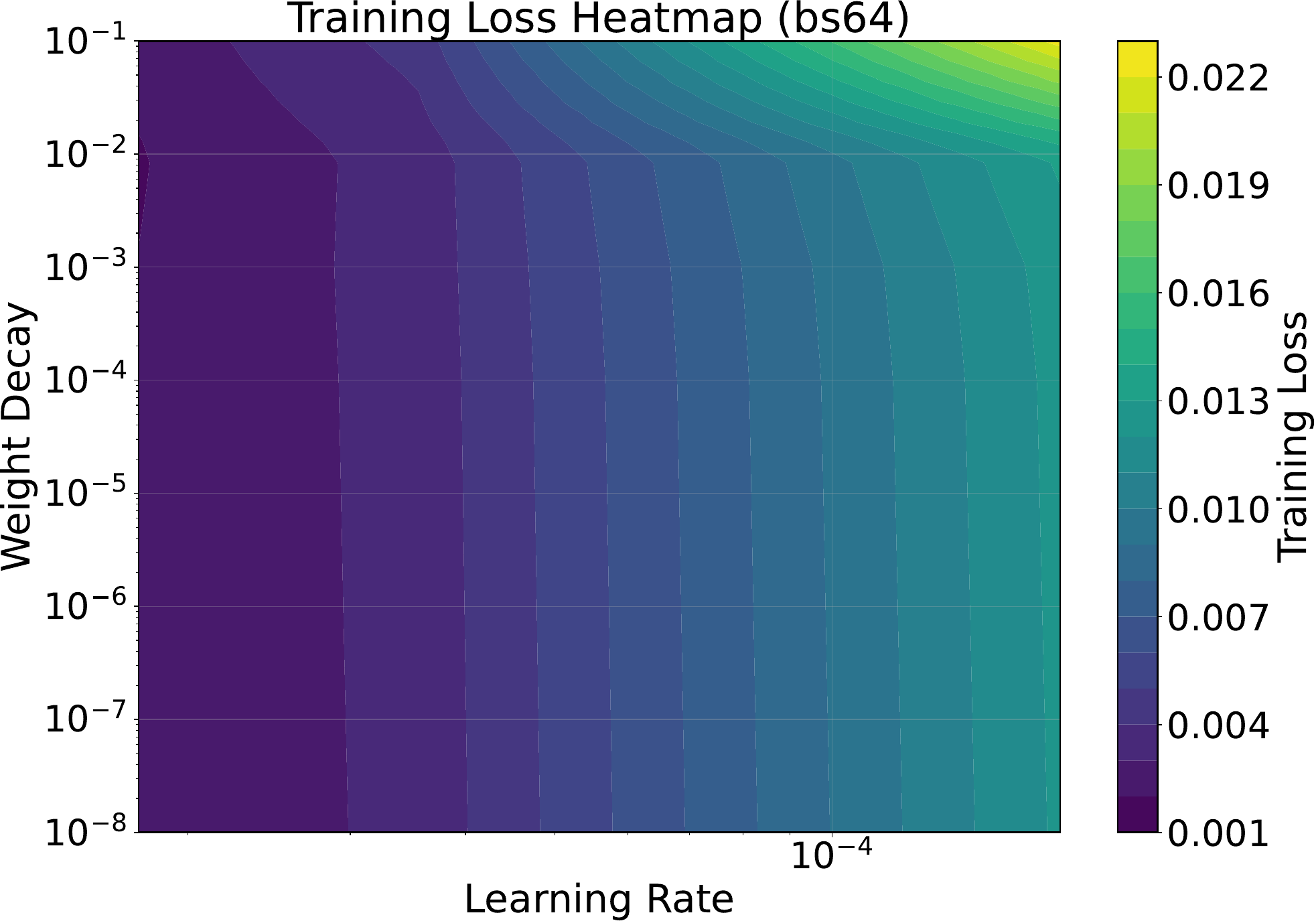}

    \includegraphics[width=0.24\linewidth]{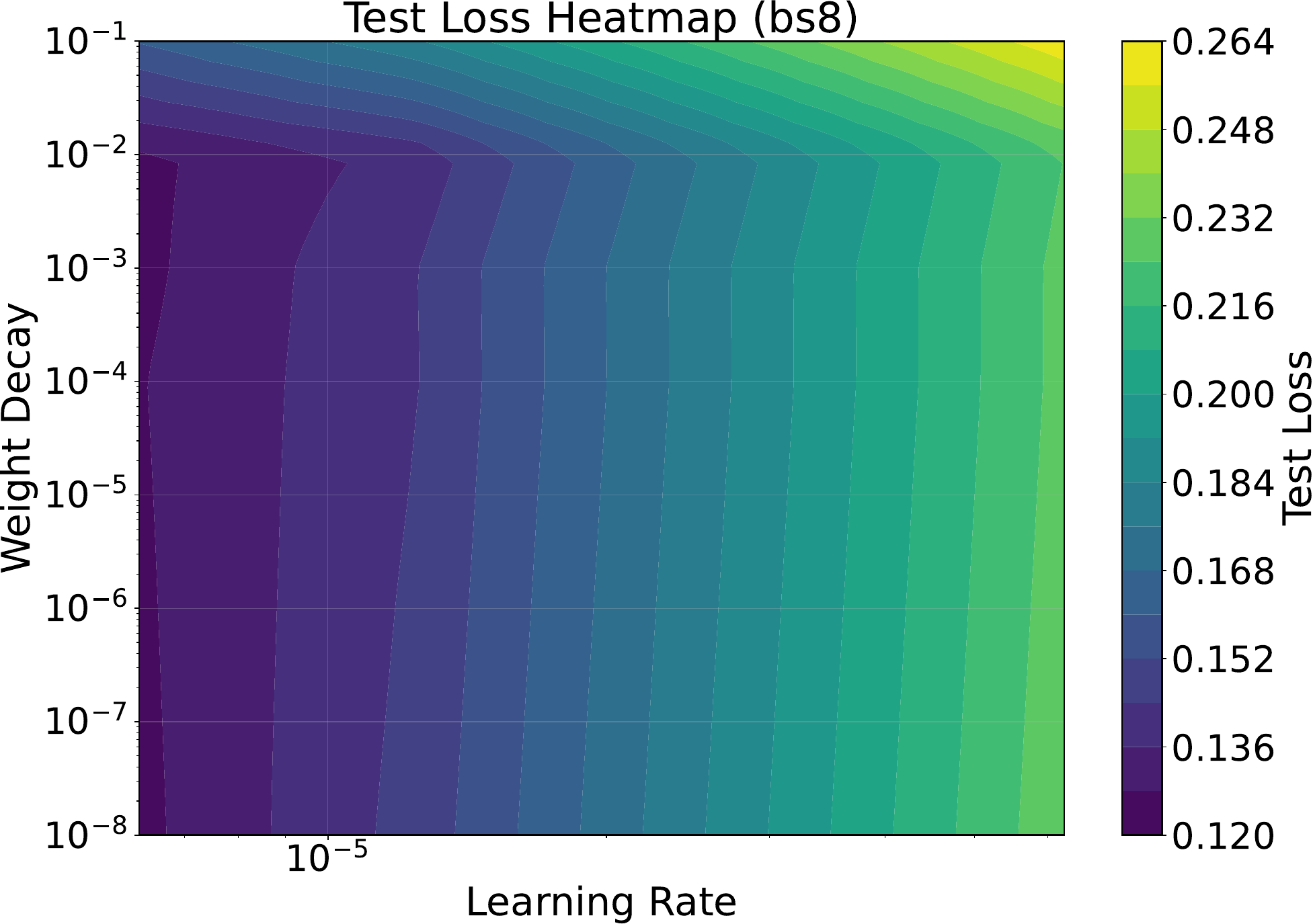}
    \includegraphics[width=0.24\linewidth]{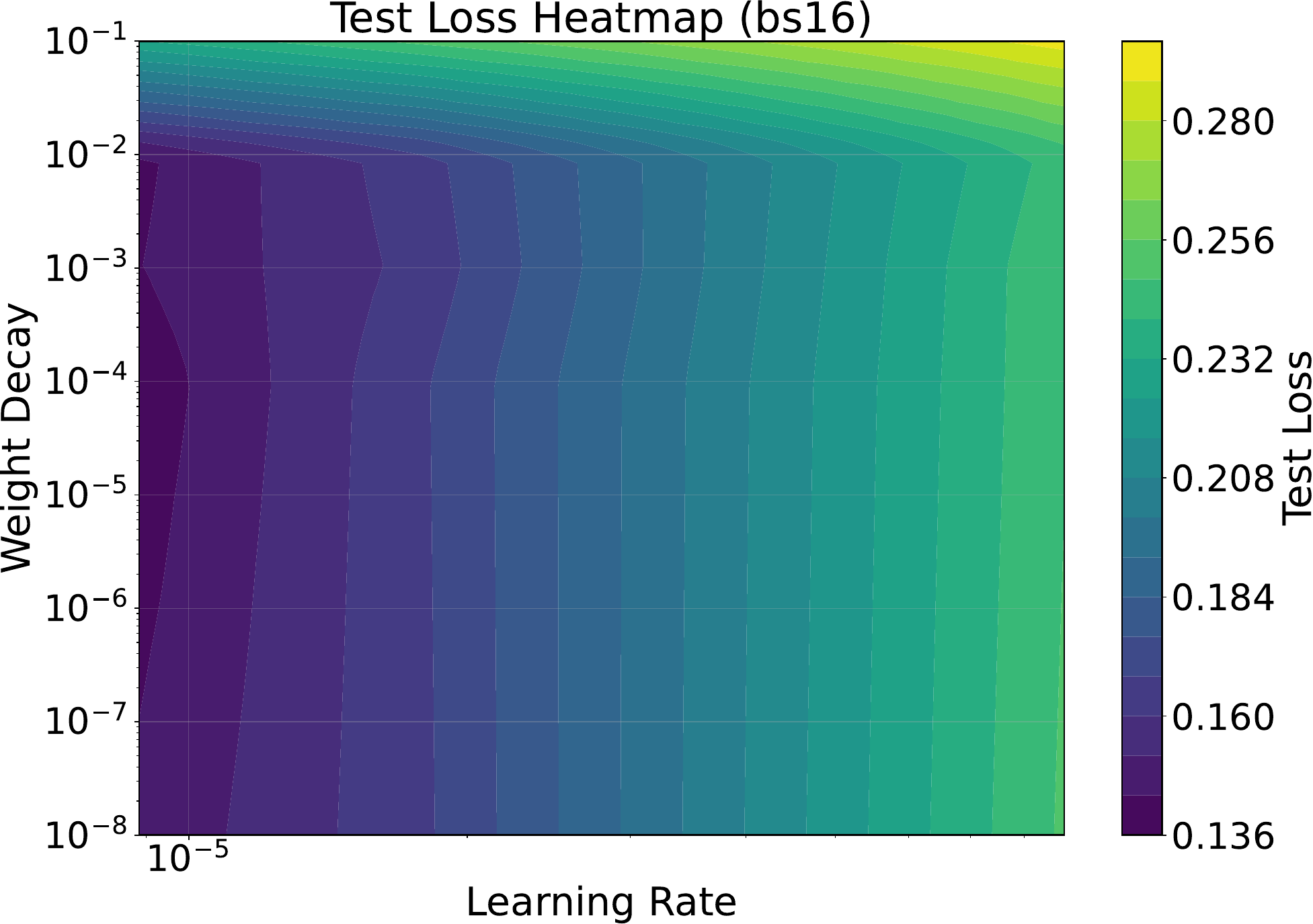}
    \includegraphics[width=0.24\linewidth]{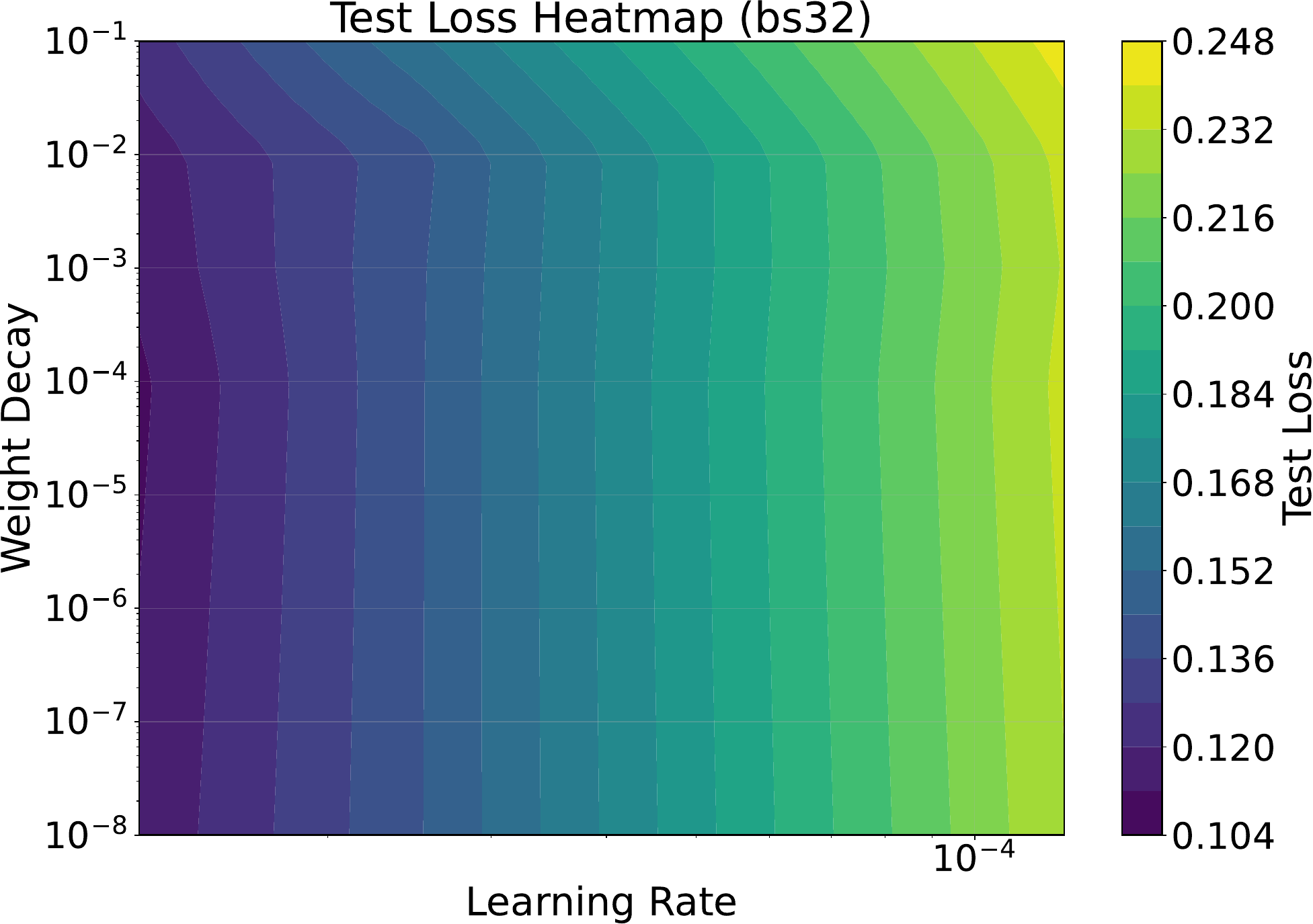}
    \includegraphics[width=0.24\linewidth]{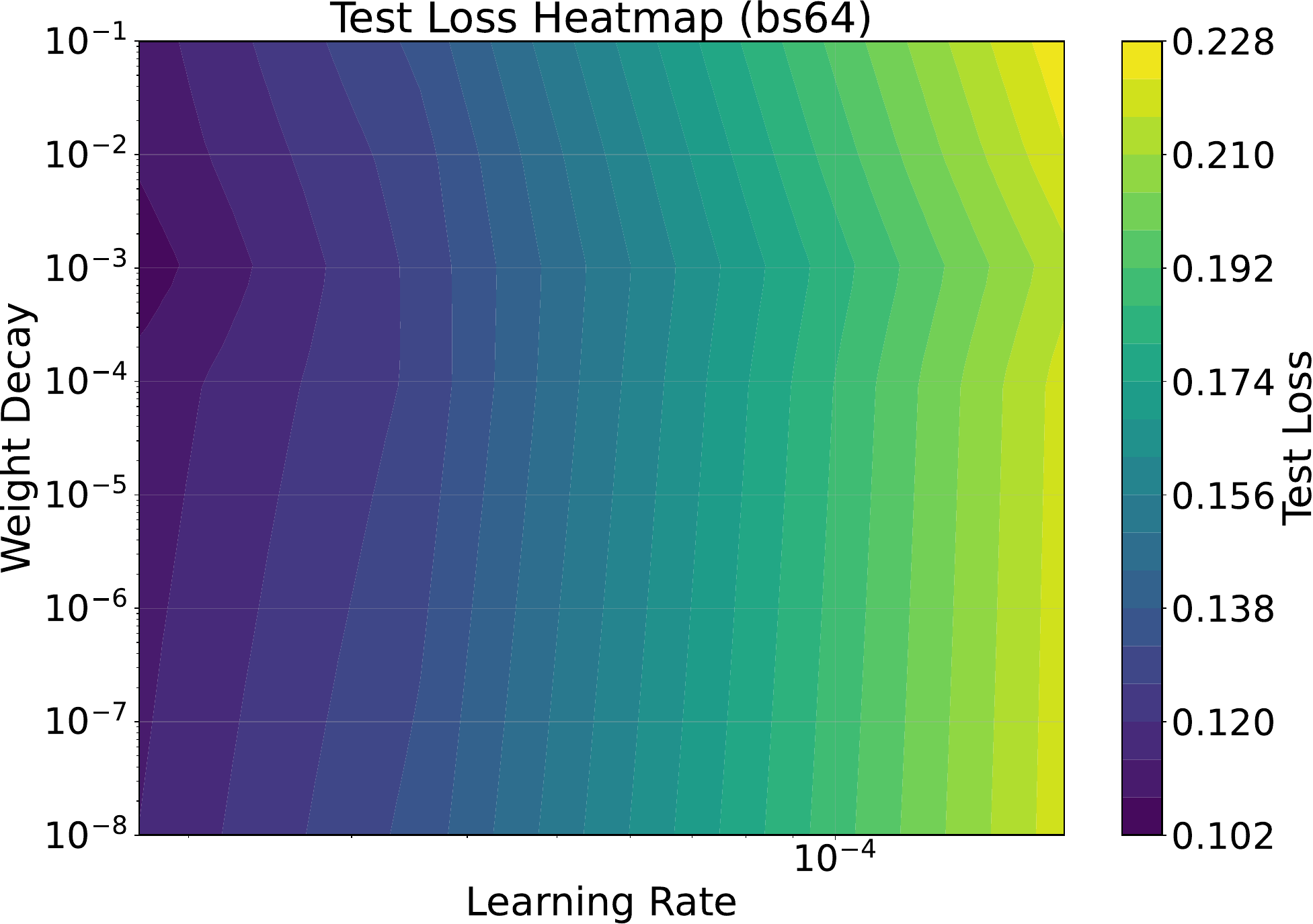}

    \caption{
    Ablation study on the impact of weight decay and learning rate on loss for Muon across different batch sizes (8, 16, 32, and 64, from left to right). Each column compares training loss (top) and test loss (bottom) across various configurations of weight decay and learning rate. The training loss consistently favors smaller weight decay values and the lowest learning rates within the explored range. For test loss, optimal performance is achieved at weight decay values around $10^{-1}$, shifting toward approximately $10^{-2}$ at larger batch sizes.}
    \label{fig:ablation_weight_decay_lr}
\end{figure}

}


\clearpage

\NEWADD{
\section{\NEWADD{Empirical Behavior of $r_1$ and the Convergence Floor (FashionMNIST)}}\label{app:rc2_rank_empirical}

\NEWADD{The convergence bound in Theorems~\ref{thm:01}--\ref{thm:03} contains two terms whose interpretation has been questioned: (a)~the factor $r_1 := \sup_t \operatorname{rank}(C_t - \nabla f(W_t))$, whose worst-case value is $\min(m,n)$ and which could in principle grow with $T$; and (b)~the $\sqrt{(1-\beta)\,r_1/b}$ term, which is independent of both $\eta$ and $T$ and therefore, taken in isolation, could be read as requiring a large batch size for convergence. This section provides controlled empirical evidence that addresses both points. The experiments use a 2-layer FashionMNIST MLP (fc1: $m\times 784$ with $m\in\{512, 2048\}$; fc2: $10 \times m$), Muon with Newton--Schulz orthogonalization (NS5), weight decay $0.01$, and Nesterov momentum, trained with three seeds per configuration. To match the theoretical definition of $r_1$ exactly, the gradient $\nabla f(W_t)$ in Section~\ref{app:rc2_e1} is computed as the \emph{full empirical-risk gradient} over the entire 60{,}000-example FashionMNIST training set every 150 steps (not as a sub-sample probe). All code and configs are released under \texttt{workspace/configs/rc2\_e\{1,2,3\}*.json}.}

\NEWADD{\subsection{\NEWADD{Saturation of $r_1$ along the Training Trajectory}}\label{app:rc2_e1}}

\NEWADD{At every eval step we form $E_t := C_t - \nabla f(W_t)$ for the fc1 parameter and compute its singular value decomposition. Importantly, $\nabla f(W_t)$ here is the \emph{full empirical-risk gradient} (averaged over all $60{,}000$ training examples), not a mini-batch estimate, so that $\operatorname{rank}(E_t)$ matches the theoretical $r_1$ exactly. We report (i)~the \emph{hard rank} at tolerance $10^{-3}$: $\#\{i : \sigma_i(E_t) > 10^{-3}\,\sigma_{\max}(E_t)\}$, which is the object appearing in the theorem; and (ii)~the \emph{stable rank} $r_{\mathrm{eff}}(E_t) := \|E_t\|_F^2 / \|E_t\|_{\mathrm{op}}^2$, a continuous proxy satisfying $r_{\mathrm{eff}} \leq \operatorname{rank}$.}

\begin{figure}[h]
\centering
\includegraphics[width=0.92\linewidth]{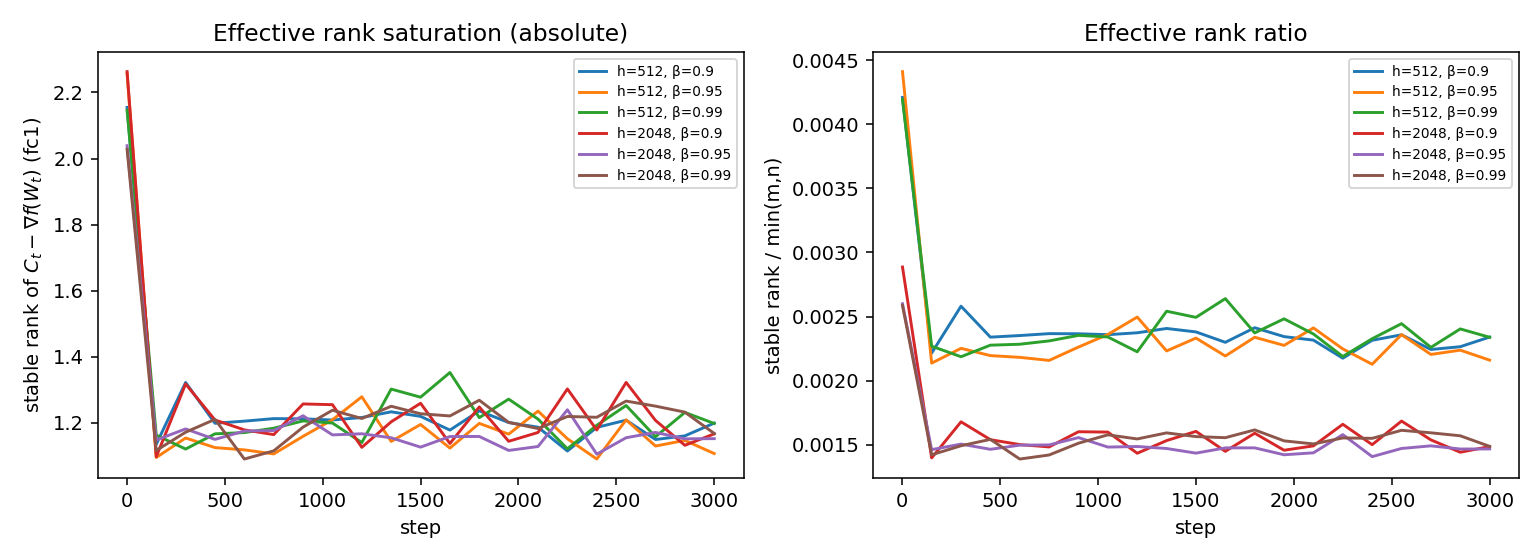}
\caption{\NEWADD{Stable rank of $E_t = C_t - \nabla f(W_t)$ at the fc1 layer of the FashionMNIST MLP, with $\nabla f$ computed as the full empirical-risk gradient over all 60k training examples. Left: absolute stable rank versus training step. Right: ratio to $\min(m,n)$. Across all $(\beta, m)\in\{0.9, 0.95, 0.99\}\times\{512, 2048\}$ combinations the stable rank saturates within $\approx$ 500 steps at $1.15$--$1.25$.}}
\label{fig:rc2_e1}
\end{figure}

\begin{table}[h]
\centering
\caption{\NEWADD{Late-training ranks of $E_t$ (fc1, FashionMNIST MLP) measured against the full empirical-risk gradient. The hard-rank ratio is bounded by $0.42$ across all configurations, demonstrating that the proof's worst-case $r_1 \leq \min(m,n)$ is not attained in practice. Stable rank is $\mathcal{O}(1)$. Means over the last 20\,\% of evaluation points and three seeds.}}
\label{tab:rc2_e1_summary}
\resizebox{0.9\linewidth}{!}{%
\begin{tabular}{rrrrrr}
\toprule
hidden $m$ & $\beta$ & $\min(m,n)$ & hard rank (tail) & hard rank / $\min(m,n)$ & stable rank (tail) \\
\midrule
512  & 0.90 & 512 & 196.2 & 0.38 & 1.18 \\
512  & 0.95 & 512 & 216.0 & 0.42 & 1.15 \\
512  & 0.99 & 512 & 161.9 & 0.32 & 1.21 \\
2048 & 0.90 & 784 & 194.5 & 0.25 & 1.21 \\
2048 & 0.95 & 784 & 226.4 & 0.29 & 1.16 \\
2048 & 0.99 & 784 & 300.8 & 0.38 & 1.23 \\
\bottomrule
\end{tabular}%
}
\end{table}

\NEWADD{Two observations support the interpretation of $r_1$ used in the analysis. First, the hard rank \emph{plateaus} within the first $\approx 500$ steps and remains flat thereafter (e.g., at $m=2048, \beta=0.95$ it is $186$ at step $600$, $198$ at step $1200$, $200$ at step $1800$, and $246$ at step $3000$), so $r_1$ does not exhibit the unbounded growth with $T$ that a worst-case reading of the bound might suggest. Second, the late-training ratio $r_1/\min(m,n)$ is always $\leq 0.42$ and frequently $\leq 0.30$, which matches the empirical observation of \citet{Ahn2025Dio} that rank $r \approx n/4$ suffices for Muon in LLMs up to 3B parameters. The stable rank $r_{\mathrm{eff}} \approx 1.2$ further indicates that the Frobenius energy of $E_t$ concentrates on essentially a single direction, making the Frobenius-norm bound used in the proof considerably tighter than $\sqrt{r_1}\,\|E_t\|_{\mathrm{op}}$ would suggest.}

\NEWADD{\subsection{\NEWADD{Batch-Size Dependence of the Gradient-Norm Plateau}}\label{app:rc2_e2}}

\NEWADD{Our bound predicts that the time-averaged gradient norm plateau has the structural form $(\mathrm{plateau})^2 = A + B/b$, where $A = \mathcal{O}(\eta L \hat{r})$ collects the $b$-independent floor and $B \propto (1-\beta)\, r_1\, \sigma^2$ captures the mini-batch variance. We evaluate this prediction by fixing $(\beta, \eta, m, \lambda) = (0.95, 0.05, 1024, 0.01)$ and sweeping $b \in \{8, 16, 32, 64, 128, 256, 512\}$ with three seeds; the plateau is taken as the mean of $\|\nabla f(W_t)\|_F$ over the last $20\,\%$ of eval points.}

\begin{figure}[h]
\centering
\includegraphics[width=0.92\linewidth]{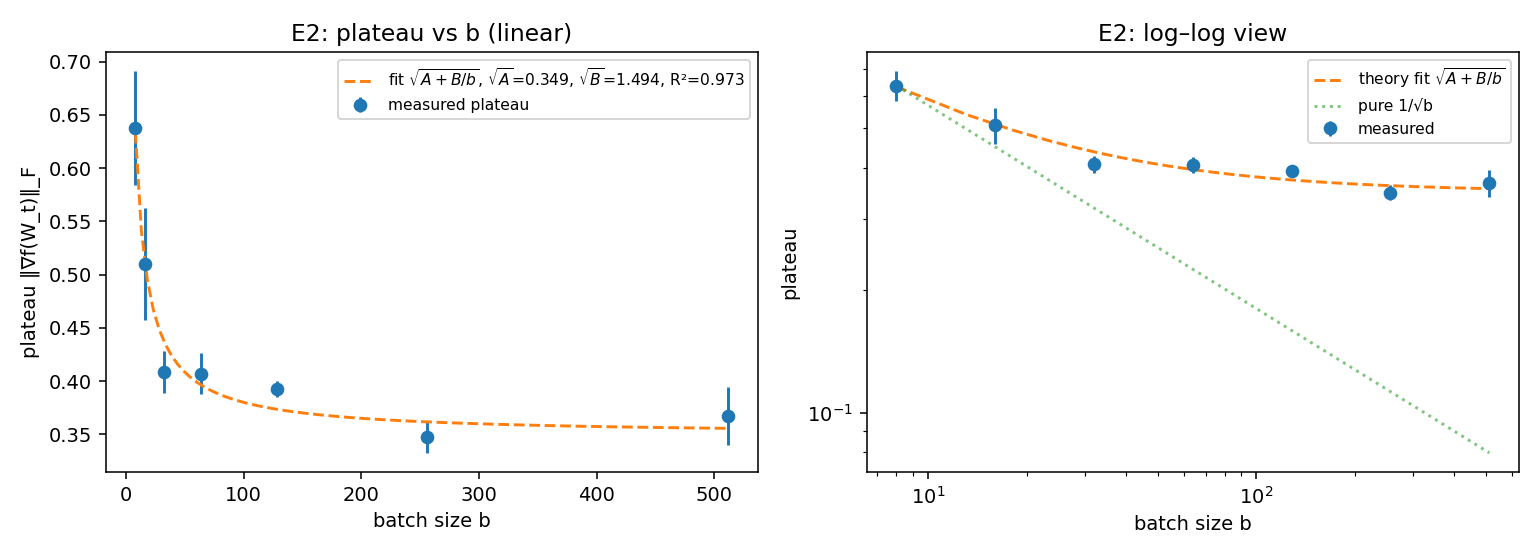}
\caption{\NEWADD{Plateau $\|\nabla f(W_t)\|_F$ as a function of batch size $b$ on FashionMNIST. The solid fit is $\sqrt{A+B/b}$ with $\sqrt{A}=0.349$ and $\sqrt{B}=1.494$ ($R^2 = 0.973$). Dotted reference: pure $1/\sqrt{b}$ scaling. The $A$ ($b$-independent) component dominates for $b \gtrsim 64$, showing that the $\sqrt{(1-\beta)r_1/b}$ variance term flattens quickly.}}
\label{fig:rc2_e2}
\end{figure}

\begin{table}[h]
\centering
\caption{\NEWADD{Measured plateau versus the theory-aligned fit $(\mathrm{plateau})^2 = A + B/b$ (3 seeds). The fit coefficient $\sqrt{B} \approx 1.49$ is consistent with a moderate effective rank $r_1$ much smaller than $\min(m,n)=784$.}}
\label{tab:rc2_e2_fit}
\begin{tabular}{rrrr}
\toprule
$b$ & measured plateau $\pm$ std & predicted $\sqrt{A+B/b}$ & relative error \\
\midrule
8   & $0.638 \pm 0.053$ & $0.633$ & $0.7\,\%$ \\
16  & $0.510 \pm 0.053$ & $0.521$ & $2.2\,\%$ \\
32  & $0.409 \pm 0.020$ & $0.455$ & $11\,\%$ \\
64  & $0.407 \pm 0.019$ & $0.418$ & $2.7\,\%$ \\
128 & $0.392 \pm 0.008$ & $0.399$ & $1.8\,\%$ \\
256 & $0.348 \pm 0.015$ & $0.390$ & $12\,\%$ \\
512 & $0.367 \pm 0.027$ & $0.385$ & $5.0\,\%$ \\
\bottomrule
\end{tabular}
\end{table}

\NEWADD{The fitted coefficients are $\sqrt{A} = 0.349$ and $\sqrt{B} = 1.494$ with $R^2 = 0.973$. Three interpretations follow. (i)~The $1/b$ variance component has effectively saturated by $b \approx 64$; beyond that the plateau is controlled by the $b$-independent $A$-term. This directly refutes the reading that convergence requires a large batch size. (ii)~In a log--log view, the naive expectation of a slope of $-1/2$ is not observed (the empirical slope is $-0.13$) precisely because the plateau is a \emph{sum} of an $A$-term and a $B/b$-term rather than a pure $1/\sqrt{b}$. (iii)~The magnitude of $\sqrt{B}$ is consistent with a small effective $r_1$: setting $\sqrt{B}=\sqrt{(1-\beta)\,r_1\,\sigma^2}$ with $\beta=0.95$ and the empirical $\sigma^2$ recovers $r_1 = \mathcal{O}(1)$--$\mathcal{O}(10)$, in line with the stable-rank measurements in Section~\ref{app:rc2_e1}.}

\NEWADD{\subsection{\NEWADD{Effect of a Diminishing Step-Size Schedule on the Floor}}\label{app:rc2_e3}}

\NEWADD{The remaining $A$-component is of order $\eta L \hat{r}$ and, unlike the $B/b$ term, is \emph{not} independent of $\eta$. The diminishing step-size regime of Theorems~\ref{thm:01}--\ref{thm:03} (where $(1/T)\sum_{t}\eta_t \to 0$) implies that replacing the constant schedule by $\eta_t = \eta_0 / \sqrt{1+t}$ drives $A \to 0$. We test this on FashionMNIST at the most floor-sensitive regime $b=8$ (hidden $m=1024$, $\beta=0.95$, $\lambda=0.01$, $6000$ steps, three seeds).}

\begin{figure}[h]
\centering
\includegraphics[width=0.75\linewidth]{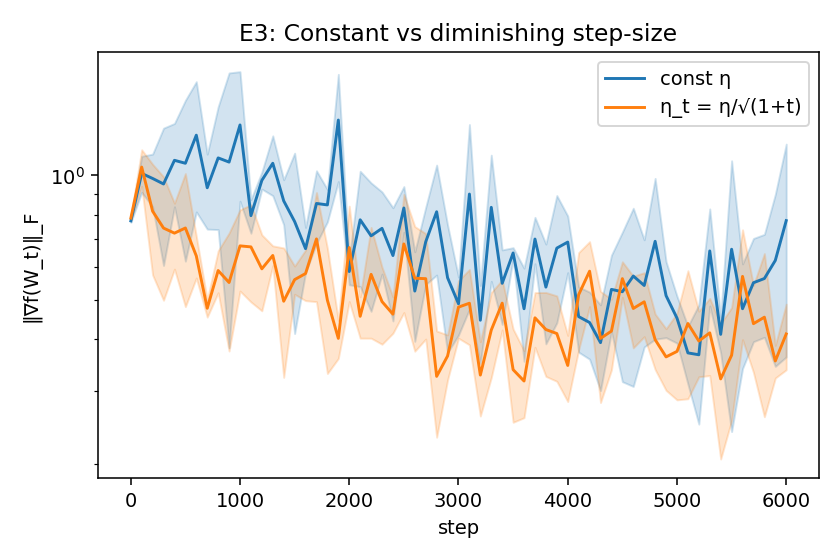}
\caption{\NEWADD{Gradient norm trajectory for constant $\eta$ versus $\eta_t = \eta_0/\sqrt{1+t}$ at $b = 8$ on FashionMNIST. Shaded bands are $\pm$ one standard deviation across 3 seeds.}}
\label{fig:rc2_e3}
\end{figure}

\begin{table}[h]
\centering
\caption{\NEWADD{Late-training plateau of $\|\nabla f(W_t)\|_F$ for constant vs.\ diminishing step size at $b=8$ on FashionMNIST (3 seeds; last 20\,\% of eval points). The diminishing schedule reduces the floor by $24\,\%$ and lowers its variability by $\approx 40\,\%$.}}
\label{tab:rc2_e3_summary}
\begin{tabular}{lcc}
\toprule
schedule & plateau mean & plateau std \\
\midrule
constant $\eta = 0.05$ & $0.535$ & $0.176$ \\
diminishing $\eta_t = 0.1/\sqrt{1+t}$ & $\mathbf{0.408}$ & $\mathbf{0.105}$ \\
\bottomrule
\end{tabular}
\end{table}

\NEWADD{The $24\,\%$ plateau reduction at fixed batch size $b=8$ is the direct empirical counterpart of the theoretical prediction that a decaying schedule eliminates the $\mathcal{O}(\eta L \hat{r})$ floor. Together with the boundedness of $r_1$ (Appendix~\ref{app:rc2_e1}) and the rapid saturation of the $1/b$ component (Appendix~\ref{app:rc2_e2}), this shows that the lever for pushing the convergence floor down is the step-size schedule, not the batch size. This addresses the concern that the $\sqrt{(1-\beta)r_1/b}$ term, which is by itself independent of $(\eta, T)$, forces the use of large batches: it does not, because the accompanying $A$-term is the binding constraint and is $\eta$-controllable.}
}


\clearpage

{
\section{Additional Results (Language Modeling)}
\label{app:additional_lm}

We provide supplemental plots for the LLM experiments discussed in the main text.
Figure~\ref{fig:appendix_llama_main} shows final loss and SFO complexity versus batch size.
Muon is consistently better than AdamW and the gap widens at large batch sizes.
Nesterov momentum and weight decay do not produce systematic gains in this setting.

\begin{figure}[h!]
    \centering
        \includegraphics[width=0.64\linewidth]{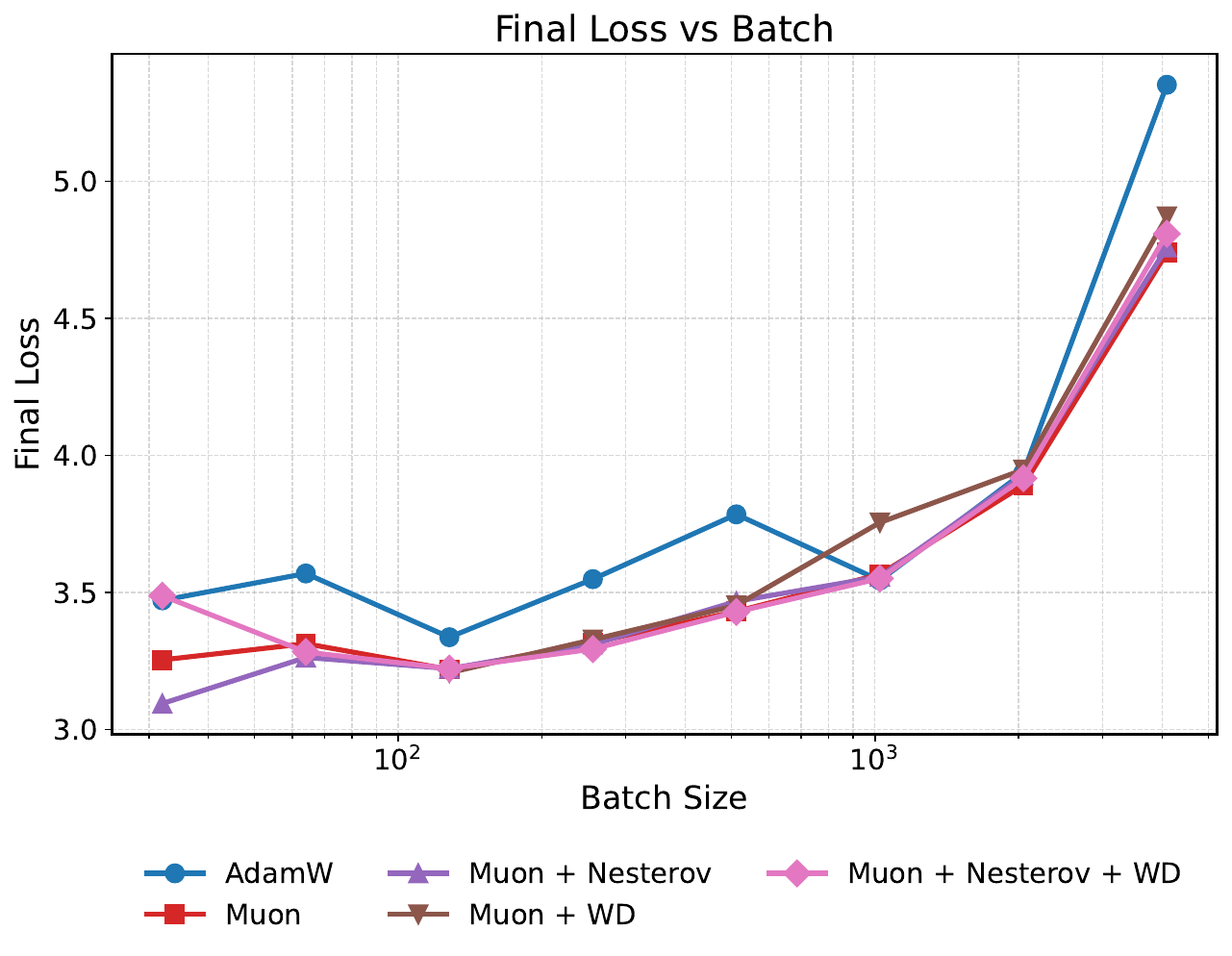}
    \caption{{\cfour / \llama. Final training loss versus batch size. Muon outperforms AdamW across all batch sizes, with a larger margin at bigger batches.}}
    \label{fig:appendix_llama_main}
\end{figure}

To examine the role of momentum, we swept $\beta$ under two configurations: with and without Nesterov (weight decay fixed at zero unless indicated).
Figure~\ref{fig:appendix_llama_beta} shows that the best trade-off is near $\beta=0.95$.
As $\beta$ increases, the critical batch size decreases, but very small or very large values of $\beta$ degrade both loss and SFO.

\begin{figure}[h!]
    \centering
    \includegraphics[width=0.46\linewidth]{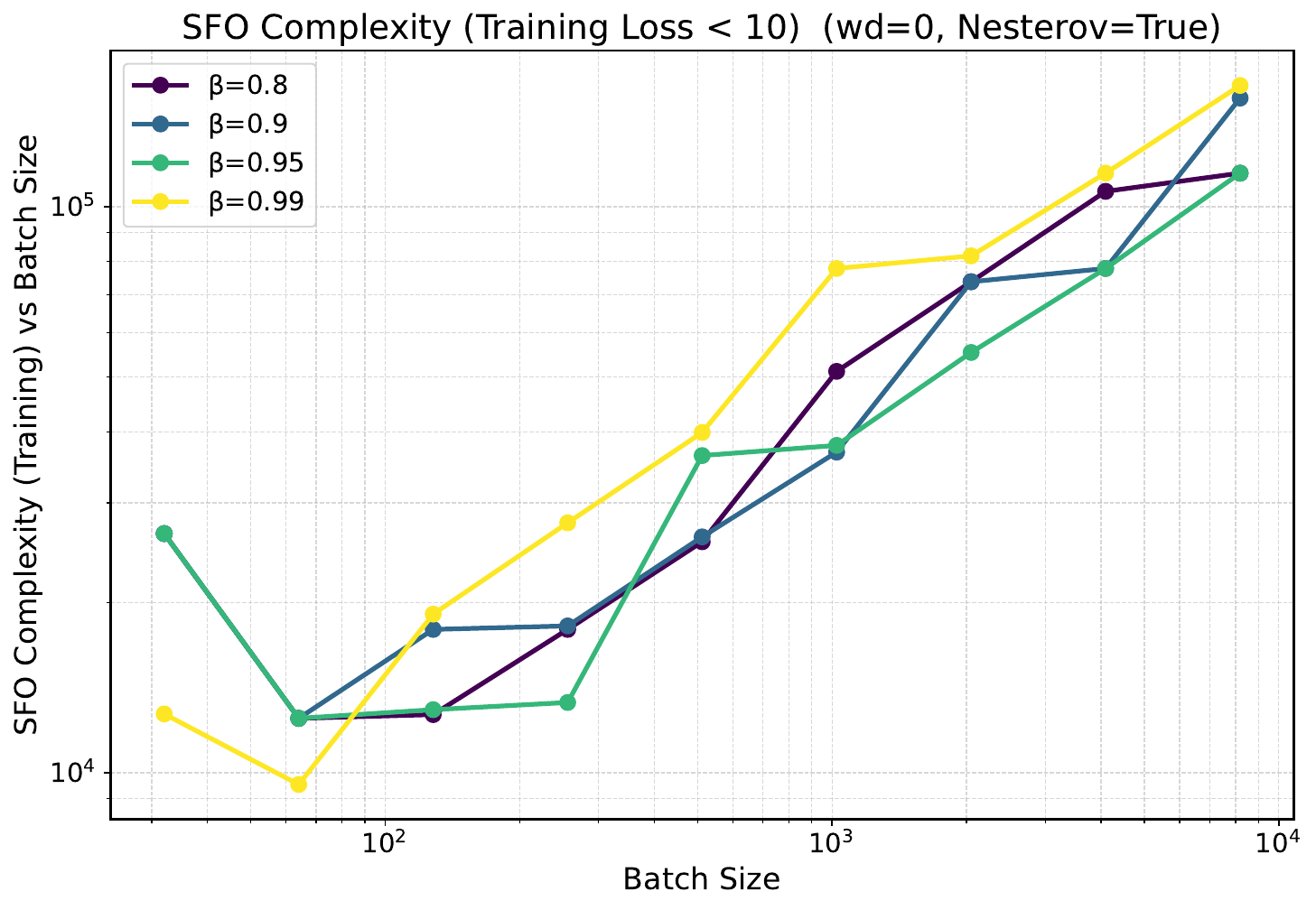}
    \includegraphics[width=0.46\linewidth]{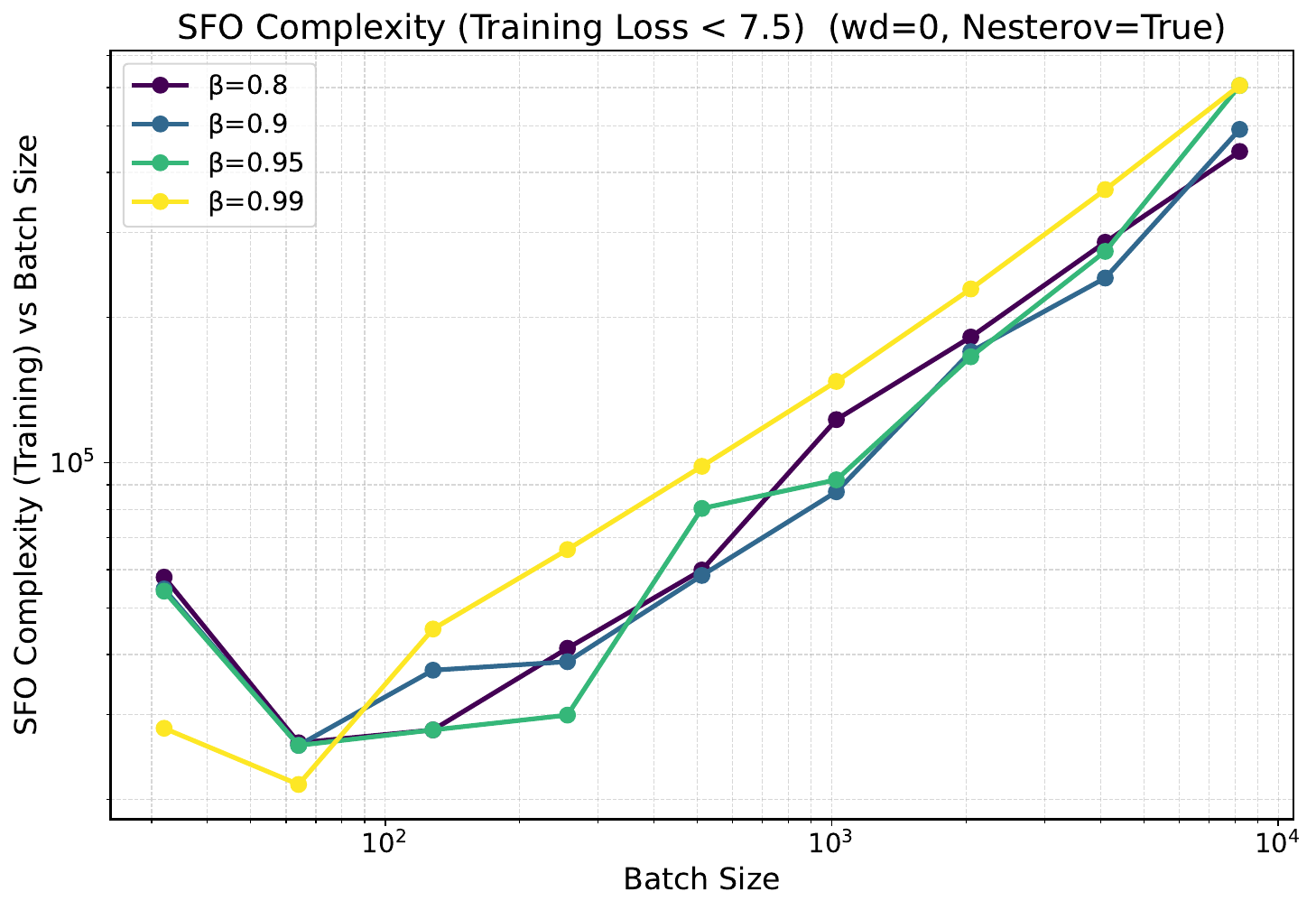}

    \caption{{Effect of momentum $\beta$ on \cfour / \llama.
    SFO for target loss of 10.0 (left) and SFO for target loss of 7.5 (right) across batch sizes under $(\mathrm{WD}{=}0,\mathrm{Nesterov}{=}\mathrm{True})$.
    The optimum is near $\beta=0.95$; excessive or too small momentum harms both metrics.}}
    \label{fig:appendix_llama_beta}
\end{figure}

\NEWADD{
\paragraph{CBS robustness across stopping criteria.}
To verify that the CBS conclusions on the LLM workload are not artifacts of the chosen loss target, we swept the stopping criterion across two metric types: gradient-norm thresholds ($\|\nabla f\| \le \varepsilon$, $\varepsilon \in [0.1, 0.7]$) and training-loss targets ($\mathcal{L} \le \ell$, $\ell \in [3.3, 4.0]$).
Figure~\ref{fig:llm_cbs_sensitivity} shows the empirical CBS for Muon, Muon+Nesterov, and AdamW under each criterion.
Although the exact CBS value shifts as the threshold tightens, the qualitative ranking (Muon attaining a CBS comparable to or larger than AdamW) is preserved across both metric types.
This is consistent with the proxy-invariance observed on FashionMNIST (Table~\ref{tab:stopping_sensitivity}).

\begin{figure}[h!]
    \centering
    \includegraphics[width=\linewidth]{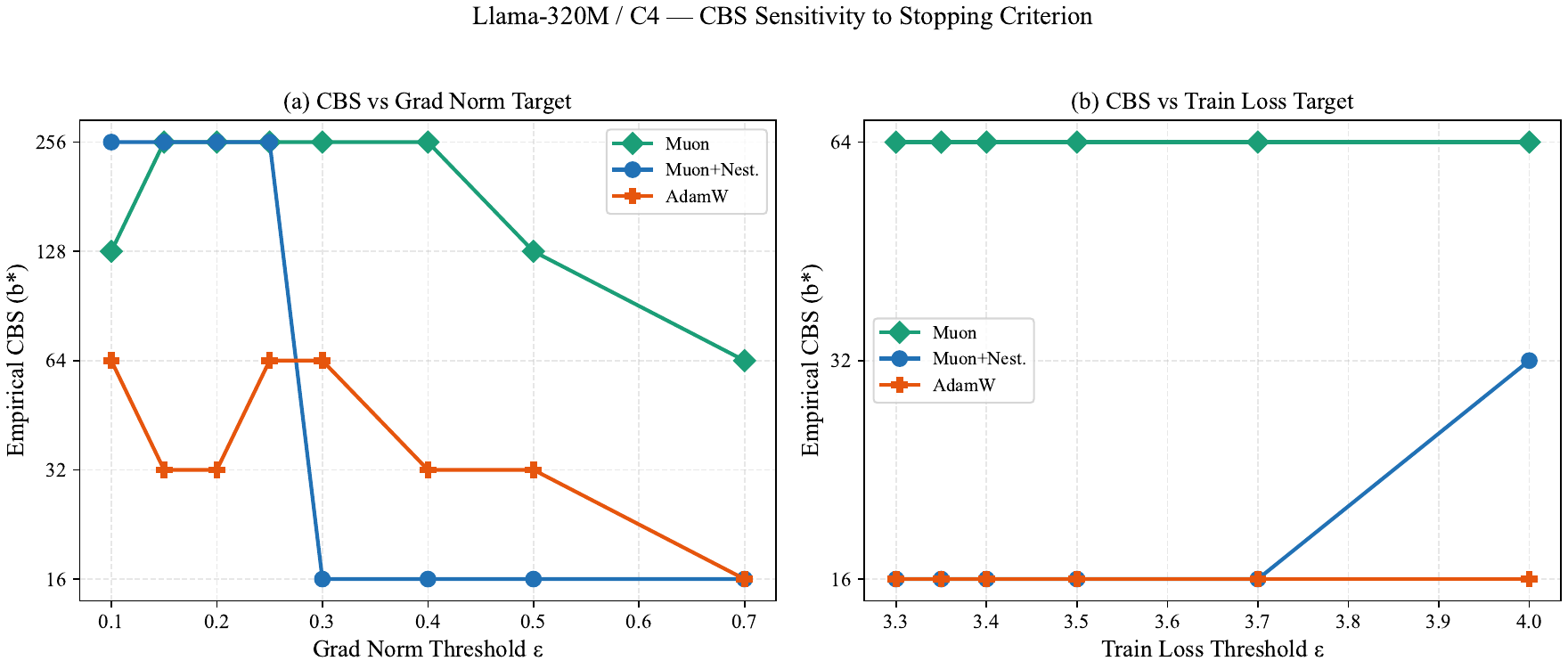}
    \caption{\NEWADD{CBS sensitivity to stopping criterion on \cfour / \llama. (Left) CBS vs.\ gradient-norm threshold. (Right) CBS vs.\ training-loss target. The Muon $\ge$ AdamW CBS ordering is preserved across both metric types, confirming that CBS conclusions do not depend on the specific proxy.}}
    \label{fig:llm_cbs_sensitivity}
\end{figure}

\NEWADD{
\paragraph{Gradient-norm CBS on \cfour.}
To use a stopping criterion that directly corresponds to the quantity bounded by the theorems, we extracted the SFO complexity for Muon with Nesterov ($\beta{=}0.95$) at gradient-norm thresholds $\|\nabla f\| \le \varepsilon$ across the full batch-size range (GBS $= 16$ to $8192$). Table~\ref{tab:llm_cbs_target_sensitivity} reports the results. The CBS shifts from $32$ at $\varepsilon = 0.40$ to $128$ at $\varepsilon = 0.25$, consistent with the predicted $b^\star \propto 1/\varepsilon^2$ scaling. At every threshold, the SFO curve exhibits a clear U-shape: for example, at $\varepsilon = 0.20$, the SFO at GBS $= 4096$ ($1.33 \times 10^6$) is $10\times$ larger than at the CBS ($1.30 \times 10^5$).

\begin{table}[h]
\centering
\caption{\NEWADD{SFO complexity for Muon with Nesterov ($\beta{=}0.95$) on \cfour / \llama using the gradient-norm stopping criterion $\|\nabla f\| \le \varepsilon$. The CBS shifts rightward as $\varepsilon$ tightens. Bold indicates the SFO-minimizing batch size.}}
\label{tab:llm_cbs_target_sensitivity}
\resizebox{\linewidth}{!}{%
\begin{tabular}{c|rrrrrrrr|c}
\toprule
$\varepsilon$ & GBS$=16$ & GBS$=32$ & GBS$=64$ & GBS$=128$ & GBS$=256$ & GBS$=512$ & GBS$=1024$ & GBS$=4096$ & CBS \\
\midrule
$0.40$ & $74\mathrm{K}$ & $\mathbf{14\mathrm{K}}$ & $22\mathrm{K}$ & $19\mathrm{K}$ & $38\mathrm{K}$ & $38\mathrm{K}$ & $60\mathrm{K}$ & $1200\mathrm{K}$ & $32$ \\
$0.30$ & $107\mathrm{K}$ & $110\mathrm{K}$ & $30\mathrm{K}$ & $\mathbf{30\mathrm{K}}$ & $40\mathrm{K}$ & $64\mathrm{K}$ & $88\mathrm{K}$ & $1274\mathrm{K}$ & $128$ \\
$0.25$ & $131\mathrm{K}$ & $119\mathrm{K}$ & $36\mathrm{K}$ & $\mathbf{35\mathrm{K}}$ & $358\mathrm{K}$ & $554\mathrm{K}$ & $710\mathrm{K}$ & $1298\mathrm{K}$ & $128$ \\
$0.20$ & $148\mathrm{K}$ & $\mathbf{130\mathrm{K}}$ & $248\mathrm{K}$ & $297\mathrm{K}$ & $487\mathrm{K}$ & $687\mathrm{K}$ & $868\mathrm{K}$ & $1331\mathrm{K}$ & $32$ \\
$0.15$ & $186\mathrm{K}$ & $\mathbf{165\mathrm{K}}$ & $279\mathrm{K}$ & $427\mathrm{K}$ & $660\mathrm{K}$ & $971\mathrm{K}$ & $1061\mathrm{K}$ & $1364\mathrm{K}$ & $32$ \\
\bottomrule
\end{tabular}
}
\end{table}
}
}

}

{

\paragraph{Training Curves for Muon on \llama}
To further illustrate Muon’s optimization dynamics on large-language-model workloads, 
we report full training curves for \llama trained on \cfour at multiple batch sizes.
Figure~\ref{fig:llama_training_curves} plots the training loss against the number of steps 
for Muon and AdamW. Across all batch sizes, Muon exhibits faster loss reduction in early training 
and consistently reaches lower final loss within the same token budget.
These results complement Figures~\ref{fig:llama_c4_batch} and \ref{fig:llama_c4_beta}, 
demonstrating that the advantages of Muon are visible not only in SFO complexity but also
in the raw optimization trajectory.

\begin{figure*}[h!]
    \centering
    \includegraphics[width=0.46\linewidth]{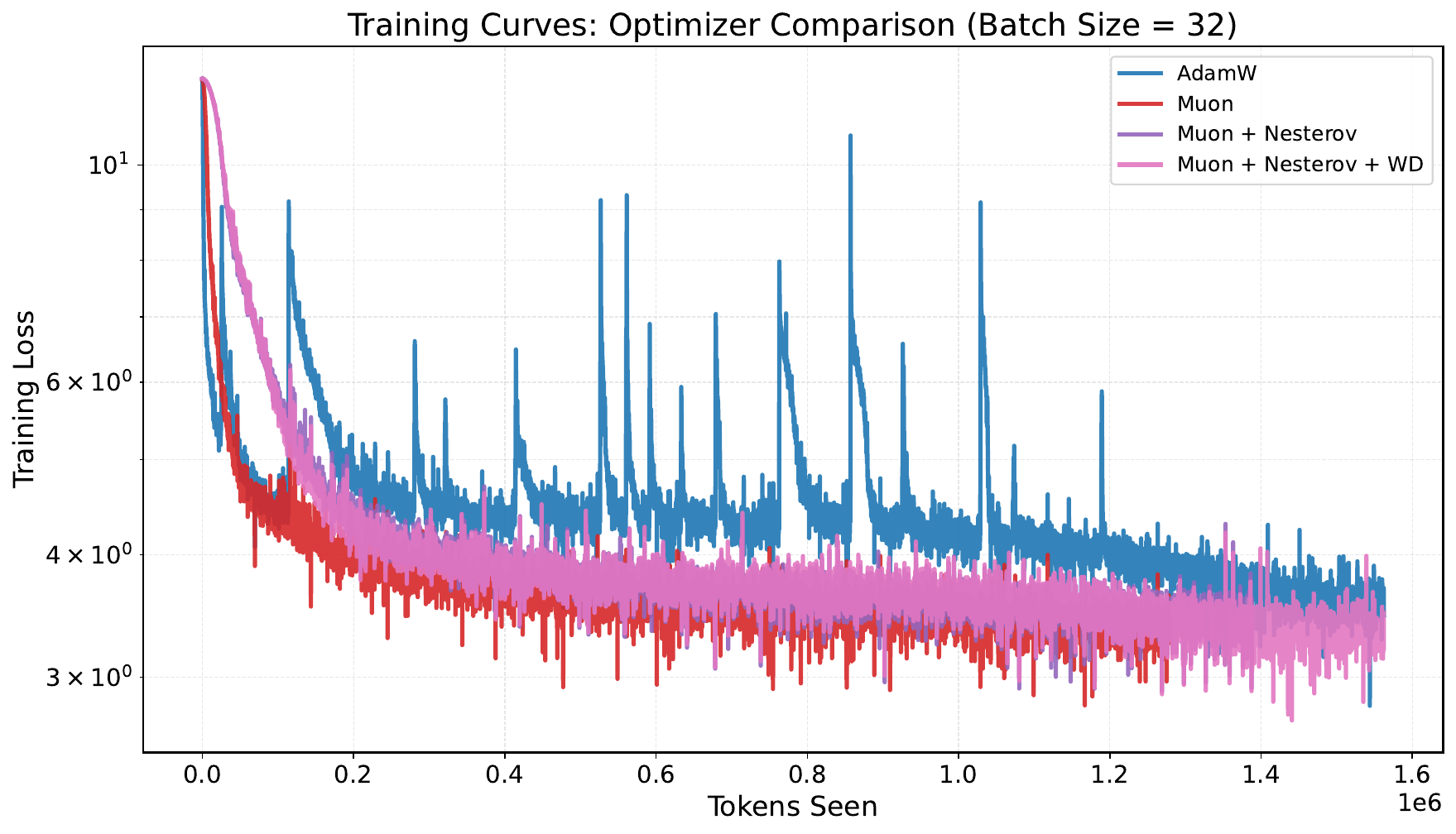}
    \includegraphics[width=0.46\linewidth]{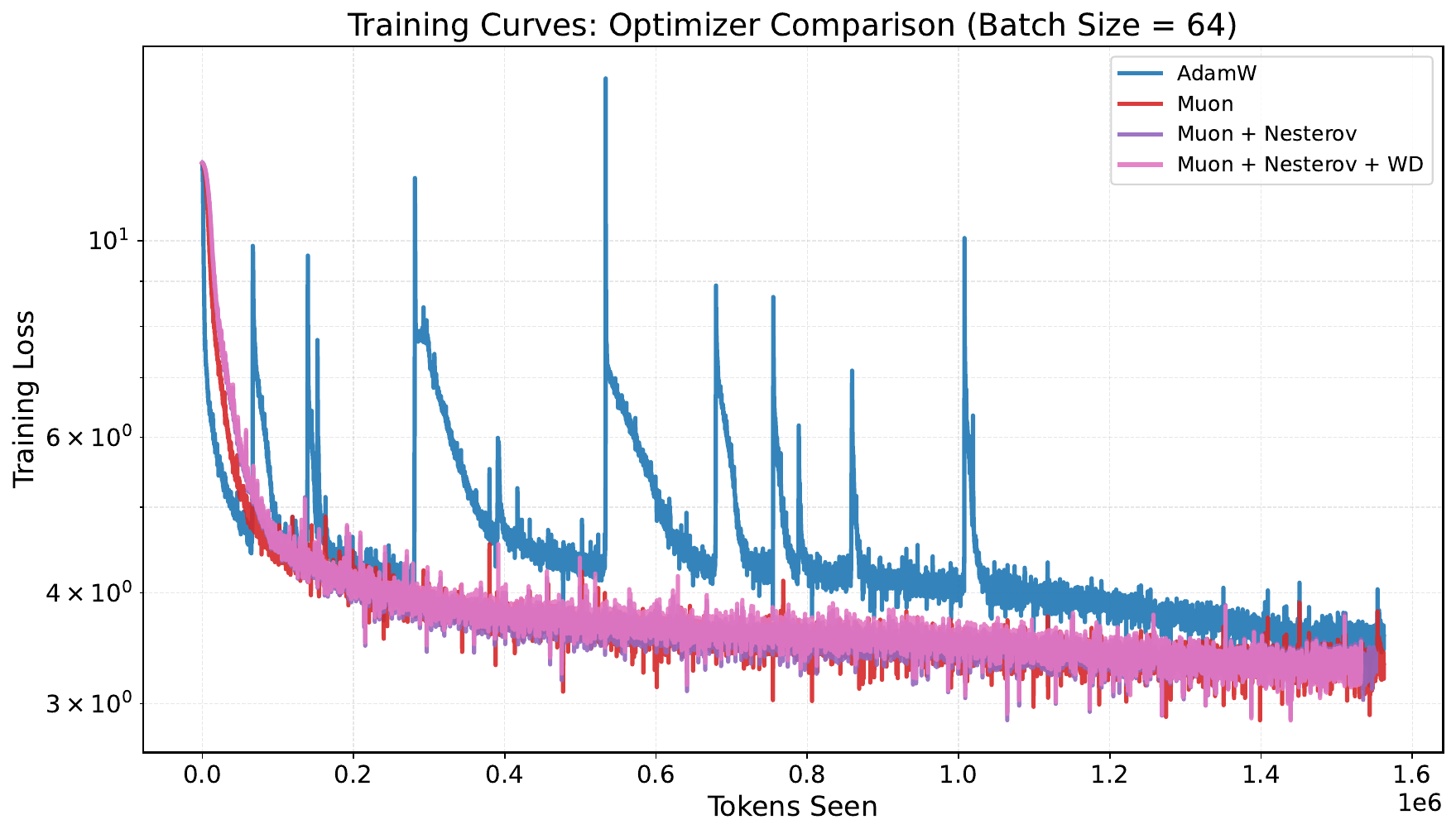}
    
    \includegraphics[width=0.46\linewidth]{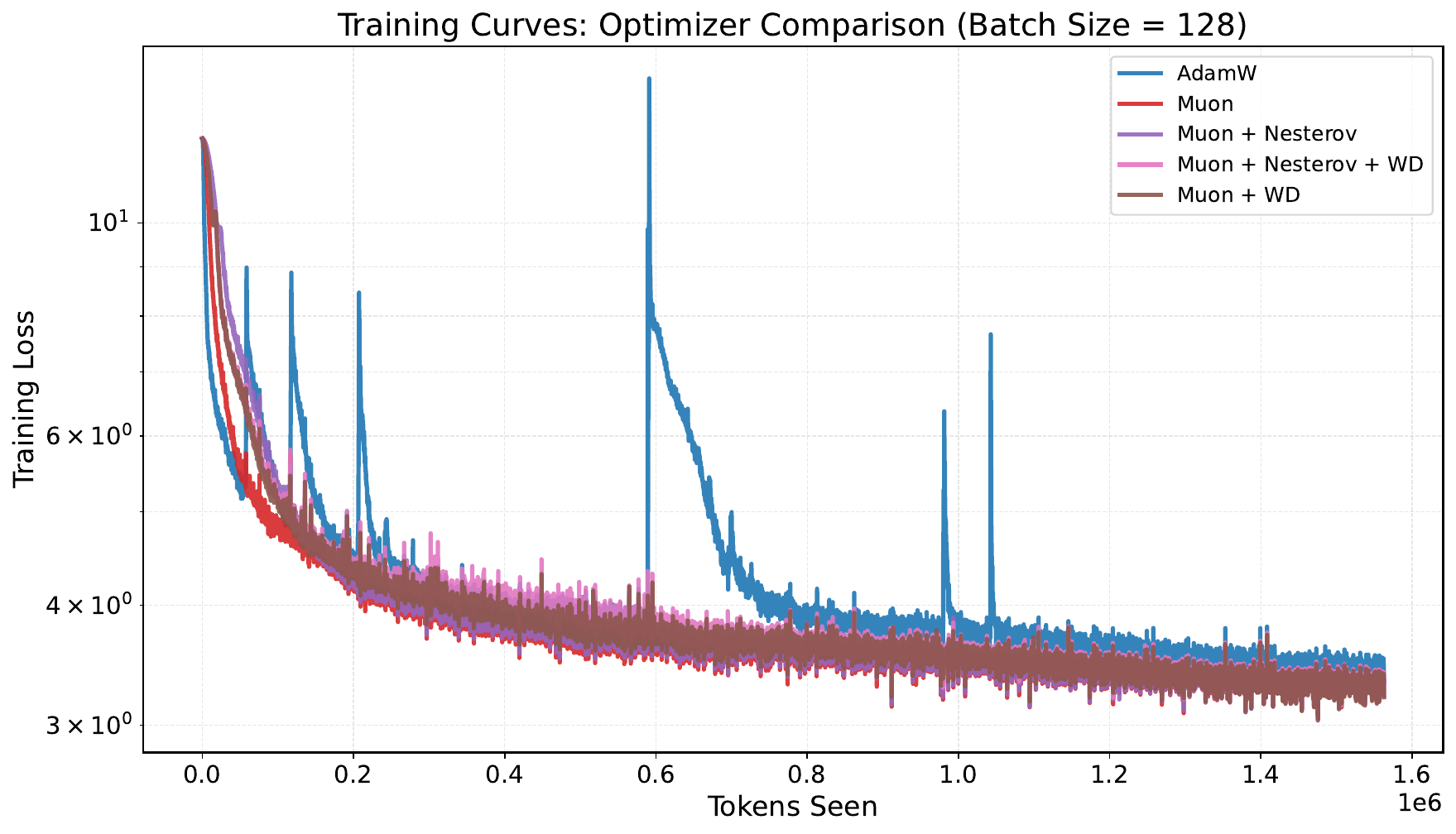}
    \includegraphics[width=0.46\linewidth]{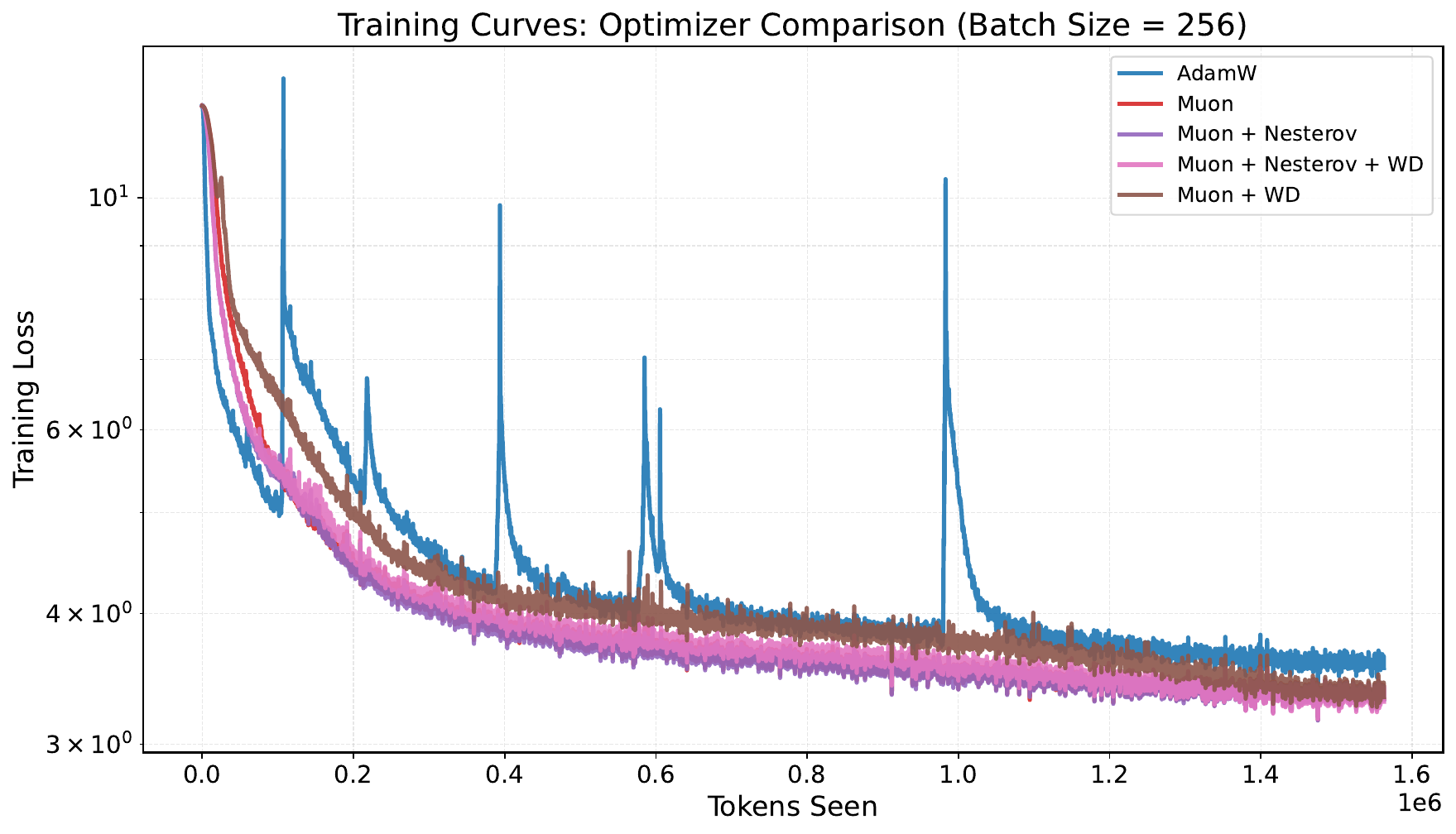}
    
    \includegraphics[width=0.46\linewidth]{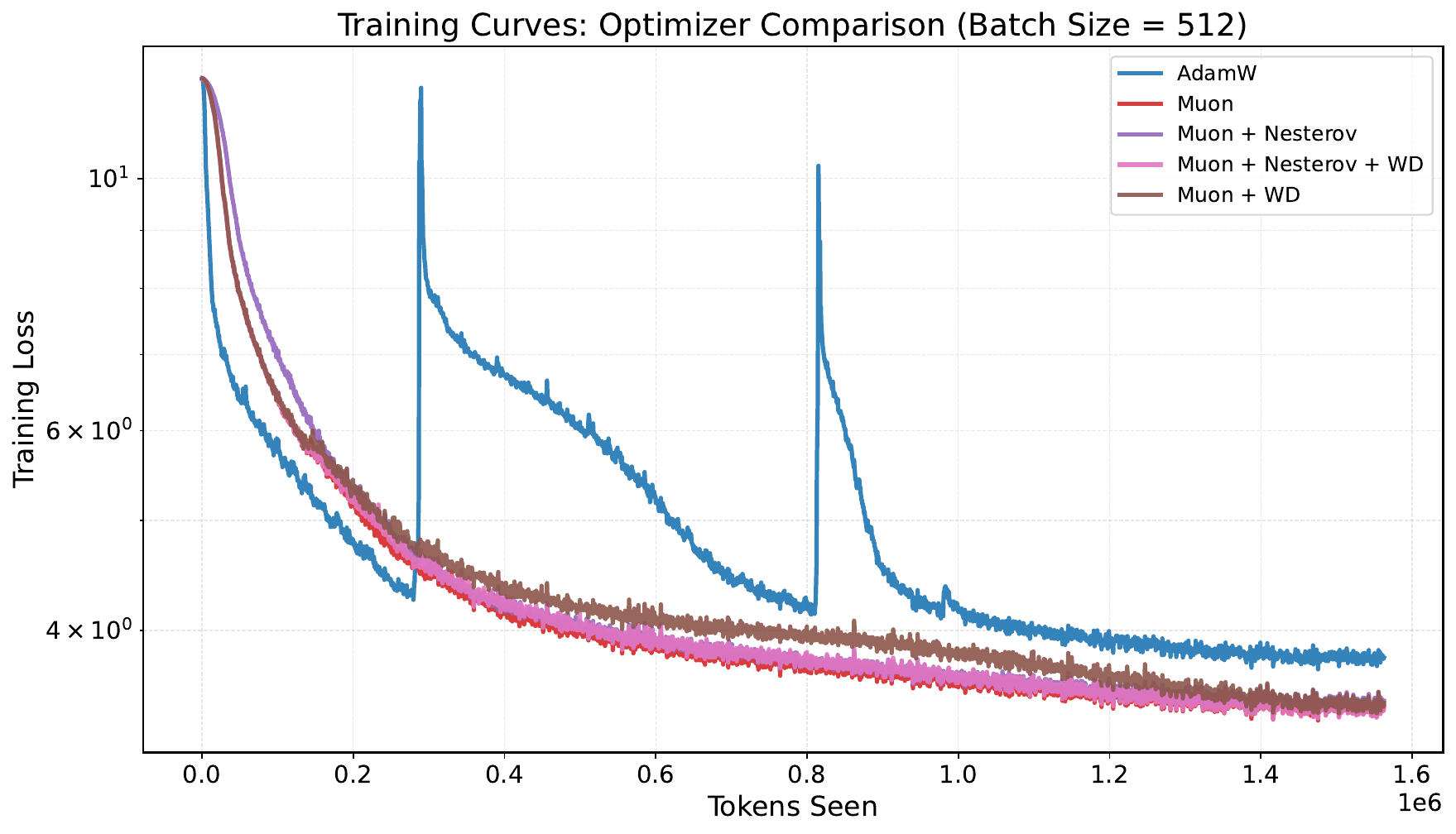}
    \includegraphics[width=0.46\linewidth]{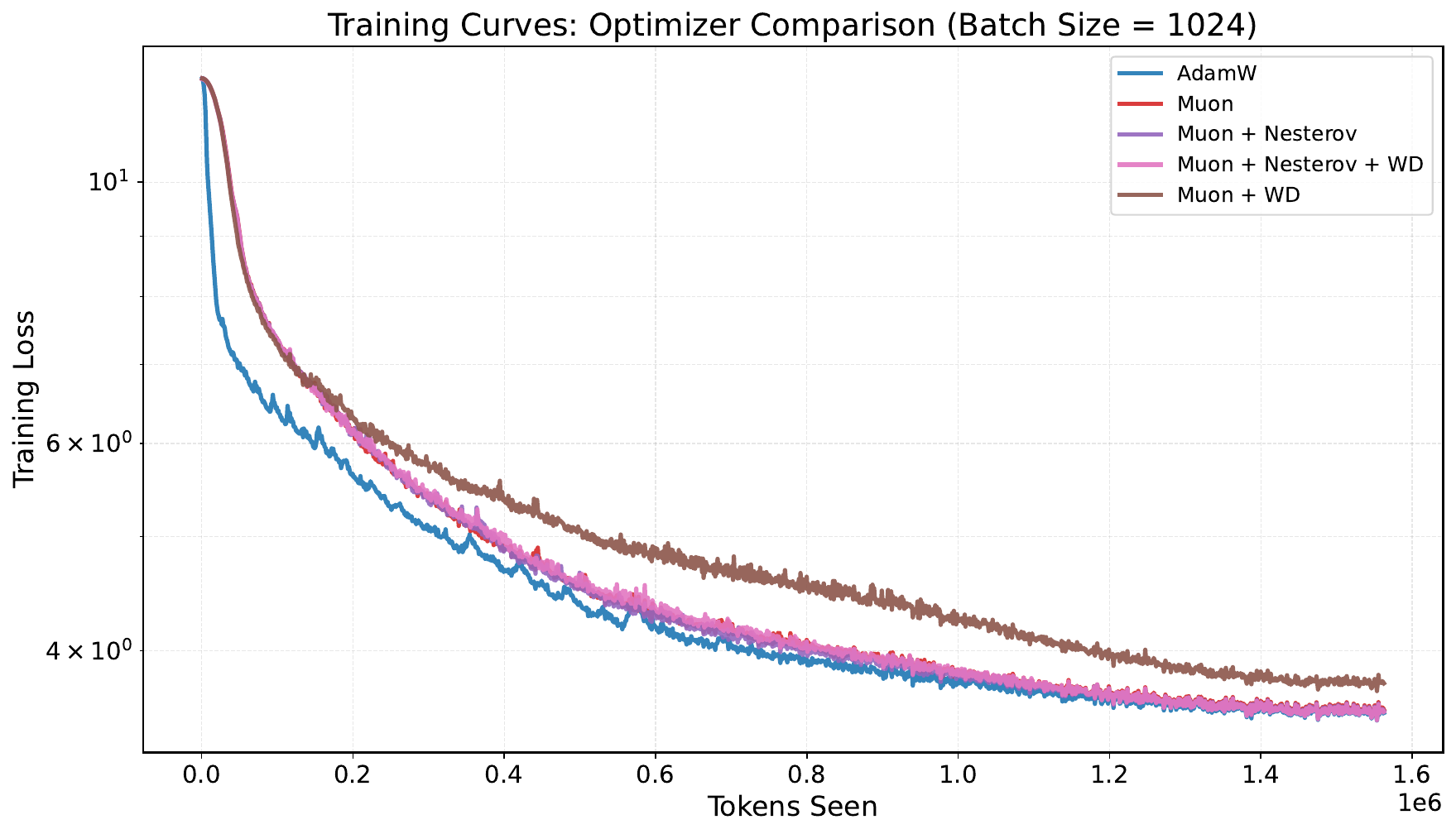}
    
    \includegraphics[width=0.46\linewidth]{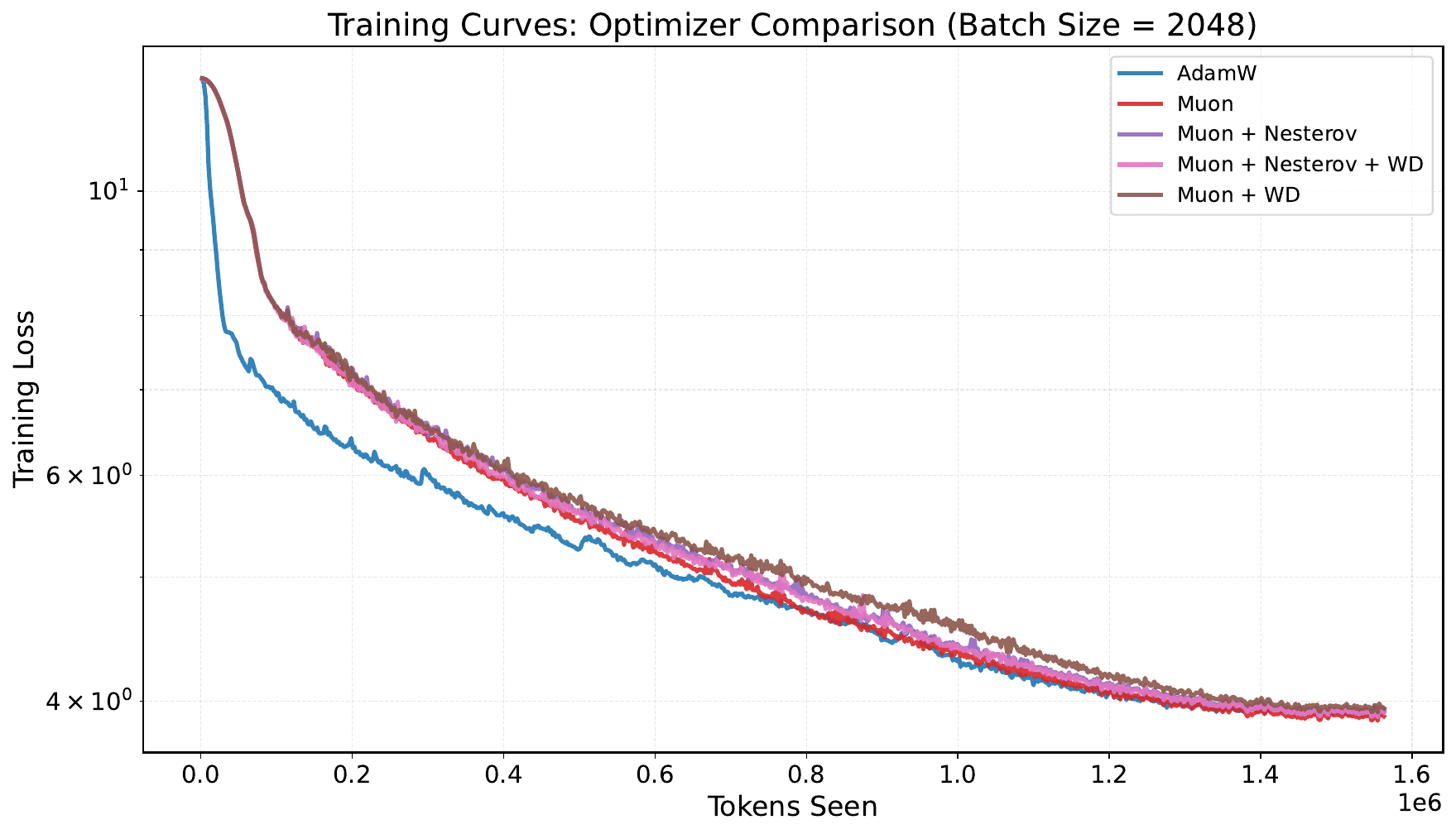}
    \includegraphics[width=0.46\linewidth]{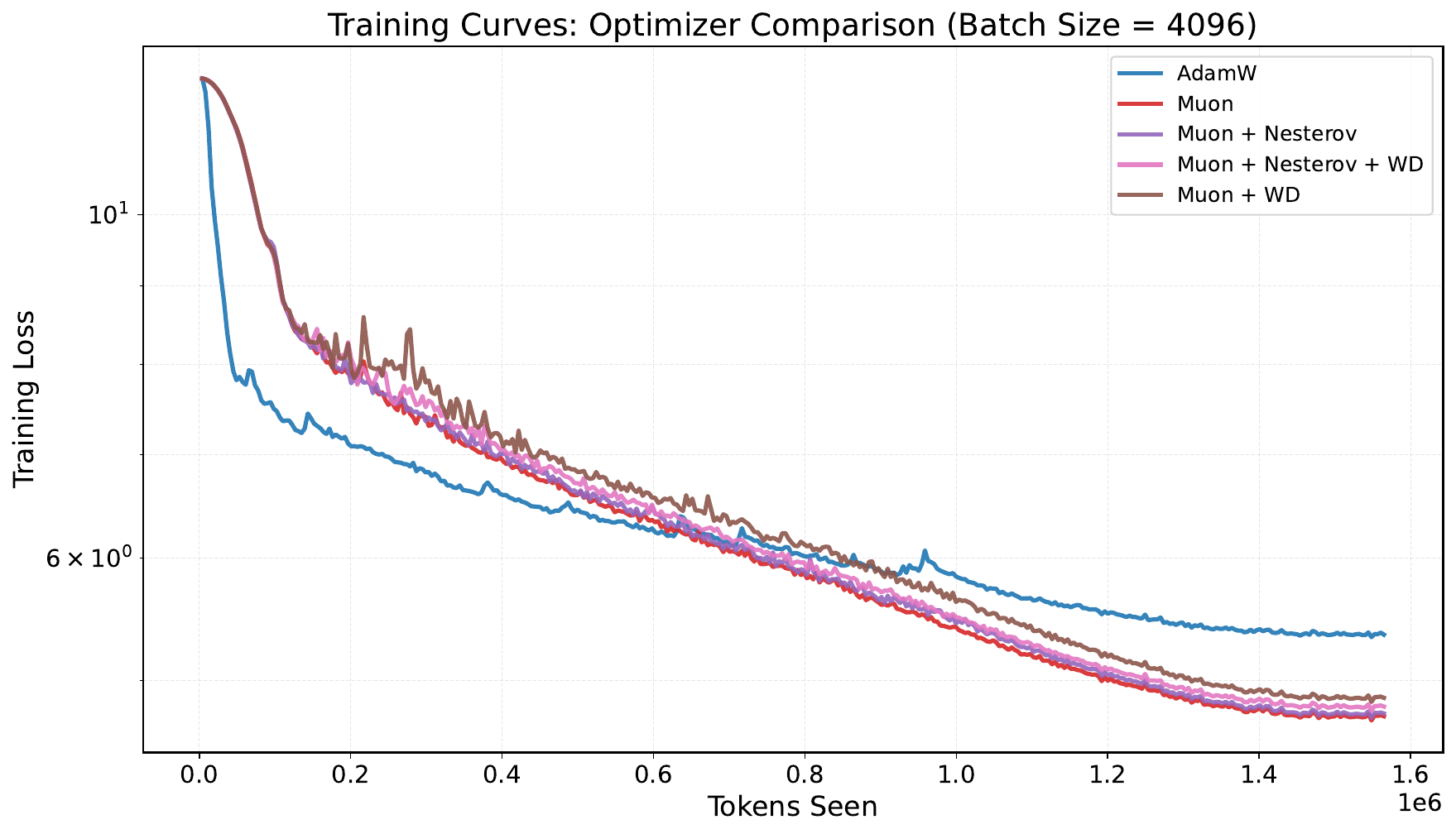}
    \caption{
    {Training loss curves for Muon vs.\ AdamW on \llama / \cfour.
    Each plot corresponds to a different global batch size (32 to 4096, top-left to bottom-right).
    Muon variants consistently achieve faster loss reduction and lower final loss across all batch sizes.
    }
    }
    \label{fig:llama_training_curves}
\end{figure*}

\begin{figure*}[h!]
    \centering
    \includegraphics[width=0.46\linewidth]{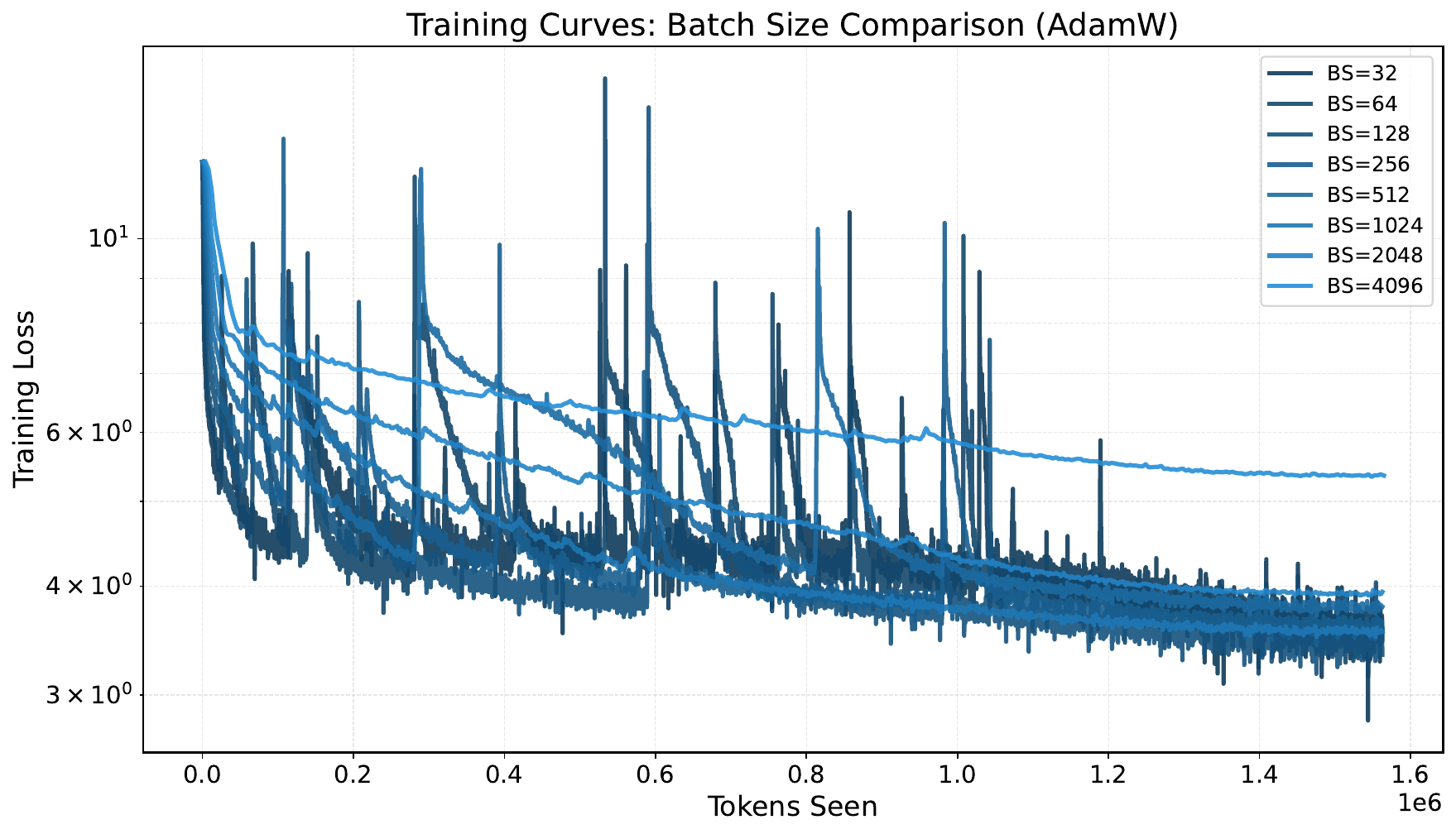}

    \includegraphics[width=0.46\linewidth]{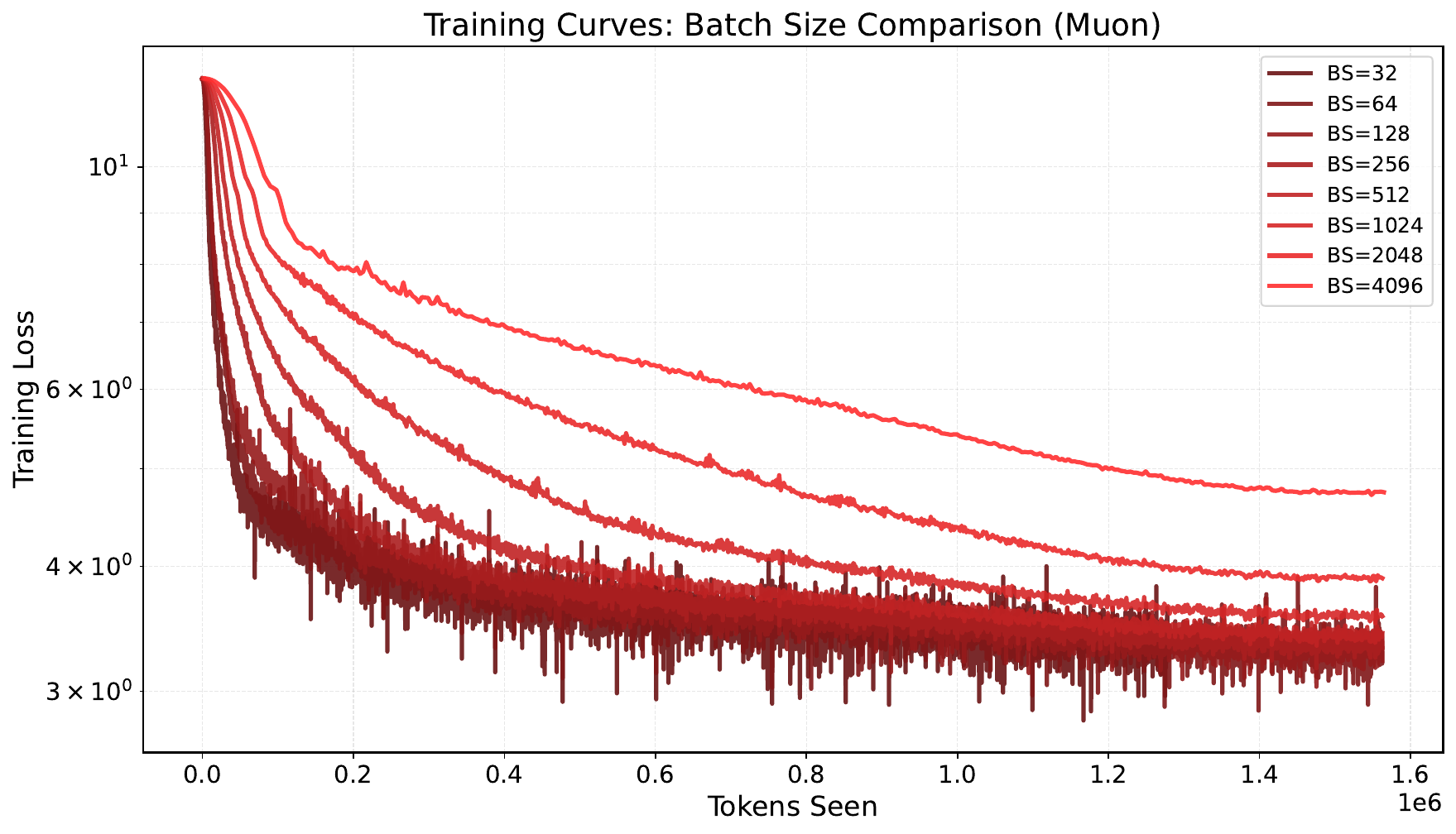}
    \includegraphics[width=0.46\linewidth]{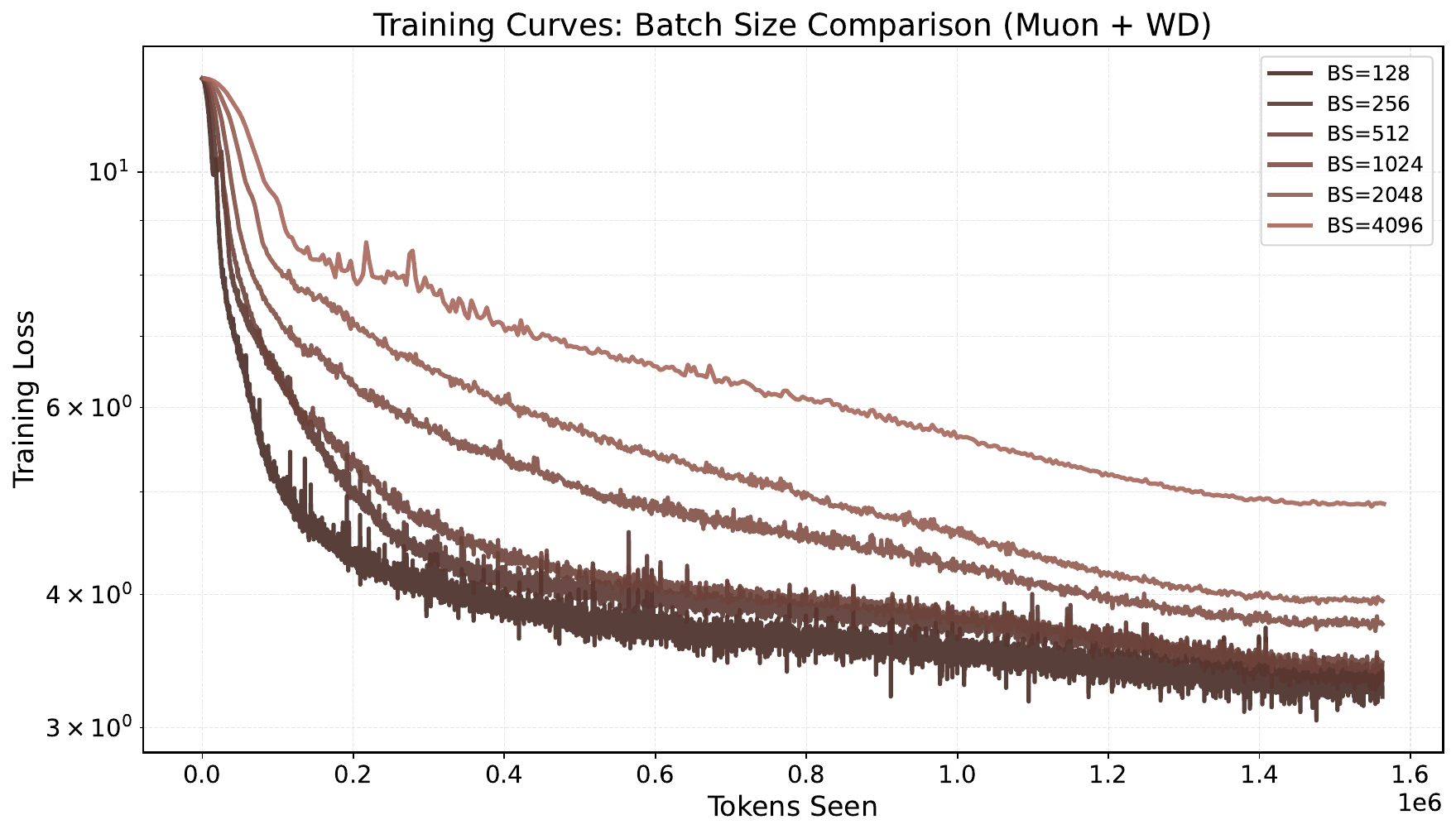}

    \includegraphics[width=0.46\linewidth]{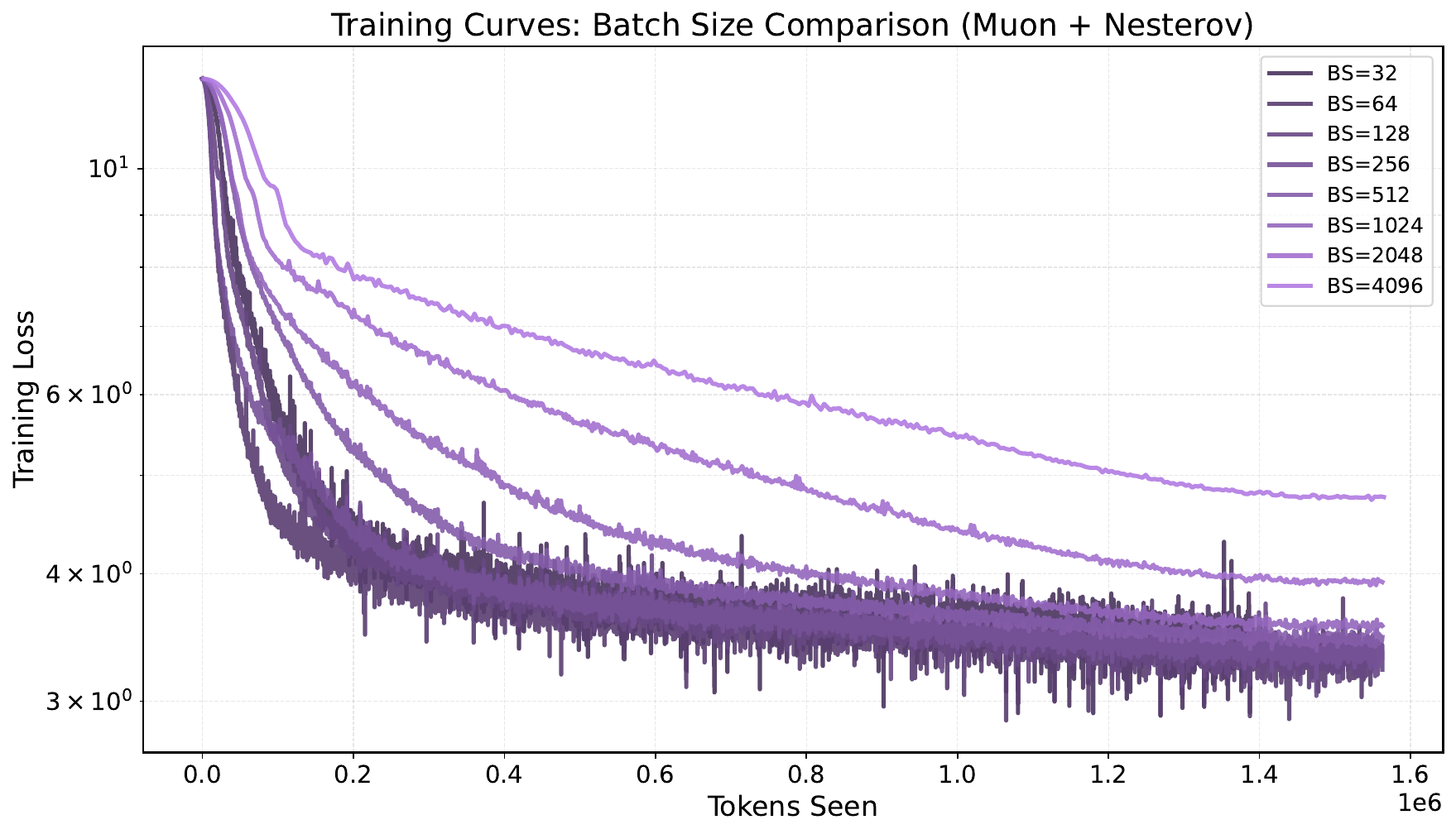} 
    \includegraphics[width=0.46\linewidth]{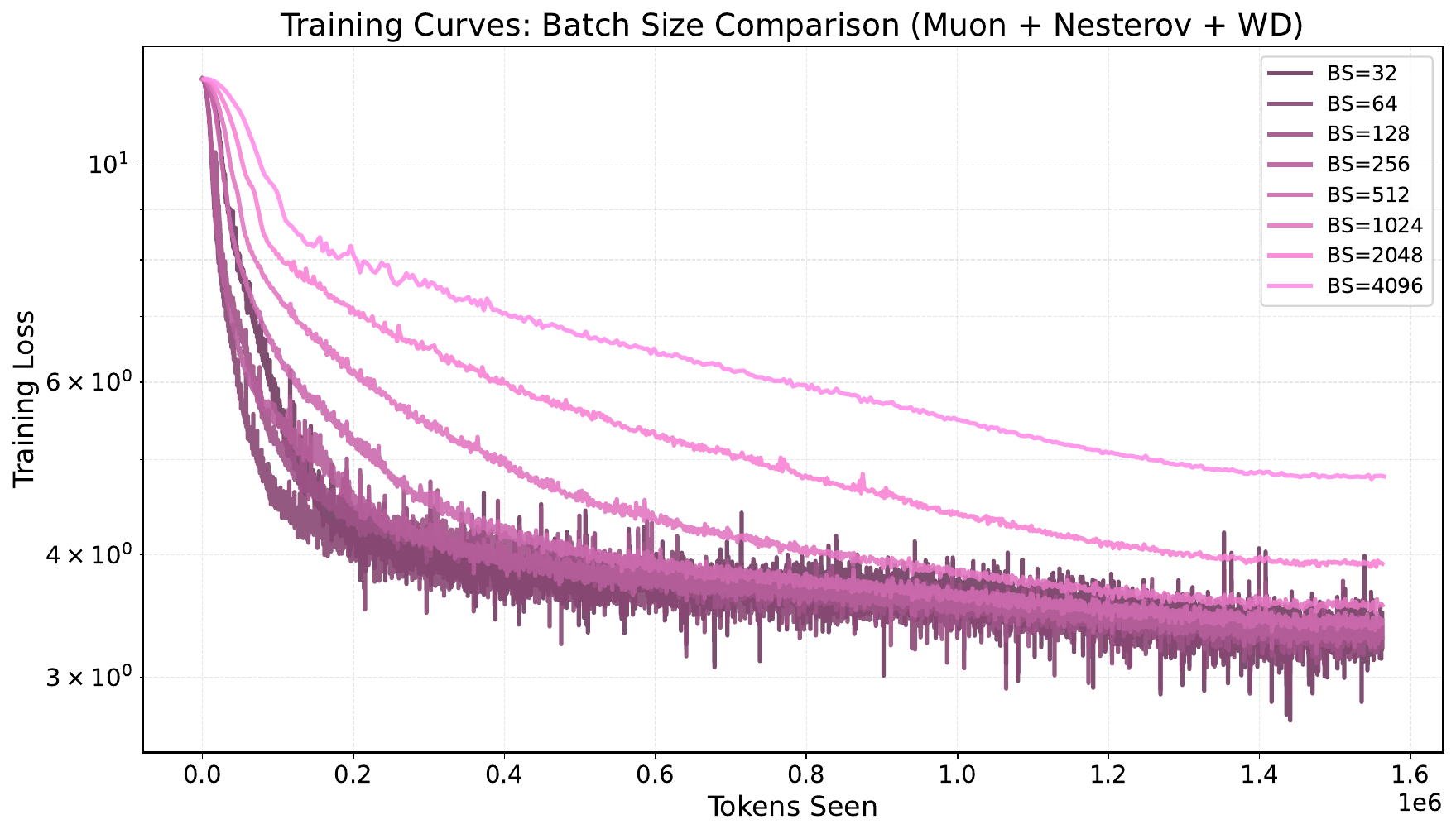}
    
    \caption{
    {Training loss curves grouped by optimizer on \llama / \cfour.
    Each plot corresponds to AdamW, Muon, Muon+WD, Muon+Nesterov, Muon+Nesterov+WD (top to bottom).
    Smaller batch sizes consistently achieve lower final loss across all optimizers.
    }
    }
    \label{fig:llama_training_curves_by_opt}
\end{figure*}

}


\end{document}